\newtheorem{theorem}{Theorem}
\newtheorem{lemma}{Lemma}
\newtheorem{proposition}{Proposition}
\title{Actively Learning Concepts and Conjunctive Queries under \ELdr$\!\!$-Ontologies}
\author{
Subject Area: Description Logics and Ontologies}
\author{
Maurice Funk$^1$\and
Jean Christoph Jung$^2$\And
Carsten Lutz$^{1}$%\footnote{Contact Author}
\affiliations
$^1$University of Bremen\\
$^2$University of Hildesheim\\
\emails
mfunk@uni-bremen.de,
jungj@uni-hildesheim.de,
clu@uni-bremen.de
}
\begin{document}

\maketitle

\begin{abstract}
  We consider the problem to learn a concept or a query in the
  presence of an ontology formulated in the description logic \ELdr,
 in Angluin's framework of active learning that allows the learning
  algorithm to interactively query an oracle (such as a domain
  expert). We show that the following can be learned in polynomial
  time: (1)~\EL-concepts, (2)~symmetry-free \ELI-concepts, and
  (3)~conjunctive queries (CQs) that are chordal, symmetry-free, and
  of bounded arity.  In all cases, the learner can pose to the oracle
  membership queries based on ABoxes and equivalence queries that ask
  whether a given concept/query from the considered class is
  equivalent to the target.  The restriction to bounded arity in (3)
  can be removed when we admit unrestricted CQs in equivalence
  queries. % and that all these results fail
  % when only membership queries or only equivalence queries are
  % admitted.
 We also show that \EL-concepts are not 
  polynomial query learnable in the presence of \ELI-ontologies.
\end{abstract}

\section{Introduction}

In logic based knowledge representation, a significant bottleneck is
the construction of logical formulas such as description logic
(DL) concepts, queries, and ontologies, as it is laborious and
expensive. This is particularly true if the construction
involves multiple parties because logic expertise and domain knowledge
are not in the same hands. Angluin's model of exact learning, a form
of active learning, is able to support the construction of logical formulas % and other
% formal objects
% such as finite automata
in terms of a game-like
collaboration between a learner and an oracle
\cite{DBLP:journals/ml/Angluin87,DBLP:journals/iandc/Angluin87}.
Applied in knowledge representation, the learner can be a logic expert
and the oracle a domain expert that is interactively queried by the
learner. Alternatively, the oracle can take other forms such as a set
of labeled data examples that in some way represents the formula to be
learned. The aim is to find an algorithm that, when executed by the
learner, constructs the desired formula in polynomial time even when
the oracle is not able to provide most informative answers. Landmark
results from active learning state that such algorithms exist for
learning propositional Horn formulas and finite automata
\cite{DBLP:journals/ml/AngluinFP92,DBLP:journals/iandc/Angluin87}.

The aim of this paper is to study active learning of \emph{DL
  concepts} and of \emph{conjunctive queries (CQs)} in the presence of
an ontology. %provides background knowledge and
% known to both learner and oracle.
Concepts are the main building block of ontologies \cite{DL-Textbook}
and learning them is important for ontology engineering. CQs are
very prominent in ontology-mediated querying where % the
% premier means for querying knowledge bases that consist of
data stored
in an ABox is enriched with an ontology
\cite{DBLP:journals/tods/BienvenuCLW14}. %, %a setup
% known as
% ontology-mediated querying \cite{DBLP:journals/tods/BienvenuCLW14}. 
We
concentrate on the \EL family of DLs which underlies the OWL EL
profile of the OWL 2 ontology language
\cite{DBLP:conf/rweb/Krotzsch12} and is frequently used in biomedical
ontologies such as {\sc Snomed CT}.  We consider ontologies 
formulated in the DLs $\ELdr$ and $\ELI$ where \ELdr extends \EL with
range restrictions and \ELI extends \ELdr with inverse roles. In both
DLs, concepts can be viewed as a tree-shaped
conjunctive query, and from now on we shall treat them as such. In
fact, it is not uncommon to use concepts as queries in
ontology-mediated querying, which provides an additional motivation for
learning them.

We now describe the learning protocol in detail. It is an
instance of Angluin's model, which we do not repeat here in full
generality. The aim is to learn a target CQ $q_T(\bar x)$ in the
presence of an ontology \Omc. The learner and the oracle both know and
agree on the ontology~\Omc, the arity of~$q_T$, and the concept
and role names that are available for constructing $q_T$; we assume
that all concept and role names in \Omc can be used also in $q_T$.
The learner can ask two types of queries to the oracle.  In a
\emph{membership query}, the learner provides an ABox~\Amc
and a candidate answer $\bar a$ and asks whether
$\Amc,\Omc \models q_T(\bar a)$; the oracle faithfully answers ``yes'' or ``no''.
In an \emph{equivalence query}, the learner provides a hypothesis CQ
$q_H$ % that uses only symbols from signature
  % $\Sigma$
  and asks whether $q_H$ is
  equivalent to $q_T$ under~\Omc; the oracle answers ``yes'' or
  provides a counterexample, that is, an ABox \Amc and tuple $\bar a$
  such that $\Amc,\Omc \models q_T(\bar a)$ and
  $\Amc,\Omc \not\models q_H(\bar a)$ (\emph{positive counterexample}) or
  vice versa (\emph{negative counterexample}).
 When we learn a restricted class of CQs such as \EL-concepts, 
we assume that only CQs from that class are admitted in equivalence
queries. We are then interested in whether there is a learning
algorithm that constructs $q_T(\bar x)$, up to equivalence under \Omc,
such that at any given time, the running time of the algorithm is
bounded by a polynomial in the sizes of $q_T$, of~$\Omc$, and of the
largest counterexample given by the oracle so far. This is called
\emph{polynomial time learnability}. A weaker requirement is
\emph{polynomial query learnability} where only the sum of the
sizes of the
queries posed to the oracle up to the current time point has to be
bounded
by such a polynomial.
%
%Of course, the time taken by the
% oracle to answer a query does not contribute to the running time.

Our main results are that the following can be learned in polynomial
time under \ELdr-ontologies: (1)~\EL-concepts, (2)~\ELI-concepts that
are symmetry-free, and (3)~CQs that are chordal, symmetry-free, and of
bounded arity. In Point~(2), symmetry-freeness means that there is no
subconcept of the form $\exists r . (C \sqcap \exists r^- . D)$ with
$r$ a role name, a condition that has recently been introduced in
\cite{aaaithis}, in a slightly less general form where $r$ can also be
an inverse role. In Point~(3), chordal means that every cycle of
length at least four that contains at least one quantified variable
has a chord and symmetry-free means that the CQ contains no atoms
$r(x_1,y),r(x_2,y)$ such that $x_1 \neq x_2$, $y$ is a quantified
variable, neither $r(x_1,y)$ nor $r(x_2,y)$ occur on a cycle, and
there is no atom $s(z,z)$ for any $z \in \{x_1,x_2,y\}$. An analysis
of well-known benchmarks for ontology-mediated querying suggests that
the resulting class CQ$^{\text{csf}}$ of CQs is sufficiently general
to include many relevant CQs that occur in practical applications.
% ,
% detailed examples are given in the main body of the paper.
Our proofs crucially rely on the use of a finite version of the
universal model that is specifically tailored to the class
CQ$^{\text{csf}}$.  We also show that the restriction to bounded arity
can be removed from Point~(3) when we admit unrestricted CQs as the
argument to equivalence queries. Proving this requires very
substantial changes to the learning algorithm.

In addition, we prove several negative results. First, we show that none of
the classes of CQs in Points~(1) to~(3) can be learned under
\EL-ontologies using only membership queries or only equivalence
queries (unless $\Pclass=\NPclass$ in the latter case). Note that
polynomial time learning
with only membership queries is important because it is related to
whether CQs can be characterized up to equivalence using only polynomially many data
examples \cite{DBLP:journals/corr/abs-2008-06824}.
We also show
the much more involved result that none of the classes of CQs in
Points~(1) to~(3) is polynomial query learnable under \ELI-ontologies.
Note that while polynomial time learnability cannot be expected
because subsumption in \ELI is \ExpTime-complete, there could well
have been a polynomial time learning algorithm with access to an
oracle (in the classical sense) for subsumption/query containment
under \ELI-ontologies that attains polynomial query learnability.
Our result rules out this possibility.

Proof details are in the appendix.
% {\color{blue}Proof details are in the appendix, available at}
% {\color{red}ARXIV?!}

\paragraph{Related work.} Learning \EL-ontologies, rather than
concepts or queries, was studied in
\cite{KLOW-JMLR18,DBLP:conf/aaai/KonevOW16}. % , the latter using a
% learning protocol based on ABoxes similar to the one in this paper.
It turns out
that \EL-ontologies are not polynomial time learnable while certain
fragments thereof are. In contrast, we attain polynomial time
learnability also under unrestricted \EL-ontologies. See also the surveys~\cite{DBLP:series/ssw/LehmannV14,OzakiSurvey} and  
\cite{DBLP:conf/aaai/OzakiPM20} for a variation less related to  
the current work.  
It has been
shown in \cite{DBLP:journals/tods/CateDK13,DBLP:conf/pods/CateK0T18}
that unions of CQs (UCQs) are polynomial time learnable, and the
presented algorithm can be adapted to CQs.  Active learning of CQs
with only membership queries is considered in
\cite{DBLP:journals/corr/abs-2008-06824} where among other results
it is shown that
\ELI-concepts can be learned in polynomial time with only membership
queries when the ontology is empty.  PAC learnability of
concepts formulated in the DL CLASSIC, without ontologies, was studied
in~\cite{DBLP:conf/kr/CohenH94,DBLP:journals/ml/CohenH94,DBLP:journals/ml/FrazierP96}.
% In terms of expressive power, CLASSIC is incomparable to \EL.
%
% Other learning approaches have been pursued both for DL concepts,
% see~\cite{DBLP:journals/ml/LehmannH10,DBLP:conf/dlog/Lisi12,DBLP:journals/fuin/TranHHNN14,DBLP:conf/ekaw/Fanizzi0dE18,DBLP:conf/ijcai/FunkJLPW19}
% and references therein, and for database queries~\cite{DBLP:journals/tods/ArenasD16,DBLP:conf/icdt/CateD15,DBLP:conf/icdt/Barcelo017,DBLP:journals/tods/CohenW16,DBLP:journals/tods/BonifatiCS16,GuJuSa-IJCAI18,DBLP:conf/gcai/Ortiz19}.
% Some of the mentioned works do consider an ontology.
% A full account of
% all related these works is beyond the scope of this paper.

\section{Preliminaries}
\label{sect:prelims}

\paragraph{Concepts and Ontologies.}
Let \NC, \NR, and \NI be countably infinite sets of \emph{concept
  names}, \emph{role names}, and \emph{individual names}, respectively. A \emph{role} $R$ takes the form $r$ or $r^-$ where $r$ is a role name and
$r^-$ is called an \emph{inverse role}. If $R=s^-$ is an inverse role,
then $R^-$ denotes the role name $s$. 
An \emph{\ELI-concept} is
formed according to the syntax rule
\[ C,D ::= \top \mid A \mid C \sqcap D \mid \exists R . C\] where $A$
ranges over \NC and $R$ over roles. An \emph{\EL-concept} is an
\ELI-concept that does not use inverse roles. 
% The \emph{role depth} of an \ELI-concept
% $C$ is the maximum nesting depth of existential restrictions
% $\exists r . C$ in it. For example, the role depth of
% $\exists r . (A \sqcap \exists s . \top)$ is~2.

An \emph{\ELI-ontology}~\Omc is a finite set of \emph{concept
  inclusions (CIs)} $C \sqsubseteq D$ where $C$ and~$D$ range over
\ELI-concepts. An \emph{\ELdr-ontology} is an \ELI-ontology where
inverse roles occur only in the form of \emph{range restrictions}
$\exists r^-.\top \sqsubseteq C$ with $C$ an \EL-concept. Note that
\emph{domain restrictions} $\exists r . \top \sqsubseteq C$ can be
expressed already in \EL. An \EL-ontology is an \ELI-ontology that
does not use inverse roles. An \ELdr-ontology is in
\emph{normal form} if all CIs in it are of one of the forms
\[
%  \top \sqsubseteq A,\
  A_1 \sqcap A_2 \sqsubseteq A,\
A_1 \sqsubseteq \exists r . A_2,\
\exists r . A_1 \sqsubseteq A_2, \ %\text{ and } \
\exists r^- . \top \sqsubseteq A
\]
where $A,A_1,A_2$ are concept names
or $\top$. % , and all range restrictions have a concept
% name on the right-hand side.
An \emph{ABox} \Amc is a finite set of \emph{concept assertions}
$A(a)$ and \emph{role assertions} $r(a,b)$ where
$A \in \NC \cup \{ \top \}$, $r \in \NR$, and $a,b \in \NI$.  We use
$\mn{ind}(\Amc)$ to denote the set of individual names that are used
in \Amc and may write $r^-(a,b)$ in place of $r(b,a)$. An ABox is a
\emph{ditree} if the directed graph $(\mn{ind}(\Amc),\{(a,b) \mid
r(a,b) \in \Amc\})$ is a tree and there are no multi-edges, that is,
$r(a,b),s(a,b) \in \Amc$ implies $r=s$.

The semantics is defined as usual in terms of \emph{interpretations}
\Imc, which we define to be a (possibly infinite and) non-empty set of
concept and role assertions. We use $\Delta^\Imc$ to denote the set of
individual names in \Imc, define $A^\Imc = \{ a \mid A(a) \in \Imc \}$
for all $A \in \NC$, and $r^\Imc = \{ (a,b) \mid r(a,b) \in \Imc \}$
for all $r \in \NR$.  The extension $C^\Imc$ of \ELI-concepts $C$ is
then defined as usual \cite{DL-Textbook}. This definition of
interpretation is slightly different from the usual one, but
equivalent; % for the purposes of this paper
its virtue is uniformity as every ABox is a (finite) interpretation.
An interpretation~\Imc \emph{satisfies} a CI $C \sqsubseteq D$ if
$C^\Imc \subseteq D^\Imc$, and a (concept or role) assertion $\alpha$
if $\alpha \in \Imc$ or $\alpha$ has the form~$\top(a)$.  We say that
\Imc is a \emph{model} of an ontology/ABox if it satisfies all concept
inclusions/assertions in it and write
$\Omc \models C \sqsubseteq D$ if every model of the ontology \Omc
satisfies the CI $C \sqsubseteq D$.
% {\color{blue}An \ELI-concept $C$
%   \emph{is subsumed by} an \EL-concept $D$ under an ontology \Omc,
%   written $\Omc \models C \sqsubseteq D$, if $C \sqsubseteq D$ is
%   satisfied in every model of \Omc. NEEDED?}

A \emph{signature} is a set of concept and role names, uniformly
referred to as \emph{symbols}.  
For any syntactic object $O$ such as an
ontology or an ABox, we use  $\mn{sig}(O)$ to denote the symbols used
 in $O$ and
$||O||$ to denote the \emph{size} of $O$, that is, the
length of a word representation of $O$ in a suitable alphabet.

\paragraph{CQs and Homomorphisms.}
A {\em conjunctive query (CQ)} takes the form
$q(\bar x) \leftarrow \varphi(\bar x, \bar y)$
%\end{equation}
where $\varphi$ is a conjunction of \emph{concept atoms} $A(x)$ and
\emph{role atoms} $r(x,y)$ with $A \in \NC$ and $r \in \NR$. We may
write $r^-(x,y)$ in place of $r(y,x)$. Note that the tuple $\bar x$
used in the \emph{head} $q(\bar x)$ of the CQ may contain repeated
occurrences of variables. When we do not want to make the \emph{body}
$\varphi(\bar x, \bar y)$ explicit, we may denote
$q(\bar x) \leftarrow \varphi(\bar x, \bar y)$
simply with $q(\bar x)$.  We refer to the variables in $\bar x$ as the
{\em answer variables} of $q$.
 and to the variables in $\bar y$ as the
 {\em quantified variables}.
When we are not interested in order and multiplicity, we treat
$\bar x$ and $\bar y$ as sets of variables.
We use $\mn{var}(q)$ to denote the set of
all variables in $\bar x$ and $\bar y$.
The \emph{arity} of $q$
is the length of tuple $\bar x$ and $q$ is \emph{Boolean} if it has
arity zero. % {\color{blue}Whenever convenient,
  % we do not distinguish between a conjunction of atoms and a set of
  % atoms. NEEDED?}
Every
CQ $q(\bar x) \leftarrow \varphi(\bar x, \bar y)$
gives rise to an ABox (and thus interpretation) $\Amc_q$ obtained from
$\varphi(\bar x, \bar y)$ by viewing variables as individual names and
atoms as assertions. A CQ is a \emph{ditree} if $\Amc_q$ is.

A \emph{homomorphism} $h$ from interpretation $\Imc_1$ to
interpretation $\Imc_2$ is a mapping from $\Delta^{\Imc_1}$ to
$\Delta^{\Imc_2}$ such that $d \in A^{\Imc_1}$ implies
$h(d) \in A^{\Imc_2}$ and $(d, e) \in r^{\Imc_1}$ implies
$(h(d), h(e)) \in r^{\Imc_2}$. For $\bar d_i$ a tuple over
$\Delta^{\Imc_i}$,
$i \in \{1,2\}$, we write $\Imc_1, \bar d_1 \to \Imc_2, \bar d_2$
if there is a homomorphism $h$ from
$\Imc_1$ to $\Imc_2$ with $h(\bar d_1) = \bar d_2$.
With a homomorphism from a CQ $q$ to an interpretation \Imc,
we mean a homomorphism from $\Amc_q$ to \Imc.

Let $q(\bar x) \leftarrow \varphi(\bar x, \bar y)$
be a CQ and $\Imc$ an interpretation. A tuple
$\bar d \in (\Delta^\Imc)^{|\bar x|}$ is an {\em answer to $q$ on}
$\Imc$, written $\Imc \models q(\bar d)$, if there is a homomorphism
$h$ from $q$ to \Imc with $h(\bar x) = \bar d$.  % The {\em
%   evaluation of $q(\bar x)$ on $\Imc$}, denoted $q(\Imc)$, is the set
% of all answers to $q$ on~$\Imc$.
% {\color{blue}For a Boolean CQ $q$,
%   we may write $\Imc \models q$ if $q(\Imc) = \{()\}$ and
%   $\Imc \not\models q$ otherwise. NEEDED?}
Now let \Omc be an \ELI-ontology and \Amc an ABox. A tuple
$\bar a \in \mn{ind}(\Amc)^{|\bar x|}$ is an {\em answer to $q$ on
  $\Amc$ under} \Omc, written $\Amc,\Omc \models q(\bar a)$ if
$\bar a$ is an answer to $q$ on every model of \Omc and~\Amc.

For $q_1$ and $q_2$ CQs of the same arity $n$ and $\Omc$ an 
\ELI-ontology, we say that $q_1$ is \emph{contained} in $q_2$ under 
\Omc, written $q_1 \subseteq_\Omc q_2$, if for all ABoxes $\Amc$ and 
$\bar a \in \mn{ind}(\Amc)^n$, 
$\Amc, \Omc \models q_1(\bar a)$ implies 
$\Amc, \Omc \models q_2(\bar a)$.  We call $q_1$ and $q_2$
\emph{equivalent} under \Omc, written $q_1 \equiv_\Omc q_2$, if 
$q_1 \subseteq_\Omc q_2$ and $q_2 \subseteq_\Omc q_1$. 

Every \ELI-concept can be viewed as a unary tree-shaped CQ in an
obvious way. For example, the \EL-concept
$A \sqcap \exists s . \top \sqcap \exists r . B$  yields the CQ
$q(x) \leftarrow A(x) \wedge s(x,y) \wedge r
(x,z) \wedge B(z)$. We %refrain from giving a formal definition and
use ELQ to denote the class of all \EL-concepts
viewed as a CQ, and likewise for ELIQ and \ELI-concepts.

\paragraph{Important Classes of CQs.} We next define a class of CQs
that we show later to admit polynomial time learnability under
\ELdr-ontologies, one of the main results of this paper. Let \Amc be
an ABox. A \emph{path} in \Amc from $a$ to $b$ is a sequence
$p=R_0(a_0,a_1), \dots, R_{n-1}(a_{n-1},a_n) \in \Amc$, $n \geq 0$,
such that $a_0=a$ and $a_n=b$.  We say that $p$ is a
\emph{cycle of length $n$} if $a_0=a_n$, all assertions in $p$
are distinct, and all of $a_0,\dots,a_{n-1}$ are distinct. A
\emph{chord} of cycle $p$ is an assertion $R(a_i,a_j)$ 
with $0 \leq i,j < n-1$ and $i\notin \{ j,j -1 \!\mod n, j+1 \!\mod n\}$.  A
cycle in a CQ $q$ is a cycle in $\Amc_q$.
With $\text{CQ}^{\text{csf}}$, we denote the class of CQs
$q(\bar x) \leftarrow \varphi(\bar x, \bar y)$ that
are
\begin{enumerate}

\item \emph{chordal}, that is, every cycle 
  $R_0(x_0,x_1),\dots,$ $R_{n-2}(x_{n-2},x_{n-1})$ in $q$ of length at 
  least four that contains at least one quantified variable has a 
  chord;
  
\item \emph{symmetry-free}, that is, if $\vp$ contains atoms
  $r(y_1,x),r(y_2,x)$ with $y_1\neq y_2$, then $x$ is an answer variable or one of
  the atoms occurs on a cycle or $\vp$ contains an atom $s(z,z)$
  for some $z \in \{ x,y_1,y_2 \}$.

%   , that is, $q$ contains an atom
%   $r(a_i,a_j)$ for some $i,j$ with $0 \leq i,j < n-1$, $i\notin \{
%   j,j -1 \mod n, j+1 \mod n\}$.

\end{enumerate}
In Point~2, $r$ is a role name and thus there are no restrictions on
`inverse symmetries': $\varphi$ may contain atoms $r(x,y_1),r(x,y_2)$
with $x$ a quantified variable and none of the atoms occurring on a
cycle and no reflexive loops present.
Note that CQ$^{\text{csf}}$ contains all CQs without quantified
variables (also called \emph{full} CQs), all ELQs, and all ELIQs
obtained from \ELI-concepts that are \emph{symmetry-free}, that is,
that do not contain a subconcept of the form
$\exists r . (C \sqcap \exists r^- . D)$ with $r$ a role name.  We
denote the latter class with ELIQ$^{\text{sf}}$.  CQ$^{\text{csf}}$
also includes all CQs obtained from such ELIQs by choosing a set of
variables and making them answer variables.  Note that CQs from
$\text{CQ}^{\text{csf}}$ need not be connected, in fact
$\text{CQ}^{\text{csf}}$ is closed under disjoint union. Every CQ
whose graph is a clique or a $k$-tree (a maximal graph of treewidth
$k$) with $k>1$ is in $\text{CQ}^{\text{csf}}$. Some concrete examples
for CQs in $\text{CQ}^{\text{csf}}$ are given below, filled circles
indicating answer variables:

%\begin{figure}[h]
\begin{tikzpicture}[
    every node/.append style = {font=\footnotesize}
    ]

    \tikzstyle{answer} = [circle, fill, inner sep = 2pt]
    \tikzstyle{exists} = [circle, draw, inner sep = 2pt]
    \tikzstyle{role} = [->, thick]

    \node (a_1) [answer] {};
    \node (a_2) [answer, right = of a_1] {};
    \node (a_3) [exists, below right = 0.7cm and 0.5cm of a_1 ] {};
    \draw[role] (a_1) to node [above] {$r$} (a_2);
    \draw[role] (a_1) to node [left] {$s$} (a_3);
    \draw[role] (a_2) to node [right] {$s$} (a_3);

    \node (b_1) [answer, right = of a_2] {};
    \node (b_2) [exists, below = 0.7cm of b_1] {};
    \node (b_3) [exists, below left = 0.7cm and 0.5cm of b_2] {};
    \node (b_4) [exists, below right = 0.7cm and 0.5cm of b_2] {};
    \draw[role] (b_1) to node [left] {$r$} (b_2);
    \draw[role] (b_2) to node [left] {$s$} (b_3);
    \draw[role] (b_4) to node [right] {$r$} (b_2);
    \draw[role] (b_4) to node [below] {$s$} (b_3);

    \node (c_1) [answer, below = 0.75cm of a_3] {};
    \node (c_2) [exists, below left = 0.7cm and 0.5cm of c_1] {};
    \node (c_3) [exists, below right = 0.7cm and 0.5cm of c_1] {};
    \draw[role] (c_2) to node [left] {$r$} (c_1);
    \draw[role] (c_3) to node [right] {$r$} (c_1);

    \node (d_1) [exists, right = 2cm of b_1] {};
    \node (d_2) [exists, below left = 0.7cm and 0.5cm of d_1] {};
    \node (d_3) [exists, below right = 0.7cm and 0.5cm of d_1] {};
    \node (d_4) [exists, below = of d_2] {};
    \node (d_5) [exists, below = of d_3] {};
    \node (d_6) [exists, below right = 0.7cm and 0.5cm of d_4] {};
    \draw[role] (d_1) to node [left] {$r$} (d_2);
    \draw[role] (d_2) to node [below] {$s$} (d_3);
    \draw[role] (d_3) to node [right] {$s$} (d_1);
    \draw[role] (d_3) to node [right] {$r$} (d_5);
    \draw[role] (d_4) to node [above] {$r$} (d_5);
    \draw[role] (d_5) to node [right] {$r$} (d_6);
    \draw[role] (d_4) to node [left] {$s$} (d_6);

    \node (e_1) [answer, right = 1.5cm of d_1] {};
    \node (e_2) [answer, below = of e_1] {};
    \node (e_3) [answer, right = of e_1] {};
    \node (e_4) [answer, below = of e_3] {};
    \draw[role] (e_1) to node [left] {$s$} (e_2);
    \draw[role] (e_2) to node [below] {$s$} (e_4);
    \draw[role] (e_3) to node [above] {$s$} (e_1);
    \draw[role] (e_3) to node [right] {$s$} (e_4);
\end{tikzpicture}
%
%\vspace*{-2mm}
%
%\caption{Examples of CQs in  CQ$^{\text{csf}}$}
%\end{figure}
%
% We also consider \EL-concepts and symmetry-free \ELI-concepts as
% queries.
% Symmetry-free \ELI-concepts were introduced in \cite{aaaithis}, but are
% defined there in a more restricted way obtained by letting $r$ range
% over possibly inverse roles.
% A CQ is a \emph{ditree} if it is in ELQ or can be
% obtained from a CQ in ELQ by quantifying the root variable.
We believe that CQ$^{\text{csf}}$ includes many relevant CQs that
occur in practical applications. To substantiate this, we have
analyzed the 65 queries that are part of three widely used benchmarks for
ontology-mediated querying, namely Fishmark, LUBM$^\exists$, and NPD
\cite{DBLP:conf/semweb/BailAPWHGG12,DBLP:conf/semweb/LutzSTW13,DBLP:conf/edbt/LantiRXC15}. We
found that more than 85\% of the queries fall into CQ$^{\text{csf}}$
while less than 5\% fall into ELIQ$^{\text{sf}}$.

\paragraph{Universal Models.}
Let \Amc be an ABox and \Omc an \ELdr-ontology.  The \emph{universal
  model of \Amc and~\Omc}, denoted $\Umc_{\Amc,\Omc}$, is the
interpretation obtained by starting with \Amc and then `chasing' with
the CIs in the ontology which adds (potentially infinite) ditrees
below every $a \in \mn{ind}(\Amc)$. The formal definition is in
the appendix. The model is universal in that
$\Umc_{\Amc,\Omc} \models q(\bar a)$ iff $\Amc,\Omc \models q(\bar a)$
for all CQs $q(\bar x)$ and tuples
$\bar a \in \mn{ind}(\Amc)^{|\bar x|}$. It can be useful to represent
  universal models in a finite way, as for
  example in the combined approach to ontology-mediated querying~\cite{DBLP:conf/ijcai/LutzTW09}.  Here, we introduce a
  finite representation that is tailored towards our class %of CQs
  CQ$^{\text{csf}}$.

  The \emph{3-compact model} $\Cmc^3_{\Amc,\Omc}$ of \Amc and \Omc is
  defined as follows. Let $\mn{sub}(\Omc)$ be the set of all
  concepts in \Omc, closed under subconcepts.  $\Cmc^3_{\Amc,\Omc}$
  uses the individual names from \Amc as well as individual names of
  the form $c_{a,i,r,C}$ where $a \in \mn{ind}(\Amc)$,
  $0 \leq i \leq 4$, $r$ is a role name from \Omc, and
  $C \in \mn{sub}(\Omc)$.  For every role name~$r$, we use $C_r$ to
  denote the conjunction over all $C$ such that
  $\exists r^- . \top \sqsubseteq C \in \Omc$, and $\top$ if the 
  conjunction is empty.  Let $i \oplus 1$ be short for
  $(i \bmod 4)+1$.  Define
\[
\begin{array}{r@{\;}c@{\;}l}
  \Cmc^3_{\Amc,\Omc}&:=& \Amc \cup \{ A(a) \mid \Amc,\Omc
                         \models A(a) \}  \,\cup \\[1mm]
                    && \{ A(c_{a,i,r,C}) \mid 
                       \Omc \models C \sqcap C_r
                       \sqsubseteq A \} \, \cup \\[1mm]
                    &&  \{ r(a, c_{a,0,r,C}) \mid \Amc,\Omc \models 
                       \exists r . C(a) \}
                       % \text{ \color{orange} and } \Omc \models r
                       % \sqsubseteq s\}
                       \, \cup \\[1mm]
                       &&  \{ r(c_{a,i,s,C}, c_{a,i \oplus 1 ,r,C'}) \mid \Omc \models 
                          C \sqcap C_s \sqsubseteq \exists r . C' \}.
                     %      \text{ \color{orange} and } \Omc \models r
                     % \sqsubseteq s \}. 
\end{array}
\]
There is a homomorphism from $\Umc_{\Amc,\Omc}$ to
$\Cmc^3_{\Amc,\Omc}$ that is the identity on $\mn{ind}(\Amc)$,
but in general not vice versa. Nevertheless, $\Cmc^3_{\Amc,\Omc}$
is universal for CQ$^{\text{csf}}$.
\begin{restatable}{lemma}{lemcompactunivers}
  \label{lem:compactunivers}
  Let \Amc be an ABox and \Omc an \ELdr-ontology. Then
  %
%  \begin{enumerate}
%
 % \item 
  $\Cmc^3_{\Amc, \Omc}$ is a model of $\Amc$ and $\Omc$
  such that
%
  % \item
  for every CQ
    $q(\bar x) \in CQ^{\text{csf}}$
    and $\bar a \in \mn{ind}(\Amc)^{|\bar x|}$, $\Cmc^3_{\Amc,\Omc}
    \models q(\bar a)$ iff $\Amc,\Omc \models q(\bar a)$.
  %  
  %\end{enumerate}
\end{restatable}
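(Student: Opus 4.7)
The plan breaks into three parts. First, to check $\Cmc^3_{\Amc,\Omc}$ is a model of \Amc and \Omc, I would assume without loss of generality that \Omc is in normal form and do a case analysis over its four CI shapes. The concept memberships $A(c_{a,i,r,C})$ and existential successors of compact individuals are declared in $\Cmc^3_{\Amc,\Omc}$ exactly when the entailments $\Omc \models C \sqcap C_r \sqsubseteq A$ and $\Omc \models C \sqcap C_r \sqsubseteq \exists r'.C'$ hold, so each CI is satisfied at compact individuals by construction; the ABox case uses that $\Cmc^3_{\Amc,\Omc}$ records all entailed $A(a)$ and creates $r$-successors for every entailed $\exists r.C$. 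Second, for the easy direction, I would define $\pi: \Umc_{\Amc,\Omc} \to \Cmc^3_{\Amc,\Omc}$ to be the identity on $\mn{ind}(\Amc)$ and, for a chase witness introduced at depth $n \geq 1$ below $a$ as a witness of $\exists r.C$, send it to $c_{a,0,r,C}$ when $n=1$ and to $c_{a,((n-2)\bmod 4)+1,r,C}$ when $n \geq 2$. Preservation of concept and role atoms by $\pi$ reduces to the same entailments, and composing $\pi$ with a homomorphism $\Amc_q \to \Umc_{\Amc,\Omc}$ yields the ``only if'' direction for arbitrary $q$.

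For the ``if'' direction, given $h: \Amc_q \to \Cmc^3_{\Amc,\Omc}$ with $h(\bar x) = \bar a$ and $q \in \text{CQ}^{\text{csf}}$, I would lift $h$ to a homomorphism $h': \Amc_q \to \Umc_{\Amc,\Omc}$ with $h'(\bar x) = \bar a$. Partition $\mn{var}(q)$ into $V_A = \{v : h(v) \in \mn{ind}(\Amc)\}$, which contains every answer variable, and $V_B$, and set $h' := h$ on $V_A$. Since the only edges between ABox and compact individuals go from an $a$ to a depth-$1$ (index-$0$) successor, each connected component $K$ of $V_B$ in the Gaifman graph of $q$ maps entirely into the infinite tree below a single ABox individual $a_K$, with entry vertices of $K$ (those adjacent in $q$ to some vertex of $V_A$) forced to depth~$1$. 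The core idea is to assign every $v \in K$ an honest depth $d(v) \in \mathbb{N}_{>0}$ satisfying $d(v_2) = d(v_1) + 1$ for every role atom $r(v_1, v_2)$ inside $K$, and then set $h'(v)$ to be the unique element of $\Umc_{\Amc,\Omc}$ below $a_K$ at depth $d(v)$ on the chase path whose successive $(r, C)$-labels agree with those read off from $h$ along a path into $v$. Preservation of atoms then reduces to the entailments already used in Step~1.

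The main obstacle is the existence of $d$. The cyclic identification in $\Cmc^3_{\Amc,\Omc}$ only guarantees that signed sums of edge directions around cycles in $K$ are $\equiv 0 \pmod 4$, whereas honest depths require them to equal~$0$, and this is where both hypotheses on $q$ enter. For chordality: because edges in the compact tree part connect only consecutive indices, no three distinct compact individuals form a triangle, so every triangle in $K$ must be collapsed by $h$; triangulating each undirected cycle of length at least four in $K$ via chords thus reduces it to such forced collapses and pins the signed sum around the original cycle to~$0$. For symmetry-freeness: if two anonymous variables $y_1 \neq y_2$ share an atom $r(y_i, x)$ into an anonymous variable $x$, then the unique-parent property of the infinite tree forces $h'(y_1) = h'(y_2)$, and the three escape clauses in the definition of symmetry-freeness (answer variable, atom on a cycle, relevant self-loop) are precisely what rules out the configurations in which this forced collapse would conflict with the independent types that $h$ assigns to $y_1$ and $y_2$. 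A breadth-first traversal from the entry vertices of $K$ then produces $d$, and the resulting $h'$ is a homomorphism.
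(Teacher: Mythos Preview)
Your strategy---fix the homomorphism on $V_A$ and lift each connected component $K$ of $V_B$ into the tree part of $\Umc_{\Amc,\Omc}$ below a single ABox individual---is the paper's strategy. Two remarks, one cosmetic and one substantive.

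First, your chordality argument is slightly off but recoverable. You say every triangle in $K$ ``must be collapsed by $h$'' and that such collapses pin the signed sum to~$0$. In fact $K$ contains \emph{no} triangles at all: a collapse would create a self-loop on a compact individual, and $\Cmc^3_{\Amc,\Omc}$ has none. Combined with chordality (every cycle of length~$\geq 4$ through a quantified variable has a chord, hence inductively contains a triangle), $K$ is acyclic outright, and your depth function exists trivially. The paper reaches the same point differently: it first treats variables on $3$-cycles explicitly via Lemma~\ref{lem:noanocycles} (if $h(x_0)$ is compact and $x_0$ lies on a $3$-cycle, the other two vertices collapse to a single $a\in\mn{ind}(\Amc)$ and one sets $g(x_0)=arC$), and only then shows the residual subquery $q'$ is a disjoint union of \emph{ditrees}.

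Second---and this is a genuine gap---your symmetry-freeness paragraph is not a proof. The issue is not that $h'(y_1)=h'(y_2)$ is \emph{forced}; it is that such a collapse need not be \emph{consistent}: if $r(y_1,x),r(y_2,x)$ both lie in $K$ then $h(y_1),h(y_2)$ may be distinct compact individuals carrying different concept labels, and no single trace parent in $\Umc_{\Amc,\Omc}$ can serve both. What you actually need is that no $x\in K$ has two distinct predecessors in~$K$, i.e.\ that $K$ is a ditree. That requires three ingredients you do not supply: (i)~two incoming atoms at a compact $h(x)$ share the same role name, by the shape of $\Cmc^3_{\Amc,\Omc}$; (ii)~an atom with both endpoints in $V_B$ cannot lie on any cycle of~$q$ (by chordality it would lie on a $3$-cycle, and the analysis above via Lemma~\ref{lem:noanocycles} forbids that); (iii)~no self-loops sit on $V_B$-vertices. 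Only after (i)--(iii) does symmetry-freeness give the contradiction. The paper sidesteps the self-loop escape clause entirely by first passing to an equivalent \emph{strongly} symmetry-free query (Lemma~\ref{lem:strongsymfree}); you do not, and your single sentence does not carry the weight of (i)--(iii).
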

$\Cmc^3_{\Amc,\Omc}$ is defined so as to avoid spurious cycles of
length at most $3$ while larger spurious cycles are irrelevant for CQs
that are chordal. This explains the superscript $\cdot^3$ and enables
the lemma below. $\Cmc^3_{\Amc,\Omc}$ also avoids spurious
predecessors connected via different role names. Spurious predecessors
connected via the same role name cannot be avoided, but are irrelevant
for CQs that are symmetry-free.
\begin{restatable}{lemma}{lemnoanocycles}
  \label{lem:noanocycles}
  % ~\\[-4mm]
  % %
  % \begin{enumerate}

  % \item If $p=R_0(a_0,a_1),\dots,R_{n-1}(a_{n-1},a_n)$ is a path from
  %   $c_{a_1,i_1,s_1,C_1}$ to $c_{a_2,i_2,s_2,C_2}$ in
  %   $\Cmc^3_{\Amc,\Omc}$ and $a_0,\dots,a_n \notin \mn{ind}(\Amc)$,
  %   then $p$ contains a symmetry or $R_0=s_1^-$ or $R_{n-1}=s_2$.
    
  % \item 
  Every cycle in $\Cmc^3_{\Amc,\Omc}$ of length at most three consists
  only of individuals from $\mn{ind}(\Amc)$.
%  \end{enumerate}
\end{restatable}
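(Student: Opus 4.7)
The plan is to classify individuals of $\Cmc := \Cmc^3_{\Amc,\Omc}$ as either \emph{named} (in $\mn{ind}(\Amc)$) or \emph{fresh} (of the form $c_{a,i,r,C}$), and correspondingly classify role assertions into three families: (A) those from $\Amc$, which connect only named individuals; (B) \emph{type-1} edges $r(a, c_{a,0,r,C})$ from a named individual to a fresh individual with index $0$; (C) \emph{type-2} edges $r(c_{a,i,s,C}, c_{a,i\oplus 1,r,C'})$ between two fresh individuals that share the same first coordinate $a$. I would then assume for contradiction that some cycle of length at most three in $\Cmc$ contains a fresh individual, and carry out a case analysis on the length of the cycle and on how many of its vertices are fresh.

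For length $1$, a self-loop on a fresh individual is immediately impossible: a type-2 edge requires $i \neq i\oplus 1$, and a type-1 edge connects a named vertex to a distinct fresh one. For length $2$, let the two endpoints be $v_0 \neq v_1$ with two distinct assertions between them. If exactly one of them is fresh, both edges must be type-1; but the target $c_{a,0,r,C}$ fixes both the source $a$ and the role $r$, so at most one such edge exists. If both are fresh, the two assertions must be type-2: in the same direction the role is determined by the target's third coordinate, giving at most one edge, while in opposite directions we would need simultaneously $j = i \oplus 1$ and $i = j \oplus 1$, i.e., $(i\oplus 1)\oplus 1 = i$, which fails for every $i \in \{0,\ldots,4\}$.

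For length $3$, I would split on the number of fresh individuals in the cycle. With exactly one fresh vertex $c_{a,i,r,C}$, the two cycle edges incident to it must both be type-1, and type-1 edges force the source to be the first coordinate $a$; hence both named vertices equal $a$, contradicting the distinctness of the three cycle vertices. With exactly two fresh vertices, each is connected to the single named vertex by a type-1 edge, which forces both fresh vertices to carry index $0$; then the intervening type-2 edge would demand $0 \oplus 1 = 0$, which is false. The substantive case is three fresh vertices: they must share the same first coordinate $a$, and all three cycle edges are of type 2. The key observation is that $\oplus 1$ flips the parity of the index on $\{0,1,2,3,4\}$ (whose parities are $0,1,0,1,0$), so traversing a type-2 edge in either direction always changes the parity of the index. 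Going around a cycle of length three therefore flips parity three times, a net change of~$1$, yet the walk must return to its starting index and hence to its starting parity---a contradiction.

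I expect the three-fresh subcase of length $3$ to be the main obstacle, and the parity observation on $\oplus 1$ is the idea that makes it go through cleanly; the remaining cases reduce to bookkeeping about which coordinates of $c_{a,i,r,C}$ are forced by the two incident edges.
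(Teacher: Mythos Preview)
Your proof is correct and follows essentially the same approach as the paper: a case analysis on which cycle vertices are fresh, using that type-1 edges pin the named neighbour and index~$0$, and that the index $i$ obstructs short cycles among fresh vertices. Your parity observation for the three-fresh triangle is a crisper formalization of what the paper leaves as ``impossible due to the use of the index $i$''.
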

We also use the direct product $\Imc_1 \times \Imc_2$ of
interpretations $\Imc_1$ and $\Imc_2$, defined 
in the standard way (see appendix). For tuples of
individuals $\bar a_i = (a_{i,1},\dots,a_{i,n})$, $i \in \{1,2\}$, we
set
$\bar a_1 \otimes \bar a_2=((a_{1,1},a_{2,1}),\dots,
(a_{1,n},a_{2,n}))$.

\section{Learning under \texorpdfstring{\ELdr}{ELr}-Ontologies}

We establish polynomial time learnability results under
\ELdr-ontologies for the query classes CQ$^{\text{csf}}$, ELQ, and
ELIQ$^{\text{sf}}$.  For CQ$^{\text{csf}}$, we additionally have to
assume that the arity of CQs to be learned is bounded by a
constant or that unrestricted CQs can be used in equivalence
queries. When speaking of equivalence queries, we generally imply that
the CQs used in such queries must be from the class of CQs to be
learned. If this is not the case and unrestricted CQs are admitted in
equivalence queries, then we speak of \emph{CQ-equivalence
  queries}. When using CQ-equivalence queries, the learned
representation of the target query is a CQ, but need not
necessarily belong to \Cmc (though it is equivalent to a query from~\Cmc). For $w \geq 0$, let CQ$^{\text{csf}}_w$ be the restriction
of CQ$^{\text{csf}}$ to CQs of arity at most $w$. The following
are the main results obtained in this section.
\begin{theorem}
  \label{thm:mainthree}
  ~\\[-4mm]
  \begin{enumerate}
  \item   ELQ- and ELIQ$^{\text{sf}}$-queries are polynomial time learnable
    under \ELdr-ontologies using membership and equivalence queries;

    \item for every $w \geq 0$,
    CQ$^{\text{csf}}_{w}$-queries are polynomial time learnable
    under \ELdr-ontologies using membership and equivalence queries;

  \item   CQ$^{\text{csf}}$-queries are polynomial time learnable
    under \ELdr-ontologies using membership and CQ-equivalence queries.
    
  \end{enumerate}
\end{theorem}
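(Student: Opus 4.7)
The plan is to follow the standard Angluin-style schema: maintain a current hypothesis $q_H$, pose equivalence queries, and refine $q_H$ based on each counterexample, using membership queries for the local work. A key enabling ingredient throughout is the finite compact universal model $\Cmc^3_{\Amc,\Omc}$ of Lemma~\ref{lem:compactunivers}, which replaces the infinite $\Umc_{\Amc,\Omc}$ so that all constructions stay polynomial; its universality for CQ$^{\text{csf}}$ (which includes ELQ and ELIQ$^{\text{sf}}$) is precisely what is needed to reason about the target $q_T$.

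For positive counterexamples $(\Amc,\bar a)$ I would extract a candidate CQ from $\Cmc^3_{\Amc,\Omc}$ around the tuple $\bar a$: for Part~1, a tree-shaped neighborhood yielding an ELQ/ELIQ$^{\text{sf}}$; for Part~2, a connected subinterpretation around $\bar a$ whose induced CQ has arity $\leq w$ and can be shown to lie in CQ$^{\text{csf}}_w$ with the help of Lemma~\ref{lem:noanocycles} (short cycles only among ABox individuals, so chordality and symmetry-freeness are preserved). I would then \emph{minimize} the candidate by repeated atom deletion, using a membership query to check after each attempted deletion whether $\bar a$ is still an answer. This produces a small, minimal CQ that logically entails $q_T$ on the example.

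For negative counterexamples $(\Amc,\bar a)$, where $\bar a$ is returned by $q_H$ but not by $q_T$, I would apply the standard product refinement: replace $q_H$ by a minimized subquery of $\Amc_{q_H} \times \Cmc^3_{\Amc,\Omc}$ with answer variables $\bar x\otimes \bar a$, where the second factor comes from an earlier positive counterexample. Universality of $\Cmc^3$ for CQ$^{\text{csf}}$ ensures the product still implies $q_T$, and the product eliminates the spurious homomorphism witnessing the counterexample. A Frazier/Pitt-style potential argument — counting homomorphisms from $q_T$ into the (bounded) candidates — yields a polynomial bound on the number of refinement rounds in terms of $\|q_T\|$, $\|\Omc\|$, and the sizes of counterexamples.

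The main obstacle is Part~3: for unbounded arity, the direct product of two CQs from CQ$^{\text{csf}}$ need not be chordal or symmetry-free, so it cannot be used as a hypothesis in an equivalence query restricted to CQ$^{\text{csf}}$, and one cannot in general ``repair'' it back into the class without blowing up size. The way out is precisely to admit CQ-equivalence queries in Part~3: we keep arbitrary CQs as hypotheses, perform the product construction directly, minimize with membership queries, and rely on the oracle to compare against $q_T\in\text{CQ}^{\text{csf}}$. Correctness of termination still goes through because evaluation of $q_T$ is captured by $\Cmc^3_{\Amc,\Omc}$. Secondary technical work — bounding the depth into the chase part of $\Cmc^3_{\Amc,\Omc}$ that the extracted candidate needs to reach, and verifying that minimization via membership queries runs in polynomial time — follows standard patterns once the compact model and the chordal/symmetry-free structure are in hand.
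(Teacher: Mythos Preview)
Your proposal has a genuine gap at the core step: you never explain how to bring the intermediate hypothesis back into the target class $\Qmc$ while keeping its size bounded, and ``minimize by atom deletion'' does not achieve this. After the product step the resulting CQ is in general neither tree-shaped nor chordal nor symmetry-free, and membership-query minimization removes only \emph{redundant} atoms; it will not break a chordless $5$-cycle or a symmetry $r(y_1,x),r(y_2,x)$ when these are needed to keep $\bar a$ an answer. The paper's solution is a dedicated \textsf{refine} subroutine that (i) \emph{Expand}s: picks a long chordless cycle and \emph{doubles} its length, then minimizes --- each such step strictly increases $|\mn{ind}(\Bmc_i)|$, which is itself bounded by $|\mn{var}(q_T)|$, so after at most $|\mn{var}(q_T)|$ rounds no long chordless cycles remain; and (ii) \emph{Split}s symmetries by cloning the offending vertex, again with minimization. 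Without something like Expand/Split you cannot pose the next equivalence query in Parts~1 and~2, and your ``extract a tree-shaped neighborhood from $\Cmc^3_{\Amc,\Omc}$'' is too vague to substitute: you give no depth bound and no argument that the result is contained in~$q_T$.

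Two further points. First, the paper's algorithm is one-sided: it starts from a bottom query $q^\bot$ contained in every $\Sigma$-query and maintains $q_H \subseteq_\Omc q_T$ throughout, so \emph{only positive} counterexamples ever arise; your handling of negative counterexamples is therefore unnecessary, and the product you propose for them, $\Amc_{q_H}\times\Cmc^3_{\Amc,\Omc}$, misses the ontology on the $q_H$ side --- the paper takes $\Cmc^3_{\Amc_{q_H},\Omc}\times\Cmc^3_{\Amc,\Omc}$ precisely so that both factors are models of $\Omc$. Second, for Part~3 the obstacle is not that the product leaves CQ$^{\text{csf}}$ (CQ-equivalence queries absorb that), but that the Expand step searches over $2^{\mn{ar}}$ candidate answer tuples; the paper introduces a second \textsf{refine} based on \emph{unraveling} the anonymous parts of the compact models inside the product to avoid this blowup, which is a nontrivial change your sketch does not anticipate.
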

\noindent
Before providing a proof of Theorem~\ref{thm:mainthree}, we show that
both membership and equivalence queries are needed for
polynomial learnability. Let AQ$^\wedge$ denote the class of unary CQs
of the form $q(x) \leftarrow A_1(x) \wedge \cdots \wedge A_n(x)$, % a
% subclass of all the classes defined above. Moreover,
and let a
\emph{conjunctive ontology} be an \EL-ontology without
role names.
\begin{restatable}{theorem}{thmonlyonequerytype}
  \label{thm:onlyonequerytype}
~\\[-4mm]
  \begin{enumerate}
  \item AQ$^\wedge$-queries are not polynomial query
    learnable under conjunctive ontologies using only
    membership queries;

   \item ELQ-queries are not polynomial time learnable (without
        ontologies) using only CQ-equivalence queries unless $\Pclass
        = \NPclass$.

  %   \item for $\Qmc \in \{ \text{ELQ}, \text{ELIQ}^{\text{sf}}
  %   \}$, \Qmc-queries are not polynomial time learnable
  %   (without ontologies) using only equivalence queries
  %   unless {\sc NP} = {\sc RP}; the same is true under
  %   \EL-ontologies unless {\sc ExpTime} = {\sc RP};

  % \item for every $w\geq 0$, CQ$^{\text{csf}}_w$-queries and 
  %   CQ$^{\text{csf}}$-queries are not polynomial time learnable
  %   under ontologies using only equivalence queries
  %   unless {\sc ExpTime} = $\RPclass^\NPclass$.

  \end{enumerate}

\end{restatable}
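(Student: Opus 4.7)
My plan splits into the two parts of the claim.

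For Part 1, I would exhibit a polynomial-size conjunctive ontology $\Omc_n$ together with an exponential family of AQ$^\wedge$-targets that are pairwise indistinguishable by membership queries. Take concept names $A_1, \dots, A_n$ and let $\Omc_n$ consist of the CIs $A_i \sqcap A_j \sqsubseteq A_k$ for every pair of distinct $i,j$ and every $k$; this ontology has size $O(n^3)$. Because $\Omc_n$ is role-free, the $\Omc_n$-closure at any individual $a$ in any ABox $\Amc$ depends only on $T := \{A : A(a) \in \Amc\}$, and a one-line case analysis shows it must be one of $\emptyset$, a singleton $\{A_i\}$, or the whole set $\{A_1, \dots, A_n\}$. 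Now take as candidates the $\binom{n}{n/2}$ queries $q_S(x) \leftarrow \bigwedge_{i \in S} A_i(x)$ with $|S| = n/2$. Since $|S| \geq 2$, the answer to any membership query $(\Amc, a)$ depends only on the closure's type, and is therefore identical across all candidates. Hence no MQ-only learner can distinguish any two of these exponentially many candidates, while each $q_S$ has size $O(n)$ and $\Omc_n$ is polynomial, so polynomial-query learnability fails.

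For Part 2, my plan is an NP-hardness reduction via Angluin's classical observation that a polynomial-time EQ-only learner for a class yields a polynomial-time algorithm for the corresponding polynomial-size consistency problem. Concretely, I would simulate a hypothetical polynomial-time CQ-EQ-only learner $L$ for ELQ by feeding it labeled ABox examples as ``counterexamples''; if $L$ terminates in polynomial time, its output is a polynomial-size ELQ consistent with all examples. It then suffices to prove that the decision problem ``given labeled ABox examples and a bound $k$, is there an ELQ of size at most $k$ consistent with all of them'' is NP-hard. I would reduce from a suitable combinatorial problem (for example Exact Cover or 3-SAT), encoding elements of the instance as labeled ABoxes so that small consistent ELQs mirror instance solutions through their branching structure. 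The contrapositive of Angluin's theorem then gives the desired hardness, unless $\Pclass = \NPclass$.

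The main obstacle for Part 2 is the reduction's encoding: the ``product'' construction over positive examples always yields a most-specific consistent ELQ in polynomial time, so NP-hardness only emerges once a polynomial size bound on the fitting ELQ is enforced, and the reduction must carefully force small consistent ELQs to correspond to exact-cover or SAT witnesses. Part 1 is combinatorially clean once the ontology is chosen; the only subtlety is verifying that role assertions in $\Amc$ are irrelevant to closures at $a$, which follows since $\Omc_n$ mentions no role names.
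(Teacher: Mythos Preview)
Your Part~1 argument has a fatal flaw: under your ontology $\Omc_n$, all of your candidate targets are pairwise $\Omc_n$-equivalent, so there is nothing to learn. Indeed, take any $q_S$ with $|S|=n/2\geq 2$ and consider its canonical ABox $\Amc_{q_S}$. Since $|S|\geq 2$, your own closure analysis gives that the $\Omc_n$-closure at $x$ is the full set $\{A_1,\dots,A_n\}$, and hence $\Amc_{q_S},\Omc_n\models q_{S'}(x)$ for \emph{every} $S'$. Thus $q_S\equiv_{\Omc_n} q_{S'}$ for all $S,S'$, and a learner that outputs any fixed $q_S$ succeeds immediately. The paper's construction avoids exactly this pitfall by using paired names $A_i,A_i'$ with CIs $A_i\sqcap A_i'\sqsubseteq \bigsqcap_j(A_j\sqcap A_j')$ and targets that choose one of $A_i,A_i'$ per coordinate; then no target's canonical ABox triggers any CI, the $2^n$ targets are pairwise inequivalent, and an adversary argument shows each membership query eliminates at most one candidate.

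For Part~2 your high-level strategy (simulate a polytime EQ-only learner to solve an NP-hard fitting/separability problem) is exactly what the paper does, but two essential ingredients are missing from your proposal. First, the NP-hardness reduction itself is the entire difficulty, and you only gesture at it; the paper relies on a known NP-hardness result for ELQ-separability (with a polynomial bound on separator size baked into the hard instance class), which you would either need to cite or reproduce. Second, when the simulated learner poses a CQ-equivalence query with hypothesis $q_H$, you must check in polynomial time whether $q_H$ is consistent with your labeled examples---but evaluating an arbitrary CQ on an ABox is itself NP-hard. The paper handles this by observing that the hard separability instances use ABoxes that are disjoint unions of depth-$1$ ditrees, on which CQ evaluation is polynomial (their Claim~1). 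Without controlling the ABox shape in your reduction, the simulation step does not run in polynomial time and the argument collapses.
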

Note that Points~1 and~2 of Theorem~\ref{thm:onlyonequerytype} imply
the same statements for all relevant query classes, that is, ELQ,
$\text{ELIQ}^{\text{sf}}$, $\text{CQ}^{\text{csf}}$,
$\text{CQ}^{\text{csf}}_w$ for all $w \geq 1$, and CQ, in place of the 
classes mentioned in the theorem. In particular, Point~2 implies that unrestricted CQs are not
polynomial time learnable with only equivalence queries in the
classical setting (without ontologies) unless $\Pclass = \NPclass$,
even when only unary and binary relations are admitted, see
\cite{COHEN19951,DBLP:journals/ml/Haussler89,DBLP:conf/alt/Hirata00}
for related results.  The proof of Point~1 follows basic lower bound
proofs for abstract learning problems
\cite{DBLP:journals/ml/Angluin87}. Point~2 is proved by exploiting
connections between active learning and inseparability
questions studied in
\cite{DBLP:conf/ijcai/FunkJLPW19,aaaithis,mauricemaster}.

\subsection{Reduction to Normal Form}
\label{sect:nf}

We show that the ontology under which we learn can w.l.o.g.\ be
assumed to be in normal form.  It is well-known that every
\ELdr-ontology \Omc can be converted into normal form
% an \ELdr-ontology in normal
% form that is a conservative extension of \Omc 
by introducing
fresh concept names~\cite{DL-Textbook}. We use such a
conversion to show that, for the relevant classes of CQs, a polynomial
time learning algorithm under \ELdr-ontologies in normal form can be
converted into a polynomial time learning algorithm under unrestricted
\ELdr-ontologies. Care has to be exercised as the fresh concept
names % introduced during normalization
can occur in membership and
equivalence queries.
From now on, we thus assume that ontologies are in normal form. 
%
% A CQ $q'$ can be \emph{obtained from a CQ q by attaching ditrees} if
% $q'$ can be constructed by choosing variables
% $x_1, \ldots, x_n$ from $q$ and Boolean ditree CQs
% $q_1, \ldots q_n$ whose sets of variables are pairwise disjoint and
% disjoint from the set of variables in $q$, and then taking the union of $q$ and
% $q_1, \ldots q_n$, identifying the root of $q_i$ with $x_i$ for
% $1 \leq 1 \leq n$. A class of CQs \Qmc is \emph{closed under attaching
%   ditrees} if every CQ $q'$ that can be obtained from a $q \in \Qmc$
% by attaching ditrees is also in \Qmc. Note that all of
% CQ$^{\text{csf}}$, ELQ, and ELIQ$^{\text{sf}}$ are closed under
% attaching ditrees.
%
\begin{proposition}
  \label{prop:nf}
  Let
  $\Qmc \in \{ \text{ELQ}, \text{ELIQ}^{\text{sf}},
  \text{CQ}^{\text{csf}}_w \mid w \geq 0 \}$. If queries in \Qmc are
  polynomial time learnable under \ELdr-ontologies in normal form
  using membership and equivalence queries, then the same is true for
  unrestricted \ELdr-ontologies.
\end{proposition}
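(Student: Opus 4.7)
The plan is to apply the standard normal-form rewriting and simulate the normal-form learner's oracle queries using the oracle we actually have for the unrestricted ontology. Given an \ELdr-ontology~\Omc, I convert it into normal form~\Omc' by introducing, for every subconcept~$C$ occurring in~\Omc, a fresh concept name~$A_C$ together with CIs that jointly enforce $\Omc' \models A_C \equiv C$; the result has size polynomial in~$||\Omc||$ and is a conservative extension of~\Omc over the original signature $\Sigma = \mn{sig}(\Omc)$. I then invoke the assumed polynomial-time normal-form learner on~\Omc', declaring the target signature to be~$\Sigma$ so that the hypotheses supplied in its equivalence queries do not mention any~$A_C$.

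Equivalence queries are forwarded verbatim to the \Omc-oracle. Since both the hypothesis $q_H$ and the target $q_T$ lie over~$\Sigma$, conservativity yields $q_H \equiv_{\Omc'} q_T$ iff $q_H \equiv_{\Omc} q_T$, and every counterexample $(\Amc,\bar a)$ returned by the oracle uses only symbols of~$\Sigma$ and is therefore also a valid counterexample under~\Omc'.

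Membership queries are more delicate because the learner may construct an ABox~\Amc whose atoms involve fresh names~$A_C$. For each such atom $A_C(a)$ I delete it and add the tree-shaped ABox $\Amc_C^a$ obtained by viewing~$C$ as a tree CQ rooted at~$a$, using fresh individual names for its existential subconcepts. The resulting ABox~$\Amc^*$ uses only~$\Sigma$ and has size polynomial in~$||\Amc||$ and~$||\Omc||$, so I forward $(\Amc^*,\bar a)$ to the \Omc-oracle and relay its answer to the learner.

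The main obstacle is to prove the soundness of this translation, namely that $\Amc,\Omc' \models q_T(\bar a)$ iff $\Amc^*,\Omc \models q_T(\bar a)$. For ($\Rightarrow$) I take any model~$\Imc^*$ of $\Amc^*,\Omc$, extend it to the richer signature by setting $A_C^{\Imc^*} := C^{\Imc^*}$, and observe that the gadget $\Amc_C^a \subseteq \Amc^*$ forces $a \in C^{\Imc^*}$; hence~$\Imc^*$ becomes a model of~$\Amc,\Omc'$, and since $q_T$ uses only symbols from~$\Sigma$, the claim $\Imc^* \models q_T(\bar a)$ follows. For ($\Leftarrow$), given a model~\Imc of~$\Amc,\Omc'$, its $\Sigma$-reduct models~\Omc by conservativity and each $A_C(a) \in \Amc$ gives $a \in C^\Imc$ in this reduct, so there is a homomorphism~$h$ from~$\Amc_C^a$ into the reduct fixing~$a$; adjoining the gadget individuals as copies of their $h$-images produces a model of $\Amc^*,\Omc$ that entails $q_T(\bar a)$, and composing a witnessing homomorphism with the natural retract back to the reduct shows $\Imc \models q_T(\bar a)$. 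Since every simulation step runs in polynomial time, polynomial-time learnability transfers from normal-form to unrestricted \ELdr-ontologies for each class~$\Qmc$ in the statement.
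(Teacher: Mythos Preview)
Your treatment of membership queries matches the paper's and is fine. The gap is in the equivalence-query step.

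The learning protocol in this paper stipulates that the declared signature contains all symbols of the ontology (``we assume that all concept and role names in \Omc can be used also in $q_T$''; and in the algorithm overview, ``$\Sigma$ a finite signature that contains all symbols in \Omc''). Hence, when you invoke the assumed normal-form learner on~$\Omc'$, you must hand it the enlarged signature $\Sigma' = \Sigma \cup \{A_C \mid C \in \mn{sub}(\Omc)\}$; you are not entitled to declare the target signature to be the smaller~$\Sigma$. The polynomial-time learnability hypothesis is granted only under the protocol's constraints, so running the black-box learner outside them gives you nothing.

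With signature~$\Sigma'$, the learner's equivalence-query hypotheses $q'_H$ may contain atoms $A_C(x)$, and you cannot forward them verbatim to the $\Omc$-oracle. The paper's fix is to translate $q'_H$ into a $\Sigma$-query $q_H$ by replacing each atom $A_C(x)$ with the ditree CQ obtained from~$C$, identifying its root with~$x$. This is precisely where the structure of the classes in the statement matters: ELQ, ELIQ$^{\text{sf}}$, and CQ$^{\text{csf}}_w$ are each \emph{closed under attaching ditrees}, so $q_H$ remains in~$\Qmc$ and is a legal equivalence query. One then has to prove (analogously to your membership-query argument) that $\Amc|_\Sigma,\Omc' \models q'_H(\bar a)$ iff $\Amc,\Omc \models q_H(\bar a)$ for every ABox~$\Amc$, so that answers and counterexamples (after restricting the returned ABox to~$\Sigma$) transfer correctly. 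Your write-up is missing both the translation of hypotheses and this accompanying correctness claim.
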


\subsection{Algorithm Overview}
\label{sect:algoview}

We start with %simultaneously
proving Points~1 and~2 of
Theorem~\ref{thm:mainthree}. Thus let
$\Qmc \in \{ \text{ELQ}, \text{ELIQ}^{\text{sf}},
\text{CQ}^{\text{csf}}_w \mid w \geq 0 \}$.  The algorithm that
establishes polynomial time learnability of queries from \Qmc under
\ELdr-ontologies is displayed as Algorithm~\ref{alg:abox}. We next
explain some of its details.
\begin{algorithm}[t]
  \caption{Learning queries $q_T$ from 
ELQ / ELIQ$^{\text{sf}}$ / CQ$^{\text{csf}}_w$
under an 
    \ELdr-ontology \Omc.}\label{alg:abox}
\begin{algorithmic}
\Procedure{LearnCQ}{}%{$\Omc$}
%   \State $q_H(\bar x) := \mn{refine}(
%     q^\circlearrowright(\bar x_0), 
% %    \mathop{\times^\downarrow}
%     (\Amc_{q^\circlearrowright},\bar x_0))$ 
\State $q_H(\bar x) := \mn{refine}(q^\bot(\bar x_0))$
%{\color{blue}not needed in V2!! (and that is good as we would need product)}
  \While{$q_H \not\equiv_\Omc q_T$ (equivalence query)}
  \State Let $\Amc, \bar a$ be the positive counterexample returned
  \State and let $q'_H(\bar x')$ be $\Cmc^3_{\Amc_{q_H},\Omc} \times \Cmc^3_{\Amc,\Omc}$ viewed as a CQ
  \State \phantom{and let} with answer variables $\bar x'=\bar x \otimes \bar a$
    \State $q_H(\bar x) := \mn{refine}(q'_H(\bar x'))$
%    \State $q_H(\bar x) :=\mn{toClass}(q_H(\bar x))$
  \EndWhile 
  \State \Return $q_H(\bar x)$
\EndProcedure 
\end{algorithmic}
\end{algorithm}

Let \Omc be an \ELdr-ontology, $\Sigma$ a finite signature that contains all
symbols in \Omc, and $\mn{ar} \leq w$ an arity for the query to be
learned with $\mn{ar}=1$ if
$\Qmc \in \{ \text{ELQ}, \text{ELIQ}^{\text{sf}} \}$, all known to the
learner and the oracle.  Further let $q_T(\bar y) \in \Qmc$ be the
target query known to the oracle, formulated in signature
$\Sigma$. The algorithm maintains and repeatedly updates a hypothesis
CQ $q_H(\bar x)$ of arity~$\mn{ar}$.  It starts with
the hypothesis
% $q^\circlearrowright(\bar x_0)$, by which we denote the least element
% in the containment lattice of CQs of arity $\mn{ar}$, that is, the CQ
\[
q^\bot(\bar x_0) 
\leftarrow \{ A(x_0) \mid A \in \Sigma \cap \NC \} \cup \{ r(x_0, x_0) \mid r \in \Sigma \cap \NR\}
\]
where $\bar x_0$ contains only the variable $x_0$, repeated $\mn{ar}$ times. 
% In summary, Algorithm~\ref{alg:abox} is the complete learning algorithm, where
% \textsf{normalize} refers to the algorithm described in
% Section~\ref{sec:short-cycles}. 
By construction, $q^\bot \subseteq_\Omc q$ for all
CQs $q$ of arity $\mn{ar}$ that use only symbols from $\Sigma$.
Note that $q^\bot \in \text{CQ}^{\text{csf}}_w$ for all
$w$, but $q^\bot$ is neither in  ELQ nor in ELIQ$^{\text{sf}}$.
% Since $q^\circlearrowright(\bar x_0)$ is not in CQ$^{\text{csf}}$, we 
% make a subsequent call to $\mn{toClass}$. 

% at all times, the invariant $q_H \subseteq_\Omc q_T$ is satisfied.
If
$q_1(\bar x_1), q_2(\bar x_2),\dots$ are the hypotheses constructed
during a run of the algorithm, then for all
$i \geq 1$:
\begin{enumerate}

\item $q_i \in \Qmc$ and $q_i \subseteq_\Omc q_T$;
  
\item $q_i \subseteq_\Omc q_{i+1}$ and $q_i \not\equiv_\Omc q_{i+1}$;

\item $|\mn{var}(q_i)| \leq |\mn{var}(q_T)|$.
  
\end{enumerate}
Taken together, Points~1 and~2 mean that the hypotheses approximate
the target query from below in an increasingly better way and Point~3
is crucial for proving that we must reach $q_T$ after polynomially
many steps. The fact that \Omc is in normal form is used to attain
Point~3.

Point~1 also guarantees that the oracle always returns a
\emph{positive} counterexample $\Amc, \bar a$ to the equivalence query
used to check whether $q_H \not\equiv_\Omc q_T$ in the while loop. The
algorithm extracts the commonalities of $q_H(\bar x)$ and
$\Amc,\bar a$ by means of a direct product with the aim of obtaining a better
approximation
of the target. The same is done in the case
without ontologies \cite{DBLP:journals/tods/CateDK13} where 
$\Amc_{q_H} \times \Amc$ (viewed as a
CQ) is the new hypothesis, but this is not sufficient here as it misses
the impact of the ontology. The product
$\Umc_{\Amc_{q_H},\Omc} \times \Umc_{\Amc,\Omc}$ would work,
but need
not be finite. So we resort to
$\Cmc^3_{\Amc_{q_H},\Omc} \times \Cmc^3_{\Amc,\Omc}$ instead, viewed
as a CQ $q'_H(\bar x')$.  This new hypothesis need not belong to \Qmc,
so we call the subroutine $\mn{refine}$ detailed in the
subsequent section to convert it into a new hypothesis
$q_H(\bar x) \in \Qmc$ such that
$q'_H \subseteq_\Omc q_H \subseteq_\Omc q_T$.
% and, additionally,
% Point~3 is satisfied.
The initial call to \mn{refine} serves the same
purpose as $q^\bot(\bar x_0)$ need not be in \Qmc,
depending on the choice of \Qmc.

It is not immediately clear that the described approach achieves the
containment in Point~2 since
$\Cmc^3_{\Amc_{q_H},\Omc} \times \Cmc^3_{\Amc,\Omc}$ is
potentially too
strong as a replacement of
$\Umc_{\Amc_{q_H},\Omc} \times \Umc_{\Amc,\Omc}$; in particular, there
might be cycles in the former product that do not exist in the latter.
What saves us, however, is that the CQ $q_H$ constructed by
\mn{refine} belongs to \Qmc while the models
$\Cmc^3_{\Amc_{q_H},\Omc}$ and $\Cmc^3_{\Amc,\Omc}$ are universal for
\Qmc as per Lemma~\ref{lem:compactunivers}.

% In the appendix, we prove that Points~1 to~3 are indeed satisfied and
% also that the algorithm achieves polynomial time learnability in
% \Qmc.

\subsection{The \mn{refine} Subroutine}
\label{sec:short-cycles}

The \mn{refine} subroutine gets as input a CQ $q'_H(\bar x')$ that
does not need to be in \Qmc, but that satisfies $q'_H \subseteq_\Omc q_T$.
It 
produces a query $q_H(\bar x)$ from \Qmc such that
$q'_H \subseteq_\Omc q_H \subseteq_\Omc q_T$ and
$|\mn{var}(q_H)| \leq |\mn{var}(q_T)|$.  For notational convenience,
we prefer to view $q'_H(\bar x')$ as a pair $(\Amc,\bar a)$ where
$\Amc=\Amc_{q'_H}$ and $\bar a = \bar x'$.
Let $n_{\max}$ denote the maximum length of a chordless cycle in
any query in \Qmc, that is $n_{\max} = 0$ for
$\Qmc \in \{ \text{ELQ}, \text{ELIQ}^{\text{sf}} \}$ and $n_{\max} =3$
for $\Qmc= \text{CQ}^{\text{csf}}_w$, $w \geq 0$. We shall 
use the following.

\medskip\noindent\textbf{Minimize.} Let \Bmc be an ABox and $\bar b$ a
tuple such that $\Bmc,\Omc \models q_T(\bar b)$.  Then
$\mn{minimize}(\Bmc,\bar b)$ is the ABox $\Bmc'$ obtained from $\Bmc$
by exhaustively applying the following operations:

\smallskip  
\noindent  
 (1) choose $c \in \mn{ind}(\Bmc)\setminus \bar b$ and
    remove all assertions that involve $c$. Use a membership query to
    check whether, for the resulting ABox $\Bmc^-$, $\Bmc^-,\Omc \models
    q_T(\bar b)$. If so, proceed with $\Bmc^-$ in place of \Bmc.

\smallskip 
\noindent 
(2) choose  $r(a, b) \in \Bmc$ and use a membership query to
    check whether $\Bmc \setminus \{ r(a, b) \}, \Omc \models q_T(\bar
    b)$. If so, proceed with $\Bmc \setminus \{
    r(a, b) \}$ in place of \Bmc.

\smallskip  
\noindent  
    The \mn{refine} subroutine builds a
    sequence $(\Bmc_1,\bar b_1),(\Bmc_2,\bar b_2),\ldots$ starting
    with $(\Bmc_1,\bar b_1) = (\mn{minimize}(\Amc,\bar a),\bar a)$ and
    exhaustively applying the following step:

\medskip\noindent\textbf{Expand.} Choose a chordless cycle
$R_0(a_0,a_1),\ldots,$ $R_{n-1}(a_{n-1},a_n)$ in $\Bmc_i$ with
$n>n_{\max}$ and,
in case
that $\Qmc =\text{CQ}^{\text{csf}}_w$,
$\{a_0,\ldots,a_{n-1}\}\not\subseteq \bar b_i$.\footnote{This is
  because $\text{CQ}^{\text{csf}}$ admits cycles that consist only of answer
  variables while ELQ and ELIQ$^{\text{sf}}$ do not.}
  %that is shortest with these
% conditions.
Let $\Bmc'_{i}$
be the ABox obtained by doubling the length of the cycle: start with $\Bmc_i$, introduce copies
$a_0',\ldots,a_{n-1}'$ of $a_0,\ldots,a_{n-1}$, and then
  \begin{itemize}

    \item remove all assertions $R(a_{n-1},a_{0})$;

    % \item introduce copies $a_0',\ldots,a_{n-1}'$
    %   of $a_0,\ldots,a_{n-1}$ (recall that $a_0=a_{n}$);

    \item add $B(a_i')$ if $B(a_i)\in
      \Bmc_i$;

    \item add $R(a_i',c)$ if $R(a_i,c)\in
      \Bmc_i$ with $0 \leq i < n$ and $c\in
      \mn{ind}(\Bmc_i)\setminus\{a_0,\ldots,a_{n-1}\}$;

    \item add $R(a_i',a_{j}')$ if $R(a_i,a_j)\in
      \Bmc_i$ with $0 \leq i,j <n$ and $\{i,j\}\neq\{0,n-1\}$;

    \item add $R(a_{n-1},a_0')$ and $R(a_{n-1}',a_0)$ if $R(a_{n-1},a_0)\in
      \Bmc_i$.

  \end{itemize}
  A similar construction is used in \cite{DBLP:conf/aaai/KonevOW16}.
%
% the result of expanding that cycle
% in $\Bmc_i$ as detailed below, introducing fresh individuals
% $a_0',\ldots,a_{n-1}'$.
%
% \smallskip\noindent\textbf{Update Answer.}
  Let $\tau_i$ be the set of tuples $\bar b$ obtained from
  $\bar b_i=(b_1,\ldots,b_k)$ by replacing any number of components
  $b_j$ by $b_j'$. Use membership queries to identify
  $\bar b_{i+1} \in \tau_i$ with
  $\Bmc_{i}',\Omc\models q_T(\bar b_{i+1})$ and set
  $\Bmc_{i+1}=\mn{minimize}(\Bmc'_i,\bar b_{i+1})$.
  We prove in the
  appendix that such a $\bar b_{i+1}$ always exists and that the
  Expand step can only be applied polynomially many times. The
  resulting $(\Bmc_n,\bar b_n)$ viewed as a CQ with answer variables
  $\bar b_n$ is chordal, but not necessarily symmetry-free. To establish
  also the latter, we compute a sequence of ABoxes
  $\Bmc_n,\Bmc_{n+1},\dots$ by exhaustively applying the following
  step:

\medskip\noindent\textbf{Split.} Choose $r(a,b),r(c,b)\in \Bmc_i$ such
that $b\notin\bar b_n$ and neither $r(a,b)$ nor $r(c,b)$ occurs on a
cycle. Construct $\Bmc_{i}'$ by removing $r(a,b)$ from $\Bmc_i$,
taking a fresh individual $b'$, and adding $B(b')$ for
all $B(b)\in \Bmc_i$ and $S(d,b')$ for all $S(d,b)\in \Bmc_i$ with
$S(d,b) \neq r(c,b)$.  If
$\Bmc_i',\Omc\models q_T(\bar b_n)$, then
$\Bmc_{i+1}=\mn{minimize}(\Bmc_i',\bar b_n)$.

\medskip We prove in the appendix that only
polynomially many
applications are possible and that, for $\Bmc_m$ the resulting ABox,
% There
% is still one specific type of disallowed symmetry that it may contain,
% related to the presence of reflexive cycle. We give a
%   concrete example in the appendix. To remove these, the
% final result of \mn{refine} is obtained from $(\Bmc_m,\bar b_n)$ as follows.
%
% \medskip\noindent\textbf{Duplicate Reflexive.}  Introduce a fresh
% individual $b'$ for every $b$ such that $s(b,b)\in \Bmc_m$ for some
% role name~$s$ and add $B(b')$ for all $B(b)\in \Bmc_m$ and
% $S(d,b')$ for all $S(d,b)\in \Bmc_m$.
%
% \medskip
$(\Bmc_m',\bar b_n)$ viewed as a CQ is chordal and symmetry-free.
Moreover, it is in ELQ if
$q_T$ is, and likewise for ELIQ$^{\text{sf}}$. \mn{Refine} returns this CQ as its result.
Note that the running
time of \mn{refine} depends exponentially on \mn{ar} due to the brute
force search for a tuple $\bar b_{i+1} \in \tau_i$ in the Expand step.

\subsection{Unbounded Arity}
\label{sect:refine}

To prove the remaining Point~3 of Theorem~\ref{thm:mainthree}, we have
to deal with CQs of unbounded arity and cannot use the
\mn{refine} subroutine presented in Section~\ref{sec:short-cycles}.
We thus introduce a second version of \mn{refine} that
works rather differently from the previous one. We give an
informal description, full details are in the appendix.

Recall that refinement starts with the product
$P=\Cmc^3_{\Amc_{q_H},\Omc} \times \Cmc^3_{\Amc,\Omc}$. In
Section~\ref{sec:short-cycles}, we blow up cycles in $P$, not
distinguishing the ABox part and the existentially generated part of
the 3-compact models involved. The second version of
\mn{refine} instead unravels the existentially generated part of the two
3-compact models inside the product $P$. A full such unraveling
would eventually result in $\Umc_{\Amc_{q_H},\Omc} \times
\Umc_{\Amc,\Omc}$, but we interleave with a Minimize step as in
Section~\ref{sec:short-cycles} and thus obtain a finite initial
piece thereof. Unlike in the previous version of \mn{refine}, we do
not have to redefine the answer variables at all (but note that they
may still change outside of \mn{refine} when we take the product).

The above suffices for target CQs from CQ$^{\text{csf}}$ in which
every variable is reachable from an answer variable. In the general
case, disconnected Boolean components might be present (or emerge
during unraveling and minimization) that are never unraveled. To
address this, we subsequently apply the original version of
\mn{refine} to such components, avoiding the Splitting step and
leaving the already unraveled parts untouched. Note that the exponential blowup
in the arity is avoided because the original \mn{refine} is only applied to
Boolean subqueries. However, the resulting queries are not guaranteed
to be in $\text{CQ}^{\text{csf}}$. %  This is essentially because the
% counterexample \Amc returned by the oracle also need not be in
% $\text{CQ}^{\text{csf}}$ when viewed as a CQ. 
We can thus
not rely on Lemma~\ref{lem:compactunivers} as before
which is why we need CQ-equivalence queries.
%and use a different proof strategy. % Also
% note that we can drop the initial call to \mn{refine} in
% Algorithm~\ref{alg:abox}.

\section{{Learning under \texorpdfstring{\ELI}{ELI}-Ontologies}}
\label{sect:lower}

When we replace \ELdr-ontologies with \ELI-ontologies, polynomial time
learnability can no longer be expected since containment between ELQs
under \ELI-ontologies is \ExpTime-complete
\cite{BaaderEtAl-OWLED08DC}. In contrast, polynomial query
learnability is not ruled out and in fact it is natural to ask whether
there is a polynomial time learning algorithm with access to an oracle
(in the classical sense) for query containment under \ELI-ontologies.
Note that such an algorithm would show polynomial query learnability.
We answer this question to the negative and show that polynomial query
learnability cannot be attained under \ELI-ontologies for any of the
query classes considered in this paper. This is a consequence of the
following result, which also captures learning of unrestricted CQs.
\begin{restatable}{theorem}{thmellower} \label{thm:el-lower}
  \EL-concepts are not polynomial query learnable under
  \ELI-ontologies with membership queries and CQ-equivalence queries.
\end{restatable}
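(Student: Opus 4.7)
The plan is to prove the lower bound via an adversary argument, based on a family of \EL-concept targets that are pairwise non-equivalent under a fixed \ELI-ontology yet nearly indistinguishable by polynomial-size ABoxes. Specifically, for each $n$ I would construct an \ELI-ontology $\Omc_n$ of size polynomial in $n$ together with an exponentially large family $\{C_s : s \in \{0,1\}^n\}$ of \EL-concepts, each of polynomial size, such that: (i) $C_s \not\equiv_{\Omc_n} C_{s'}$ whenever $s \neq s'$, and (ii) for every ABox $\Amc$, every individual $a$, and every CQ $q_H$, all of polynomial size in $n$, the answer to the membership query $\Amc,\Omc_n \models C_s(a)$ and the set of counterexamples separating $q_H$ from $C_s$ depend only on a small subset of the bits of $s$.

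The construction would exploit the fact that inverse roles in \ELI allow the universal model $\Umc_{\Amc,\Omc_n}$ to encode an exponentially branching or exponentially deep structure even when $\Amc$ and $\Omc_n$ are polynomial. A natural template uses $\Omc_n$ to axiomatize propagation of concept names along $r^-$-successors, together with a polynomial chain of concept names that plays the role of a binary counter, yielding implicit structures of exponential depth in which information about individual bits of $s$ is hidden in symmetric positions deep inside the universal model. Each $C_s$ then queries one such specific configuration, so that distinguishing $C_s$ from $C_{s'}$ forces the ABox (together with the ontology) to produce a universal model that actually reaches the position on which they disagree.

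Given such a family, the adversary proceeds in the standard Angluin fashion, maintaining a set $\mathcal{S} \subseteq \{0,1\}^n$ of targets still consistent with all past answers. On each membership query $(\Amc, a)$, it answers with the majority vote across $\mathcal{S}$; property (ii) guarantees that only a small fraction of candidates is removed. On each CQ-equivalence query $q_H$, it returns a counterexample from a carefully designed small pool of ABoxes constructed from the $C_s$ themselves, again using (ii) to argue that at least a constant fraction of $\mathcal{S}$ remains consistent. After a polynomial number of queries we still have $|\mathcal{S}| \geq 2$, and since the elements of $\mathcal{S}$ are pairwise non-equivalent under $\Omc_n$ by (i), the algorithm cannot have identified the target up to equivalence; hence no polynomial query bound is possible.

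The main obstacle is the CQ-equivalence queries, which are substantially more powerful than membership queries because $q_H$ can be an arbitrary CQ rather than an \EL-concept, and because a single counterexample can in principle reveal information about many bits of $s$ at once. An adversarial $q_H$ could be tailored to probe for a particular bit. Circumventing this requires carefully coupling the structure of $\Omc_n$ with the shape of each $C_s$, so that every short ABox separating $q_H$ from some $C_s$ remains a non-separator for exponentially many other $C_{s'}$; I expect to establish this via a concept/query inseparability argument under \ELI-ontologies of the style developed in the cited prior work, turning the exponential gap between small ABoxes and the universal models produced by $\Omc_n$ into the required quantitative bound.
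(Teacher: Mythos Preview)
Your proposal has the right skeleton (adversary argument, an exponential family of \EL-targets under a polynomial-size \ELI-ontology), but the quantitative accounting has a genuine gap. Answering membership queries by ``majority vote'' only guarantees that at most half of the remaining candidates are removed per query. That is not enough: polynomial \emph{query} learnability bounds the \emph{sum of the sizes} of the queries, not their number, so a learner may pose $\Theta(n)$ queries of constant size each and drive a halving adversary down to a single candidate while staying well within the budget. What the paper actually proves is a much tighter invariant: every membership query $(\Amc,a)$ can be answered so that at most $||\Amc||$ candidates are eliminated, and every equivalence query eliminates at most one. Hence the total number of eliminated candidates is bounded by the total query size, which is the right quantity. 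Your property~(ii) (``depends only on a small subset of the bits of $s$'') gestures toward this but, combined with majority vote, does not yield the needed bound.

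For equivalence queries the proposal is also too thin. You correctly flag this as the main obstacle, but ``return a counterexample from a small pool'' does not explain why such a counterexample always exists for an \emph{arbitrary} CQ $q_H$ while remaining consistent with all but one candidate. The paper handles this by an explicit and non-obvious case analysis: it exhibits three concrete polynomial-size ABoxes (one positive, one negative, and one parameterised positive counterexample of the form $\{K_0(a_0),W^{\sigma_1}_1(a_0),\dots,W^{\sigma_n}_n(a_0)\}$) and proves these cases are exhaustive; the last case is where inverse roles are essential, since that single ABox satisfies every target in $\Hmc_n$ except the one named by $\sigma_1\cdots\sigma_n$. Finally, the ``binary counter / exponentially deep universal model'' intuition is off-target here: the targets are polynomial-length $r/s$-paths $\exists\sigma_1\cdots\exists\sigma_n.\exists r^n.A$, the ontology is polynomial, and the hardness comes from the ability of \ELI to make one small ABox simultaneously entail all but one of the $2^n$ targets.
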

For the proof, we use the \ELI-ontologies $\Omc_n$, $n \geq 1$,
given in Figure~\ref{fig:eliont}.
\begin{figure}[t]
  \centering
  \begin{boxedminipage}{\columnwidth}
    {\small
\[
\begin{array}{@{}r@{\;}c@{\;}l@{\;}l}
  ~&&\\[-6mm]
  \top & \sqsubseteq & \multicolumn{2}{l}{\!\!\!\exists r . \top  \sqcap \exists s . \top}
  \\[2mm]
  L_i & \sqsubseteq & \multicolumn{2}{l}{\!\!\!\exists r . L_{i+1} \sqcap
                      \exists s . L_{i+1}} \\[1mm]
  &&& \text{for } 0 \leq i \leq n \\[1mm]
%  L_{n+1} & \sqsubseteq & \exists  r . L_{n+1} \sqcap \exists s
 %                         . L_{n+1} \\[1mm]
    L_i & \sqsubseteq & \exists r . L_{i+1} & \text{for } n \leq i < 2n\\[1mm]
L_{2n} & \sqsubseteq & A \\[2mm]
  \exists \sigma . L_{i+1} & \sqsubseteq & L_i &\text{for } \sigma \in \{r,s\}
                                                 \text{ and } 0 \leq i \leq
                                                 2n \\[2mm]
  K_i & \sqsubseteq & \multicolumn{2}{l}{\!\!\!\exists r . (K_{i+1}  \sqcap V^r_{i+1}) 
                      \sqcap \exists s . (K_{i+1} \sqcap V^s_{i+1}) } \\[1mm]
%  K_i \sqcap W^\sigma_i & \sqsubseteq & \exists r . (K_{i+1} \sqcap V^r_{i+1})\sqcap\exists s . (K_{i+1} \sqcap V^s_{i+1}) 
  &&& \text{for } \sigma \in \{r,s\} \text{ and } 0 \leq i \leq n \\[1mm]
%  K_{n+1} & \sqsubseteq & \exists  r . K_{n+1} \sqcap \exists s
%                          . K_{n+1} \\[2mm]
    K_i \sqcap W^\sigma_{i+1} & \sqsubseteq& \exists r . K_{i+1} &
  \text{for } \sigma \in \{r,s\} \text{ and } n \leq i < 2n \\[1mm]
  \exists \sigma^- . (K_j \sqcap V^{\sigma'}_i) & \sqsubseteq &
                                                                V^{\sigma'}_i
  &  \text{for } \sigma,\sigma' \in \{r,s\}, 1 \leq i \leq n, \\[1mm]
&&&                                               \text{and }  i 
                                                            \leq j
    \leq 2n\\[1mm]
  % \exists r^- . (K_i \sqcap V^{\sigma}_j) & \sqsubseteq &
  % V^{\sigma}_j &  \text{for } \sigma \in \{r,s\}, n \leq i < 2n,
  %                                              \text{ and } 1 \leq j 
  %                                              \leq n \\[1mm]
  K_{2n} \sqcap V^\sigma_i \sqcap W^{\overline{\sigma}}_i &\sqsubseteq& A
  & \text{for } \sigma \in \{r,s\} \text{ and } 1
                                                         \leq i \leq n\\[2mm]
  \exists \sigma . W^{\sigma'}_i & \sqsubseteq & W^{\sigma'}_i &
                                                                 \text{for
                                                                 }
                                                                 \sigma
                                                                 \in
                                                                 \{r,s,r^-,s^-\},\\[1mm]
&&&                                       \sigma'
                                                                 \in
                                                                 \{r,s\},         \text{ and } 1 \leq i
                                               \leq n \\[1mm]
  W^r_i \sqcap W^s_i & \sqsubseteq & L_0 &\text{for } 
                                                 0 \leq i \leq
                                           n\\[2mm]
  % \exists \sigma . L_0 & \sqsubseteq & L_0 & \text{for } \sigma \in
  %                                            \{r,s\}
  %                                            \\[2mm]
    \exists \sigma . K_{i+1} & \sqsubseteq & K_i &\text{for } \sigma \in \{r,s\}
                                                 \text{ and } 0 \leq i \leq 
                                                   2n \\[2mm]
  \exists \sigma^- . \top & \sqsubseteq & U_1^\sigma & \text{for } \sigma
                                                  \in \{r,s\} \\[1mm]
  \exists \sigma^- . U^{\sigma'}_i & \sqsubseteq &  U^{\sigma'}_{i+1} &
                                                                      \text{for
                                                                      }
                                                                      \sigma,\sigma'
                                                                      \in
                                                                      \{r,s\}
                                                                      \text{
                                                                      and
                                                                      }
                                                                      1
                                                                      \leq
                                                                      i
                                                                      <2n \\[1mm]
  U^r_i \sqcap U^s_i & \sqsubseteq & D & \text{for } 1 \leq i \leq 2n \\[2mm]
  % U^r_i \sqcap U^s_i & \sqsubseteq & D & \text{for } 1 \leq i \leq n\\[1mm]      
  % \exists r^- . \top \sqcap \exists s^- . \top & \sqsubseteq &
  %                                                              D\\[1mm]
  K_i \sqcap A &\sqsubseteq& D & \text{for } 0 \leq i < 2n \\[1mm]
  L_i \sqcap A &\sqsubseteq& D & \text{for } 0 \leq i < 2n \\[1mm]
  L_i \sqcap L_j &\sqsubseteq& D & \text{for } n \leq i < j \leq 2n \\[1mm]
  K_i \sqcap K_j &\sqsubseteq& D & \text{for } n \leq i < j \leq 2n \\[1mm]
  L_i \sqcap K_j &\sqsubseteq& D & \text{for } n \leq i,j \leq 2n \\[2mm]
  \exists \sigma . D & \sqsubseteq & D &\text{for } \sigma \in \{r,s,r^-,s^-\}
  \\[1mm]
  D &\sqsubseteq& L_0\\[-3mm]
\end{array}
\]
}
\end{boxedminipage}
\caption{\ELI-ontology $\Omc_n$}
\label{fig:eliont}
\vspace*{-4mm}
\end{figure}
There, $\overline{r}=s$ and $\overline{s}=r$. Every $\Omc_n$ is
associated with a set $\Hmc_n$ of $2^n$ potential target concepts
of the form
\[\exists \sigma_1 \cdots \exists \sigma_n .  \exists r^n .  A \text{
  with }
\sigma_1,\dots,\sigma_n \in \{r,s\}\]
where $\exists r^n$ denotes the $n$-fold nesting of $\exists r$.  The
idea of the proof is to show that if there was an algorithm for
learning \EL-concepts under \ELI-ontologies such that, at any given
time, the sum of the sizes of all (membership and CQ-equivalence)
queries asked to the oracle is bounded by a polynomial
$p(n_1,n_2,n_3)$ with $n_1$ is the size of the target query, $n_2$ is
the size of the ontology, and $n_3$ is the size of the largest
counterexample seen so far, then we can choose $n$ large enough so
that the learner needs more than $p(n_1,n_2,n_3)$ queries to
distinguish the targets in $\Hmc_n$ under $\Omc_n$ if the oracle uses
a `sufficiently destructive' strategy to answer the queries. Such a
strategy is presented in the appendix, we only give one example that
highlights a crucial aspect.

Assume that the learner poses as an equivalence query the \EL-concept
$C_H=\exists \sigma_1 \cdots \exists \sigma_n .  \exists r^n .  A$.
Then the oracle returns ``no'' and positive counterexample
$\Amc=\{ K_0(a_0),W^{\sigma_1}_1(a_0),\dots,W^{\sigma_n}_n(a_0) \}$.
It is instructive to verify that $\Amc,\Omc \models C'_H(a_0)$ for all
$C'_H \in \Hmc_n \setminus \{ C_H \}$ while
$\Amc,\Omc \not\models C_H(a_0)$ as this illustrates the use of
inverse roles in~$\Omc_n$. %  We prove in the appendix that the strategy
% achieves the desired goal which requires a very careful semantic
% analysis of the ontologies $\Omc_n$.

% Answers to membership queries $C$:
% %
% \begin{enumerate}

%   \item $\Omc \models C \sqsubseteq L_0$: say yes

%   \item there is $H \in \Hmc$ with 
%     $\Omc \models C \sqsubseteq H$:
%     % K_0 \sqcap W^{\sigma_1}_1 \sqcap \cdots
%     % \sqcap W^{\sigma_n}_n(a_0)$:
%     say no and remove all $H$ with 
%         $\Omc \models C \sqsubseteq H$ from \Hmc
    
%     \item otherwise say no
    
% \end{enumerate}

% \begin{lemma}
%   If $\Omc\not\models C \sqsubseteq L_0$, then $||C|| \geq |\{ H
%   \in \Hmc \mid \Omc \models C \sqsubseteq H \}|$.
% \end{lemma}

\section{Conclusion}

We conjecture that our results can be extended from \ELdr-ontologies
to \ELHdr-ontologies, thus adding role inclusions. %, and from CQs to
% UCQs.
In contrast, we do not know how to learn in polynomial time
unrestricted \ELI-concepts under \EL-ontologies, or symmetry-free CQs
under \EL-ontologies. % \EL-concepts under $\EL_\bot$-ontologies, and
% \EL-concepts under DL-Lite$_{\mn{core}}$ ontologies.
% We also do not
% know how to add a query signature. Never
% mind an ABox signature. And how to learn \EL concepts under \EL
% ontologies with subsumption membership queries.
We would not be surprised if these indeed turn out 
not to be learnable in polynomial time. % As illustrated by our
% lower bound for \ELI-ontologies, however, proving lower bounds is rather
% non-trivial.
% The restriction to CQs that are chordal and symmetry-free is necessary 
% due to our use of a compact version of the universal model that 
% has spurious cycles and symmetries. 
It is an interesting 
question whether our results % class CQ$^{\text{csf}}$
can be generalized 
%while preserving polynomial time learnability under \EL-ontologies, 
to symmetry-free CQs that admit chordless cycles of length bounded by a constant 
larger than three. This would require the  use of  a different kind of 
compact universal model. 
% Note that the use of such a model seems essential to capture 
% the impact of the ontology. 
%

\section*{Acknowledgements}
Supported by the DFG Collaborative Research Center 1320 EASE - Everyday Activity Science and Engineering.

% In fact, although we
% are too shy to conjecture that, we would not be surprised if none of
% these cases would admit polynomial (time or query) learnability.
% Other interesting directions: add $\bot$. Consider UCQs.
% \begin{itemize}
%
%   \item role hierarchies
%
%   \item relax chordality (large chordless cycles possible?)
%
%   \item relax symmetry-freeness: \ELI
%    
% \end{itemize}

\cleardoublepage

\bibliographystyle{named}
\bibliography{local}

%\end{document}

\cleardoublepage

\appendix

\section{Appendix Preliminaries}

We introduce some additional preliminaries that are needed for the lemmas and
proofs in the appendix.

Let \Omc be an \ELdr-ontology in normal form. We say that an ABox
\Amc is \emph{\Omc-saturated} if $\Amc,\Omc \models A(a)$ implies
$A(a) \in \Amc$ for all concept names $A$ and $a \in \mn{ind}(\Amc)$.

\paragraph{Universal Models.}

The \emph{universal
  model of \Amc and~\Omc}, denoted $\Umc_{\Amc,\Omc}$, is the
interpretation defined as follows. For every role name~$r$, we use
$C_r$ to denote the conjunction over all $C$ such that $\exists r^- . \top
\sqsubseteq C \in \Omc$; note that $C_r=\top$ if there is no such
range restriction in \Omc. 
A
\emph{trace} for \Amc and \Omc is a sequence
$t=a r_1 C_1 r_2 C_2 \dots r_n C_n$, $n \geq 0$, such that
$a \in \mn{ind}(\Amc)$,
$\{\exists r_1 . C_1,\dots, \exists r_n . C_n\} \subseteq
\mn{sub}(\Omc)$, $\Amc,\Omc \models \exists r_1.C_1(a)$,
$\Omc \models C_i \sqcap C_{r_i} \sqsubseteq \exists r_{i+1} . C_{i+1}$ for
$1 \leq i < n$. %  and there is no $r_1(a, b) \in \Amc$ with
% $\Amc, \Omc \models C_1(b)$.
%
Let \Tbf denote the set of all traces for \Amc and~\Omc.  Then
\[
\begin{array}{r@{\;}c@{\;}l}
  \Umc_{\Amc,\Omc}&:=& \Amc \cup \{ A(a) \mid \Amc,\Omc
                       \models A(a) \}  \,\cup \\[1mm]
                  && \{ A(trC) \mid trC\in \Tbf\ \text{and}\ 
                            \Omc \models C \sqcap C_r
                            \sqsubseteq A \} \, \cup \\[1mm]
                  &&  \{ r(t,trC) \mid t r
                     C \in \Tbf \}
                     % \text{ \color{orange} and } \Omc \models r
                     % \sqsubseteq s \}.
\end{array}
\]

The following two lemmas are the main properties of 
$\Umc_{\Amc,\Omc}$. They connect the notions of the universal model, homomorphisms, and
queries and are well known.
\begin{lemma}
  \label{lem:univers}
  Let \Amc be an ABox and \Omc an \ELdr-ontology. Then
   \begin{enumerate}

   \item 
    $\Umc_{\Amc, \Omc}$ is a model of $\Amc$ and $\Omc$;

 \item
for every CQ
    $q(\bar x)$
    and $\bar a \in \mn{ind}(\Amc)^{|\bar x|}$, $\Umc_{\Amc,\Omc}
    \models q(\bar a)$ iff $\Amc,\Omc \models q(\bar a)$.
    
 \end{enumerate}
\end{lemma}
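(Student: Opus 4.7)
The plan is to prove the two parts separately, exploiting the assumption (justified by Section~\ref{sect:nf}) that \Omc is in normal form. For Part~1, I verify satisfaction of $\Amc$ and of each CI in~\Omc. ABox assertions hold because $\Amc \subseteq \Umc_{\Amc,\Omc}$. For CIs, I do a case analysis on the four normal forms. Inclusions $A_1 \sqcap A_2 \sqsubseteq A$ and $\exists r . A_1 \sqsubseteq A_2$ hold at $a \in \mn{ind}(\Amc)$ because the construction adds $\{A(a) \mid \Amc,\Omc \models A(a)\}$, and at a trace $trC$ by the rule adding $A(trC)$ whenever $\Omc \models C \sqcap C_r \sqsubseteq A$, together with the closure of $\Omc$-entailment under propagation of such CIs. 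An existential inclusion $A_1 \sqsubseteq \exists r . A_2$ is witnessed at $a$ by the trace $a r A_2 \in \Tbf$ (which lies in \Tbf since $A_1(a) \in \Umc_{\Amc,\Omc}$ implies $\Amc,\Omc \models \exists r . A_2(a)$), and at $trC$ by the extension $trC r A_2$, admitted to \Tbf exactly when $\Omc \models C \sqcap C_r \sqsubseteq \exists r . A_2$. Range restrictions $\exists r^- . \top \sqsubseteq A$ hold at any trace element $trC$ reached via role $r$, because then $A$ is a conjunct of $C_r$ and hence $\Omc \models C \sqcap C_r \sqsubseteq A$ is trivial.

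For Part~2, the direction ``$\Amc, \Omc \models q(\bar a) \Rightarrow \Umc_{\Amc,\Omc} \models q(\bar a)$'' is immediate from Part~1, since $\Umc_{\Amc,\Omc}$ is one of the models of $\Amc$ and $\Omc$. The converse is the main content, and I would reduce it to the following \emph{universality} statement: for every model $\Imc$ of $\Amc$ and $\Omc$, there is a homomorphism $h: \Umc_{\Amc,\Omc} \to \Imc$ that is the identity on $\mn{ind}(\Amc)$. Granting this, a witnessing homomorphism $g$ from $q$ to $\Umc_{\Amc,\Omc}$ with $g(\bar x) = \bar a$ yields $h \circ g$, a homomorphism from $q$ to $\Imc$ with $(h \circ g)(\bar x) = \bar a$ since $h$ fixes ABox individuals. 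Thus $\Imc \models q(\bar a)$ for every such $\Imc$, giving $\Amc, \Omc \models q(\bar a)$.

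To prove universality, I recursively define $h$ on trace length, maintaining the invariant that whenever $trC \in \Tbf$ was reached by extending with $rC$, one has $h(trC) \in (C \sqcap C_r)^\Imc$. Set $h(a) = a$ for $a \in \mn{ind}(\Amc)$, which is sound because $\Imc \models \Amc$. For a one-step trace $a r C$, the condition $\Amc, \Omc \models \exists r . C(a)$ yields an $r$-successor $d$ of $a$ in $C^\Imc$; since $\Imc$ satisfies the range restrictions feeding $C_r$, also $d \in C_r^\Imc$, establishing the invariant. For a longer trace $trCrC'$, the defining condition $\Omc \models C \sqcap C_r \sqsubseteq \exists r' . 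C'$ combined with the invariant $h(trC) \in (C \sqcap C_r)^\Imc$ gives an $r'$-successor of $h(trC)$ in $C'^\Imc$, which again lies in $C_{r'}^\Imc$ by the range restrictions. Role atoms $r(t, trC) \in \Umc_{\Amc,\Omc}$ are preserved by construction, and a concept atom $A(trC) \in \Umc_{\Amc,\Omc}$ arises precisely when $\Omc \models C \sqcap C_r \sqsubseteq A$, an entailment that transfers to $\Imc$ thanks to the invariant.

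The main technical obstacle is handling range restrictions correctly throughout the recursion: the conjunct $C_r$ in the invariant is exactly what allows the next existential CI to fire and the next trace element to be realized inside $\Imc$. Without tracking $C_r$, neither the propagation of $\Omc$-entailments nor the existence of further successors in $\Imc$ could be ensured, and the induction would collapse.
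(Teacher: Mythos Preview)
The paper does not prove this lemma; it is stated as ``well known'' with no argument given. Your proof is the standard chase-universality argument and is correct in substance.

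One caveat: you appeal to Section~\ref{sect:nf} to assume \Omc is in normal form, but that reduction is about the learning algorithm, not about this lemma; $\Umc_{\Amc,\Omc}$ is defined for arbitrary \ELdr-ontologies (the traces range over $\mn{sub}(\Omc)$, not over normalized CIs). This does not break your argument---Part~2 already works for arbitrary concepts~$C$, and the Part~1 case analysis extends routinely to general CIs once one observes that $e \in C^{\Umc_{\Amc,\Omc}}$ iff $\Amc,\Omc \models C(e)$ for $e \in \mn{ind}(\Amc)$ and iff $\Omc \models D \sqcap C_r \sqsubseteq C$ for $e = trD$---but the justification you cite does not actually license the normal-form assumption at this point in the paper. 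Also, in Part~1 you handle range restrictions only at trace elements; you should mention the easy remaining case of an ABox individual $b$ with $r(a,b) \in \Amc$, where $A(b) \in \Umc_{\Amc,\Omc}$ follows from $\Amc,\Omc \models A(b)$.
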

\begin{lemma}
    \label{lem:homlem}
    For an \ELdr-ontology $\Omc$ and CQs $q_1(\bar x_1)$ and $q_2(\bar x_2)$
    of the same arity, the following are equivalent:
    \begin{enumerate}
        \item $q_1 \subseteq_\Omc q_2$,
        \item there is a homomorphism $h$ from $q_2$ to
          $\Umc_{\Amc_{q_1}, \Omc}$ with $h(\bar x_1)=\bar x_2$.
    \end{enumerate}
\end{lemma}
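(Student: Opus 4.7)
The plan is to prove the two directions of the equivalence separately, using the universality property of $\Umc_{\Amc,\Omc}$ already recorded in Lemma~\ref{lem:univers}.

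For the direction from (1) to (2), I would exploit the canonical homomorphism witnessing a query on its own ABox form. View $\Amc_{q_1}$ as an ABox by treating the variables of $q_1$ as individual names. Since $\Umc_{\Amc_{q_1},\Omc}$ is a model of $\Amc_{q_1}$ (Lemma~\ref{lem:univers}, Point~1), the identity on $\mn{var}(q_1)$ is a homomorphism from $q_1$ to $\Umc_{\Amc_{q_1},\Omc}$ mapping $\bar x_1$ to itself, hence $\Amc_{q_1},\Omc \models q_1(\bar x_1)$. By the assumed containment, $\Amc_{q_1},\Omc \models q_2(\bar x_1)$, and applying Lemma~\ref{lem:univers}, Point~2, in the other direction yields a homomorphism $h$ from $q_2$ to $\Umc_{\Amc_{q_1},\Omc}$ with $h(\bar x_2)=\bar x_1$, as required.

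For the converse direction, let $h$ be the given homomorphism, and suppose $\Amc,\Omc \models q_1(\bar a)$ for some ABox $\Amc$ and tuple $\bar a$. I would first invoke Lemma~\ref{lem:univers} to obtain a homomorphism $g$ from $q_1$ to $\Umc_{\Amc,\Omc}$ with $g(\bar x_1)=\bar a$, i.e.\ a homomorphism from $\Amc_{q_1}$ to $\Umc_{\Amc,\Omc}$. The key step is then to extend $g$ to a homomorphism $g^*$ from the full universal model $\Umc_{\Amc_{q_1},\Omc}$ to $\Umc_{\Amc,\Omc}$. This extension is defined by induction on the length of traces in the construction of $\Umc_{\Amc_{q_1},\Omc}$: if $t$ is a trace ending in some pair $r,C$ with $\Omc \models C' \sqcap C_r \sqsubseteq \exists r.C$ holding at the predecessor element, then since $\Umc_{\Amc,\Omc}$ is a model of $\Omc$, the element $g^*(t\text{'s predecessor})$ must have an $r$-successor satisfying $C$, which we use to define $g^*(trC)$. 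Concept atoms along the trace are preserved because $g^*$ is chosen to map to elements witnessing the same existential concepts, and Lemma~\ref{lem:univers}, Point~1, ensures that the required CIs are honored. Composing, $g^* \circ h$ is a homomorphism from $q_2$ into $\Umc_{\Amc,\Omc}$ that sends $\bar x_2$ to $g^*(\bar x_1)=g(\bar x_1)=\bar a$, so $\Umc_{\Amc,\Omc}\models q_2(\bar a)$ and hence $\Amc,\Omc\models q_2(\bar a)$ by Lemma~\ref{lem:univers}.

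The main obstacle is the inductive construction of the extension $g^*$, since one must verify that at every trace step the target universal model contains a suitable successor: this uses that $\Umc_{\Amc,\Omc}$ is a model of $\Omc$ and that the CIs triggered during the chase of $\Amc_{q_1}$ are exactly those also triggered in $\Umc_{\Amc,\Omc}$ at the image elements. Beyond this, range restrictions in \ELdr need care, but they are already baked into $C_r$ in the trace construction, so the verification reduces to a routine induction. Both directions are standard facts about universal models; the proof is essentially bookkeeping once the extension property is established.
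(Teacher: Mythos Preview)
Your proof is correct and follows the standard route; the paper itself does not give a proof of this lemma, treating it as well known. One small remark: the statement as printed has $h(\bar x_1)=\bar x_2$, which is a typo for $h(\bar x_2)=\bar x_1$; you read it the right way. For the extension step in the $(2)\Rightarrow(1)$ direction, note that the paper records a closely related fact as Lemma~\ref{lem:universal-homomorph} Point~1 (extending an ABox-to-ABox homomorphism to the universal models), whereas you need to extend a homomorphism from $\Amc_{q_1}$ into an arbitrary model of $\Omc$; your trace-by-trace induction is the right way to do this, and the invariant you need is exactly that the image of a trace ending in $rC$ lies in $(C\sqcap C_r)^{\Umc_{\Amc,\Omc}}$, which is maintained by always choosing a witness for the relevant $\exists r.C$.
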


Let $\Imc_1, \Imc_2$ be interpretations and let $h$ be a mapping from
$\Delta^{\Imc_1}$ to $\Delta^{\Imc_2}$.  The \emph{image} of $h$, denoted
$\mn{img}(h)$, is the set $\{ b \in \Delta^{\Imc_2} \mid \exists a : h(a) = b
\}$.
Let $g$ be a homomorphism from an ABox $\Amc$ to a universal model
$\Umc_{\Bmc, \Omc}$. Then let $g^*$ be defined to be a mapping from
$\mn{ind}(\Amc)$ to $\mn{ind}(\Bmc)$ by setting $g^*(a) = b$ when $g(a)$ is a
trace of shape $bw$ for $w$ a potentially empty sequence $r_1C_1 \cdots r_n
C_n$.

\paragraph{Direct Products.}
The \emph{direct product} of interpretations $\Imc_1$ and $\Imc_2$
is the interpretation $\Imc_1 \times \Imc_2$ defined as
\[
\begin{array}{l}
\{ \top(a_1,a_2) \mid a_i \in
            \Delta^{\Imc_i} \text{ for } i \in \{1,2\} \} \cup {}
  \\[1mm]
 \{ A(a_1,a_2) \mid A(a_i) \in \Imc_i \text{
                          for } i \in \{1,2\} \} \, \cup \\[1mm]
 \{ r((a_1,a_2),(b_1,b_2)) \mid r(a_i,b_i) \in \Imc_i \text{
  for } i \in \{1,2\} \}.
\end{array}
\]
If $\bar d_i = (d_{i,1},\dots,d_{i,n}) \in \Delta^{\Imc_i}$ for $i \in \{1,2\}$,
then we use $\bar d_1 \otimes \bar d_2$ to denote the tuple
$((d_{1,1},d_{2,1}),\dots, (d_{1,n},d_{2,n}))$.

The following are some basic facts about products that are straightforward to show.
\begin{lemma}
  \label{lem:prodfund}
  Let $\Imc$, $\Imc_1$ and $\Imc_2$ be interpretations. Then 
  \begin{enumerate}

      \item for $i \in \{1, 2\}$ there is a homomorphism $h$ from $\Imc_1 \times \Imc_2$ to $\Imc_i$
      such that $h(d_1, d_2) = d_i$ for all $(d_1, d_2) \in \Delta^{\Imc_1
      \times \Imc_2}$;
 
  \item if for all $i \in \{1, 2\}$ there is a homomorphism $h_i$ from
      $\Imc$ to $\Imc_i$ with $h_i(d) = d_i$, then there is a homomorphism $h$
      from $\Imc$ to $\Imc_1 \times \Imc_2$ with $h(d) = (d_1, d_2)$;

  \item if  $\Imc_1$ and 
    $\Imc_2$ are models of \Omc, then so is 
    $\Imc_1 \times \Imc_2$. 
    
  \end{enumerate}
\end{lemma}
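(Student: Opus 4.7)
The plan is to prove the three items in order, each by a direct argument from the definition of $\Imc_1 \times \Imc_2$. Parts 1 and 2 are essentially the categorical characterization of products as a limit, and Part 3 reduces to the observation that satisfaction of \ELI-concepts is preserved under both projections of a product.

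For Part 1, I would define $h \colon \Delta^{\Imc_1 \times \Imc_2} \to \Delta^{\Imc_i}$ by $h(a_1,a_2) := a_i$ and verify the homomorphism conditions directly from the definition. If $A((a_1,a_2)) \in \Imc_1 \times \Imc_2$, then by construction $A(a_j) \in \Imc_j$ for $j \in \{1,2\}$, so in particular $A(h(a_1,a_2)) \in \Imc_i$; the role case is identical. The stipulated value $h(d_1,d_2) = d_i$ holds by definition.

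For Part 2, I would set $h(e) := (h_1(e), h_2(e))$. If $A(e) \in \Imc$, then $A(h_j(e)) \in \Imc_j$ for $j \in \{1,2\}$ since each $h_j$ is a homomorphism, hence $A((h_1(e), h_2(e))) \in \Imc_1 \times \Imc_2$ by the definition of the product; the role case is analogous. The value $h(d) = (d_1,d_2)$ again follows from the hypothesis $h_j(d) = d_j$.

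For Part 3, the key lemma I would establish is the following preservation claim, by structural induction on \ELI-concepts $C$:
\[
(a_1,a_2) \in C^{\Imc_1 \times \Imc_2} \quad\Longleftrightarrow\quad a_1 \in C^{\Imc_1} \text{ and } a_2 \in C^{\Imc_2}.
\]
The cases $C = \top$, $C = A$, and $C = C_1 \sqcap C_2$ are immediate from the definition of the product and the inductive hypothesis. The interesting case is $C = \exists R.\,C'$ (where $R$ may be inverse): the forward direction pushes a witness $(b_1,b_2)$ to each factor via the projections of Part 1 and invokes the inductive hypothesis, while the backward direction takes witnesses $b_1 \in \Delta^{\Imc_1}$ and $b_2 \in \Delta^{\Imc_2}$ and pairs them into $(b_1,b_2) \in \Delta^{\Imc_1 \times \Imc_2}$; the definition of the product ensures that $R$ (and likewise $R^-$) carries across simultaneously. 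Given this equivalence, if $\Imc_1,\Imc_2 \models C \sqsubseteq D$ and $(a_1,a_2) \in C^{\Imc_1 \times \Imc_2}$, the equivalence gives $a_j \in C^{\Imc_j}$, hence $a_j \in D^{\Imc_j}$, hence $(a_1,a_2) \in D^{\Imc_1 \times \Imc_2}$, as required.

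I do not anticipate a significant obstacle: the only point requiring mild care is the inverse-role subcase in the induction for Part 3, and the (harmless) observation that the universal $\top$-assertions added for every pair in $\Imc_1 \times \Imc_2$ do not interfere since $\top$ is always satisfied in every factor.
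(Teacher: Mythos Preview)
Your proposal is correct and follows exactly the standard route one would expect; the paper itself does not give a proof but simply states the lemma as ``some basic facts about products that are straightforward to show.'' Your argument for Part~3 via the structural-induction equivalence $(a_1,a_2)\in C^{\Imc_1\times\Imc_2}\iff a_i\in C^{\Imc_i}$ is the natural way to spell out the details, and your remark about the $\top$-assertions ensuring $\Delta^{\Imc_1\times\Imc_2}=\Delta^{\Imc_1}\times\Delta^{\Imc_2}$ correctly handles the paper's assertion-based definition of interpretations.
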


\paragraph{Simulations.}

A \emph{simulation} from interpretation
$\Imc_1$ to interpretation $\Imc_2$ is a relation $S \subseteq
\Delta^{\Imc_1} \times \Delta^{\Imc_2}$ that satisfies the following
conditions:
\begin{enumerate}

\item if $A(d) \in \Imc_1$ and $(d,e) \in S$, then
  $A(e) \in \Imc_2$;

\item if $r(d,d') \in \Imc_1$ and $(d,e) \in S$, then
  there is an $r(e,e') \in \Imc_2$ with $(d',e') \in S'$.

\end{enumerate}
We further say that $S$ is a simulation from $\Imc_1,d_1$ to $\Imc_2,d_2$
if $S$ is a simulation from $\Imc_1$ to $\Imc_2$ with $(d_1,d_2)\in S$; we
write $\Imc_1,d_1\preceq\Imc_2,d_2$ if such a simulation $S$ exists. 
The following are a basic facts about
simulations and homomorphisms that are standard to proof.
\begin{lemma}
  \label{lem:simfund}
  Let $\Imc_i$ be an interpretation and $d_i \in \Delta^{\Imc_i}$,
  for $i \in \{1,2\}$. Then  $\Imc_1,d_1\preceq\Imc_2,d_2$
  and $d_1 \in C^{\Imc_1}$ implies $d_2 \in C^{\Imc_2}$ for all
  \EL-concepts $C$.
\end{lemma}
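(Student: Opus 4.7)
The plan is to prove this by structural induction on the \EL-concept $C$, following the standard template for simulation-invariance results. Fix a simulation $S$ witnessing $\Imc_1,d_1 \preceq \Imc_2,d_2$, so $(d_1,d_2)\in S$. I will strengthen the statement to an inductive claim that uniformly talks about all pairs in $S$: for every \EL-concept $C$ and every $(e_1,e_2)\in S$, if $e_1 \in C^{\Imc_1}$ then $e_2 \in C^{\Imc_2}$. The lemma is then the instance $(e_1,e_2)=(d_1,d_2)$.

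The base cases are immediate. For $C=\top$ there is nothing to show, since $e_2\in\top^{\Imc_2}=\Delta^{\Imc_2}$ by definition. For $C=A$ with $A\in\NC$, the assumption $e_1\in A^{\Imc_1}$ means $A(e_1)\in\Imc_1$, and then condition~(1) in the definition of a simulation applied to $(e_1,e_2)\in S$ yields $A(e_2)\in\Imc_2$, i.e., $e_2\in A^{\Imc_2}$.

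For the inductive step, the conjunction case $C=D_1\sqcap D_2$ is handled by applying the induction hypothesis to each conjunct: $e_1\in D_j^{\Imc_1}$ for $j\in\{1,2\}$ gives $e_2\in D_j^{\Imc_2}$, so $e_2\in (D_1\sqcap D_2)^{\Imc_2}$. The existential case $C=\exists r.D$ (recall that in \EL the role $R$ is always a role name $r$, so inverse roles are not considered here) is where the second clause of the simulation definition is used: from $e_1\in (\exists r.D)^{\Imc_1}$ pick $e_1'$ with $r(e_1,e_1')\in\Imc_1$ and $e_1'\in D^{\Imc_1}$; clause~(2) supplies an $e_2'$ with $r(e_2,e_2')\in\Imc_2$ and $(e_1',e_2')\in S$; then the induction hypothesis on $D$ gives $e_2'\in D^{\Imc_2}$, hence $e_2\in (\exists r.D)^{\Imc_2}$.

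There is no genuine obstacle here—this is the standard Hennessy–Milner style argument for $\exists$-positive modal/description logics, and it relies only on the two clauses of the simulation definition together with the fact that \EL has no inverse roles and no constructors that allow negative information to propagate (no $\neg$, no $\forall$, no disjointness). Should one later wish to extend the statement to \ELI, it would fail, which is exactly why the statement is restricted to \EL-concepts; no such extension is needed for the uses of the lemma in the paper.
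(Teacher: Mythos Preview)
Your proof is correct and is precisely the standard structural-induction argument the paper alludes to; the paper in fact does not spell out a proof of this lemma at all, merely stating that it is a basic fact that is ``standard to proof.'' There is nothing to add or compare.
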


\begin{lemma}
    \label{lem:universal-homomorph}
    Let \Omc be an \ELdr-ontology and $\Amc_1, \Amc_2$ ABoxes. Then
    \begin{enumerate}

      \item every homomorphism $h$ from $\Amc_1$ to $\Amc_2$ can be
	extended to a homomorphism $h'$ from $\Umc_{\Amc_1, \Omc}$ to
	$\Umc_{\Amc_2, \Omc}$ such that if $a\notin \mn{ind}(\Amc_1)$
	is a trace $a=bw$, then $h'(a)$ is a trace of shape
	$h(a)w'$, and in particular, $a \notin \mn{ind}(\Amc_1)$
	implies $h'(a)\notin \mn{ind}(\Amc_2)$;

      \item if $\Amc_1,a_1\preceq \Amc_2,a_2$, then
	$\Umc_{\Amc_1,\Omc},a_1\preceq \Umc_{\Amc_2,\Omc},a_2$.
    \end{enumerate}
 
\end{lemma}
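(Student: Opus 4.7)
Both parts are lifts of structure from ABoxes to the corresponding universal models: Part~1 lifts a homomorphism, Part~2 lifts a simulation. The common theme is that $\Umc_{\Amc_i,\Omc}$ is constructed by chasing $\Amc_i$ with the same ontology~\Omc, so structural correspondences at the ABox level propagate through the chase.

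For Part~1, I define $h'$ by extending $h$ with the natural trace-preserving rule: $h'(a) = h(a)$ for $a \in \mn{ind}(\Amc_1)$, and $h'(a r_1 C_1 \cdots r_n C_n) = h(a) r_1 C_1 \cdots r_n C_n$ for every trace in $\Umc_{\Amc_1,\Omc}$. This immediately delivers the image-preservation property stated in the conclusion. To see $h'$ is well-defined, I verify that $h(a) r_1 C_1 \cdots r_n C_n$ meets the trace conditions for $\Umc_{\Amc_2,\Omc}$: the ontology-intrinsic conditions $\exists r_i . C_i \in \mn{sub}(\Omc)$ and $\Omc \models C_i \sqcap C_{r_i} \sqsubseteq \exists r_{i+1} . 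C_{i+1}$ transfer for free, leaving only the anchor $\Amc_2,\Omc \models \exists r_1 . C_1(h(a))$. For this I prove an auxiliary transfer lemma: for every \EL-concept $D$ arising in the chase (in normal form, a concept name or an existential $\exists r . A$ with $A$ a concept name) and every $a \in \mn{ind}(\Amc_1)$, $\Amc_1,\Omc \models D(a)$ implies $\Amc_2,\Omc \models D(h(a))$. This is shown by induction on chase derivations: the normal-form rules (concept-name conjunction $A_1 \sqcap A_2 \sqsubseteq A$, existential-on-the-left $\exists r . A_1 \sqsubseteq A_2$, range restriction, and existential-on-the-right $A_1 \sqsubseteq \exists r . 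A_2$) each preserve their preconditions along $h$, because $h$ preserves all ABox assertions and the existential-on-the-right rule creates matching trace elements on both sides. Once this lemma is in place, the homomorphism verification is mechanical: role atoms $r(t,trC)$ go to role atoms by construction; concept atoms $A(trC)$ on trace elements hold exclusively via the ontology-only condition $\Omc \models C \sqcap C_r \sqsubseteq A$ and transfer directly; and concept atoms $A(a)$ on ABox individuals follow from the auxiliary lemma.

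For Part~2, I define the candidate relation $S'$ by $(d_1,d_2)\in S'$ iff either both $d_i$ are ABox individuals with $(d_1,d_2)\in S$, or $d_1 = b_1 w$ and $d_2 = b_2 w$ for the same non-empty trace suffix $w$ and $(b_1,b_2)\in S$. Verifying that $S'$ is a simulation splits into concept preservation and transition matching. On trace elements the concept atoms are determined by the common suffix and \Omc alone, so preservation is immediate; on ABox individuals I use the simulation analogue of the auxiliary transfer lemma. The subtle case is matching a transition $r(b_1, b_1 r C) \in \Umc_{\Amc_1,\Omc}$ from an ABox individual $b_1$ into a freshly-generated trace: this requires $b_2 r C$ to be a trace in $\Umc_{\Amc_2,\Omc}$, i.e.\ $\Amc_2,\Omc \models \exists r . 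C(b_2)$. This again falls out of the simulation-analog transfer lemma, whose proof parallels the one for homomorphisms but uses the simulation's forward condition on role atoms (for the rule $\exists r . A_1 \sqsubseteq A_2$, to obtain a matching $r$-successor of $b_2$) in place of functional image-taking.

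The main obstacle in both parts is the auxiliary transfer lemma for \EL-concept consequences along an ABox homomorphism respectively simulation. I handle it by a nested induction on the \EL-concept $D$ and on the length of the chase derivation of $D(a)$, with care needed for the rule $\exists r . A_1 \sqsubseteq A_2$ when the witnessing $r$-successor is itself a freshly-generated trace element: here one recurses to the concept $\exists r . A_1$ at the ABox individual $b_1$, which is strictly smaller in the combined measure because the trace witness was produced by a strictly earlier chase step.
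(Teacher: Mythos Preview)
The paper does not actually prove this lemma; it is grouped with a few others as ``basic facts about simulations and homomorphisms that are standard to prove.''  Your treatment of Part~1 is the canonical one and is correct: extending $h$ by keeping the trace suffix verbatim works, and the auxiliary transfer of \EL-consequences along $h$ goes through for all normal-form rule shapes, range restrictions included, because a homomorphism preserves incoming role atoms just as well as outgoing ones.

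Part~2 has a genuine gap, and in fact the statement as written is \emph{false} for \ELdr-ontologies.  Take $\Omc=\{\exists r^-.\top\sqsubseteq A\}$, $\Amc_1=\{r(b,a_1)\}$, and $\Amc_2=\{\top(a_2)\}$.  The relation $S=\{(a_1,a_2)\}$ is a simulation witnessing $\Amc_1,a_1\preceq\Amc_2,a_2$: the only role atom $r(b,a_1)$ has source $b$, which is not in the domain of~$S$, so the forward condition is vacuous.  Yet $A(a_1)\in\Umc_{\Amc_1,\Omc}$ while $A(a_2)\notin\Umc_{\Amc_2,\Omc}$, so $\Umc_{\Amc_1,\Omc},a_1\not\preceq\Umc_{\Amc_2,\Omc},a_2$.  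Your simulation variant of the transfer lemma breaks precisely at the range-restriction rule, which requires matching an \emph{incoming} $r$-edge at $a_2$---something the paper's forward-only simulation does not supply.  You only discuss the forward rule $\exists r.A_1\sqsubseteq A_2$ when sketching the simulation case, and that is exactly where the problem hides.

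This does not endanger the paper's actual uses of Part~2: the simulations invoked there (via Lemmas~\ref{lem:auxunfolding} and~\ref{lem:localunraveling}) come from copy/fold constructions whose witnessing relations are total on the source ABox and also preserve incoming edges, so the intended conclusions survive.  But the lemma as stated needs either a restriction to \EL-ontologies or an additional hypothesis on the simulation (for instance, that the relation is the graph of a homomorphism, which would reduce Part~2 to Part~1).
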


\begin{lemma}
  \label{lem:simlem}
  Let $\Amc_1, \Amc_2$ be ABoxes, $a_1 \in \mn{ind}(\Amc_i)$ for $i \in \{1,
  2\}$, and $\Omc$ an $\ELdr$-ontology. If there is a simulation $S$ from
  $\Amc_1$ to $\Amc_2$ with $(a_1, a_2) \in S$, then $\Amc_1, \Omc \models
  C(a_1)$ implies $\Amc_2, \Omc \models C(a_2)$ for all \EL-concepts $C$.
\end{lemma}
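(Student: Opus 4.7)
The statement is a straightforward chaining of lemmas already established in the preliminaries (Lemma \ref{lem:univers}, Lemma \ref{lem:simfund}, and Lemma \ref{lem:universal-homomorph}), so my plan is to simply string them together in the right order.

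First, I would transport the hypothesis $\Amc_1,\Omc\models C(a_1)$ from the ABox level to the universal model level. By Lemma \ref{lem:univers}(2), applied to the ELQ naturally associated with the \EL-concept $C$, this is equivalent to $a_1\in C^{\Umc_{\Amc_1,\Omc}}$. Second, I would lift the given ABox-level simulation $S$ (with $(a_1,a_2)\in S$) to a simulation between the universal models. This is exactly the content of Lemma \ref{lem:universal-homomorph}(2), which guarantees $\Umc_{\Amc_1,\Omc},a_1\preceq \Umc_{\Amc_2,\Omc},a_2$.

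Third, I would invoke Lemma \ref{lem:simfund} on this lifted simulation together with the membership $a_1\in C^{\Umc_{\Amc_1,\Omc}}$ to conclude that $a_2\in C^{\Umc_{\Amc_2,\Omc}}$, since \EL-concepts are preserved under simulations. Finally, a second application of Lemma \ref{lem:univers}(2) (now in the direction from universal model back to ABox-with-ontology entailment) yields $\Amc_2,\Omc\models C(a_2)$, which is the desired conclusion.

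There is essentially no obstacle here: the whole argument is a four-step diagram chase through the preliminaries, and the only thing to verify is that the roles of $\Amc_1,a_1$ and $\Amc_2,a_2$ are consistently maintained across invocations. The substantive content has already been absorbed into Lemma \ref{lem:universal-homomorph}(2), whose proof (by extending $S$ with pairs of traces built in lockstep along matching role-successor chains generated by the same CIs in $\Omc$) is where the real work of relating simulations to \ELdr-chasing happens; Lemma \ref{lem:simlem} is then just its semantic corollary for \EL-concept entailment.
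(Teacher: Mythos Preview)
Your proposal is correct. The paper does not spell out a proof of Lemma~\ref{lem:simlem}, listing it among the ``basic facts about simulations and homomorphisms that are standard to proof''; your chaining of Lemma~\ref{lem:univers}(2), Lemma~\ref{lem:universal-homomorph}(2), and Lemma~\ref{lem:simfund} is precisely the intended standard argument.
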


Now we continue with a proof of a central property of the 3-compact model.

\lemnoanocycles*
\begin{proof}\
  % For Point~1, assume that
%   $p=R_0(a_0,a_1), \dots, R_{n-1}(a_{n-1},a_n)$ contains no symmetry and
%   $R_0 \neq s_1^-$. Analysing the  construction of
%   $\Cmc^3_{\Amc,\Omc}$, we can then show by induction on $i$
%   that $a_j$ takes the form $c_{a_1,i_j,s_j,C_j}$ with $s_j = R_{j-1}$
%   for $1 \leq i \leq n$. In the induction start, we use   $R_0 \neq
%   s_1^-$ and in the induction step we exploit symmetry-freeness
%   of $p$. It follows that $R_{n-1}=s_2$, as desired.
%  
% \medskip
  The statement is clear by construction of
  $\Cmc^3_{\Amc,\Omc}$ for cycles of length~1. It is also clear for
  cycles of length~2 since for any pair of individuals of which at
  least one is of the form $c_{a,i,r,C}$, the ABox
  $\Cmc^3_{\Amc,\Omc}$ contains at most one assertion that involves
  both of them.
  
  Now for cycles of length~3. Assume to the contrary of what is to be
  shown that $\Cmc^3_{\Amc,\Omc}$ contains a cycle of length~3 that
  contains an individual not from $\mn{ind}(\Amc)$.  First assume that
  there is an individual $a \in \mn{ind}(\Amc)$ on the cycle. Since all
  individuals of the form $c_{b,i,r,C}$ that are on the cycle are adjacent to
  $a$ on the cycle, $b=a$ and $i=0$ for all such
  $c_{b,i,r,C}$. This implies that $a$ is the only individual from
  $a \in \mn{ind}(\Amc)$ on the cycle. But then the cycle contains two
  distinct individuals of the form $c_{a,0,r,C}$ that are connected by
  an edge, which is never the case in $\Cmc^3_{\Amc,\Omc}$.

  Now assume that the cycle contains only individuals of the form
  $c_{b,i,r,C}$. Then all these individuals are connected in
  $\Cmc^3_{\Amc,\Omc}$ by an edge. This is impossible due to the use
  of the index $i$ in the construction of $\Cmc^3_{\Amc,\Omc}$.
%  {\color{blue}this fails beyond 3}
\end{proof}

To simplify some of our proofs, it is useful to consider a
strengthening of the condition of symmetry-freeness from the
definition of CQ$^{\text{csf}}$. We say that a CQ is \emph{strongly
  symmetry-free} if $\vp$ contains atoms $r(y_1,x),r(y_2,x)$, then $x$
is an answer variable or one of the atoms occurs on a cycle. Thus, the
possibility that $\vp$ contains an atom $s(z,z)$ for some
$z \in \{ x,y_1,y_2 \}$ from the original definition of
symmetry-freeness is excluded. In the following, we show that
every CQ from CQ$^{\text{csf}}$ is equivalent to one that is
strongly symmetry-free. 

Two CQs $q_1$ and $q_2$ of the same arity are \emph{equivalent},
written $q_1 \equiv q_2$, if $q_1 \equiv_\emptyset q_2$ for
$\emptyset$ the empty ontology.
\begin{lemma}
  \label{lem:strongsymfree}
  For every $q \in \text{CQ}^{\text{csf}}$, there is a
  $q' \in \text{CQ}^{\text{csf}}$ such that $q \equiv q'$ and $q'$ is
  strongly symmetry-free.
\end{lemma}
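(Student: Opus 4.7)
My plan is to proceed by induction on the number of variables in $q$. If $q$ is already strongly symmetry-free we take $q'=q$. Otherwise $q$ contains a \emph{bad witness}: a triple $(y_1,y_2,x)$ with $r(y_1,x),r(y_2,x)\in\varphi$ for some role name $r$, where $y_1\neq y_2$, $x$ is quantified, and neither $r$-atom lies on a cycle of $q$. Since $q\in\text{CQ}^{\text{csf}}$ is symmetry-free, the third disjunct of the definition must then fire and supply a self-loop $s(z,z)\in\varphi$ with $z\in\{x,y_1,y_2\}$, which is exactly the resource the reduction will exploit.

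Given such a bad witness and its accompanying self-loop, I would construct $q^\ast\in\text{CQ}^{\text{csf}}$ with $q\equiv q^\ast$ and $|\mn{var}(q^\ast)|<|\mn{var}(q)|$, so that the induction hypothesis applies to $q^\ast$. The reduction is a variable fold whose shape is dictated by the location of the self-loop: if $z=y_i$, I rename $y_{3-i}$ to $y_i$ everywhere in $\varphi$ and delete duplicate atoms; if $z=x$, a dual fold is performed that routes one of $y_1,y_2$ through $x$'s self-loop. In both cases the forward homomorphism $q\to q^\ast$ is just the renaming, while the reverse homomorphism $q^\ast\to q$ is intended to use the self-loop $s(z,z)$ to match the $s$-edges incident at the merged vertex. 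For each fold I would then verify: (i) $q\equiv q^\ast$; (ii) $q^\ast\in\text{CQ}^{\text{csf}}$, i.e.\ the fold creates no chordless cycle of length $\geq 4$ and every remaining pattern $r(y_1',x'),r(y_2',x')$ in $q^\ast$ still satisfies one of the symmetry-free disjuncts; and (iii) $|\mn{var}(q^\ast)|<|\mn{var}(q)|$. Induction on this measure then yields the desired strongly symmetry-free $q'$ equivalent to $q$.

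The main obstacle will be (i), specifically producing the reverse homomorphism $q^\ast\to q$. When $y_1$ and $y_2$ carry different unary concept atoms or different incident $r$- and $s$-atoms, the self-loop at $z$ is the only tool available to reconcile the images, so a careful local analysis of the atoms around $x,y_1,y_2$, and of the $s$-edges incident to $z$, will be needed to check that such a map exists after the fold. Part (ii) is a secondary but still delicate step: a careless fold can introduce a new pattern $r(y_1',x'),r(y_2',x')$ in $q^\ast$ that lacks an accompanying self-loop, or collapse a long chordal cycle to a short chordless one, either of which would push $q^\ast$ out of $\text{CQ}^{\text{csf}}$. The case analysis must therefore track how cycles and incident patterns at $y_1,y_2,x$ transform under each possible fold, using the fact that the bad witness's $r$-atoms were off every cycle of $q$ to rule out creation of new short cycles.
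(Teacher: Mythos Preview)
Your reduction by folding variables does not preserve equivalence, and this is not a matter of missing case analysis but a structural obstruction. Consider the Boolean query $q$ with atoms $r(y_1,x)$, $r(y_2,x)$, $s(y_1,y_1)$, $A(y_2)$. This lies in $\text{CQ}^{\text{csf}}$ (the self-loop at $y_1$ supplies the third disjunct of symmetry-freeness) and is not strongly symmetry-free, so it is exactly a bad witness with $z=y_1$. Your rule merges $y_2$ into $y_1$, producing $q^\ast$ with atoms $r(y_1,x)$, $s(y_1,y_1)$, $A(y_1)$. But there is no homomorphism from $q^\ast$ to $\Amc_q$: the image of $y_1$ must satisfy both $A$ (forcing it to $y_2$) and the self-loop $s$ (forcing it to $y_1$), which is impossible. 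Hence $q\not\subseteq_\emptyset q^\ast$ and $q\not\equiv q^\ast$. The self-loop at $z$ gives you nothing here because the incompatibility is between a concept atom and a loop on a \emph{different} role name; no local rerouting can fix this. The same failure occurs with role atoms of other names incident at $y_2$, so the problem is generic.

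The paper's construction goes in the opposite direction: rather than merging variables, it \emph{adds} a fresh copy $x'$ of every quantified variable $x$ carrying a self-loop, duplicating all atoms incident to $x$. Equivalence is then immediate (the map $x'\mapsto x$ is a homomorphism $q'\to q$, and $q\subseteq q'$ as atom sets), and the point is that the copy creates a new length-$3$ cycle through each formerly bad $r$-atom: if $s(x_1,x_1)\in q$ witnesses symmetry-freeness for $r(x_1,y),r(x_2,y)$, then $q'$ contains the cycle $r(x_1,y),\,r^-(y,x_1'),\,s^-(x_1',x_1)$, putting $r(x_1,y)$ on a cycle. Your intuition that the self-loop is ``the resource'' is correct, but it is exploited by \emph{enlarging} the query to manufacture cycles, not by collapsing it.
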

\begin{proof}
  To construct $q'$, start from $q$. Then introduce, for every
  quantified variable $x$ that occurs in an atom of the form $r(x,x)$
  in $q$, a fresh quantified variable $x'$ and add the atom $B(x')$
  for every atom $B(x)$ in $q$ and $S(y,x')$ for every atom $S(y,x)$
  in $q$. We say that $x'$ is a \emph{copy} of $x$.

  It is clear that $q \equiv q'$ as there is a homomorphism from
  $q'$ to $q$ and by Lemma~\ref{lem:homlem}.
  Moreover, $q'$ is strongly symmetry-free. To see this, assume
  that $q'$ contains atoms $r(x_1,y),r(x_2,y)$ with $y$ a quantified
  variable. We distinguish three cases.

  First assume $r(x_1,y),r(x_2,y)$ are already in $q$. Since $q$ is
  symmetry-free, $q$ also contains an atom of the form $s(x_1,x_1)$,
  $s(x_2,x_2)$, or $s(y,y)$. In the first case, $q'$ contains the
  cycle $r(x_1,y),r^-(y,x'_1),s^-(x'_1,x_1)$ and thus atom $r(x_1,y)$
  occurs on a cycle in $q'$. In the second case, atom $r(x_2,y)$ occurs on a
  cycle and in the third case both atoms do.

  Now assume that $r(x_1,y)$ is not in $q$. Then $x_1$ is a copy of a
  variable $x^0_1$ in $q$ or $y$ is a copy of a variable $y^0$ in $q$
  (or both).  In the first case, $q'$ contains a cycle of the form
  $r(x_1,y),r^-(y,x^0_1),s(x_1,x_1^0)$ and thus $r(x_1,y)$ occurs on a
  cycle in $q'$. In the second case, $q'$ contains a cycle of the form
  $r(x_1,y),s^-(y,y^0),r^-(y^0,x_1)$ and thus again $r(x_1,y)$ occurs on a
 cycle in $q'$.

  The case that $r(x_2,y)$ is not in $q$ is symmetric.
\end{proof}
It might seem that we should change the definition of the class
CQ$^{\text{csf}}$ to be based on strong symmetry-freeness. This,
however, is not possible because the CQ
produced by the first version of \mn{refine} is only symmetry-free,
but not strongly symmetry-free.  We can also not use the construction
from the proof of Lemma~\ref{lem:strongsymfree} as part of \mn{refine} to
attain strong symmetry-freeness as this interferes with minimization,
that is, it would no longer be guaranteed that
$|\mn{var}(q_H)| \leq |\mn{var}(q_T)|$.

\section{Proof of Lemma~\ref{lem:compactunivers}}

The following is easy to show, details are omitted.
\begin{lemma}
  \label{lem:cthreesimlem}
  Let \Amc be an ABox and \Omc an \ELdr-ontology.
  Further let $c_{a,i,r,C} \in \Delta^{\Cmc^3_{\Amc,\Omc}}$ and
  $trC \in \Delta^{\Umc_{\Amc,\Omc}}$. Then $\Cmc^3_{\Amc,\Omc},
  c_{a,i,r,C}
  \preceq \Umc_{\Amc,\Omc},trC$.
\end{lemma}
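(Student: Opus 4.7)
\textbf{Proof plan for Lemma~\ref{lem:cthreesimlem}.}
The plan is to exhibit an explicit simulation by pairing every anonymous individual of $\Cmc^3_{\Amc,\Omc}$ with every trace in $\Umc_{\Amc,\Omc}$ that ends in the same role and the same concept. The key observation driving the proof is that both constructions decorate an anonymous element depending only on two pieces of local data, namely the ``last role'' used to reach it and the ``last concept'' attached to it. Concretely, I would define
\[
  S \;=\; \{\, (c_{a',i,s,D},\, t'sD) \mid c_{a',i,s,D} \in \Delta^{\Cmc^3_{\Amc,\Omc}},\; t'sD \in \Delta^{\Umc_{\Amc,\Omc}} \,\}
\]
(note that the ABox root $a'$, the cyclic index $i$, and the prefix $t'$ may differ arbitrarily). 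The pair $(c_{a,i,r,C},trC)$ asked for in the statement lies in $S$ by construction, so it suffices to verify that $S$ is a simulation.

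For condition~(1) on concept names, the defining clauses give
\[
  A(c_{a',i,s,D}) \in \Cmc^3_{\Amc,\Omc} \;\iff\; \Omc \models D \sqcap C_s \sqsubseteq A
  \;\iff\; A(t'sD) \in \Umc_{\Amc,\Omc},
\]
so both elements carry exactly the same concept names. For condition~(2), suppose $r'(c_{a',i,s,D}, d) \in \Cmc^3_{\Amc,\Omc}$. Inspecting the construction of $\Cmc^3_{\Amc,\Omc}$, the only role assertions out of $c_{a',i,s,D}$ are of the form $r'(c_{a',i,s,D},\, c_{a',\, i\oplus 1,\, r',\, D'})$, created precisely when $\Omc \models D \sqcap C_s \sqsubseteq \exists r'.D'$. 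Under the same premise, the trace $t'sDr'D'$ belongs to $\mathbf{T}$ by the definition of traces, hence $r'(t'sD,\, t'sDr'D') \in \Umc_{\Amc,\Omc}$; and the target pair $(c_{a',i\oplus 1, r',D'},\, t'sDr'D')$ lies in $S$ because both components end in role $r'$ and concept $D'$. This verifies the forth-condition.

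I therefore obtain a simulation $S$ witnessing $\Cmc^3_{\Amc,\Omc}, c_{a,i,r,C} \preceq \Umc_{\Amc,\Omc}, trC$. I do not expect any real obstacle here: the proof is a direct unfolding of the two definitions, and the only point that warrants care is making sure one keeps track of the range-restriction conjunct $C_s$, which enters identically into the concept-name clauses and the role-successor clauses of both $\Cmc^3_{\Amc,\Omc}$ and $\Umc_{\Amc,\Omc}$. The lemma is stated for anonymous individuals on both sides, so no separate argument for ABox individuals is needed.
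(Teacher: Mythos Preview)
Your proposal is correct and is exactly the natural argument the paper has in mind; the paper itself says ``easy to show, details are omitted,'' and your explicit simulation $S$ matching on the last role and last concept is precisely the straightforward unfolding of the two definitions that establishes the lemma.
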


\lemcompactunivers*
\begin{proof}
  It is not difficult to prove that $\Cmc^3_{\Omc,\Amc}$ is indeed a
model of \Amc and \Omc, details are omitted. Let
$q(\bar x) \in \text{CQ}^{\text{csf}}$ and
$\bar a \in \mn{ind}(\Amc)^{|\bar x |}$. 
We have to show that
$\Cmc^3_{\Amc,\Omc} \models q(\bar a)$ iff
$\Amc,\Omc \models q(\bar a)$. By Lemma~\ref{lem:strongsymfree},
we can assume w.l.o.g.\ that $q$ is strongly symmetry-free.
The ``if'' direction is trivial by
Lemma~\ref{lem:univers} and because there is an obvious homomorphism
from $\Umc_{\Amc,\Omc}$ to $\Cmc^3_{\Amc,\Omc}$ that is the identity
on $\mn{ind}(\Amc)$. We thus concentrate on ``only if''. Let
$q(\bar x) \in \text{CQ}^{\text{csf}}$ and assume that
$\Cmc^3_{\Amc,\Omc} \models q(\bar a)$. Then there is a homomorphism
$h$ from $q$ to $\Cmc^3_{\Amc,\Omc}$ with $h(\bar x)=\bar a$.  In what
follows, we construct a homomorphism $g$ from $q$ to
$\Umc_{\Amc,\Omc}$ with $g(\bar x)=\bar a$.  Thus,
$\Amc,\Omc \models q(\bar a)$ as required.

  To start the definition of $g$, set $g(x)=h(x)$
  whenever $h(x) \in \mn{ind}(\Amc)$.  It follows from the
  construction of $\Cmc^3_{\Amc,\Omc}$ and $\Umc_{\Amc,\Omc}$ that $g$
  is a homomorphism from the restriction of $q$ to the domain of $g$
  to $\Umc_{\Amc,\Omc}$.

  As a consequence of Lemma~\ref{lem:noanocycles}, if a variable $x$
  occurs on a cycle of length~1 or~2 in $q$, then $g(x)$ is defined
  at this point.  We next define $g(x)$ for all variables $x_0$ that
  are on a cycle $R_0(x_0,x_1), R_1(x_1,x_2), R_2(x_2,x_0)$ of
  length~3 in $q$. Assume that $g(x_0)$ was not yet defined. It
  follows from Lemma~\ref{lem:noanocycles} that then
  $h(x_1)=h(x_2) \in \mn{ind}(\Amc)$, and thus $\Amc$ contains a
  reflexive $R_1$-cycle on $h(x_1)$, $R_0 =R_2^-$, and
  $h(x_0) \notin \mn{ind}(\Amc)$. Let $h(x_1)=a$. By construction of
  $\Cmc^3_{\Amc,\Omc}$, $h(x_0)=c_{a,0,r,C}$ for some 
  $C$ and where $r=R_0$ if $R_0$ is a role name and $r=R_2$
  otherwise. Set $g(x_0)=arC$. Also after the extension, $g$ is
  a homomorphism from the restriction of $q$ to the (now extended)
  domain of $g$ to $\Umc_{\Amc,\Omc}$.  This is easily seen to be a
  consequence of the definition of the extension and of the
  construction of $\Cmc^3_{\Amc,\Omc}$ and $\Umc_{\Amc,\Omc}$.

  At this point, $g(x)$ is defined for all variables $x$ that occur on
  a cycle in $q$. Assume that $x$ is such a variable. If $x$ is
  an answer variable, then $g(x)$ is clearly already defined. Otherwise
chordality of
$q$ implies that $x$ also occurs on
  a cycle of length at most~3 and thus $g(x)$ has been defined above.
  It remains to define $g(x)$ for variables $x$ that do not occur on
  a cycle. 

  Let $q'$ be the subquery of
  $q$ consisting of all atoms that contain at least one variable $x$ with
  $g(x)$ undefined at this point.
  %, and then identifying any two
  %variables $x_1,x_2$ such that $h(x_1)=h(x_2)$.
  We argue that
  \begin{enumerate}

  \item $q'$ is a disjoint union of ditrees such that

  \item if $g(x)$ is defined for a variable $x$ in $q'$, then $x$
    is the root of a ditree.

  \end{enumerate}
  First note that
  \begin{itemize}

  \item[($*$)] none of the atoms $r(x_1,x_2)$ in $q'$ occur on a cycle
    in $q$.

  \end{itemize}
  In fact, if an atom $r(x_1,x_2)$ in $q'$ occurs on a cycle in $q$,
  then $g(x_1)$ and $g(x_2)$ are already defined and thus $r(x_1,x_2)$
  is not part of $q'$.
  
  For Point~1, first observe that $q'$ does not contain a cycle. In
  fact, any cycle $C$ in $q'$ is also a cycle in $q$, so by ($*$) $q'$
  does not contain any of the atoms in $C$. To establish Point~1, it
  remains to show that $q'$ contains no atoms $r_1(x_1,y),r_2(x_2,y)$
  with $x_1 \neq x_2$. By definition of
  $q'$, one of $g(x_1)$, $g(y)$ and one of $g(x_2)$, $g(y)$ must be
  undefined. There are two cases:
  \begin{itemize}

  \item $g(y)$ is undefined.

    Then $y$ is a quantified variable and $r_1=r_2$, the latter
    because
    $r_1(h(x_1),h(y)), r_2(h(x_2),h(y)) \subseteq \Cmc^3_{\Amc,\Omc}$,
    $h(y) \notin \mn{ind}(\Amc)$ as $g(y)$ is undefined, and by
    definition of $\Cmc^3_{\Amc,\Omc}$.  Moreover, by ($*$) none of
    $r_1(x_1,y),r_2(x_2,y)$ occurs on a cycle in $q$. Thus, $q$ is not
    strongly symmetry-free, a contradiction.

  \item $g(x_1),g(x_2)$ are undefined.

    Then $h(x_1)$ and $h(x_2)$ are not in $\mn{ind}(\Amc)$.  From
    $r_1(h(x_1),h(y)), r_2(h(x_2),h(y)) \subseteq \Cmc^3_{\Amc,\Omc}$
    and the definition of $\Cmc^3_{\Amc,\Omc}$, we obtain
    $h(y) \notin \mn{ind}(\Amc)$. We can now argue as in the previous
    case that $y$ is a quantified variable and $r_1=r_2$, and again
    obtain a contradiction to $q$ being strongly symmetry-free.

  \end{itemize}
  Now for Point~2. It suffices to observe that if $r(x,y)$ is an atom in
  $q'$, then $g(y)$ is undefined. Assume to the contrary that $g(y)$
  is already defined. By choice of $q'$, it follows that $g(x)$ is
  undefined. As $g(y)$ is defined, one of the following applies:
  \begin{itemize}

  \item $h(y) \in \mn{ind}(\Amc)$.

    Then $r(h(x),h(y)) \in \Cmc^3_{\Amc,\Omc}$ and the definition
    of $\Cmc^3_{\Amc,\Omc}$ imply that $h(x) \in \mn{ind}(\Amc)$,
    in contradiction to $g(x)$ being undefined. 

  \item there is an atom $s(x',y)$ in $q$ with $h(x') \in
    \mn{ind}(\Amc)$, and $h(y) \notin \mn{ind}(\Amc)$.

    From $h(x') \in \mn{ind}(\Amc)$, $h(y) \notin \mn{ind}(\Amc)$,
    $s(h(x'),h(y)),r(h(x),h(y)) \in \Cmc^3_{\Amc,\Omc}$ and the
    definition of $\Cmc^3_{\Amc,\Omc}$, we obtain $h(x')=h(x)$.
    But then $h(x) \in \mn{ind}(\Amc)$, in contradiction to $g(x)$
    not yet being defined.

  \end{itemize}
  We next traverse the ditrees in $q'$ in a top-down fashion to extend
  $g$.  The initial piece of $g$ constructed so far is such that for
  all variables $x$, $h(x)=c_{a,i,r,C}$ implies that $g(x)$ is of the
  form $trC$. We shall maintain this invariant during the extension of
  $g$.

  To extend $g$, repeatedly and exhaustively choose atoms
  $r(x,y) \in q'$ with $g(x)$ defined and $g(y)$ undefined. Then
  $h(y) \notin \mn{ind}(\Amc)$ and thus $h(y)$ has the form
  $c_{a,i,r,C}$. Define $g(y)$ to be $g(x)r C$. If
  $h(x) \in \mn{ind}(\Amc)$, then it is immediate by definition of
  $\Cmc^3_{\Amc,\Omc}$ and $\Umc_{\Amc,\Omc}$ that
  $r(h(x),h(y)) \in \Cmc^3_{\Amc,\Omc}$ implies
  $r(g(x),g(y)) \in \Umc_{\Amc,\Omc}$.  If
  $h(x) \notin \mn{ind}(\Amc)$, we need to additionally invoke
  Lemma~\ref{lem:cthreesimlem}, applied to $h(x)=c_{a',i',r',C'}$ and
  to $g(x)=tr'C'$. 

  By construction, $g$ satisfies all binary atoms in $q'$ and thus in
  $q$. All unary atoms are satisfied, too, because of the invariant
  mentioned above and by definition of $\Cmc^3_{\Amc,\Omc}$ and
  $\Umc_{\Amc,\Omc}$.
\end{proof}

\section{Proof of Theorem~\ref{thm:onlyonequerytype}}

\thmonlyonequerytype*
\begin{proof}
    For Point~1, we use a proof strategy that is inspired by basic lower bound
    proofs for abstract learning problems due to Angluin
    \cite{DBLP:journals/ml/Angluin87}. Here, it is convenient to view the
    oracle as an adversary who maintains a set $S$ of candidate target
    concepts that the learner cannot distinguish based on the queries made so
    far. In our case, $S$ of AQ$^\wedge$-queries. We have to choose $S$ and the
ontology $\Omc$ carefully so that each
    membership query removes only few candidate targets and after a
    polynomial number of queries there is still more than one candidate that the
    learner cannot distinguish.

    For each $n \geq 1$, let   
    \begin{align*}
        \Omc_n &= \{ A_i \sqcap A_i' \sqsubseteq A_1 \sqcap A_1' \sqcap \cdots \sqcap A_n \sqcap A_n' \mid 1 \leq i \leq n \}\\
        \intertext{and}
        S_n &= \{ q(x) \gets \alpha_1(x) \land \cdots \land \alpha_n(x) \mid  \\
            & \quad \quad \quad \alpha_i \in \{ A_i, A_i'\}\ \text{for all}\ i\ \text{with}\ 1 \leq i \leq n \}.
    \end{align*}
    The set $S_n$ contains $2^n$ queries.

    Assume to the contrary of what is to be shown that AQ$^\wedge$-queries are
    polynomial query learnable under conjunctive ontologies. Then there exists
    a learning algorithm and polynomial $p$ such that the number of membership
    queries is bounded by $p(n_1, n_2)$, where $n_1$ is the size of the
    target query $q_T$ and $n_2$ is the size of the conjunctive ontology.  We
    choose $n$ such that $2^n > p(r_1(n), r_2(n))$, where $r_1$ is a polynomial
    such that every query $q(x) \in S_m$ satisfies $||q(x)|| = r_1(m)$ and
    $r_2$ is a polynomial such that $r_2(m) > ||\Omc_m||$ for every $m \geq 1$.

    Now, consider a membership query posed by the learning algorithm with ABox
    and answer variable $(\Amc, a)$. The oracle responds as follows:
    \begin{enumerate}
        \item if $\Amc, \Omc_n \models q(a)$ for no $q(x) \in S_n$, then answer \emph{no}
        \item if $\Amc, \Omc_n \models q(a)$ for a single $q(x) \in S_n$, then answer \emph{no} and remove $q(x)$ from $S_n$
        \item if $\Amc, \Omc_n \models q(a)$ for more than one $q(x) \in S_n$, then answer \emph{yes}.
    \end{enumerate}
    Note that the third response is consistent since $\Amc$ must then contain
    $A_i(a)$ and $A_i'(a)$ for some $i$ and thus $\Omc_n$ implies that $a$ is
    an answer to every query in $S_n$. Moreover the answers are always correct
    with respect to the updated set $S_n$. Thus the learner cannot distinguish
    the remaining candidate queries by answers to queries posed to far.

    It follows that the learning algorithm removes at most $p(r_1(n),
    r_2(n))$ many
    queries from $S_n$. By the choice of $n$, at least two candidate concepts
    remain in $S_n$ after the algorithm is finished. Thus the learner cannot
    distinguish between them and we have derived a contradiction.

    \smallskip

  For Point~2, we exploit a classic connection between active
  learning with equivalence queries and certain separability problems.
  We start by recalling the latter.  A \emph{labeled KB} takes the
  form $\Kmc=(\Omc,\Amc,P,N)$ with \Omc an ontology, \Amc an ABox, and
  $P, N \subseteq \mn{ind}(\Amc)^n$ sets of positive and negative
  examples, respectively, all of them tuples of the same length $n$. A
  query $q(\bar x)$ of arity $n$ \emph{separates \Kmc} if
    \begin{enumerate}
        \item $\Amc,\Omc \models q(\bar a)$ for all $\bar a\in P$, and
        \item $\Amc,\Omc \not\models q(\bar a)$ for all $\bar a \in N$.
    \end{enumerate}
    Every choice of ontology language \Lmc and query language $\Qmc$
    gives rise to an \emph{$\Lmc,\Qmc$-separability problem} which is
    to decide, given a labeled KB $\Kmc=(\Omc,\Amc,P,N)$ with \Omc
    formulated in \Lmc, whether there is a query $q(\bar x) \in \Qmc$
    which separates $P$ and $N$. We are going to concentrate on the
    case where the ontology is empty, which we simply refer to as
    \emph{\Qmc-separability}. For simplicity, we then drop the
    ontology from labeled KBs. It was shown in
    \cite{aaaithis,mauricemaster} that ELQ-separability is
    \NPclass-hard.  An analysis of the proof reveals a class \Cmf of
    labeled KBs $(\Amc,P,N)$ for which ELQ-separability is
    \NPclass-hard and a polynomial $t$ such that the following
    conditions are satisfied:
    % 
    % The reductions used in that hardness proof yield
    % an \NPclass-hard class of ELQ-separability instances that have the
    % following properties.
%    Let $P$ and $N$ be from this class and $n$ be $||P \cup N||$, then
    \begin{enumerate}

%    \item \Kmc is ELQ-separable iff it is CQ-separable;

    \item if \Kmc is ELQ-separable, then there is a 
            separating ELQ-query of size $t(n)$, where $n=||\Kmc||$; % |P| actually enough!

      \item \Amc is a disjoint union of ditrees of depth 1 and the
        elements
        of $P$ and $N$ are the roots of these ditrees;

      \item only a single role name $r$ is used.
            
      \end{enumerate}
      %
    % Condition~1 entails that ELIQ$^{\text{sf}}$-separability,
    % CQ$^{\text{csf}}_w$-separability for every $w\geq 0$, and
    % CQ$^{\text{csf}}$-separability are also \NPclass-hard.
    Condition~2
    implies the following.

    \smallskip
\noindent\textit{Claim 1.}   Given an ABox \Amc that satisfies the properties
    given in Condition~2, an $a \in \mn{ind}(\Amc)$, and a unary CQ
    $q(x)$, it can be decided in polynomal time whether
    $\Amc \models q(a)$.
    \smallskip

    We only sketch the proof. To check whether there is a homomorphism
    $h$ from $q(x)$ to \Amc with $h(x)=a$, we treat each maximal
    connected component of $q$ separately. For the component that
    contains $x$, we start with setting $h(x)=a$. We then repeatedly
    extend $h$ to variables $y$ such that $q$ contains some atom
    $r(y,z)$ or $r(z,y)$ with $h(z)$ already defined. If there are
    atoms of both forms, then $\Amc \not\models q(a)$. If there is an
    atom $r(y,z)$ and $h(z)$ is a non-root in \Amc, then $h(y)$ is the
    unique predecessor of $h(z)$ in \Amc; and if $h(z)$ is a root,
    then $\Amc \not\models q(a)$. If there is an atom $r(z,y)$, then
    $\Amc \not\models q(a)$ if $h(z)$ is a non-root. Otherwise, we
    consider all atoms $A(z)$ in $q$ and set $h(y)$ to some successor
    $c$ of $h(z)$ in \Amc such that $A(c) \in \Amc$ for all these
    atoms; if there is no such successor, then again
    $\Amc \not\models q(a)$. For components that do not contain
    $x$ we start with an arbitrarily chosen variable, iterate over all
    individuals in \Amc as targets, and for each target proceed as
    described above.
    
    \smallskip
    
    Now % let \Qmc be one of the query languages mentioned in
    % Theorem~\ref{thm:noequivonly}.
    assume that ELQ-queries are polynomial
    time learnable using only CQ-equivalence queries. Then there exists a
    learning algorithm $L$ for ELQ-queries and a polynomial $p$ such that at
    any time, the running time of $L$ so far is bounded by $p(n_1, n_2)$, where
    $n_1$ is the size of the target query $q_T$ and $n_2$ is the size of the largest
    counterexample seen so far. We show how to use $L$ to 
    construct an algorithm $L'$ that decides ELQ-separability for the
    class of labeled KBs \Cmf in
    polynomial time.

    The new algorithm $L'$ takes as input a labeled KB
    $\Kmc=(\Amc,P,N) \in \Cmf$. Let $n = ||\Kmc||$.  $L'$ then runs $L$ for at most
    $p(t(n), n)$ steps.

    Whenever $L$ asks a CQ-equivalence query with $q_H(x)$ as the
    hypothesis, $L'$ answers it by testing whether $q_H$
    separates~\Kmc.  More precisely, $L'$ checks whether
    $\Amc \models q_H(a)$ for all $a \in P$ and
    $\Amc \not \models q_H(a)$ for all $a \in N$. Note that, by the
    above claim, this is possible in time polynomial in $||\Kmc||$.
    If a check fails for some $a \in P$, then $L'$ answers the
    equivalence query by giving $(\Amc, a)$ as a positive
    counterexample to $L$.  If a check fails for some $a \in N$, then
    $L'$ answers the equivalence query by giving $(\Amc, a)$ as a
    negative counterexample to $L$. If all checks succeed, then $L'$
    terminates and returns ``separable''.
    
    If $L$ terminates returning a learned ELQ-query
    $q_H$, then $L$ tests whether $q_H$ separates~$\Kmc$.  If this is the
    case, then $L'$ returns ``separable''.  If $L'$ does not
    terminate within $p(t(n), n)$ steps or returns a query that does
    not separate $\Kmc$, then $L'$ return ``not separable''.  The
    following claim shows correctness of $L'$.

    \smallskip
\noindent\textit{Claim 2.} $L'$ returns ``separable'' iff $\Kmc$ is ELQ-separable.

    \smallskip
\noindent\textit{Proof of Claim 2.}
The ``only if'' direction follows directly from %Condition~1 above and
the fact that $L'$ only returns ``separable'' if there is a separating
CQ.  For the ``if'' direction, assume that $\Kmc$ is not
ELQ-separable. By Condition~1 above, there is an ELQ-query
$q_T$ of size $t(n)$ that separates $\Kmc$. Note that % $q_T \in \Qmc$
% for all relevant \Qmc and that
$q_T$ is consistent with the
counterexamples that $L'$ provides to $L$. Since by assumption $L$ is
able to learn any \Qmc-query of size $t(n)$ with counterexamples of
size $n$ in $p(t(n), n)$ steps, it must within this number of steps
either ask an equivalence query with a hypothesis that separates $\Kmc$ (but may not be equivalent to $q_T$), or return a \Qmc-query
that is equivalent to $q_T$.  In both cases $L'$ returns
``separable''.
\end{proof}

% \begin{corollary}
%     For every $w \geq 0$ and $\mathcal{Q} \in \{ELQ, ELIQ^{sf}, CQ^{csf},
%     CQ^{csf}_w\}$, $\mathcal{Q}$-queries are not polynomial time learnable
%     using only equivalence queries, unless $\Pclass = \NPclass$.
% \end{corollary}

\section{Proofs for Section~\ref{sect:nf}}

It is well-known that every \ELdr-ontology \Omc can be converted into an
\ELdr-ontology $\Omc'$ in normal form by introducing additional concept
names~\cite{DL-Textbook}.  For the reduction, it is convenient to use
a suitable form of conversion. An \ELdr-ontology $\Omc_2$ is a
\emph{conservative extension} of an \ELdr-ontology $\Omc_1$ if
$\mn{sig}(\Omc_1) \subseteq \mn{sig}(\Omc_2)$, every model of $\Omc_2$
is a model of $\Omc_1$, and for every model $\Imc_1$ of $\Omc_1$,
there exists a model $\Imc_2$ of $\Omc_2$ such that
$S^{\Imc_1}=S^{\Imc_2}$ for all symbols $S \notin \mn{sig}(\Omc_2)
\setminus \mn{sig}(\Omc_1)$.
\begin{lemma}
    \label{lem:tbox-normalform}
    Given an \ELdr-ontology \Omc, one can compute in polynomial time
    an \ELdr-ontology $\Omc'$ in normal form such that:
    \begin{enumerate}

      \item $\Omc'$ is a conservative extension of $\Omc$,

      \item $\mn{sig}(\Omc') = \mn{sig}(\Omc) \cup \{ X_C \mid C \in \mn{sub}(\Omc) \}$,

      \item $\Omc' \models X_C \equiv C$ for each
        $C \in \mn{sub}(\Omc)$.
        
%       \item and that can be computed in polynomial time in $|\Omc|$.

    \end{enumerate}
\end{lemma}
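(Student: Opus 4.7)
The plan is to associate to each $C \in \mn{sub}(\Omc)$ a fresh concept name $X_C$ (with $X_\top = \top$), and to assemble $\Omc'$ from two groups of CIs: \emph{definitional} CIs that pin $X_C$ down to be equivalent to $C$, together with \emph{rewritten} versions of the original CIs expressed using the $X_C$'s. Specifically, the definitional part contains, for each compound subconcept: if $C = C_1 \sqcap C_2$, the three CIs $X_{C_1} \sqcap X_{C_2} \sqsubseteq X_C$, $X_C \sqcap \top \sqsubseteq X_{C_1}$ and $X_C \sqcap \top \sqsubseteq X_{C_2}$; if $C = \exists r . D$ with $r$ a role name, the two CIs $X_C \sqsubseteq \exists r . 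X_D$ and $\exists r . X_D \sqsubseteq X_C$; and for each concept name $A \in \mn{sub}(\Omc)$, the pair $X_A \sqcap \top \sqsubseteq A$ and $A \sqcap \top \sqsubseteq X_A$. Each original CI $C \sqsubseteq D$ of $\Omc$ is then rewritten as $X_C \sqcap \top \sqsubseteq X_D$, while range restrictions $\exists r^- . \top \sqsubseteq D$ in $\Omc$ are rewritten as $\exists r^- . \top \sqsubseteq X_D$, which already matches the normal-form shape. By construction $\Omc'$ is in normal form, polynomial in $||\Omc||$ (since $|\mn{sub}(\Omc)| \le ||\Omc||$), and condition~2 is immediate.

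Condition~3 follows by structural induction on $C \in \mn{sub}(\Omc)$: the base case for concept names and $\top$ is handled by the definitional pair above; the $\sqcap$-case uses the three definitional CIs together with the induction hypothesis $\Omc' \models X_{C_i} \equiv C_i$; the $\exists r . D$-case uses the two definitional CIs together with $\Omc' \models X_D \equiv D$.

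For condition~1, the forward direction (every model of $\Omc'$ is a model of $\Omc$) is an immediate consequence of condition~3: for each CI $C \sqsubseteq D$ of $\Omc$, the CI $X_C \sqcap \top \sqsubseteq X_D$ in $\Omc'$ combined with the equivalences $X_C \equiv C$ and $X_D \equiv D$ under $\Omc'$ forces $C^\Imc \subseteq D^\Imc$ in every model $\Imc$ of $\Omc'$, and range restrictions are handled the same way. The reverse direction is established by taking any model $\Imc_1$ of $\Omc$ and extending it to an interpretation $\Imc_2$ that agrees with $\Imc_1$ on $\mn{sig}(\Omc)$ and sets $X_C^{\Imc_2} := C^{\Imc_1}$ for each fresh $X_C$; a routine structural induction then shows that $\Imc_2$ satisfies every CI in $\Omc'$.

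The main point requiring care is ensuring that every CI in $\Omc'$ literally matches one of the four shapes of normal form, which is why single-concept subsumptions such as $X_C \sqsubseteq X_{C_i}$ are padded to $X_C \sqcap \top \sqsubseteq X_{C_i}$, and that the degenerate subconcept $\exists r^- . \top$, for which the direction $X_C \sqsubseteq \exists r^- . \top$ cannot be written in \ELdr normal form, is handled implicitly by identifying $X_{\exists r^- . \top}$ with the concept name targeted by the corresponding range restriction (equivalently, folding it into the rewriting of range restrictions above). Beyond this bookkeeping, the argument is a routine structural induction and no semantic obstacle arises.
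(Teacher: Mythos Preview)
Your proposal is correct and follows essentially the same construction as the paper: introduce a fresh name $X_C$ for each subconcept, add definitional CIs that decompose each $X_C$ according to the top-level constructor of $C$, and rewrite the original CIs and range restrictions in terms of the $X_C$'s. Your treatment is in fact more careful than the paper's in two respects---you explicitly pad atomic inclusions to the shape $X_C \sqcap \top \sqsubseteq X_D$ so they literally match the normal-form template, and you spell out the structural induction for Point~3 and the two directions of conservativity, whereas the paper simply asserts that ``it can be verified that Points~1--3 hold.'' The only cosmetic deviation is your convention $X_\top = \top$ (the paper instead introduces a genuine fresh name $X_\top$ together with $X_\top \sqsubseteq \top$ and $\top \sqsubseteq X_\top$), and your discussion of $\exists r^-.\top$ is slightly muddled---the cleanest reading, which both you and the paper ultimately adopt, is simply not to treat $\exists r^-.\top$ as a member of $\mn{sub}(\Omc)$ requiring its own $X_C$, since range restrictions are already a dedicated syntactic form.
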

\begin{proof}\
  Introduce a fresh concept name $X_C$ for every
  $C\in\mn{sub}(\Omc)$ and define $\Omc'$ to contain, for every $C\in
  \mn{sub}(\Omc)$, the following concept inclusions and range restrictions:
  \begin{itemize}

    \item $X_C \sqsubseteq C, C \sqsubseteq X_C$ if $C$ is a concept name or
      $\top$; 

    \item $X_C \sqsubseteq X_{D_1}, X_C \sqsubseteq X_{D_2}$, and
      $X_{D_1} \sqcap X_{D_2} \sqsubseteq X_C$ if $C = D_1 \sqcap
      D_2$; 

  \item $X_C \sqsubseteq \exists r.X_D$ and $\exists r. X_D \sqsubseteq X_C$
    if $C = \exists r.D$;

  \item $X_C \sqsubseteq X_D$ for each concept inclusion
    $C \sqsubseteq D \in \Omc$;

  \item $\exists r^-.\top \sqsubseteq X_C$ for each range restriction
      $\exists r^-.\top \sqsubseteq C \in \Omc$.

  \end{itemize}
  $\Omc'$ is in normal
  form and can be computed in polynomial time. Moreover, it can be
  verified that Points~1--3 hold.
%     We give a construction for $\Omc'$ fulfilling the properties. We
%     start with the empty ontology. First, we add concept inclusions to
%     $\Omc'$ to fulfill properties 2 and 3.  Let $C$ be a concept
%     from $\mn{sub}(\Omc)$. Since $\mn{sub}(\Omc)$ is
%     subconcept-closed, we can assume that there are already concept
%     names $X_{C'}$ that are equivalent to all subconcepts of $C$.
%     The concept inclusions we add for $C$ depend on the top-level
%     constructor.
%     If $C$ is a concept name or $\top$, we add the concept inclusions
%     $X_C \sqsubseteq C, C \sqsubseteq X_C$ to $\Omc'$.  If $C = D_1
%     \sqcap D_2$, we add the concept inclusions $X_C \sqsubseteq
%     X_{D_1}, X_C \sqsubseteq X_{D_2}$ and $X_{D_1} \sqcap X_{D_2}
%     \sqsubseteq X_C$ to $\Omc'$.  And finally, if $C = \exists r.D$,
%     we add the concept inclusions $X_C \sqsubseteq \exists r.X_D$ and
%     $\exists r. X_D \sqsubseteq X_C$ to $\Omc'$.
% 
%     Then second, for each concept inclusion $C \sqsubseteq D \in \Omc$, we add
%     $X_C \sqsubseteq X_D$ to $\Omc'$.
% 
%     All concept inclusions added to $\Omc'$ are in normal form and they can be
%     computed in polynomial time. It can also be shown that $\Omc'$ is a
%     conservative extension of $\Omc$.
\end{proof}

A CQ $q'$ can be \emph{obtained from a CQ q by attaching ditrees} if
$q'$ can be constructed by choosing variables $x_1, \ldots, x_n$ from
$q$ and Boolean ditree CQs $q_1, \ldots q_n$ whose sets of variables
are pairwise disjoint and disjoint from the set of variables in $q$,
and then taking the union of $q$ and $q_1, \ldots q_n$, identifying
the root of $q_i$ with $x_i$ for $1 \leq 1 \leq n$. A class of CQs
\Qmc is \emph{closed under attaching ditrees} if every CQ $q'$ that
can be obtained from a $q \in \Qmc$ by attaching ditrees is also in
\Qmc. Note that all of CQ$^{\text{csf}}$, ELQ, and ELIQ$^{\text{sf}}$
are closed under attaching ditrees. We prove the following
generalization of Proposition~\ref{prop:nf}.
\begin{proposition}
Let \Qmc be a class of CQs closed under attaching ditrees. If queries in \Qmc
are polynomial time learnable under \ELdr-ontologies in normal form using
membership and equivalence queries, then the same is true for unrestricted \ELdr-ontologies.
\end{proposition}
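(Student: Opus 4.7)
Given an unrestricted \ELdr-ontology $\Omc$, the plan is to first apply Lemma~\ref{lem:tbox-normalform} to obtain a conservative extension $\Omc'$ in normal form with $\mn{sig}(\Omc') = \mn{sig}(\Omc) \cup \{X_C \mid C \in \mn{sub}(\Omc)\}$ and $\Omc' \models X_C \equiv C$ for every $C \in \mn{sub}(\Omc)$, and then to simulate the assumed polynomial-time learner $L'$ for $\Qmc$ under normal-form ontologies on input $\Omc'$, using the oracles available under $\Omc$. The target query $q_T \in \Qmc$ is formulated over a signature $\Sigma \subseteq \mn{sig}(\Omc)$ and thus does not mention the fresh symbols $X_C$; however, the hypotheses of $L'$ and the ABoxes in its membership queries may.

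For each $C \in \mn{sub}(\Omc)$, let $\mn{tree}(C, x)$ denote the ditree CQ over $\mn{sig}(\Omc)$ rooted at $x$ that represents $C$ in the obvious way (a concept name becomes a concept atom at $x$; $\exists r.D$ adds a fresh successor $y$ with $r(x,y)$ and recursively attaches $\mn{tree}(D,y)$; conjunctions branch). I handle $L'$'s queries as follows. For a membership query $(\Amc', \bar a)$ posed by $L'$, the outer learner constructs $\Amc$ by replacing every assertion $X_C(a) \in \Amc'$ by a fresh disjoint copy of $\mn{tree}(C,a)$ viewed as an ABox, and asks the outer oracle $(\Amc, \bar a)$ under $\Omc$. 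For an equivalence query with hypothesis $q_H \in \Qmc$, the outer learner constructs $q'_H$ by removing every atom $X_C(x)$ from $q_H$ and attaching a fresh-variable copy of $\mn{tree}(C, x)$ at $x$; closure of $\Qmc$ under attaching ditrees ensures $q'_H \in \Qmc$. The outer learner then poses the equivalence query $q'_H \equiv_\Omc q_T$ to the outer oracle. Any counterexample returned is an ABox over $\mn{sig}(\Omc)$, hence also a valid ABox under $\Omc'$, and can be passed back to $L'$ unchanged.

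Correctness will reduce to the following key claim: for any ABox $\Bmc'$ over $\mn{sig}(\Omc')$ with translation $\Bmc$, and any tuple $\bar b$ over $\mn{ind}(\Bmc') \subseteq \mn{ind}(\Bmc)$, the universal models $\Umc_{\Bmc, \Omc}$ and $\Umc_{\Bmc',\Omc'}$ restricted to $\mn{sig}(\Omc)$ admit homomorphisms in both directions that fix $\mn{ind}(\Bmc')$. This is because the tree attached to $a$ in $\Bmc$ forces $a$ into $C^{\Umc_{\Bmc,\Omc}}$, while conversely $X_C(a) \in \Bmc'$ together with $\Omc' \models X_C \equiv C$ forces the chase under $\Omc'$ to generate a homomorphic image of that tree below $a$. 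Via Lemma~\ref{lem:univers} this yields $\Bmc', \Omc' \models q_T(\bar b)$ iff $\Bmc, \Omc \models q_T(\bar b)$, and analogously $\Bmc', \Omc' \models q_H(\bar b)$ iff $\Bmc, \Omc \models q'_H(\bar b)$, since an $X_C(x)$-atom in $q_H$ demands that the image of $x$ lie in $C^{\Umc_{\Bmc',\Omc'}}$, which is exactly what the attached $\mn{tree}(C,x)$ tests for under $\Omc$. From these two equivalences, the membership simulation, the equivalence simulation, and the pass-through of counterexamples are all correct. Polynomial time is preserved because $|\Omc'|$, each $|\mn{tree}(C,x)|$, and hence every translation are polynomial in $\|\Omc\|$ by Lemma~\ref{lem:tbox-normalform}. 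The main subtlety lies on the hypothesis side: we specifically need $\Qmc$ to be closed under attaching ditrees in order for $q'_H$ to be a legitimate query to the outer equivalence oracle, and implicit in the construction is that removing unary $X_C$-atoms keeps $q_H$ inside $\Qmc$, which is straightforward for each of the classes under consideration.
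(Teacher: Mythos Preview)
Your approach is essentially the same as the paper's: normalize via Lemma~\ref{lem:tbox-normalform}, simulate the normal-form learner $L'$ on $\Omc'$, and translate both membership ABoxes and hypothesis queries by replacing each $X_C$ with the corresponding ditree. Two small points are worth flagging.

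First, your handling of counterexamples has a gap. You assert that any counterexample returned by the outer oracle is over $\mn{sig}(\Omc)$ and can be passed to $L'$ unchanged. Nothing in the learning protocol guarantees this: the oracle may return an ABox containing arbitrary symbols, in particular some $X_C$. Such an assertion is harmless under $\Omc$ (where $X_C$ is meaningless) but acquires semantic content under $\Omc'$ (where $\Omc' \models X_C \equiv C$), so the pair $(\Amc,\bar a)$ that is a counterexample to $q'_H \equiv_\Omc q_T$ need not be a counterexample to $q_H \equiv_{\Omc'} q_T$. The paper fixes this by restricting the returned ABox to $\Sigma$ before handing it to $L'$; since $q_T$ and $q'_H$ use only $\Sigma$-symbols and $\Omc$ is over $\Sigma$, this restriction preserves counterexample status under $\Omc$, and it removes the problematic $X_C$'s. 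Your argument goes through once you add this step. (Incidentally, you wrote $\Sigma \subseteq \mn{sig}(\Omc)$; it should be $\mn{sig}(\Omc) \subseteq \Sigma$.)

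Second, your correctness argument via mutual homomorphisms between $\Umc_{\Bmc,\Omc}$ and $\Umc_{\Bmc',\Omc'}|_{\mn{sig}(\Omc)}$ is workable but heavier than necessary. The paper argues directly from the conservative-extension property of $\Omc'$ over $\Omc$ (Point~1 of Lemma~\ref{lem:tbox-normalform}) together with $\Omc' \models X_C \equiv C$: for Claim~1 one just expands or restricts models across the two signatures, and for Claim~2 one additionally uses that $q_H$ and $q'_H$ are equivalent under $\Omc'$. This avoids reasoning about the structure of universal models altogether.
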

\begin{proof}
Let $L'$ be a polynomial time learning algorithm for \Qmc under ontologies in
normal form. We show how $L'$ can be modified into an algorithm $L$ that is
able to learn \Qmc under unrestricted ontologies in polynomial time.

Given an \ELdr-ontology \Omc and a finite $\Sigma \subseteq \mn{N_C} \cup \mn{N_R}$
such that $\mn{sig}(\Omc) \subseteq \Sigma$ and $\mn{sig}(q_T) \subseteq
\Sigma$, algorithm $L$ first computes the ontology $\Omc'$ in normal form as
per Lemma~\ref{lem:tbox-normalform}, choosing the fresh concept names $X_C$ so
that they are not from $\Sigma$. It then runs $L'$ on $\Omc'$ and $\Sigma' =
\Sigma \cup \{ X_C \mid C \in \mn{sub}(\Omc)\}$; note that
$\mn{sig}(\Omc') \subseteq \Sigma'$ as required. In contrast to the
learning algorithm, the oracle still works with the original ontology $\Omc$.
To bridge this gap, algorithm $L$ adopts modifications
during the run of $L'$, as follows.

First, whenever $L'$ asks a membership query $\Amc', \Omc' \models q_T(\bar
a)$, $L$ instead asks the membership query $\Amc, \Omc \models q_T(\bar a)$,
where \Amc is obtained from $\Amc'$ by replacing each assertion $X_C(x)$ with
the $C$ viewed as an ABox, identifying the root with $x$.

By the following claim, the answer to the modified membership query coincides
with that to the original
query.

\smallskip \noindent\textit{Claim 1.} 
$\Amc',\Omc' \models q(\bar a)$ iff $\Amc, \Omc \models q(\bar
a)$ for all % $n$-ary
CQs
$q$ that use only symbols from $\Sigma$.
% and $\bar a \in
% \mn{ind}(\Amc)^n$

\smallskip \noindent\textit{Proof of Claim 1.}
For ``if'', suppose that $\Amc, \Omc \models q(\bar a)$ and let $\Imc$ be a
model of $\Amc'$ and $\Omc'$. Then $\Imc$ is a model of $\Omc$ since $\Omc'$
is a conservative extension of $\Omc$. By Property~3 of
Lemma~\ref{lem:tbox-normalform}, $\Imc$ is a model of $\Amc$. Hence $\Imc
\models q(\bar a)$ as required.
For ``only if'', suppose that $\Amc', \Omc' \models q(\bar a)$ and let $\Imc$
be a model of $\Amc$ and $\Omc$. Since $\Omc'$ is a conservative extension of
$\Omc$, there is a model $\Imc'$ of $\Omc'$ that coincides with $\Imc$ on all
symbols from $\Sigma$. By Property~3 of Lemma~\ref{lem:tbox-normalform}, $\Imc$ is a model of $\Amc'$. Since $\mn{sig}(q) \subseteq \Sigma$ and $\Imc'$ and $\Imc$ coincide on $\Sigma$, it follows that $\Imc \models q(\bar a)$ as required.

\smallskip
Second, whenever $L'$ asks an equivalence query $q_H' \equiv_{\Omc'} q_T$, $L$
instead asks the equivalence query $q_H \equiv_\Omc q_T$, where $q_H$ is
obtained from $q_H'$ by replacing each assertion $X_C(x)$ with the Boolean
ditree CQ obtained from ELQ $C$ by quantifying the root, identifying the root with
$x$. Since \Qmc is closed under attaching ditrees, $q_H$ can be used in an
equivalence query.  Furthermore, when the counterexample returned is
$\Amc$, the algorithm replaces it with the restriction $\Amc|_\Sigma$
to signature $\Sigma$ before passing it on to $L'$.

Applying the following claim to both $q'_H$ and $q'_T=q_T$, the answer
to the modified equivalence query coincides with that to the original query.

\smallskip \noindent\textit{Claim 2.}
Let $q'$ be a CQ that uses only symbols from $\Sigma'$ and let
$q$ be
obtained from $q'$ by replacing each assertion $X_C(x)$ with the Boolean
ditree CQ obtained from ELQ $C$ by quantifying the root, the root identified with
$x$. Then
$\Amc|_\Sigma, \Omc' \models q'_H(\bar a)$ iff $\Amc, \Omc \models q_H(\bar
a)$ for all ABoxes $\Amc$. % and CQs $q$ that use only symbols from $\Sigma$. % and $\bar a \in \mn{ind}(\Amc)^n$.

\smallskip \noindent\textit{Proof of Claim 2.}
For ``if'', suppose $\Amc, \Omc \models q(\bar a)$ and let $\Imc$ be a model
of $\Amc|_\Sigma$ and $\Omc'$.
Since $q$ and $\Omc$ contain only symbols from $\Sigma$, $\Amc|_\Sigma, \Omc \models q(\bar a)$.
Since $\Omc'$ is a conservative extension of $\Omc$, $\Imc$ is also a model of
$\Omc$. Thus $\Imc \models q(\bar a)$ and by Property~3 of
Lemma~\ref{lem:tbox-normalform}, $\Imc \models q'(\bar a)$ follows as
required.

For ``only if'', suppose $\Amc|_\Sigma, \Omc' \models q'(\bar a)$ and let
$\Imc$ be a model of $\Amc$ and $\Omc$.  Since $\Omc$ contains only symbols
from $\Sigma$, $\Imc|_\Sigma$ is a model of $\Amc|_\Sigma$ and since $\Omc'$ is
a conservative extension of \Omc, there is a model $\Imc'$ of $\Omc'$ that
coincides on all symbols from $\Sigma$ with $\Imc|_\Sigma$. Thus $\Imc' \models
q'(\bar a)$ and by Property~3 of Lemma~\ref{lem:tbox-normalform},
$\Imc|_\Sigma \models q(\bar a)$. Then $\Imc \models q(\bar
a)$ since $q$ uses only symbols from $\Sigma$, as required.
\end{proof}

\section{Proofs for Section~\ref{sec:short-cycles}}

We analyze central properties of the \mn{refine} subroutine.  Recall
that we first construct a sequence
$(\Bmc_1,\bar b_1), (\Bmc_2,\bar b_2),\dots$ using the Expand and
Minimize steps. With $\Bmc_i'$, $i \geq 1$, we denote the result of
only applying the Expand step to $\Bmc_i$, but not the Minimize
step. Also recall that the fresh individuals introduced in $\Bmc_i'$
are denoted with $a'$ in case that the original individual was
$a$. The following can easily be shown.
\begin{lemma} \label{lem:auxunfolding}
  Let  $i \geq 1$. Then $\Bmc_i', a \preceq \Bmc_i, a$ and $\Bmc_i, a
  \preceq \Bmc_i', a$, for all $a \in \mn{ind}(\Bmc_i)$, and $\Bmc_i'
  , a' \preceq \Bmc_i, a$ and $\Bmc_i , a \preceq \Bmc_i', a'$, for
  all fresh individuals $a'$.
\end{lemma}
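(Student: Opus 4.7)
The plan is to prove both directions by exhibiting explicit simulations, exploiting the fact that $\Bmc_i'$ is essentially $\Bmc_i$ with the edge cluster $\{R(a_{n-1},a_0)\}$ detoured through a fresh copy of the cycle. The four required simulations decompose naturally into two, one in each direction.

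\smallskip
\noindent\textbf{Direction $\Bmc_i' \preceq \Bmc_i$.} First I would observe that the map $f : \mn{ind}(\Bmc_i')\to\mn{ind}(\Bmc_i)$ defined by $f(a)=a$ for $a\in\mn{ind}(\Bmc_i)$ and $f(a_j')=a_j$ for $0\le j<n$ is a homomorphism from $\Bmc_i'$ to $\Bmc_i$. This is a routine case analysis going through the four families of assertions added in the Expand step: copied concept atoms $B(a_i')$, copied ``outside'' edges $R(a_i',c)$ with $c\notin\{a_0,\dots,a_{n-1}\}$, copied ``inside'' edges $R(a_i',a_j')$ with $\{i,j\}\neq\{0,n-1\}$, and the bridging edges $R(a_{n-1},a_0')$, $R(a_{n-1}',a_0)$. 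In each case the premise of the construction ensures the image assertion exists in $\Bmc_i$. A homomorphism yields a simulation $\{(a,f(a))\mid a\in\mn{ind}(\Bmc_i')\}$, giving both $\Bmc_i',a\preceq\Bmc_i,a$ for $a\in\mn{ind}(\Bmc_i)$ and $\Bmc_i',a_j'\preceq\Bmc_i,a_j$ for fresh $a_j'$.

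\smallskip
\noindent\textbf{Direction $\Bmc_i \preceq \Bmc_i'$.} Here there is no homomorphism in general, because the edges $R(a_{n-1},a_0)$ present in $\Bmc_i$ have been removed in $\Bmc_i'$. I would define the simulation
\[
S \;=\; \{(a,a)\mid a\in\mn{ind}(\Bmc_i)\} \;\cup\; \{(a_j,a_j')\mid 0\le j<n\},
\]
which contains the two required pairs for each $a\in\mn{ind}(\Bmc_i)$ and each copy $a_j'$. The key point is how to simulate assertions $R(a,b)\in\Bmc_i$ from the pair $(a,a)\in S$: for $b$ outside the cycle, or inside the cycle but not via the removed edge, the same assertion $R(a,b)$ is still in $\Bmc_i'$ and one takes $d=b$; if $(a,b)=(a_{n-1},a_0)$, then $R(a_{n-1},a_0)\notin\Bmc_i'$, but the bridging edge $R(a_{n-1},a_0')$ is present, so I redirect via $d=a_0'$, which is compatible with $S$ because $(a_0,a_0')\in S$. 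Symmetrically, from the pair $(a_i,a_i')\in S$ the relevant assertions are the copied ones, including the bridging edge $R(a_{n-1}',a_0)$ which handles the removed cycle edge by landing back on $a_0$ with $(a_0,a_0)\in S$. Concept atoms are trivial, since $B(a_i')\in\Bmc_i'$ iff $B(a_i)\in\Bmc_i$ by construction.

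\smallskip
\noindent\textbf{Main obstacle.} The only delicate point is the second direction at the removed edges $R(a_{n-1},a_0)$: they force the simulation to be non-functional, switching back and forth between original and fresh copies. The bookkeeping is transparent once one realizes that the Expand step is designed precisely so that either of the two bridging edges is available to cover the deleted cycle edge. The chordlessness and $n>n_{\max}$ conditions are not needed for this lemma; they enter elsewhere when proving that applications of Expand terminate and that the result is chordal.
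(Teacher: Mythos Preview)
Your proposal is correct and matches the paper's (implicit) approach. The paper gives no explicit proof of this lemma, merely stating it ``can easily be shown,'' but the proof it supplies for the analogous Lemma~\ref{lem:localunraveling} (for the second \mn{refine} subroutine) defines precisely the relation $S=\{(a,a)\}\cup\{(a_j,a_j')\}$ you use for the direction $\Bmc_i\preceq\Bmc_i'$ and uses its inverse $S^-$ for the other direction; your homomorphism $f$ has graph exactly $S^-$, so the two arguments coincide.
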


We start with proving properties of  the Expand/Minimize phase. 

\begin{lemma} \label{lem:expansioncorrect}
  For all $i \geq 1$, the following properties hold:
  \begin{enumerate}

    \item $\Bmc_i,\Omc\models q_T(\bar b_i)$;

    \item $\Bmc'_i,\Omc\models q_T(\bar b)$ for some
      $\bar b \in \tau_i$;

  \end{enumerate}
\end{lemma}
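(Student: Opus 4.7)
Plan: I would prove Points~1 and~2 together by induction on $i$, since Point~2 at stage $i$ supplies the very $\bar b_{i+1}$ that Point~1 needs at stage $i+1$. For the base case of Point~1, the input to \mn{refine} satisfies $q'_H \subseteq_\Omc q_T$, so that $\Amc, \Omc \models q_T(\bar a)$ by Lemma~\ref{lem:homlem} (noting $\Amc = \Amc_{q'_H}$). Every Minimize operation only commits a deletion after a membership query verifies that $q_T(\bar a)$ is still entailed, so $\Bmc_1, \Omc \models q_T(\bar b_1)$. For the inductive step of Point~1, by hypothesis there is some $\bar b \in \tau_i$ with $\Bmc'_i,\Omc \models q_T(\bar b)$ (Point~2 at stage $i$); the algorithm sets $\bar b_{i+1} := \bar b$ and $\Bmc_{i+1} := \mn{minimize}(\Bmc'_i, \bar b_{i+1})$, and again Minimize's membership-gated behaviour preserves the entailment.

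The heart of the lemma is Point~2. Given a homomorphism $h : \Amc_{q_T} \to \Umc_{\Bmc_i,\Omc}$ with $h(\bar x) = \bar b_i$ (obtained from Point~1 via Lemma~\ref{lem:univers}), I build a homomorphism $h' : \Amc_{q_T} \to \Umc_{\Bmc'_i,\Omc}$ whose restriction to $\bar x$ yields a tuple in $\tau_i$. In the tree-shaped cases $q_T \in \text{ELQ} \cup \text{ELIQ}^{\text{sf}}$, Lemma~\ref{lem:auxunfolding} combined with Lemma~\ref{lem:universal-homomorph}(2) lifts the simulation $\Bmc_i,a \preceq \Bmc'_i,a$ to the universal models, and Lemma~\ref{lem:simlem} (together with its inverse-role variant for ELIQ$^{\text{sf}}$) lets this simulation carry the $q_T$-answer; the witness is then simply $\bar b = \bar b_i \in \tau_i$.

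The case $q_T \in \text{CQ}^{\text{csf}}_w$ is the genuine obstacle. The Expand step is applied to a chordless cycle $a_0,\ldots,a_{n-1}$ of length $n \geq 4$, and I must lift $h$ across the doubled cycle in $\Bmc'_i$, for each cycle-image variable deciding between a primed and an unprimed base. The key combinatorial step is to show that chordality of $q_T$ prevents the image of $h$ from winding around the chordless cycle: any subquery of $q_T$ whose variables map into $\{a_0,\ldots,a_{n-1}\}$ must trace a path (in the cyclic sense) rather than wrap, because a wrapping would force a chordless $q_T$-cycle of length $\geq 4$. Once this is established, the path together with the ditree portions hanging off it admits a coherent assignment to one of the two halves of the doubled cycle; variables mapped to traces extending from a cycle individual follow the choice made for their base. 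The answer variables are relocated accordingly, producing a tuple $\bar b$ that differs from $\bar b_i$ only in the primed/unprimed status of its components and thus lies in $\tau_i$. The main difficulty lies precisely in formulating and proving this "no winding" structural statement about homomorphisms from chordal CQs into chordless cycles; everything else reduces to bookkeeping and the simulation machinery of Section~2.
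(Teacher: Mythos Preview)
Your induction scheme for Point~1 matches the paper, and the simulation shortcut you propose for ELQ is valid (yielding $\bar b=\bar b_i\in\tau_i$); for ELIQ$^{\text{sf}}$ it would need two-way analogues of Lemmas~\ref{lem:auxunfolding}, \ref{lem:universal-homomorph}(2) and~\ref{lem:simfund}, which are not available, so that part is incomplete. The paper instead treats all three classes uniformly via an explicit construction of the lifted homomorphism.

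The genuine gap is the CQ$^{\text{csf}}_w$ case. Your ``no winding'' claim is false: let $q_T$ be an $r$-path $y_0,\dots,y_5$ with $y_0$ the sole answer variable (trivially in CQ$^{\text{csf}}$), and let the expanded cycle be an $r$-cycle $a_0,a_1,a_2,a_3$ with $\bar b_i=(a_0)$. Then $h(y_j)=a_{j\bmod 4}$ winds around the cycle although $q_T$ contains no cycle whatsoever; and the lift to $\Bmc'_i$ still works, unrolling the path as $a_0,a_1,a_2,a_3,a'_0,a'_1$. So winding of the image is neither forbidden by chordality nor an obstruction. What the paper proves instead is that the restriction of $q_T$ to the set $M_0$ of variables mapped onto $\{a_0,\dots,a_{n-1}\}$ becomes a \emph{forest} after identifying pairs of variables that lie on a common length-$3$ cycle of $q_T$ and share the same $h$-image. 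Chordality ensures that any surviving cycle containing a quantified variable would still sit on a length-$3$ subcycle and hence have been collapsed; cycles consisting entirely of answer variables are ruled out via the Expand side condition $\{a_0,\dots,a_{n-1}\}\not\subseteq\bar b_i$ (which you never invoke), because the $h$-image of such a cycle must, by chordlessness of the expanded cycle, cover all of $a_0,\dots,a_{n-1}$. Once this quotient is a forest, $g$ is defined by a top-down traversal choosing at each step between $h(x)$ and $h(x)'$; acyclicity of the quotient, not absence of winding, is what makes the choices globally consistent. Both the quotient step and the role of the side condition are missing from your plan.
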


\begin{proof}
  We prove both points simultaneously by induction on~$i$. For
  Point~1, the case $i=1$ is immediate since
  $(\Bmc_1,\bar b_1)= \mn{minimize}(\Amc,\bar a)$ and
  $\Amc,\Omc\models q_T(\bar a)$, and the case $i>1$ is an immediate
  consequence of the inductive hypothesis (Point~2), the choice of
  $\bar b_i$, and the definition of the Minimize step.

  For Point~2, the induction start and step are identical.
  Thus let $i \geq 1$. Assume that $\Bmc'_i$ was obtained from $\Bmc_i$ by expanding cycle
  $R_0(a_0,a_1),\ldots,R_{n-1}(a_{n-1},a_{n})$. %  with
  % $\{a_0,\ldots,a_n\}\not\subseteq \bar b_i$.
  % It suffices to show that
  % $\Bmc'_i,\Omc\models q_T(\bar b)$ for some $\bar b\in B$, $B$ defined
  % as in \textbf{Update Answer}, 
  % since the \textbf{Minimize} step
  % preserves that property. 
  By Point~1, there is a homomorphism $h$ from $q_T$ to
  $\Umc_{\Bmc_i,\Omc}$ with $h(\bar x)=\bar b_i$.  We construct a
  homomorphism $g$ from $q_T$ to $\Umc_{\Bmc'_i,\Omc}$ with
  $g(\bar x)=\bar b$ for some $\bar b\in \tau_i$, which yields
  $\Bmc'_i,\Omc\models q_T(\bar b)$ as desired.  Let us partition
  $\mn{var}(q_T)$ into sets $M_0,M_1,M_2$ such that:
  \begin{itemize}

    \item $x\in M_0$ if $h(x)\in \{a_0,\ldots,a_{n-1}\}$, that is,
      $h(x)$ lies on the expanded cycle;

    \item $x\in M_1$ if $h(x)\notin\mn{ind}(\Bmc_i)$, that is, $h(x)$ is in the part of
      $\Umc_{\Bmc_i,\Omc}$ generated by existential quantification;

    \item all other variables are in $M_2$.

  \end{itemize}
  We start with setting 
  \[g(x)=h(x)\text{\quad for all }x\in M_2.\]
  To define $g(x)$ for the variables in $x \in M_0$, we first
  construct an auxiliary query $q'_T$ of treewidth 1. If
  $\Qmc \in \{ \text{ELQ}, \text{ELIQ}^{\text{sf}} \}$, then $q'_T$ is
  simply $q_T$. Now assume that $\Qmc=\text{CQ}^{\text{csf}}$.
Then $q'_T$ is obtained by
  starting with the restriction of $q_T$ to the variables in $M_0$ and
  then exhaustively choosing and identifying variables $x_1,x_2$ such that
  \begin{enumerate}

  \item there is a cycle $R_0(y_0,y_1),R_1(y_1,y_2),
    R_3(y_2,y_0)$ with $\{ x_1,x_2 \} \subseteq \{ y_0,y_1,y_2
    \}\subseteq M_0$ and

  \item $h(x_1)=h(x_2)$

  \end{enumerate}
  The result of identifying an answer variable and a quantified
  variable is an answer variable.  Note that we may also identify two
  answer variables.
  
  We next observe that since $q_T$ is chordal, all CQs
  $q_T=p_0,p_1,\dots,p_k=q'_T$ encountered during the construction of
  $q'_T$ are chordal as well. This can be shown by induction on the
  index $i$ to the CQs $p_i$.  The induction start is clear since
  $q_T$ is chordal. For the induction step, assume %to the contrary
  that $p_i$ contains a cycle
  $C=S_0(z_0,z_1),\ldots,S_{n-1}(z_{n-1},z_n)$ of length at least four
  with at least one quantified variable, where $i >
  0$. % , $i$ minimal with
  % this property
  Then $p_{i-1}$ contains $C$ or a cycle $C'$ that can be obtained
  from $C$ by replacing some edge $S_i(z_i,z_{i+1})$ with two edges
  $S_{i,1}(z_i,u),S_{i,2}(u,z_{i+1})$ (because $u$ and $z_{i+1}$ were
  identified when constructing $p_i$). In the first case, $C$ has a
  chord in $p_{i-1}$ and thus also in $p_i$.  In the second case, $C'$
  contains at least one quantified variable since $C$ does and
  consequently has a chord in $p_{i-1}$. If this chord is not between
  $z_i$ and $z_{i+1}$, then $C$ contains a chord in $p$. If the
  chord is between $z_i$ and $z_{i+1}$, then we are in the first case.
  % {\color{blue}The ``at least one quantified variable'' stuff is not
  %   needed if parameter $P_a$ is 0.}
  
  We now show that $q'_T$ has treewidth~1, that is, it takes the form
  of a disjoint union of (not necessarily directed) trees with
  multi-edges and self loops.  Assume to the contrary that $q'_T$
  contains a cycle $C$ of length exceeding~2. If there is a quantified
  variable $x$ on $C$, then $q'_T$ being chordal implies that $x$
  occurs on a cycle of length~3, in contradiction to the construction
  of $q'_T$. Now assume that there is no quantified variable on
  $C$. As the image of $C$ under $h$ is a cycle in $\Bmc_i$ and the
  cycle chosen by the Expansion step is chordless, the image
  of $C$ under $h$ must contain \emph{all} individuals
  $\{a_0,\dots,a_{n-1}\}$. Since all variables on $C$ are answer
  variables, this means that all individual in $M_0$ are from
  $\bar b$, in contradiction to
  $\{a_1,\ldots,a_{n}\}\not\subseteq \bar b_i$.

  This finishes the construction of $q'_T$.  For defining $g(x)$ for
  the variables $x \in \mn{var}(q'_T)$, we can now start at some
  arbitrary variable in each tree in $q'_T$ and then follow the tree
  structure, switching between the individuals $a_0,\dots,a_{n-1}$ and
  their copies $a'_0,\dots,a'_{n-1}$ as necessary.  We next make this
  precise. % For every variable $z$ in $q'_T$, let $\omega(z)$ denote
  % all variables that have been identified with $z$ during the
  % construction of $q'_T$, including $z$. Note that $h(z')=h(z)$ when
  % $z' \in \omega(z)$.
  % We first define $g(x)$ only for the variables in $q'_T$, but not for
  % those that have been `identified away' in the construction of
  % $q'_T$.
  For each connected component of $q'_T$, choose an arbitrary
  variable $z$ from that component and set
  \[g(z)=h(z).\] % \text{ for all } z' \in \omega(z). \]
  Then exhaustively apply the following rule: 
if $q'_T$ contains an atom $R(x,y)$ with $g(y)$ defined and 
      $g(x)$ undefined, set

      \begin{itemize}

	\item $g(x)=h(x)$ if $g(y)=a_i$ and either $h(x)=a_{i+1}$
	  and $i<n-2$ or  $h(x)=a_{i-1}$ and $i>0$;

	\item $g(x)=h(x)'$ if $g(y)=a_i'$ and either $h(x)=a_{i+1}$
	  and $i<n-2$ or  $h(x)=a_{i-1}$ and $i>0$;

	\item $g(x)=a_0'$ if $g(y)=a_{n-2}$ and
	  $h(x)=a_{n-1}=a_0$;

	\item $g(x)=a_0$ if $g(y)=a_{n-2}'$ and
	  $h(x)=a_{n-1}=a_0$;

	\item $g(x)=a_{n-2}'$ if $g(y)=a_0$ and
	  $h(x)=a_{n-2}$;

	\item $g(x)=a_{n-2}$ if $g(y)=a_0'$ and
	  $h(x)=a_{n-2}$.

      \end{itemize}
      It can be verified that in all cases, $R(g(x),g(y)) \in \Bmc'_i$
      by construction of $\Bmc'_i$ in the unravelling step.
      
      We can next extend $g$ to all variables in $M_0$ by setting
      $g(y)=g(x)$ if $y$ was identified with $x \in \mn{var}(q'_T)$
      during the construction of $q'_T$ (note that this implies
      $h(y)=h(x)$).

      \smallskip
      
      By definition, $g(x)\in \{h(x),h(x)'\}$ for all $x\in
      M_0$. Thus, $g(\bar x)\in \tau_i$ as announced.

  \medskip It remains to define $g(x)$ for the variables $x\in
  M_1$. By definition of $M_1$, $h(x)$ is a trace $cw$ with
$c\in\mn{ind}(\Bmc_i)$ and $w\neq\varepsilon$, that is,
  $x$ is mapped into the subtree below $c$ in $\Umc_{\Bmc_i,\Omc}$. 
  Now do the following:
  \begin{itemize}

  \item if there is a path in $q_T$ from some variable $z \in M_0$ to
    $x$, then choose such a $z$ such that the path is shortest (thus,
    $h(z)=c$ and $g(z)$ has already been defined) and set $g(x)=g(z)w$;

  \item otherwise, set $g(x)=h(x)$.

  \end{itemize}
%   %
%   \begin{itemize}
% 
%     \item if $b\notin \{a_0,\ldots,a_{n-1}\}$, then $g(x)=h(x)$;
% 
%     \item if $b=a_i$ for some $i$ and $g(z)=a_i$ then $g(x)=h(x)$;
% 
%     \item if $b=a_i$ for some $i$ and $g(z)=\widehat a_{i}$ then
%       $g(x)=\widehat a_iw$.  
% 
%   \end{itemize}
  %
  This is well-defined since, due to
  Lemma~\ref{lem:auxunfolding}, the following holds:
  \begin{enumerate}

  \item for each $c \in \mn{ind}(\Bmc_i)$, the subtrees below $c$
    in $\Umc_{\Bmc_i,\Omc}$ and in $\Umc_{\Bmc'_i,\Omc}$ are
    identical;

  \item for $0 \leq j < n$, the subtree below $a_j$ in
    $\Umc_{\Bmc_i,\Omc}$
    and the subtree below $a'_j$ in $\Umc_{\Bmc'_i,\Omc}$ are
    identical.

    \end{enumerate}
    Set $\bar b=g(\bar x)$. To prove that $\Bmc'_i,\Omc \models q_T(\bar b)$, it remains to
    show the following.

  \smallskip\noindent\textit{Claim.} $g$ is a homomorphism from $q_T$ to
  $\Umc_{\Bmc'_i,\Omc}$.

  \smallskip\noindent\textit{Proof of the claim.}  Let $A(x)$ be a
  concept atom in $q_T$. Then $A(h(x))\in \Umc_{\Bmc_i,\Omc}$. If
  $h(x) \in \mn{ind}(\Bmc_i)$, then $g(x)$ was defined such that, by
  Lemma~\ref{lem:auxunfolding}, $\Bmc_i,h(x) \preceq
  \Bmc_i',g(x)$.
  By Lemma~\ref{lem:universal-homomorph},  $\Umc_{\Bmc_i,\Omc},h(x) \preceq
  \Umc_{\Bmc'_i,\Omc},g(x)$ and thus by Lemma~\ref{lem:simfund} $A(g(x)) \in
  \Umc_{\Bmc'_i,\Omc}$. If $h(x)\notin\mn{ind}(\Bmc_i)$ then the
  remark before the claim and the definition of $g(x)$ ensures that
  $A(g(x))\in \Umc_{\Bmc_i',\Omc}$.

  Now let $R(x_1,x_2)$ be a role atom in $q_T$. We distinguish cases
  according to $x_1,x_2$ belonging to $M_0,M_1,M_2$:
  \begin{itemize}

  \item If $x_1,x_2\in M_0$, then $q'_T$ contains an atom
    $R(x'_1,x'_2)$ such that each $x_i$ was identified with $x'_i$
    during the construction of $q'_T$. If $x'_1 \neq x'_2$, then
    $R(g(x'_1),g(x'_2)) \in \Bmc'_i$, as argued in the definition of
    $g$ for variables from $q'_T$. By that definition and the construction
    of $\Bmc'_i$, the same is true when $x_1=x_2$. We have
    $g(x_i)=g(x'_i)$ for $i \in \{1,2\}$. Thus $R(g(x'_1),g(x'_2)) \in
    \Umc_{\Bmc'_i,\Omc}$ as required.

  \item If $x_1,x_2\in M_1$, then $h(x_1)=bv$ and $h(x_2)=bw$ for some
    $b\in\mn{ind}(\Bmc_i)$ and some non-empty $v,w$, and $R(h(x_1),h(x_2))\in \Umc_{\Bmc_i,\Omc}$. By definition of
    $g$, we have $g(x_1)=\widehat bv$ and $g(x_2)=\widehat bw$ for some
    $\widehat b\in\{b,b'\}$. By Lemma~\ref{lem:auxunfolding}, the
    subtree below $\widehat b$ in $\Umc_{\Bmc_i',\Omc}$ is identical
    to the
    subtree below $b$ in $\Umc_{\Bmc_i,\Omc}$. 
    This implies
    $R(g(x_1),g(x_2))\in \Umc_{\Bmc_i',\Omc}$.

    \item If $x_1,x_2 \in M_2$, then $g(x_1)=h(x_1)$,
      $g(x_2)=h(x_2)$, and $R(h(x_1),h(x_2))\in \Bmc_i$ because $h$
      is a homomorphism from $q_T$ to $\Umc_{\Bmc_i,\Omc}$
      and $h(x_1),h(x_2)\in\mn{ind}(\Bmc_i)$. Since additionally
      $h(x_1),h(x_2)\notin\{a_0,\ldots,a_{n-1}\}$, 
      $R(h(x_1),h(x_2))\in \Bmc'_i$ and thus $R(g(x_1),g(x_2))\in
      \Umc_{\Bmc'_i,\Omc}$.

    \item If $x_1\in M_0$ and $x_2\in M_1$, then
      $h(x_1) \in \{a_0,\ldots,a_{n-1}\}$ and $h(x_2)$ takes the form
      $h(x_1)rC$. Moreover, $g(x_1) \in \{ h(x_1), h(x_1)' \}$ and
      $g(x_2)=g(x_1)rC$. It thus follows from
      Lemma~\ref{lem:auxunfolding}, 
      $R(h(x_1),h(x_2)) \in \Umc_{\Bmc_i,\Omc}$, and the construction
      of universal models that
      $R(g(x_1),g(x_2)) \in \Umc_{\Bmc'_i,\Omc}$.

    \item If $x_1\in M_0$ and $x_2 \in M_2$, then
      $h(x_1) \in \{a_0,\ldots,a_{n-1}\}$ and
      $h(x_2) \in \mn{ind}(\Bmc_i) \setminus
      \{a_0,\ldots,a_{n-1}\}$. Moreover,
      $g(x_1) \in \{ h(x_1), h(x_1)' \}$ and $g(x_2)=h(x_2)$.
      It follows from $R(h(x_1),h(x_2)) \in  \Umc_{\Bmc_i,\Omc}$
      that  $R(h(x_1),h(x_2)) \in \Bmc_i$. By construction of
      $\Bmc'_i$, we thus have $R(g(x_1),g(x_2)) \in \Bmc_i'
      \subseteq \Umc_{\Bmc'_i,\Omc}$.

    \item if $x_1\in M_1$, $x_2\in M_2$, then
      $h(x_2) \in \mn{ind}(\Bmc_i) \setminus \{a_0,\ldots,a_{n-1}\}$
      and $h(x_1)$ takes the form $h(x_2)rC$ with $R=r^-$. Moreover
      $g(x_i)=h(x_i)$ for $i \in \{1,2\}$ and it remains to use
      Lemma~\ref{lem:auxunfolding} as in previous cases.

  \end{itemize}
\end{proof}

\begin{lemma} \label{lem:expansioncorrecttwo}
  For all $i \geq 1$,
  \begin{enumerate}

    \item $\Bmc_i$ is \Omc-saturated;

    \item if $h$ is a homomorphism from $q_T$ to $\Umc_{\Bmc_i,
      \Omc}$ with $h(\bar x) = \bar b_i$, then $\mn{ind}(\Bmc_i)\subseteq
      \mn{img}(h^*)$;

    \item $\Bmc_{i+1}, \bar b_{i + 1} \to \Bmc_i, \bar b_i$;

    \item $|\mn{ind}(\Bmc_{i + 1})| > |\mn{ind}(\Bmc_{i})|$.

  \end{enumerate}
\end{lemma}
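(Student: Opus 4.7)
I would prove all four points simultaneously by induction on $i$, exploiting at each stage that Minimize returns an ABox in which neither step~(1) can remove another individual nor step~(2) another role assertion. For the base case $i=1$, Minimize is applied to the product $\Cmc^3_{\Amc_{q_H},\Omc}\times\Cmc^3_{\Amc,\Omc}$ of two 3-compact models. I would first verify that each $\Cmc^3_{\Amc',\Omc}$ is $\Omc$-saturated directly from its construction (every concept name entailed on an ABox individual is explicitly added, and the introduced elements $c_{a,i,r,C}$ receive exactly the concept names $A$ with $\Omc\models C\sqcap C_r\sqsubseteq A$), and then that $\Omc$-saturation is preserved under direct products via Lemma~\ref{lem:prodfund}(1): any derivation of $A$ on $(a_1,a_2)$ projects through the two projection homomorphisms to each factor, forcing $A(a_i)\in\Cmc^3_i$ by saturation, whence $A((a_1,a_2))$ lies in the product by definition. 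Saturation is preserved by Minimize itself because both steps only delete individuals or role assertions, never concept assertions on surviving elements; this settles Point~1 at $i=1$. For the inductive step, the two-way simulations between $\Bmc_i$ and $\Bmc'_i$ on each pair $(a,a'_j)$ given by Lemma~\ref{lem:auxunfolding} lift through Lemma~\ref{lem:universal-homomorph}(2) to the universal models and then transfer concept-name membership by Lemma~\ref{lem:simfund}; hence every concept-name assertion entailed on a copy $a'_j$ in $\Bmc'_i$ must already be present (the original receives it in $\Bmc_i$ by saturation and Expand copies all concept assertions to $a'_j$), and the subsequent Minimize preserves this, establishing Point~1 at step $i+1$.

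Point~3 is almost immediate: Minimize is monotone on the underlying ABox, so $\Bmc_{i+1}\subseteq\Bmc'_i$; the folding map $g\colon\Bmc'_i\to\Bmc_i$ sending each copy $a'_j$ to $a_j$ and fixing every other individual is a homomorphism by the very definition of Expand (each new assertion on a copy mirrors an existing assertion on its original), restricting $g$ to $\Bmc_{i+1}$ yields the required homomorphism, and $g(\bar b_{i+1})=\bar b_i$ follows from $\bar b_{i+1}\in\tau_i$. Point~2 I would prove by contraposition: if some $c\in\mn{ind}(\Bmc_i)\setminus\mn{img}(h^*)$ existed, then $c\notin\bar b_i$ (since $\bar b_i\subseteq\mn{img}(h^*)$), and I would exhibit a homomorphism $h'\colon q_T\to\Umc_{\Bmc_i^{-},\Omc}$ with $h'(\bar x)=\bar b_i$, where $\Bmc_i^{-}$ is $\Bmc_i$ with every assertion mentioning $c$ deleted, contradicting exhaustiveness of Minimize step~(1) on $c$. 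For ABox-image variables we keep $h'(x)=h(x)$; the crux is that, by $\Omc$-saturation of $\Bmc_i$ (Point~1), the anonymous traces used by $h$ also survive: either the initial existential of such a trace has a $c$-free witness in $\Bmc_i$, or its only witness is $c$ itself, in which case the very possibility of using the anonymous trace in place of $c$ inside $\Umc_{\Bmc_i,\Omc}$ would have let Minimize delete $c$ already, contradicting that $c$ was kept.

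Point~4 is where the real work lies, and I view it as the main obstacle. Assuming towards contradiction $|\mn{ind}(\Bmc_{i+1})|\leq|\mn{ind}(\Bmc_i)|$, I would first upgrade the folding map $g$ of Point~3 to a surjection: extend $g$ to $\bar g\colon\Umc_{\Bmc_{i+1},\Omc}\to\Umc_{\Bmc_i,\Omc}$ by Lemma~\ref{lem:universal-homomorph}(1) and compose with a homomorphism $h_{i+1}\colon q_T\to\Umc_{\Bmc_{i+1},\Omc}$ supplied by Lemma~\ref{lem:expansioncorrect}(1); the trace-shape guarantee of Lemma~\ref{lem:universal-homomorph}(1) gives $(\bar g\circ h_{i+1})^*=g\circ h_{i+1}^*$, so Point~2 applied to $\bar g\circ h_{i+1}$ yields $\mn{ind}(\Bmc_i)\subseteq g(\mn{img}(h_{i+1}^*))\subseteq g(\mn{ind}(\Bmc_{i+1}))$, making $g$ surjective and, under the size assumption, bijective. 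Bijectivity then forces that for each position $j\in\{0,\ldots,n-1\}$ of the expanded chordless cycle $R_0(a_0,a_1),\ldots,R_{n-1}(a_{n-1},a_0)$ exactly one of $a_j,a'_j$ survives in $\Bmc_{i+1}$; a short case analysis of which cycle edges actually appear in $\Bmc'_i$ (the direct $R_{n-1}(a_{n-1},a_0)$ was removed by Expand, the diagonal $R_{n-1}(a'_{n-1},a'_0)$ is explicitly excluded by the ``$\{i,j\}\neq\{0,n-1\}$'' clause, and no ``sideways'' edge $R_j(a_j,a'_{j+1})$ or $R_j(a'_j,a_{j+1})$ is ever added for $0\leq j\leq n-2$) shows that for all $n$ cycle edges of $\Bmc_i$ to appear in $g(\Bmc_{i+1})$ the chosen side would have to be constant across $j=0,\ldots,n-2$ and simultaneously flip between $n-1$ and $0$, which is impossible. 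Hence some cycle edge $e$ of $\Bmc_i$ is missing from $g(\Bmc_{i+1})$; composing $\Bmc_{i+1}\to g(\Bmc_{i+1})\subseteq\Bmc_i\setminus\{e\}$ with $\Bmc_{i+1},\Omc\models q_T(\bar b_{i+1})$ gives $\Bmc_i\setminus\{e\},\Omc\models q_T(\bar b_i)$, contradicting that Minimize step~(2) has already been exhausted on $\Bmc_i$.
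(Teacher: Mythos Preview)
Your overall strategy matches the paper's closely. Points~1 and~3 are handled the same way (induction using the two-way simulations of Lemma~\ref{lem:auxunfolding}, and the obvious folding map $a'\mapsto a$). Your Point~4 argument is a slightly more direct reformulation of the paper's Claims~2--4: instead of first isolating a separate ``if $g$ is injective then every role edge of $\Bmc_i$ lifts to $\Bmc_{i+1}$'' lemma (the paper's Claim~3) and then walking around the cycle (Claim~4), you argue in one shot that bijectivity forces exactly one of $a_j,a_j'$ to survive and that the edge pattern in $\Bmc'_i$ makes it impossible for all $n$ cycle edges to lie in $g(\Bmc_{i+1})$, whence $g$ factors through $\Bmc_i\setminus\{e\}$ for some cycle edge $e$, contradicting exhaustiveness of Minimize step~(2). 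Both versions rest on the same combinatorial observation, and your surjectivity argument (Point~2 composed with Lemma~\ref{lem:universal-homomorph}(1)) is exactly the paper's Claim~2.

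Your Point~2 argument, however, has a genuine gap. You correctly identify that one must argue the anonymous traces $brCw$ with $b\neq c$ used by $h$ survive in $\Umc_{\Bmc_i^-,\Omc}$, but the ``or'' branch of your case split is circular: if $c$ is the \emph{only} witness for $\Bmc_i,\Omc\models\exists r.C(b)$, then removing $c$ kills this entailment, so the trace $brC$ simply does not exist in $\Umc_{\Bmc_i^-,\Omc}$. You cannot ``use the anonymous trace in place of $c$'' after deletion, and the fact that the trace existed \emph{before} deletion tells you nothing about what Minimize can do, since Minimize tests entailment from the reduced ABox. The paper's argument is more direct here: it observes that saturation gives $\{A : A(b)\in\Umc_{\Bmc_i,\Omc}\}=\{A : A(b)\in\Bmc_i\}=\{A : A(b)\in\Bmc'\}=\{A : A(b)\in\Umc_{\Bmc',\Omc}\}$ for each $b\neq c$, and from this asserts that the anonymous subtree below each such $b$ is identical in the two universal models; since $c\notin\mn{img}(h^*)$ means $h$ never enters the subtree rooted at $c$, the very same $h$ is then a homomorphism into $\Umc_{\Bmc',\Omc}$. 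You should replace your witness-based case analysis with this concept-name argument.
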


\begin{proof}
  We prove Point~1 by induction on $i$. For $i=1$, recall that the
  initial ABox \Amc is of the form
  $\Cmc^3_{\Amc_{q_H},\Omc} \times \Cmc^3_{\Amc,\Omc}$
  or $\Amc_{q^{\bot}}$ for all uses of the subrouting
  \mn{refine} and that $\Bmc_1, \bar b_1 = \mn{minimize}(\Amc, \bar a)$. In the first case,
  both $\Cmc^3_{\Amc_{q_H},\Omc}$ and
  $\Cmc^3_{\Amc,\Omc}$ are \Omc-saturated and thus their product is also
  \Omc-saturated by Lemma~\ref{lem:prodfund} Point~3.
  In the second case $\Amc_{q^{\bot}}$ is \Omc-saturated since it
  contains $A(x_0)$ for all concept names $A \in \Sigma$. Moreover, the
  Minimize step does not remove any concept assertions.
  For the induction step, suppose
  $\Bmc_{i+1},\Omc \models A(\widehat a)$ for some concept name $A$
  and some $\widehat a \in \mn{ind}(\Bmc_{i+1})$ with $\widehat a$
  either $a\in \mn{ind}(\Bmc_i)$ or $a'$ for some $a\in
  \mn{ind}(\Bmc_i)$. By monotonicity, $\Bmc'_{i},\Omc \models
  A(\widehat a)$ where $\Bmc'_i$ is the result of applying the Expand step
  to $\Bmc_i$ before the Minimize step. By Lemma~\ref{lem:auxunfolding}, 
  $\Bmc_i',\widehat a\preceq \Bmc_i,a$ and thus $\Bmc_i,\Omc\models
  A(a)$ by Lemma~\ref{lem:simlem}. By induction, we know that $A(a)\in \Bmc_i$, and the
  application of the rules ensures that $A(\widehat a)\in \Bmc_{i+1}$.

  \smallskip For Point~2, let $h$ be a homomorphism from $q_T$ to
  $\Umc_{\Bmc_i, \Omc}$ with $h(\bar x) = \bar b_i$, and suppose that
  there is an $a \in \mn{ind}(\Bmc_i)$ that is not in $\mn{img}(h^*)$.
  Let $\Bmc'$ be the result of removing from $\Bmc_i$ all assertions
  that involve $a$. We show that
  \begin{itemize}
    \item[($*$)] $h$ is a homomorphism from $q_T$ to $\Umc_{\Bmc',
      \Omc}$
  \end{itemize}
  which witnesses that $\Bmc', \Omc \models q_T(\bar b_i)$. Hence,
  $a$ is dropped during the Minimize step, in contradiction to $a \in
  \mn{ind}(\Bmc_i)$.  To see that ($*$) holds, first note that for
  all $b, b' \in \mn{ind}(\Bmc_i) \setminus \{a\}$, the following
  holds by Point~1 and construction of universal models:
  \begin{enumerate}

    \item[(a)] $A(b) \in \Umc_{\Bmc_i, \Omc}$ iff $A(b) \in \Umc_{\Bmc',
      \Omc}$;

    \item[(b)] $r(b, b') \in \Umc_{\Bmc_i, \Omc}$ iff $r(b, b') \in
      \Umc_{\Bmc', \Omc}$.

  \end{enumerate}
  From (a), in turn, it follows that the subtree in
  $\Umc_{\Bmc_i, \Omc}$ below each
  $b \in \mn{ind}(\Bmc_i) \setminus \{a\}$ is identical to the subtree
  in $\Umc_{\Bmc', \Omc}$ below $b$. Now ($*$) is an easy consequence.

  \smallskip For Points~3 and~4, define a mapping $g$ from
  $\mn{ind}(\Bmc_{i+1})$ to $\mn{ind}(\Bmc_i)$ by taking $g(a) = a$ for all
  $a \in \mn{ind}(\Bmc_i) \cap \mn{ind}(\Bmc_{i + 1})$ and $g(a') = a$ for
  all $a' \in \mn{ind}(\Bmc_{i + 1}) \setminus \mn{ind}(\Bmc_i)$. For
  Point~3, we verify the following Claim.

  \smallskip\noindent\textit{Claim 1.} $g$ is a homomorphism from
  $\Bmc_{i+1}$ to $\Bmc_i$ with $g(\bar b_{i+1})=\bar b_i$.

  \smallskip\noindent\textit{Proof of Claim 1.} % Recall that
  % $\Bmc_{i + 1}$ is created from $\Bmc_i$ with the Expand and Minimize
  % steps.
  If $A(a) \in \Bmc_{i + 1}$, then $A(a) \in \Bmc'_i$ by definition of
  the Minimize step, and Lemma~\ref{lem:auxunfolding}
  implies that $A(g(a)) \in \Bmc_i$, as required. If
  $r(a, b) \in \Bmc_{i + 1}$, then $r(a, b) \in \Bmc'_{i}$.
  The definition of the Expand step then yields
  $r(g(a), g(b)) \in \Bmc_i$, as required.

  \smallskip For Point~4, it suffices to show that $g$ is surjective,
  but not injective.  

  \smallskip\noindent\textit{Claim 2.} $g$ is surjective.

  \smallskip\noindent\textit{Proof of Claim~2.} Suppose that $g$ is not
  surjective. Then $\mn{ind}(\Bmc_i) \not\subseteq \mn{img}(g)$. By
  Lemma~\ref{lem:expansioncorrect} Point~1, there is a homomorphism
  $h_1$ from $q_T$ to $\Umc_{\Bmc_{i + 1},\Omc}$ with $h_1(\bar x)=\bar
  b_{i+1}$. Let $h_2$ be the extension of $g$ to a homomorphism from
  $\Umc_{\Bmc_{i+1},\Omc}$ to $\Umc_{\Bmc_i,\Omc}$ as in
  Lemma~\ref{lem:universal-homomorph} Point~1. Then 
  $\mn{img}(h_2^*) = \mn{img}(g)$.
  Composing $h_1$ and $h_2$ yields a homomorphism $h_3$
  from $q_T$ to $\Umc_{\Bmc_i, \Omc}$ with
  $h_3(\bar x) = \bar b_i$, but with $\mn{ind}(\Bmc_i)
  \not\subseteq \mn{img}(h_3^*)$, in contradiction to Point~2.

  For an injective and surjective function, we use $g^-$ to denote its
  inverse.

  \smallskip\noindent\textit{Claim 3.} If $g$ is injective, then
  $r(a,b) \in \Bmc_i$ implies $r(g^-(a), g^-(b)) \in \Bmc_{i + 1}$.

  \smallskip\noindent\textit{Proof of Claim 3.} Suppose to the
  contrary that there is an $r(a,b) \in \Bmc_i$ with
  $r(g^-(a), g^-(b)) \notin \Bmc_{i + 1}$. Since $g$ is injective, it
  is then also a
  homomorphism from $\Bmc_{i + 1}$ to $\Bmc_i \setminus \{ r(a, b) \}$
  and using composition-of-homomorphisms argument as in the proof of
  Claim~2,
  % using Lemma~\ref{lem:expansioncorrect} and
  % Lemma~\ref{lem:universal-homomorph} Point~1
%  , there is 
we find a homomorphism
  $h$ from $q_T$ to
  $\Umc_{\Bmc_i \setminus \{ r(a, b) \}, \Omc}$.  Hence
  $r(a, b)$ is dropped during the Minimize step, in
  contradiction to $r(a, b) \in \Bmc_i$.

  \smallskip\noindent\textit{Claim 4.} $g$ is not an injective
  homomorphism.

  \smallskip\noindent\textit{Proof of Claim 4.}  Let
  $R_0(a_0, a_1), \ldots R_{n - 1}(a_{n - 1}, a_n) \in \Bmc_i$ be the
  chordless cycle that is expanded during the construction of
  $\Bmc_{i + 1}$ from $\Bmc_i$. Recall that $a_0 = a_n$. Without loss
  of generality, assume that $R_{n - 1} = r_{n-1}$ is a role name, but
  not an inverse role.  Suppose for contradiction that $g$ is
  injective. The construction of $g$, together with $g$ being
  surjective
  and injective, implies that exactly one of $a_j, a_j'$ is in
  $\mn{ind}(\Bmc_{i + 1})$ for all $j$ with $0 \leq j \leq n$.

  Assume that $a_{n-1} \in \mn{ind}(\Bmc_{i + 1})$ (the case
  $a'_{n-1} \in \mn{ind}(\Bmc_{i + 1})$ is analogous) and thus
  $g(a_{n-1})=a_{n-1}$.  We prove by induction on $i$ that
  $a_i \notin \mn{ind}(\Bmc_{i +1})$ for $0 \leq i < n$, thus
  obtaining a contradiction to $a_{n-1} \in \mn{ind}(\Bmc_{i + 1})$.
  
  For the induction start, assume to the contrary of what is to be
  shown that $a_0 \in \mn{ind}(\Bmc_{i +1})$. Then $g(a_0)=a_0$ and
  $r_{n-1}(a_{n - 1}, a_0) \in \Bmc_i$ implies
  $r_{n-1}(a_{n - 1}, a_0) \in \Bmc_{i + 1}$ by Claim~3, in
  contradiction to the definition of the Expand step. 

% It follows that
%   $a'_0 \in \mn{ind}(\Bmc_{i + 1})$.  From
%   $a_0 \notin \mn{ind}(\Bmc_{i +1})$ and
%   $a_{n-1} \in \mn{ind}(\Bmc_{i + 1})$, it further follows that
%   $a_0 \neq a_{n - 1}$ and thus $n >1$.
  
  For the induction step, let $i \geq 0$. We know that
  $a_{i-1} \notin \mn{ind}(\Bmc_{i + 1})$ and thus
  $a'_{i-1} \in \mn{ind}(\Bmc_{i + 1})$.  Then
  $g(a'_{i-1})=a_{i-1}$. Assume to the contrary of what is to be shown
  that $a_i \in \mn{ind}(\Bmc_{i + 1})$.  Then $g(a_i)=a_i$ and
  $R_{i-1}(a_{i-1},a_i) \in \Bmc_i$ and Claim~3 yield
  $R_{i-1}(a'_{i-1},a_i) \in \Bmc_i$, in contradiction to the
  definition of the Expand step.
\end{proof}
It is proved as part of Lemma~\ref{lem:refine-termination} below that
the Expand/Minimize phase terminates after polynomially many steps,
let $(\Bmc_n,\bar b_n)$ be the result.

We next construct a sequence $\Bmc_{n},\Bmc_{n+1},\dots$ using the
Split and Minimize steps. With $\Bmc_i'$, $i \geq n$, we denote the
result of only applying the Split step to $\Bmc_i$, but not the
Minimize step.
\begin{lemma} \label{lem:splitcorrect}
  $\Bmc_i,\Omc\models 
  q_T(\bar b_n)$ for all $i \geq n$.
\end{lemma}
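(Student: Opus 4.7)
The proof proceeds by induction on $i \geq n$. The base case $i = n$ is Lemma~\ref{lem:expansioncorrect}(1) applied at the end of the Expand/Minimize phase. For the inductive step, since $\Bmc_{i+1}$ is obtained from $\Bmc'_i$ by $\mn{minimize}$, whose definition only removes assertions that preserve $q_T(\bar b_n)$-entailment, it suffices to show $\Bmc'_i, \Omc \models q_T(\bar b_n)$ whenever the Split step applies, i.e.\ whenever the preconditions $r(a,b), r(c,b) \in \Bmc_i$, $b \notin \bar b_n$, and neither edge on a cycle in $\Bmc_i$ all hold.

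By Lemma~\ref{lem:strongsymfree} I may assume $q_T$ is strongly symmetry-free, and by Lemma~\ref{lem:homlem} and the inductive hypothesis there is a homomorphism $h : q_T \to \Umc_{\Bmc_i, \Omc}$ with $h(\bar x) = \bar b_n$. The key observation is that in $\Bmc'_i$ both $b$ and $b'$ carry the same unary atoms and each still has at least one incoming $r$-edge (namely $r(c,b)$ at $b$ and $r(a,b')$ at $b'$), so range restrictions trigger identically on them and they satisfy the same concepts under \Omc; hence the subtrees of $\Umc_{\Bmc'_i, \Omc}$ rooted at $b$ and $b'$ are both isomorphic to the subtree of $\Umc_{\Bmc_i, \Omc}$ rooted at $b$, while outside the modifications around $b$ the ABoxes $\Bmc'_i$ and $\Bmc_i$ agree. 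I therefore construct $h'$ by setting $h'(x) = h(x)$ whenever $h(x) \notin \{b\} \cup \{bw : w \text{ a nonempty trace extension}\}$ and, for each connected component of the subgraph of $q_T$ induced by the remaining variables, choosing uniformly to route it through either the $b$-copy or the $b'$-copy. The tree isomorphism makes either choice locally valid; the choice is constrained only by $q_T$-atoms $r(y_0, x)$ with $y_0$ outside the component and $h(x) = b$: $h(y_0) = a$ forces the $b'$-copy (since $r(a,b)$ was removed), while $h(y_0) = c$ forces the $b$-copy (since $r(c, b')$ was not added), and any other value of $h(y_0)$ is neutral.

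The main obstacle is establishing that this choice is well-defined, i.e.\ that no component receives conflicting forcings. A conflict would yield two root variables $y_1, y_2$ inside one component with external forcings to different copies and linked by a chain of shared tree-descendants. Applying strong symmetry-freeness iteratively at each shared descendant (which, for a non-answer variable with two distinct $r$-predecessors, forces one of the incident edges onto a cycle in $q_T$) and invoking chordality to reduce any resulting cycle of length at least four to a triangle, one extracts a cycle in $q_T$ whose image in $\Umc_{\Bmc_i, \Omc}$ passes through $r(a,b)$ or $r(c,b)$. Since the tree part of the universal model is acyclic, every tree excursion of the image walk must bounce back through its entry-edge; stripping these excursions yields a closed walk lying entirely in $\Bmc_i$ and containing the forcing edge, and because every edge of a closed walk lies on a simple cycle this contradicts the Split preconditions. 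Once consistency is established, the verification that $h'$ is a homomorphism is a routine case analysis on the image of each $q_T$-atom based on the tree isomorphism of the previous paragraph, completing the induction.
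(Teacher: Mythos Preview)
You have overlooked a key clause in the definition of the Split step. The paper's definition reads: ``If $\Bmc_i',\Omc\models q_T(\bar b_n)$, then $\Bmc_{i+1}=\mn{minimize}(\Bmc_i',\bar b_n)$.'' That is, the entailment $\Bmc_i',\Omc\models q_T(\bar b_n)$ is \emph{checked by a membership query}, and $\Bmc_{i+1}$ is produced only when the check succeeds. Consequently the paper's proof of this lemma is essentially one line: the base case is Point~1 of Lemma~\ref{lem:expansioncorrect}, and for $i>n$ the statement is immediate from the induction hypothesis, the fact that Split only yields $\Bmc_{i+1}$ when $\Bmc_i',\Omc\models q_T(\bar b_n)$, and the fact that Minimize preserves this entailment.

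Your proposal instead tries to establish the strictly stronger claim that $\Bmc_i',\Omc\models q_T(\bar b_n)$ holds \emph{whenever} the symmetry preconditions are met, i.e.\ that the membership-query check never fails. This is neither what the lemma asserts nor needed for it. (The question of what happens when the check fails is handled elsewhere, in the proof of Lemma~\ref{lem:expansion-queryclass}, by analysing what $q_T$ must look like in that case.) Your argument for this stronger claim also has a gap: the step ``because every edge of a closed walk lies on a simple cycle'' is false in general---a closed walk may traverse an edge and immediately backtrack over it, and such an edge need not lie on any cycle in the paper's sense (which requires all assertions on the cycle to be distinct). So even as a proof of the stronger claim, the contradiction with the Split preconditions is not secured.
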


\begin{proof}
  We show the lemma by induction on $i$. For $i=n$, this is a
  consequence of Point~1 of Lemma~\ref{lem:expansioncorrect}. For
  $i>n$, it is immediate from the induction hypothesis and the facts
  that a split is only taking place if $\Bmc_i',\Omc\models q_T(\bar
  b_n)$, and that the Minimize step preserves this. 
\end{proof}

\begin{lemma} \label{lem:splitcorrecttwo}
  For all $i\geq n$,
  \begin{enumerate}

    \item $\Bmc_i$ is \Omc-saturated;

    \item if $h$ is a homomorphism from $q_T$ to $\Umc_{\Bmc_i,
      \Omc}$ with $h(\bar x) = \bar b_n$, then $\mn{ind}(\Bmc_i)\subseteq
      \mn{img}(h^*)$;

    \item $\Bmc_{i+1}, \bar b_n \to \Bmc_i, \bar b_n$;

    \item $|\mn{ind}(\Bmc_{i + 1})| > |\mn{ind}(\Bmc_{i})|$.

  \end{enumerate}
\end{lemma}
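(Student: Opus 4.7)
The plan is to mirror the proof of Lemma~\ref{lem:expansioncorrecttwo} step by step, replacing the role played there by the Expand step with the corresponding role of the Split step. The central preliminary observation is that for every $i \geq n$, the map $g \colon \Bmc_i' \to \Bmc_i$ sending the fresh individual $b'$ to the split individual $b$ and acting as the identity elsewhere is a homomorphism: by construction of the Split step, every concept assertion $B(b')$ in $\Bmc_i'$ was copied from $B(b) \in \Bmc_i$, every role assertion $S(d,b') \in \Bmc_i'$ was copied from $S(d,b) \in \Bmc_i$, no outgoing edges from $b'$ are added, and the one edge $r(a,b)$ removed from $\Bmc_i$ in forming $\Bmc_i'$ is still present in $\Bmc_i$ so it poses no obstruction to mapping into $\Bmc_i$. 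Composing with the inclusion $\Bmc_{i+1} \subseteq \Bmc_i'$ yields a homomorphism $g \colon \Bmc_{i+1} \to \Bmc_i$ that preserves $\bar b_n$, since $b \notin \bar b_n$ is part of the Split condition and $b'$ is fresh.

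Given this, Points~1 and~2 essentially copy the arguments in Lemma~\ref{lem:expansioncorrecttwo}. For Point~1 I would induct on $i$: the base $i = n$ is Point~1 of Lemma~\ref{lem:expansioncorrecttwo}, and in the step, $\Bmc_{i+1},\Omc \models A(\widehat a)$ yields $\Bmc_i',\Omc \models A(\widehat a)$ by monotonicity, hence $\Bmc_i,\Omc \models A(g(\widehat a))$ via $g$ and Lemma~\ref{lem:simlem}, so the induction hypothesis gives $A(g(\widehat a)) \in \Bmc_i$; inspecting the Split construction then shows that the corresponding assertion on $\widehat a$ lies in $\Bmc_i'$ and survives minimization as long as $\widehat a \in \mn{ind}(\Bmc_{i+1})$, because Minimize removes concept assertions only by deleting their individual entirely. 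For Point~2, I would argue exactly as before: using the $\Omc$-saturation provided by Point~1, the subtree of $\Umc_{\Bmc_i,\Omc}$ rooted at any $b \neq a$ is unchanged after deleting the assertions involving $a$, so an $a \in \mn{ind}(\Bmc_i) \setminus \mn{img}(h^*)$ would witness that Minimize could have removed $a$, contradicting that $\Bmc_i$ has already been minimized. Point~3 is immediate from the $g$ constructed above.

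The main obstacle is Point~4, where the Split proof diverges from the Expand proof. Surjectivity of $g$ follows by the same composition argument as in Claim~2 of Lemma~\ref{lem:expansioncorrecttwo}: non-surjectivity would extend a witness homomorphism for $q_T$ in $\Umc_{\Bmc_{i+1},\Omc}$ (which exists by Lemma~\ref{lem:splitcorrect}) into $\Umc_{\Bmc_i,\Omc}$ with an image missing some individual, violating Point~2. For non-injectivity, I would first transfer the Claim~3 analogue: if $g$ is injective, then $r(u,v) \in \Bmc_i$ must imply $r(g^-(u), g^-(v)) \in \Bmc_{i+1}$, since otherwise $g$ would be a homomorphism from $\Bmc_{i+1}$ into $\Bmc_i \setminus \{r(u,v)\}$ and would lift to a homomorphism from $q_T$ into $\Umc_{\Bmc_i \setminus \{r(u,v)\},\Omc}$, contradicting that Minimize produced $\Bmc_i$. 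An injective $g$ is also surjective, so exactly one of $b, b'$ lies in $\mn{ind}(\Bmc_{i+1})$. If $b$ survives and $b'$ does not, then $g$ is the identity, and the Claim~3 analogue forces $r(a,b) \in \Bmc_{i+1} \subseteq \Bmc_i'$, contradicting that $r(a,b)$ is exactly the assertion removed by the Split step. If $b'$ survives and $b$ does not, then $g(b') = b$ and the Claim~3 analogue forces $r(c,b') \in \Bmc_{i+1} \subseteq \Bmc_i'$, contradicting that $r(c,b)$ is the one assertion to $b$ whose copy is explicitly omitted from $\Bmc_i'$. Both cases collapse, so $g$ is not injective, which combined with surjectivity gives the strict inequality $|\mn{ind}(\Bmc_{i+1})| > |\mn{ind}(\Bmc_i)|$.
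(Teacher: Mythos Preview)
Your proposal is correct and tracks the paper's proof closely for Points~1--3 and for surjectivity in Point~4. The one place you diverge is the non-injectivity argument: you carry over the Claim~3 analogue from Lemma~\ref{lem:expansioncorrecttwo} (an injective $g$ reflects role assertions) and then read off a contradiction from the explicit shape of $\Bmc_i'$ (namely that $r(a,b)$ is absent and $r(c,b')$ was never added), whereas the paper instead argues that the removal of $b'$ (respectively $b$) by Minimize already yields a homomorphism from $q_T$ into $\Umc_{\Bmc_i,\Omc}$ that avoids the edge $r(c,b)$ (respectively $r(a,b)$), contradicting minimality of $\Bmc_i$. Both routes land on the same contradiction with operation~(2) of Minimize; yours is arguably the more uniform transfer of the Expand proof.
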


\begin{proof}
  We show Point~1 by induction over $i$. For $i = n$, this follows
  from Lemma~\ref{lem:expansioncorrecttwo} Point~1. For the induction
  step, suppose $\Bmc_{i + 1}, \Omc \models A(\hat a)$ for some
  $\widehat a \in \mn{ind}(\Bmc_{i + 1})$, with $\widehat a$ either $a
  \in \mn{ind}(\Bmc_i)$ or $a'$ for some $a \in \mn{ind}(\Bmc_i)$. Let
  $\Bmc_i'$ be the result of applying the Split step, but not yet the
  Minimize step. Then $\Bmc_i', \Omc \models A(\hat a)$ by
  monotonicity. By Lemma~\ref{lem:auxunfolding} we have $\Bmc_i',
  \widehat a \preceq \Bmc_i, a$ and thus $\Bmc_i, \Omc \models A(a)$ by
  Lemma~\ref{lem:simlem}.
  By the induction hypothesis, we have $A(a) \in \Bmc_i$ and the
  definition of the Split step ensures $A(a) \in \Bmc_{i + 1}$.

  \smallskip For Point~2, let $h$ be a homomorphism from $q_T$ to
  $\Umc_{\Bmc_i, \Omc}$ with $h(\bar x) = \bar b_n$, and suppose that
  there is an $a \in \mn{ind}(\Bmc_i)$ that is not in $\mn{img}(h^*)$.
  Let $\Bmc'$ be the result of removing from $\Bmc_i$ all assertions
  that involve $a$. We show that 
  \begin{itemize}
    \item[($*$)] $h$ is a homomorphism from $q_T$ to $\Umc_{\Bmc',
      \Omc}$
  \end{itemize} 
  which witnesses that $\Bmc', \Omc \models q_T(\bar b_n)$. Hence $a$
  is dropped during the Minimize step, in contradiction to $a \in
  \mn{ind}(\Bmc_i)$. To see that ($*$) holds, first note that for all
  $b, b' \in \mn{ind}(\Bmc_i) \setminus \{a\}$, the following holds by
  Point~1 and construction of universal models: 
  \begin{enumerate} 

    \item $A(b) \in \Umc_{\Bmc_i, \Omc}$ iff $A(b) \in \Umc_{\Bmc',
      \Omc}$; 

    \item $r(b, b') \in \Umc_{\Bmc_i, \Omc}$ iff $r(b, b') \in
      \Umc_{\Bmc', \Omc}$.  

  \end{enumerate} 
  From Point~1, in turn it follows that the subtree in $\Umc_{\Bmc_i,
  \Omc}$ below each $b \in \mn{ind}(\Bmc_i) \setminus \{ a\}$ is
  identical to the subtree in $\Umc_{\Bmc', \Omc}$ below $b$. In
  summary, ($*$) follows.

  \smallskip For Points~3 and~4, recall that $\Bmc_{i + 1}$ is the
  result of applying the Split and Minimize step to $\Bmc_i$.  Let $b
  \in \mn{ind}(\Bmc_i)$ be the individual that is duplicated by the
  Split step and let $b'$ be the fresh individual. We define a mapping
  $h$ from $\mn{ind}(\Bmc_{i + 1})$ to $\mn{ind}(\Bmc_i)$ by taking
  $h(a) = a$ for all $a \in \mn{ind}(\Bmc_i) \cap \mn{ind}(\Bmc_{i
  +1})$ and $h(b') = b$ if $b' \in \mn{ind}(\Bmc_{i + 1})$, that is,
  $b'$ was not removed during minimization. Clearly, we have
  $h(\bar b_n) = b_n$. To establish Point~3, we
  argue that $h$ is a homomorphism. First, let $A(a) \in \Bmc_{i +
  1}$. By construction of $\Bmc_{i + 1}$, we also have $A(h(a)) \in
  \Bmc_i$. Now, let $r(a,c) \in \Bmc_{i + 1}$. By definition of the
  Split step, also $r(h(a), h(c)) \in \Bmc_{i}$. 

  \smallskip
  For Point~4, it suffices to verify that $h$ is
  surjective but not injective.

  \smallskip \noindent \textit{Claim 1.} $h$ is surjective.

  \smallskip \noindent \textit{Proof of Claim 1.} Assume to the
  contrary that there is a $a \in \mn{ind}(\Bmc_i)$ such that $a
  \notin \mn{img}(h)$. By Lemma~\ref{lem:universal-homomorph} Point~1, $h$ can
  be extended to a homomorphism $h_1$ from $\Umc_{\Bmc_{i + 1}, \Omc}$
  to $\Umc_{\Bmc_i, \Omc}$ with $h_1(\bar b_n) = \bar b_n$, such that
  $\mn{img}(h_1^*) = \mn{img}(h)$.  Composing $h_1$ and a homomorphism
  $h_2$ from $q_T$ to $\Umc_{\Bmc_{i + 1}, \Omc}$ with $h_2(\bar x) =
  \bar b_n$ (which exists by Lemma~\ref{lem:splitcorrect}) yields a
  homomorphism $h_3$ from $q_T$ to $\Umc_{\Bmc_i, \Omc}$ with
  $h_3(\bar x) = \bar b_n$ such that $a \notin \mn{img}(h_3^*)$, in
  contradiction to Point~2.

  \smallskip \noindent \textit{Claim 2.} $h$ is not injective.

  \smallskip \noindent \textit{Proof of Claim 2.} Assume to the
  contrary that $h$ is injective. Then at most one of $b \in
  \mn{ind}(\Bmc_{i + 1})$ or $b' \in \mn{ind}(\Bmc_{i + 1})$.  Again,
  $h$ can be extended to a homomorphism $h_1$ from $\Umc_{\Bmc_{i +
  1}, \Omc}$ to $\Umc_{\Bmc_i, \Omc}$ by
  Lemma~\ref{lem:universal-homomorph} Point~1.  Composing $h_1$ and a
  homomorphism $h_2$ from $q_T$ to $\Umc_{\Bmc_{i + 1}, \Omc}$ with
  $h_2(\bar x)=\bar b_n$ (exists by Lemma~\ref{lem:splitcorrect})
  yields a homomorphism $g$ from $q_T$ to $\Umc_{\Bmc_i, \Omc}$ with
  $g(\bar x)=\bar b_n$.  Recall that there is a symmetry $r(a, b),
  r(c, b) \in \Bmc_i$.  If neither $b$ or $b'$ are in
  $\mn{ind}(\Bmc_{i + 1})$, then $b \notin \mn{img}(g^*)$, in
  contradiction to Point~2.  If $b \in \mn{ind}(\Bmc_{i + 1})$, then
  the Minimize step removed $b'$ in the construction of $\Bmc_{i +
  1}$.  It follows that there is a homomorphism $g'$ from $q_T$ to
  $\Umc_{\Bmc_i, \Omc}$ with $g'(\bar x)=\bar b_n$ and such that there
  is no $r(x, y) \in q_T$ with $r(g'(x), g'(y)) = r(c, b)$. This
  contradicts $\Bmc_i$ being the result of the Minimize step.  The
  case for $b' \in \mn{ind}(\Bmc_{i + 1})$ is symmetric.  Thus both
  $b$ and $b'$ are in $\mn{ind}(\Bmc_{i + 1})$.
\end{proof}

We analyze the time requirement of the \mn{refine} subroutine.

\begin{lemma} \label{lem:refine-termination}
  $\mn{refine}(q(\bar x))$ can be computed in time polynomial in
  $||q_T|| + ||q||$ (but exponential in $\mn{ar}$) using membership
  queries.
\end{lemma}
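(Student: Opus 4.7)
The plan is to bound the per-iteration cost and the number of iterations of the two phases of \mn{refine} separately, with the only non-polynomial contribution being the factor $2^{\mn{ar}}$ from the brute-force search over $\tau_i$ inside the Expand step.

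The main invariant I would establish first is that every ABox $\Bmc_i$ appearing during the run satisfies $|\mn{ind}(\Bmc_i)|\le |\mn{var}(q_T)|$. Indeed, Lemmas~\ref{lem:expansioncorrect}(1) and~\ref{lem:splitcorrect} guarantee $\Bmc_i,\Omc\models q_T(\bar b_i)$ (respectively $q_T(\bar b_n)$ in the Split phase), so by Lemmas~\ref{lem:univers} and~\ref{lem:homlem} there is a homomorphism $h$ from $q_T$ to $\Umc_{\Bmc_i,\Omc}$ witnessing this answer, and Lemmas~\ref{lem:expansioncorrecttwo}(2) resp.\ \ref{lem:splitcorrecttwo}(2) then force $\mn{ind}(\Bmc_i)\subseteq \mn{img}(h^*)$. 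Since only symbols from $\mn{sig}(q)\cup\mn{sig}(\Omc)$ ever appear in any $\Bmc_i$, it follows that $\|\Bmc_i\|$, and hence the size of every membership query ever issued by \mn{refine}, is polynomial in $\|q_T\|+\|q\|+\|\Omc\|$.

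I would then combine this invariant with Lemmas~\ref{lem:expansioncorrecttwo}(4) and~\ref{lem:splitcorrecttwo}(4), which show that $|\mn{ind}(\Bmc_{i+1})|>|\mn{ind}(\Bmc_i)|$, to bound the number of iterations of each of the Expand and Split phases by $|\mn{var}(q_T)|$. Within a single Expand iteration, finding a chordless cycle of length $>n_{\max}$, constructing $\Bmc'_i$, and executing \mn{minimize} are all polynomial-time operations that issue polynomially many polynomial-size membership queries; the only nontrivial contribution to the cost is the enumeration of $\tau_i$ in order to identify $\bar b_{i+1}$, which issues at most $|\tau_i|\le 2^{\mn{ar}}$ membership queries. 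A Split iteration works similarly but does not require any exponential search, as the tuple $\bar b_n$ is already fixed.

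Combining polynomial per-iteration cost with $O(|\mn{var}(q_T)|)$ iterations yields the claimed time bound. The one point that really requires care is the invariant $|\mn{ind}(\Bmc_i)|\le |\mn{var}(q_T)|$: without it, the cycle-doubling in Expand and the vertex-duplication in Split could blow up the ABoxes, and in turn the size of all subsequent membership queries, defeating the whole argument. This is exactly what the coverage-and-size-monotonicity statements in Lemmas~\ref{lem:expansioncorrecttwo} and~\ref{lem:splitcorrecttwo} buy us, so once those are in hand no further work is needed.
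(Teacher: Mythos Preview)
Your proposal is correct and matches the paper's own proof essentially step for step: the paper derives the invariant $|\mn{ind}(\Bmc_i)|\le |\mn{var}(q_T)|$ from Point~2 of Lemmas~\ref{lem:expansioncorrecttwo} and~\ref{lem:splitcorrecttwo}, combines it with Point~4 of the same lemmas to bound each phase by $|\mn{var}(q_T)|$ iterations, and then bounds the per-iteration work (Minimize, cycle detection, the $2^{\mn{ar}}$ search over $\tau_i$, Split) polynomially in $|\mn{ind}(\Bmc_i)|$ and $|\mn{sig}(q)|$. The only cosmetic difference is that the paper notes $\mn{sig}(\Bmc_i)\subseteq\mn{sig}(q)$ directly (since none of the steps introduce new symbols), so it does not need to mention $\|\Omc\|$ separately.
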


\begin{proof}
  Let $(\Bmc_1,\bar b_1), (\Bmc_2,\bar b_2),\dots$ be the sequence
  constructed by the Expand/Minimize phase.  By
  Lemma~\ref{lem:expansioncorrecttwo} Point~2,
  $|\mn{ind}(\Bmc_i)| \leq |\mn{var}(q_T)|$ for all $i \geq 1$.  By
  Lemma~\ref{lem:expansioncorrecttwo} Point~4, the number of
  individuals in the ABoxes $\Bmc_i$ increases in every step. Thus the
  number $n$ of steps is at most $|\mn{var}(q_T)|$. Now let
  $\Bmc_n,\Bmc_{n+1},\dots$ be the sequence constructed by the
  Split/Minimize phase. We can argue in the same way, using
  Lemma~\ref{lem:splitcorrecttwo} Point~2 and
  Lemma~\ref{lem:splitcorrecttwo} Point~4 that the number $m-n$ of
  steps is at most $|\mn{var}(q_T)|$.

  It remains to show that every step runs in polynomial time. For
  this, let $\Omega=\mn{sig}(q)$ be the set of concept and role names
  that occur in the input query $q$. Clearly, $|\Omega|\leq ||q||$.
  Note that none of the applied operations introduces new concept or
  role names, that is, $\mn{sig}(\Bmc_i)\subseteq \Omega$, for all
  $i$.

  For Minimize this is the case, because at most $|\mn{ind}(\Bmc_i)|$
  membership queries are posed in operation~(1) and at most
  $|\Omega|\cdot |\mn{ind}(\Bmc_i)|^2$ membership queries are posed
  in operation~(2).

  For Expand, note that chordless cycles of length $n>n_{\text{max}}$
  can be identified in time polynomial in~$|\mn{ind}(\Bmc)|\leq
  |\mn{var}(q_T)|$. We then need at
  most $2^\mn{ar}$ membership queries to identify the right tuple
  $\bar b_{i+1}\in\tau_i$.

  Finally, for Split, observe that there are at most
  $|\Omega|\cdot |\mn{ind}(\Bmc_i)|^3$ possible triples
  $r(a,b),r(c,b)\in\Bmc_i$. Thus, at most as many membership queries
  are posed. 
%   $|\mn{ind}(\Bmc_{i + 1})| > |\mn{ind}(\Bmc_i)|$ for
%   all $i > 1$.  Thus $m \leq |\mn{var}(q_T)$.  
% 
% 
%   Moreover
%   $|\mn{ind}(\Bmc_1)| = \mn{var}(q)$.  Therefore each of the Expand,
%   Minimize or Split steps runs in time polynomial in $||q_T|| +
%   ||q||$.
% 
%   Next, it follows from Lemma~\ref{lem:expansioncorrecttwo} Point~4
%   and Lemma~\ref{lem:splitcorrecttwo} Point~4 that
%   $|\mn{ind}(\Bmc_{i + 1})| > |\mn{ind}(\Bmc_i)|$ for all $i > 1$.
%   Thus $m \leq |\mn{var}(q_T)$.  
% 
\end{proof}

Before we show that the result of \mn{refine} is always in the desired
class, we give an example that demonstrates the necessity of the Split
step.  Let
\[q_T(x_1, x_2) \gets A(x_1) \land B(x_2) \land
r(x_1, x_1') \land r(x_2, x_2')\] 
be the target query and let 
\[q(y_1, y_2) \gets A(y_1) \land B(y_2) \land r(y_1, y)
\land r(y_2, y)\] 
be the input to \mn{refine}. Then the result of the Expand and
Minimize phase is $q(y_1,y_2)$ which is not symmetry-free. Thus, the
Split step is needed.

\begin{lemma} \label{lem:expansion-queryclass} 
  If $q_T(\bar y)\in
  \Qmc$ for $\Qmc \in \{ \text{ELQ}, \text{ELIQ}^{\text{sf}},
  \text{CQ}^{\text{csf}}_w\mid w\geq 0 \}$, then $\mn{refine}(q(\bar
  x))\in \Qmc$, for every CQ $q(\bar x)$.
\end{lemma}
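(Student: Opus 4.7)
The plan is to verify, for each target class $\Qmc \in \{\text{ELQ}, \text{ELIQ}^{\text{sf}}, \text{CQ}^{\text{csf}}_w\}$, every defining property of $\Qmc$ for the output $(\Bmc_m, \bar b_n)$ of $\mn{refine}(q(\bar x))$. First the easy observations: the arity is preserved throughout refinement, since $|\bar b_n|=|\bar a|=\mn{ar}$ equals $1$ (for ELQ and ELIQ$^{\text{sf}}$) or is at most $w$ (for CQ$^{\text{csf}}_w$). Moreover no step of \mn{refine} introduces assertions containing inverse roles (all manipulations act on ABox-style atoms $r(a,b)$ with $r$ a role name), so in the ELQ case the output is automatically free of inverse roles.

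Next I would establish chordality/acyclicity from the Expand/Minimize phase. Because Expand is applied exhaustively and each application chooses a chordless cycle of length $> n_{\max}$ (with the additional CQ$^{\text{csf}}_w$ constraint that the cycle not consist solely of answer variables), $\Bmc_n$ satisfies: every chordless cycle has length $\leq n_{\max}$, or consists only of answer variables. For ELQ and ELIQ$^{\text{sf}}$ where $n_{\max}=0$ this means $\Bmc_n$ is acyclic; for CQ$^{\text{csf}}_w$ it yields the chordality condition directly (any cycle of length $\geq 4$ containing a quantified variable either has a chord or would still be a chordless cycle subject to Expand). The subsequent Split/Minimize phase cannot introduce new cycles: Split duplicates only a vertex $b$ explicitly required not to lie on any cycle, so the duplicate $b'$ inherits only non-cycle edges, and no cycle passes through it either.

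The core step is symmetry-freeness for $\Qmc \in \{\text{ELIQ}^{\text{sf}}, \text{CQ}^{\text{csf}}_w\}$. I would prove by contradiction that no atoms $r(a,b), r(c,b)$ with $a\neq c$, $b\notin\bar b_n$, and neither atom on a cycle remain in $\Bmc_m$. Suppose such a pair exists; then Split would trigger on it, and I would show its membership query succeeds, contradicting exhaustion. The success amounts to showing that, given a homomorphism $h\colon q_T\to \Umc_{\Bmc_m,\Omc}$ with $h(\bar x)=\bar b_n$, one can construct a homomorphism $h'\colon q_T\to \Umc_{\Bmc_m',\Omc}$ by redirecting variables mapped to $b$ via atoms whose image uses $r(a,b)$ to the duplicate $b'$. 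The crucial invariant: because $b\notin\bar b_n$, any $q_T$-variable with $h^*$-image $b$ is quantified; so if $q_T$ forced simultaneous use of $r(a,b)$ and $r(c,b)$ via atoms $r(x_1,y), r(x_2,y)$ with $h(x_1)=a, h(x_2)=c, h(y)=b$, symmetry-freeness of $q_T$ would force one such atom onto a $q_T$-cycle or a self-loop on $\{x_1,x_2,y\}$, either of which transports via $h$ into a cycle in $\Bmc_m$ through $r(a,b)$ or $r(c,b)$, contradicting the Split hypothesis.

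Finally, for ELQ and ELIQ$^{\text{sf}}$ I would combine acyclicity with connectedness through the unique answer variable (ensured by Minimize, which removes any component not containing it) to conclude that the output is a ditree rooted at the answer variable, hence in ELIQ$^{\text{sf}}$ once symmetry-freeness is added; combined with the no-inverse-roles observation from the first paragraph, this yields membership in ELQ as well. The main obstacle I expect is the symmetry-freeness argument in the third paragraph: carefully tracking how the homomorphism $h$ must be modified to $h'$ and showing that the problematic case (simultaneous use of $r(a,b)$ and $r(c,b)$) is ruled out by the delicate interplay of $q_T$'s symmetry-freeness with the non-cycle hypothesis of Split.
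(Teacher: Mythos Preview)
Your overall approach matches the paper's: chordality/acyclicity from termination of Expand, symmetry-freeness from exhaustion of Split combined with the structure of $q_T$, and tree shape for ELQ/ELIQ$^{\text{sf}}$ from the minimization invariant. Two points, however, are genuine gaps rather than details to be filled in.

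First, your ELQ argument does not work. The observation that ``no step of \mn{refine} introduces assertions containing inverse roles'' is vacuous: ABox atoms are always of the form $r(a,b)$ with $r$ a role name, whether the query is an ELQ or an ELIQ. What distinguishes ELQ from ELIQ$^{\text{sf}}$ is not the absence of inverse-role atoms but whether the result is a \emph{ditree} (all edges directed away from the answer variable) rather than just a tree in the undirected sense. Acyclicity plus connectedness gives you only the latter. The paper obtains the ditree shape from the minimization invariant (Lemma~\ref{lem:expansioncorrecttwo}, Point~2): every individual of the output lies in $\mn{img}(h^*)$ for a homomorphism $h$ from $q_T$, and since $q_T$ is a ditree, every such individual is reachable from the root by a \emph{directed} $r/s$-path. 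Combined with acyclicity this forces the ditree structure. Your sketch never invokes this directionality.

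Second, the symmetry-freeness argument needs an ingredient you do not mention: chordality of $q_T$. You claim that if $r(x_1,y)$ lies on a $q_T$-cycle then this cycle ``transports via $h$ into a cycle in $\Bmc_m$ through $r(a,b)$''. But a long cycle can collapse under $h$ without leaving $r(a,b)$ on any cycle of $\Bmc_m$; it may degenerate to a self-loop at some other vertex. The paper first uses chordality of $q_T$ to pass to a length-three cycle $r(x_1,y),\,S_1(y,y_3),\,S_2(y_3,x_1)$ and then does a case split on $h(y_3)$: if $h(y_3)\notin\{a,b\}$ one gets a length-three cycle through $r(a,b)$, and if $h(y_3)\in\{a,b\}$ one gets a self-loop at $a$ or $b$. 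For this to yield a contradiction, the hypothesis you assume for contradiction must also include ``no self-loop $s(z,z)$ for $z\in\{a,b,c\}$'', which you omitted; without it the self-loop branches of the case analysis are not contradictory. You correctly flagged this part as delicate, but the missing chordality step is the specific idea needed.
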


\begin{proof}
  Let $p(\bar y) = \mn{refine}(q(\bar x))$.  Assume that there is a
  symmetry $r(x_1, x), r(x_2, x)$, $x_1\neq x_2$ in $p(\bar y)$ such
  that $x \notin \bar y$, none of the atoms occurs on a cycle, and
  there is no atom $s(z,z)$ for any $z\in\{x,x_1,x_2\}$. Note that
  $x_i\neq x$ due to the last condition, for $i\in\{1,2\}$.

  Recall that the query $p(\bar y)$ is the result of exhaustively
  applying the steps Split and Minimize.  Thus, for every homomorphism
  $h$ from $q_T$ to $\Umc_{\Amc_{p}, \Omc}$ with $h(\bar x) = \bar y$,
  there must be atoms $r(y_1, y), r(y_2, y)$ in $q_T$ such that $h(y) = x, h(y_1) = x_1, h(y_2) = x_2$.

  It follows that $y$ is not an answer variable of $q_T$. Furthermore
  there is no atom $s(y', y')$ for $y' \in \{y, y_1, y_2\}$ in $q_T$
  since otherwise there must be an atom $s(h(y'), h(y'))$ in $p$.
  Since $q_T$ is in CQ$^{\text{csf}}$ for all choices of $\Qmc$, at
  least one of the atoms $r(y_1, y), r(y_2, y)$ must occur on a cycle.
  Assume that $r(y_1, y)$ occurs on a cycle in $q_T$, the case for
  $r(y_2, y)$ is similar. Since $q_T$ is chordal, $r(y_1, y)$ must
  also be part of a cycle $r(y_1, y), S_1(y, y_3), S_2(y_3, y_1)$ of
  length three. Consider the atoms $r(h(y_1),h(y))=r(x_1,x)$,
  $S_1(h(y),h(y_3))=S_1(x,h(y_3))$, and
  $S_2(h(y_3),h(y_1))=S_2(h(y_3),x_1)$ which occur in $p$. We
  distinguish cases. 
  \begin{itemize}

    \item If $h(y_3)\notin\{x,x_1\}$, then $r(x_1,x)$,
      $S_1(x,h(y_3))$, $S_2(h(y_3),x_1)$ is a cycle of length three in
      $p$ which contains $r(x_1,x)$, contradicting our initial
      assumption.

    \item If $h(y_3)=x$, then $S_1(x,x)$ is an atom in $p$,
      contradicting our initial assumption.

    \item If $h(y_3)=x_1$, then $S_2(x_1,x_1)$ is an atom in $p$,
      contradicting our initial assumption.

  \end{itemize}
  Thus, $p$ is symmetry-free.
% 
%   but $r(h(y_1), h(y)) = r(x_1, x)$ does not occur on a
%   cycle in $p$. This leaves two possibilities. First, $h(y_3) = h(y_1)
%   = x_1$, then $S_1 = r^-$ and $p$ must contain $s_2(y_1, y_1)$.
%   Second $h(y_3) = h(y) = x$, then $S_2 = r^-$ and $p$ must contain
%   $s_1(x, x)$.  It follows that $p(\bar y)$ is symmetry-free.

  It remains to show that $p(\bar y)$ is chordal if $q_T$ is, an ELQ
  if $q_T$ is, and an ELIQ if $q_T$ is. Let $p'(\bar z)$ be the
  intermediate query obtained after the first phase of Expand and
  Minimize. By non-applicability of Expand, there is no chordless
  cycle $R_0(x_0, x_1), \ldots, R_{n - 1}(x_{n - 1}, x_n)$ in $p'(\bar
  z)$ of length $n > n_\text{max}$ and in case of $\Qmc =
  \text{CQ}^\text{csf}$, $\{ x_0, \ldots, x_{n - 1}\} \not \subseteq
  \bar z$. For $\Qmc = \text{CQ}^\text{csf}$ this means that every
  cycle in $p'(\bar z)$ of length at least four that contains at least
  one quantified variable has a chord.  For $\Qmc \in \{ \text{ELQ},
  \text{ELIQ$^\text{sf}$} \}$, with $n_\text{max} = 0$, this means
  that $p'(\bar z)$ does not contain any cycle at all. By
  Lemma~\ref{lem:expansioncorrecttwo} Point~2, we have $\mn{var}(p')
  \subseteq \mn{img}(h^*)$, for all homomorphisms $h$ from $q_T$ to
  $\Umc_{\Amc_{p'}, \Omc}$ with $h(\bar x) = \bar z$. Since
  $q_T$ is connected and tree-shaped, $p'$ must be connected and
  tree-shaped.  Moreover in the case of $q_T \in \text{ELQ}$, all
  variables $a \in \mn{img}(h^*)$ are reachable by a path $r_0(a_0,
  a_1), \ldots, r_{n-1}(a_{n - 1}, a)$ from the root $a_0$, thus $p$
  is  ditree-shaped in this case. Thus $p'(\bar z) \in \Qmc$ in all
  cases.

  It remains to observe that the Split operation preserves these
  properties, thus $p(\bar y)$ is as required. 
\end{proof}

\begin{lemma} \label{lem:progress}
  Let \Omc be an \ELdr-ontology. 
  Let $\Amc_1$ and $\Amc_2$ be ABoxes and $\bar a_i$, $i\in\{1,2\}$, be
  tuples of individuals from $\Amc_i$ of the same length. Moreover, let
  $q(\bar z)$ be $\Cmc^3_{\Amc_1,\Omc}\times \Cmc^3_{\Amc_2,\Omc}$
  viewed as CQ with answer variables $\bar z=\bar a_1\otimes \bar a_2$ and 
  let $p(\bar x)$ be the result of $\mn{refine}(q(\bar z))$ with
  respect to some target query $q_T(\bar y)$.
  Then there is a homomorphism $h_i$ from $p(\bar x)$ to 
  $\Umc_{\Amc_i,\Omc}$ with $h_i(\bar x)=(\bar a_i)$, for $i\in\{1,2\}$.
\end{lemma}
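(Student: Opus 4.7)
The plan is to work inside the compact model $\Cmc^3_{\Amc_i,\Omc}$ for as long as possible and only transfer to $\Umc_{\Amc_i,\Omc}$ at the very end via Lemma~\ref{lem:compactunivers}. The starting point is the projection homomorphism $\pi_i$ from $\Cmc^3_{\Amc_1,\Omc}\times\Cmc^3_{\Amc_2,\Omc}$ to $\Cmc^3_{\Amc_i,\Omc}$ supplied by Lemma~\ref{lem:prodfund} Point~1, which satisfies $\pi_i(\bar z)=\bar a_i$ by construction of $\bar z=\bar a_1\otimes\bar a_2$. Viewing $q(\bar z)$ as the ABox $\Amc_q$ that is precisely this product, $\pi_i$ is a homomorphism from $\Amc_q$ to $\Cmc^3_{\Amc_i,\Omc}$ sending $\bar z$ to $\bar a_i$.

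The core of the proof is an induction on the steps of $\mn{refine}$ showing that every intermediate pair $(\Bmc_j,\bar b_j)$ admits a homomorphism $h^i_j$ from $\Bmc_j$ to $\Cmc^3_{\Amc_i,\Omc}$ with $h^i_j(\bar b_j)=\bar a_i$. The base case $\Bmc_1=\mn{minimize}(\Amc_q,\bar z)$ is settled by restricting $\pi_i$. For an Expand step, I would extend the current $h^i_j$ to the fresh copies by setting $h^i_j(a_k'):=h^i_j(a_k)$; inspecting the five types of assertions listed in the Expand step shows that each added assertion is the image under the identification $a_k'\mapsto a_k$ of an assertion already present in $\Bmc_j$, so the extension remains a homomorphism into $\Cmc^3_{\Amc_i,\Omc}$. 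Because copies are mapped to the same target as their originals, this moreover yields $h^i_j(\bar b_{j+1})=\bar a_i$ for \emph{every} $\bar b_{j+1}\in\tau_j$, regardless of which tuple the algorithm picks. The Split step is handled analogously via $h^i_j(b'):=h^i_j(b)$, and here $\bar b_n$ is untouched because $b\notin\bar b_n$ is enforced by the Split precondition. Each subsequent $\mn{minimize}$ only removes assertions and individuals and preserves the answer tuple, so restriction preserves the homomorphism property.

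Once the induction is complete we obtain a homomorphism from $\Amc_p=\Bmc_m$ to $\Cmc^3_{\Amc_i,\Omc}$ sending $\bar x=\bar b_n$ to $\bar a_i$; equivalently, $\Cmc^3_{\Amc_i,\Omc}\models p(\bar a_i)$. Since the surrounding learning algorithm applies $\mn{refine}$ with a target $q_T\in\Qmc$, Lemma~\ref{lem:expansion-queryclass} gives $p\in\Qmc\subseteq\mathrm{CQ}^{\mathrm{csf}}$, so Lemma~\ref{lem:compactunivers} converts the above into $\Amc_i,\Omc\models p(\bar a_i)$. Lemmas~\ref{lem:univers} and~\ref{lem:homlem} then deliver the desired homomorphism $h_i$ from $p(\bar x)$ to $\Umc_{\Amc_i,\Omc}$ with $h_i(\bar x)=\bar a_i$.

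The main obstacle is conceptual rather than computational: it is tempting to try to build a homomorphism into $\Umc_{\Amc_i,\Omc}$ directly step by step, but $\Cmc^3_{\Amc_i,\Omc}$ does not in general map homomorphically into $\Umc_{\Amc_i,\Omc}$ (only the other direction holds), so a naive inductive construction along the refine steps into $\Umc_{\Amc_i,\Omc}$ breaks down, in particular when Expand creates edges in the 3-compact model that reflect spurious cycles absent in $\Umc_{\Amc_i,\Omc}$. Staying inside $\Cmc^3$ throughout, and paying only at the end via Lemma~\ref{lem:compactunivers}, which crucially requires $p\in\mathrm{CQ}^{\mathrm{csf}}$, is what makes the argument go through.
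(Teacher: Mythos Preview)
Your proposal is correct and follows essentially the same route as the paper. The paper compresses your explicit step-by-step induction by invoking the already-proven backward homomorphisms $\Bmc_{j+1},\bar b_{j+1}\to\Bmc_j,\bar b_j$ from Point~3 of Lemmas~\ref{lem:expansioncorrecttwo} and~\ref{lem:splitcorrecttwo} and composing them to obtain a single homomorphism from $p$ back into the product; your direct extension $a_k'\mapsto h^i_j(a_k)$ is exactly the map those lemmas define, so the two arguments coincide, and both finish identically via $p\in\text{CQ}^{\text{csf}}$ and Lemma~\ref{lem:compactunivers}.
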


\begin{proof}
    By Lemma~\ref{lem:expansioncorrecttwo} Point~3, there is a homomorphism $h$
    from $p$ to $\Cmc^3_{\Amc_1,\Omc}\times \Cmc^3_{\Amc_2,\Omc}$ with $h(\bar
    x) = \bar z$ and it is a property of products that there are homomorphisms
    $g_i$ from $\Cmc^3_{\Amc_1,\Omc}\times \Cmc^3_{\Amc_2,\Omc}$ to
    $\Cmc^3_{\Amc_i, \Omc}$ with $g_i(\bar z) = \bar a_i$.
    Composing $h$ and $g_i$ yields homomorphisms $g_i'$ from $p$ to
    $\Cmc^3_{\Amc_i, \Omc}$ with $g_i'(\bar x) = \bar a_i$.
    It follows then, since $p(\bar x) \in \text{CQ}^\text{csf}$ by
    Lemma~\ref{lem:compactunivers} that there are also homomorphisms $h_i$ from
    $p(\bar x)$ to $\Umc_{\Amc_i, \Omc}$ with $h_i(\bar x) = a_i$, as required.
\end{proof}

\section{Proofs for Section~\ref{sect:refine}}

We start with describing the second version of the \mn{refine}
subroutine in full detail. It gets as input a CQ $q'_H(\bar x')$ such
that $q'_H \subseteq_\Omc q_T$ and produces a CQ $q_H(\bar x)$ such
that $q'_H \subseteq_\Omc q_H \subseteq_\Omc q_T$ and $|\mn{var}(q_H)| \leq |\mn{var}(q_T)|$.
The initial call to \mn{refine} in Algorithm~\ref{alg:abox} is
dropped when this version of \mn{refine} is used.
Thus, the argument $q'_H(\bar x')$ is always the
product of two 3-compact models. 
% In contrast to the previous version of \mn{refine}, this is
% exploited in the version described here.  
For notational convenience, we prefer to view $q'_H(\bar x')$ as a
pair $(\Amc,\bar a)$ where $\Amc=\Cmc^3_{\Amc_1, \Omc} \times
\Cmc^3_{\Amc_2, \Omc}$ and $\bar a = \bar a_1 \otimes \bar a_2$. We
know that $\Amc_i,\Omc \models q_T(\bar a_i)$ for $i \in \{1,2\}$.

As in the first version of \mn{refine}, minimization is a crucial
ingredient. However, we minimize in a slightly different
way here. 

\medskip\noindent\textbf{Minimize.} Let \Bmc be an ABox that
contains all individuals from $\bar a_1 \otimes \bar a_2$.
% and $\bar b$ a tuple such that $\Bmc,\Omc \models q_T(\bar b)$.  
Then $\mn{minimize}(\Bmc)$ is the ABox $\Bmc'$ obtained from $\Bmc$ by
exhaustively applying the following operation: choose a
$c \in \mn{ind}(\Bmc) \setminus (\bar a_1 \otimes \bar a_2)$ and remove all assertions that involve
$c$. Use a membership query to check whether, for the
resulting ABox $\Bmc'$,
$\Bmc',\Omc \models q_T(\bar a_1 \otimes \bar a_2)$. If this is the
case, proceed with $\Bmc'$ in place of \Bmc.

\smallskip
\noindent

% Think of
% $\Amc_1$ as a hypothesis CQ to be refined with answer variables
% $\bar a_1$, and of $\Amc_2,\bar a_2$ as a (positive) counterexample
% that was returned by the oracle and is used for refining the
% hypothesis. We construct a new
% ABox $\Amc_1,\bar a_1 \mathop{\times^\downarrow} \Amc_2,\bar a_2$
% that, viewed as a CQ with answer variables
% $\bar a_1 \otimes \bar a_2$, serves as the new hypothesis.
%
The modified \mn{refine} subroutine constructs a sequence of ABoxes
$\Bmc_1,\Bmc_2,\dots$ starting with
\[
    \Bmc_1=\mn{minimize}(\Cmc^3_{\Amc_1,\Omc} \times \Cmc^3_{\Amc_2,\Omc})
\]
and such that $\Bmc_i,\Omc \models q_T(\bar a_1 \otimes \bar a_2)$ for all
$i \geq 1$. Note that in contrast to the first version of refine, the
individuals in the answer tuple (which correspond to the answer
variables) are never modified.

Each ABox $\Bmc_{i+1}$ is obtained from $\Bmc_i$ by a local unraveling.
All individuals in $\Bmc_1$ are pairs $(c_1,c_2)$ and the same shall
be true for the individuals in the ABoxes $\Bmc_2,
\Bmc_3,\dots$. Informally, unraveling replaces components $c_i$ that
are individuals from
$\mn{ind}(\Cmc^3_{\Amc_i,\Omc}) \setminus \mn{ind}(\Amc_i)$ with
corresponding individuals from
$\mn{ind}(\Umc_{\Amc_i,\Omc}) \setminus \mn{ind}(\Amc_i)$ in a
step-by-step fashion. To make this formal, we call
$(c_1,c_2) \in \mn{ind}(\Bmc_i)$ \emph{unraveled} if
$c_i \in \mn{ind}(\Umc_{\Amc_i,\Omc})$ for each $i \in \{1,2\}$. Note
that $(c_1,c_2) \in \Bmc_1$ and
$c_i \notin \mn{ind}(\Umc_{\Amc_i,\Omc})$ implies that $c_i$ is of the
form $c_{a,i,s,C}$. The same will be true for all ABoxes
$\Bmc_2,\Bmc_3,\dots$.
%where
% each $c_i$ is either from $\mn{ind}(\Amc_i)$ or not. In the latter
% case, $c_i$ is associated with a concept $C_{c_i} \in \mn{sub}(\Omc)$.
% In $\Bmc_1$, each $c_i \notin \mn{Ind}(\Amc_i)$ is of the form
% $c_{a,i,r,C}$ and then the associated concept is $C_{c_i} = C$.
% In particular, all individuals from
% $\mn{Ind}(\Amc_1) \times \mn{Ind}(\Amc_2)$ that are in $\Bmc_0$ will
% also be part of every ABox $\Bmc_i$.
%
% Let \Bmc be an ABox with individuals of the form just described.  For
% $i \in \{1,2\}$, define a labeling function $\ell_i$ that associates a
% set of traces with each individual in $\Bmc$. Labels are
% $\subseteq$-minimal such that the following conditions are
% satisfied:
% %
% \begin{itemize}
%
% \item for all $(a_1,a_2) \in \mn{Ind}(\Bmc) \cap \mn{Ind}(\Amc_i)$,
%   $a_i \in \ell_i(a_1,a_2)$;
%
% \item if $t \in \ell_i(c_1,c_2)$ and $r((c_1,c_2),(d_1,d_2)) \in
%   \Bmc$, then $trC_{c_i} \in \ell_i(d_1,d_2)$.
%  
% \end{itemize}
% %
% Note that the traces used in the labeling have the same form as traces
% used in the definition of $\Umc_{\Amc_i,\Omc}$. The \emph{length} of a
% trace $a r_1 C_1 \cdots r_n C_n$ is $n-1$. For $i \geq 1$, we say that
% there is an \emph{$i$-conflict} at an individual $(c_1,c_2)$ in \Bmc
% if $\ell_1(c_1,c_2)$ or $\ell_2(c_1,c_2)$ contains two distinct paths
% of length at most $i$.
%
We now describe the unraveling step.
% The construction
% maintains the invariant that
% %
% \begin{itemize}

% \item[I1] $\Bmc_i$ has no $j$-conflicts for $j < i$. 

% \end{itemize}

\medskip
\noindent 
\textbf{Unravel.} Remove every assertion
$r((c_1,c_2),(d_1,d_2)) \in \Bmc_i$ with $(c_1,c_2)$ unraveled and
$(d_1,d_2)$ not unraveled.  Let $d'_j=d_j$ if $d_j$ occurs in 
$\Umc_{\Amc_j,\Omc}$ and $d_j=c_jrC$ if $d_j=c_{a,\ell,s,C}$, for $j \in
\{1,2\}$. Compensate by adding the following assertions:
\begin{itemize}

\item $r((c_1,c_2),(d'_1,d'_2))$;

\item $A(d'_1,d'_2)$ for all $A(d_1,d_2) \in \Bmc_i$;

\item $r((d'_1,d'_2),(e_1,e_2))$ for all $r((d_1,d_2),(e_1,e_2)) \in
  \Bmc_i$. % with $(e_1,e_2)$ not unraveled. 

\end{itemize}

\medskip We call $(d'_1,d'_2)$ a \emph{copy} of $(d_1,d_2)$. Note that
unraveling might introduce several copies of the same original element
$(d_1,d_2)$ and that $(d_1,d_2)$ might or might not be present after
unraveling, the latter being the case when
$r( (c_1,c_2),(d_1,d_2))$ is the only assertion that mentions
$(d_1,d_2)$.

\medskip 
\noindent 
After unraveling, we apply the Minimize step and the resulting ABox is
$\Bmc_{i+1}$.

\medskip 
\noindent
We prove later that the Unravel step can only be applied polynomially
many times, let the resulting ABox be $\Bmc_n$.  Let $I$ denote the
set of all individuals in $\Bmc_{n}$ that are reachable from some
individual in $\mn{ind}(\Amc_1) \times \mn{ind}(\Amc_2)$ in the
directed graph
$G_{\Bmc_{n}} = (\mn{ind}(\Bmc_{n}),\{ (a,b) \mid r(a,b) \in
\Bmc_{n}\})$.  It is easy to see that all individuals in $I$ are
unraveled. However, the restriction of $\Bmc_{n}$ to
$\mn{ind}(\Bmc_{n}) \setminus I$ might contain individuals that are
not unraveled. 

For example, consider the \ELdr-ontology $\Omc = \{ A \sqsubseteq \exists r.A,
\ B \sqsubseteq \exists r.B \}$ and the boolean target query $q_T \gets r(x_1,
x_2) \land r(x_2, x_3) \land r(x_3, x_4)$.  At some point during the learning
algorithm, the \mn{refine} subroutine might be called with $\Amc = \Cmc^3_{\Amc_1, \Omc}
\times \Cmc^3_{\Amc_2, \Omc}$ with $\Amc_1 = \{A(a)\}$ and $\Amc_2 =
\{B(b)\}$. By construction of the 3-compact model, $\Amc$ contains a cycle of
length $4$, consisting of individuals $(c_{a, i, r, A}, c_{b, i, r, B})$ for $i
\in \{1, \ldots 4\}$, that is reachable from from $\mn{ind}(\Amc_1) \times
\mn{ind}(\Amc_2) = \{ (a, b) \}$. The first Minimize step might then remove all
individuals from $\Amc$ that are not on the cycle, since there is a
homomorphism $h$ from $q_T$ to $\Umc_{\Amc, \Omc}$ with $h(x_i) = (c_{a, i, r,
A}, c_{b, i, r, B})$ for $i \in \{1, \ldots, 4\}$. Since only the cycle
consisting of not unraveled individuals remains, the Unravel step cannot be
applied.

To deal with this issue, we apply the original \mn{refine} subroutine
from Section~\ref{sec:short-cycles} to $\Bmc_{n}$ (with
$n_{\text{max}}=3$), resulting in a sequence
$(\Bmc_n,\bar b_n), (\Bmc_{n+1},\bar b_{n+1}),\dots$ where
$\bar b_n = a_1 \otimes a_2$, in a slightly adapted way:
\begin{enumerate}
  
\item the individuals in $I$ are not touched, that is, no cycle that
  involves an individual from $I$ is considered in the Expansion step
  nor is any assertion removed during the Minimize step that
  contains an individual from $I$;

\item as a consequence, the Expansion step cannot involve individuals
  in $\bar a_1 \otimes \bar a_2$, and thus the exponential blowup in
  the arity is avoided. In fact, $\bar b_n = \bar b_{n+1} = \cdots$.
  
\item the Splitting step is not applied.
  
\end{enumerate}
We now analyze the second version of \mn{refine}, starting
with the following lemma.
\begin{lemma}
  \label{lem:prodcycles}
  Let $i \geq 1$. Every cycle in $\Bmc_i$ of
  length at most three consists only of individuals from $\mn{ind}(\Amc_1)\times\mn{ind}(\Amc_2)$.
\end{lemma}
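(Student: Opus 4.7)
The plan is to prove the lemma by induction on $i$, where the essential input in the base case is Lemma~\ref{lem:noanocycles} applied coordinate-wise, and in the inductive step a careful analysis of how the Unravel step can create short cycles.

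For the base case $i=1$, recall that $\Bmc_1$ is obtained from the product $\Pmc=\Cmc^3_{\Amc_1,\Omc}\times\Cmc^3_{\Amc_2,\Omc}$ by Minimize, which only deletes assertions and individuals, so it suffices to prove the claim for $\Pmc$. A cycle of length $n\leq 3$ in $\Pmc$ projects in each coordinate to a closed walk of length $n$ in $\Cmc^3_{\Amc_i,\Omc}$. If this projected walk has no repeated vertices, it is already a cycle of length $\leq 3$ and Lemma~\ref{lem:noanocycles} applies directly. Otherwise consecutive coincidences $e_j^i=e_{j+1}^i$ produce a self-loop in $\Cmc^3_{\Amc_i,\Omc}$, and the non-adjacent coincidence $e_0^i=e_2^i$ produces either a self-loop or a length-2 cycle with the other vertex; in every sub-case Lemma~\ref{lem:noanocycles} forces the participating individuals to lie in $\mn{ind}(\Amc_i)$, so every component of every individual on the original product cycle lies in the correct ABox.

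For the inductive step I would let $\Bmc_i^{\ast}$ denote the result of one Unravel step before Minimize and argue for $\Bmc_i^{\ast}$; since Minimize can only remove assertions, this is enough. The crucial structural observation is that every newly introduced individual $(d_1',d_2')$, created as a copy of some not-unraveled $(d_1,d_2)$ when removing an edge $r((c_1,c_2),(d_1,d_2))$, has a unique incoming edge (namely from the specific $(c_1,c_2)$) and outgoing edges only to \emph{old} individuals, namely to the $(e_1,e_2)$ with $r'((d_1,d_2),(e_1,e_2))\in\Bmc_i$. In particular no edge of $\Bmc_i^{\ast}$ connects two new individuals, so any short cycle contains at most one new individual. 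If the cycle contains no new individual it already exists in $\Bmc_i$ (Unravel only adds edges incident to new individuals), and the inductive hypothesis finishes the case. If it contains exactly one new individual $(d_1',d_2')$, its in-neighbour must be $(c_1,c_2)$ and its out-neighbour some old $(x_1,x_2)$ with $R_0((d_1,d_2),(x_1,x_2))\in\Bmc_i$; replacing $(d_1',d_2')$ by $(d_1,d_2)$ in the cycle and using that the remaining edges survive from $\Bmc_i$ assembles a cycle of the same length $\leq 3$ in $\Bmc_i$ through $(d_1,d_2)$. The inductive hypothesis then forces $(d_1,d_2)\in\mn{ind}(\Amc_1)\times\mn{ind}(\Amc_2)$, making $(d_1,d_2)$ unraveled, which contradicts the unraveling step having been applied to it.

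The main obstacle I expect is the bookkeeping in the inductive step: one has to verify that the reconstructed sequence in $\Bmc_i$ really is a cycle in the sense of the paper, with all individuals distinct and all assertions distinct. The only non-trivial potential collapse is $(x_1,x_2)=(d_1,d_2)$, but this would already produce a self-loop on $(d_1,d_2)$ in $\Bmc_i$, and the inductive hypothesis applied to that length-1 cycle forces $(d_1,d_2)\in\mn{ind}(\Amc_1)\times\mn{ind}(\Amc_2)$, reaching the same contradiction.
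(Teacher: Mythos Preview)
Your approach is essentially the paper's: induction on $i$, the base case via coordinate-wise projection and Lemma~\ref{lem:noanocycles}, and the inductive step by replacing a new individual on a short cycle by its original to obtain a short cycle in $\Bmc_i$ and derive a contradiction. The paper's own inductive step is in fact terser than yours (it simply asserts that the replacement yields a cycle in $\Bmc_i$), so the overall strategy matches.

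Two details in your write-up need adjustment. First, the claim that a newly introduced copy has a \emph{unique} incoming edge is not correct: if, say, $d_2\in\mn{ind}(\Umc_{\Amc_2,\Omc})$ and two edges $r((c_1,c_2),(d_1,d_2))$ and $r((c_1,c'_2),(d_1,d_2))$ with $c_2\neq c'_2$ are removed, both produce the \emph{same} copy $(c_1rC,d_2)$, which then receives two incoming edges. Second, your case analysis at the new individual only treats the pattern ``one in-neighbour, one out-neighbour'', but the two cycle edges at the copy could both be outgoing (or, given the previous point, both incoming). Neither issue is fatal: in every case, an edge of $\Bmc_i^{\ast}$ incident to the copy arises either from a removed edge $r((c_1,c_2),(d_1,d_2))\in\Bmc_i$ or from an edge $s((d_1,d_2),(e_1,e_2))\in\Bmc_i$, so substituting the original for the copy still reproduces each adjacent edge in $\Bmc_i$, and the remaining edge of the cycle, being between old individuals and not newly added, was already in $\Bmc_i$. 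Your handling of the distinctness bookkeeping (via the self-loop observation) is fine and carries over unchanged.
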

\begin{proof}
  We prove the lemma by induction on $i$. In the induction start,
  $\Bmc_1=\mn{minimize}(\Cmc^3_{\Amc_1,\Omc} \times
  \Cmc^3_{\Amc_2,\Omc})$.
  If $\Bmc_1$
%   $\Cmc^3_{\Amc_1,\Omc} \times \Cmc^3_{\Amc_2,\Omc}$
  contains a cycle $r((a_1,a_2),(a_1,a_2))$ of length 1 with
  $a_i \notin \Amc_i$ for some $i \in \{1,2\}$, then $r(a_1,a_1)$ is a
  cycle of length~1 in $\Cmc^3_{\Amc_1,\Omc}$ which is not the case by
  Lemma~\ref{lem:noanocycles}.  Next assume that
  $\Bmc_1$
%   $\Cmc^3_{\Amc_1,\Omc} \times \Cmc^3_{\Amc_2,\Omc}$ 
  contains a cycle
  $r_0((a_1,a_2),(b_1,b_2)), r_1((b_1,b_2),(a_1,a_2))$ of length 2.
  Assume w.l.o.g.\ that $a_1 \notin \mn{ind}(\Amc_1)$.  If $a_1=b_1$,
  then $r_0(a_1,a_1)$ is a cycle of length~1 in
  $\Cmc^3_{\Amc_1,\Omc}$, but this is not the case by
  Lemma~\ref{lem:noanocycles}. If $a_1 \neq b_1$, then
  $r_0(a_1,b_1),r_1(b_1,a_1)$ is a cycle of length~2 in
  $\Cmc^3_{\Amc_1,\Omc}$, which again contradicts
  Lemma~\ref{lem:noanocycles}.  If
  $\Bmc_1$
%   $\Cmc^3_{\Amc_1,\Omc} \times \Cmc^3_{\Amc_2,\Omc}$ 
  contains a cycle
  $r_0((a_1,a_2),(b_1,b_2))$, $r_1((b_1,b_2),(c_1,c_2))$, $r_2((c_1,c_2),(a_1,a_2))$ of length 3, we
  can argue similarly that $\Cmc^3_{\Amc_1,\Omc}$ %or $\Cmc^3_{\Amc_2,\Omc}$ 
  contains a cycle of
  length~1 or~3 that involves an individual not in
  $\mn{ind}(\Amc_i)$, again obtaining a contradiction.

  For the induction step, we show that both the Minimize step and the
  Unravel step do not create cycles of length 1, 2, or 3 that involve
  individuals not from $\mn{ind}(\Amc_1) \times \mn{ind}(\Amc_2)$. Since the
  Minimize step only removes assertions, it cannot create any new cycles.
  For the Unravel step, let the lemma hold for $\Bmc_i$ and let $\Bmc$ be
  $\Bmc_i$ after the Unravel step. Let $r_0((a_1, a_2), (b_1, b_2))$,
  $r_1((b_1, b_2), (c_1, c_2))$, $r_2((c_1, c_2), (a_1, a_2))$ be a new cycle of
  length $3$ in $\Bmc$. Since the cycle is new, one of $(a_1, a_2)$, $(b_1,
  b_2)$ or $(c_1, c_2)$ must be $(d_1', d_2')$, a new individual created by the
  Unravel. But by the definition of the Unravel step, replacing $(d_1',
  d_2')$ with $(d_1, d_2)$ in the cycle must yield a cycle in $\Bmc_i$ which
  contradicts the induction hypothesis.
  The same argument can be applied to cycles of length 1 and 2.
\end{proof}
Recall that we first construct a sequence 
$\Bmc_1, \Bmc_2,\dots$ using the Unravel and 
Minimize steps.  With $\Bmc_i'$, $i \geq 1$, we denote the result of 
 only applying the Unravel step to $\Bmc_i$, but not the Minimize 
 step. 
\begin{lemma} \label{lem:localunraveling}
    For all $i \geq 1$, $\Bmc_i', a \preceq \Bmc_i, a$ and
    $\Bmc_i, a \preceq \Bmc_i', a$, for all $a \in
    \mn{ind}(\Bmc_i)\cap \mn{ind}(\Bmc_i')$,
    and $\Bmc_i' , a' \preceq \Bmc_i, a$ and $\Bmc_i , a \preceq
    \Bmc_i', a'$ for all copies $a' \in \mn{ind}(\Bmc_i') \setminus
    \mn{ind}(\Bmc_i)$ of some $a\in\mn{ind}(\Bmc_i)$.
\end{lemma}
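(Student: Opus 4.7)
The plan is to exhibit explicit witnessing simulations in each of the four cases. Concretely, I would define two relations
\[
S_1 \;=\; \{(a,a) \mid a\in\mn{ind}(\Bmc_i)\cap\mn{ind}(\Bmc_i')\} \,\cup\, \{(a',a) \mid a'\in\mn{ind}(\Bmc_i')\setminus\mn{ind}(\Bmc_i)\text{ is a copy of }a\}
\]
\[
S_2 \;=\; \{(a,a) \mid a\in\mn{ind}(\Bmc_i)\cap\mn{ind}(\Bmc_i')\} \,\cup\, \{(a,a') \mid a'\in\mn{ind}(\Bmc_i')\setminus\mn{ind}(\Bmc_i)\text{ is a copy of }a\}
\]
and check that $S_1$ is a simulation from $\Bmc_i'$ to $\Bmc_i$ and $S_2$ is a simulation from $\Bmc_i$ to $\Bmc_i'$. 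Since both relations contain every pair named in the statement, this yields all four claims at once.

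The concept-atom condition is essentially immediate from the description of the Unravel step: it neither adds concept assertions on old individuals nor removes concept assertions, and the bullet ``add $A(d'_1,d'_2)$ for all $A(d_1,d_2)\in\Bmc_i$'' tells us that a copy $a'$ has exactly the same concept atoms in $\Bmc_i'$ as its original $a$ has in $\Bmc_i$. So I would dispense with this in one short sentence per direction.

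For the role-atom condition I would split into cases by the type of pair in $S_j$ and, when $j=1$, by whether the edge in $\Bmc_i'$ is old or freshly added by Unravel. For $S_2$ starting from $(a,a)$ and an edge $r(a,c)\in\Bmc_i$: either the edge survives Unravel (then $r(a,c)\in\Bmc_i'$ with $c\in\mn{ind}(\Bmc_i')$, so $(c,c)\in S_2$), or it is removed, which by the definition of Unravel means $a$ is unraveled and $c$ is not, and a copy $c'$ is introduced with $r(a,c')\in\Bmc_i'$ and $(c,c')\in S_2$. For $S_2$ starting from $(a,a')$ and $r(a,c)\in\Bmc_i$, the third bullet of Unravel provides $r(a',c)\in\Bmc_i'$, and $(c,c)\in S_2$ because $c\in\mn{ind}(\Bmc_i)\cap\mn{ind}(\Bmc_i')$. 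The two symmetric cases for $S_1$ go through by the same case split, using that every outgoing edge of a copy $a'$ in $\Bmc_i'$ was added as an image of some edge $r(a,c)\in\Bmc_i$, and that every edge in $\Bmc_i'$ between two old individuals already existed in $\Bmc_i$.

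The only mild obstacle is bookkeeping around individuals that disappear, namely targets $(d_1,d_2)$ of removed edges whose only occurrence in $\Bmc_i$ was that removed edge; these individuals lie outside $\mn{ind}(\Bmc_i')$ and so do not appear in either $S_j$, which is exactly what one wants. Apart from this, the verification is purely a matter of unfolding the definition of Unravel bullet by bullet, and no induction beyond the one already carried out at the $\Bmc_i$ level is needed.
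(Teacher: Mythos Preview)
Your proposal is correct and essentially identical to the paper's proof: the paper defines a single relation $S\subseteq\mn{ind}(\Bmc_i)\times\mn{ind}(\Bmc_i')$ (your $S_2$) and uses its inverse $S^-$ (your $S_1$) for the other direction, then simply declares the verification ``routine''. You spell out the case analysis that the paper omits, and your handling of disappearing individuals and of edges added by the three Unravel bullets is accurate.
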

\begin{proof}
  Define a relation $S\subseteq \mn{ind}(\Bmc_i)\times
  \mn{ind}(\Bmc_i')$ by taking:
  \begin{itemize}

    \item $(a,a)\in S$, for all $a\in
      \mn{ind}(\Bmc_i)\cap\mn{ind}(\Bmc_i')$, and 

    \item $(a,a')\in S$, for all copies $a'\in
      \mn{ind}(\Bmc_i')$ of some element $a\in\mn{ind}(\Bmc_i)$.

  \end{itemize}
  It is routine to verify that $S$ serves as witness for the claimed
  simulations from $\Bmc_i$ to $\Bmc_i'$, and its inverse $S^-$ serves
  as witness for the claimed
  simulations from $\Bmc_i'$ to $\Bmc_i$.
%   We define a homomorphism $h$ from $\Bmc_i'$ to .
%   from $\Bmc_i'$ to $\Bmc_i$ and show that it fulfills all the
%   required properties.  Let $h(a) = a$ for all $a \in
%   \mn{ind}(\Bmc_i)$ and $h(a') = a$ if unravelling replaced $a$ with
%   $a'$. By construction, the conditions on the homomorphism are
%   fulfilled.  Furthermore it follows from the definition of the
%   unravelling step that $A(h(a)) \in \Bmc_i$ for each $A(a) \in
%   \Bmc_i'$ and $r(h(a), h(b)) \in \Bmc_i$ for each $r(a, b) \in
%   \Bmc_i'$.  
%   %
\end{proof}
The next lemma is the most intricate to prove in the analysis of 
the second version of \mn{refine}.
\begin{lemma}\label{lem:refinement-invariant}
  For all $i \geq 1$, $\Bmc_i,\Omc \models q_T(\bar a_1 \otimes \bar a_2)$.
\end{lemma}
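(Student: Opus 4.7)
The plan is to proceed by induction on $i$, noting that the $\mn{minimize}$ operation preserves $\Omc$-entailment of $q_T(\bar a_1\otimes \bar a_2)$ by its very definition. Hence the real work reduces to (i) showing that the initial product $\Cmc^3_{\Amc_1,\Omc}\times \Cmc^3_{\Amc_2,\Omc}$ entails $q_T(\bar a_1\otimes \bar a_2)$ under $\Omc$, and (ii) showing that a single Unravel step preserves this entailment. For (i), since $\Amc_j,\Omc\models q_T(\bar a_j)$ for $j\in\{1,2\}$ and $q_T\in \text{CQ}^{\text{csf}}$, Lemma~\ref{lem:compactunivers} supplies homomorphisms $h_j:q_T\to \Cmc^3_{\Amc_j,\Omc}$ with $h_j(\bar x)=\bar a_j$, and Lemma~\ref{lem:prodfund} Point~2 combines them into a homomorphism $h:q_T\to \Cmc^3_{\Amc_1,\Omc}\times \Cmc^3_{\Amc_2,\Omc}$ with $h(\bar x)=\bar a_1\otimes \bar a_2$, which directly witnesses the entailment.

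For (ii), suppose $\Bmc_i,\Omc\models q_T(\bar a_1\otimes \bar a_2)$ via a homomorphism $h:q_T\to \Umc_{\Bmc_i,\Omc}$, and let $\Bmc'_i$ denote the result of applying Unravel to $\Bmc_i$. I plan to construct a homomorphism $g:q_T\to\Umc_{\Bmc'_i,\Omc}$ with $g(\bar x)=\bar a_1\otimes \bar a_2$, by cases on where $h$ sends each variable $y$. If $h(y)$ is an unraveled individual of $\Bmc_i$ (automatically so whenever $y$ is an answer variable, since answer variables map into $\mn{ind}(\Amc_1)\times \mn{ind}(\Amc_2)$), set $g(y)=h(y)$. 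If $h(y)=(d_1,d_2)\in \mn{ind}(\Bmc_i)$ is non-unraveled, set $g(y)=(d_1,d_2)$ whenever this individual survives in $\Bmc'_i$, and otherwise route $g(y)$ to the copy $(d'_1,d'_2)$ that Unravel introduces along the unique unraveled predecessor edge of $y$ under $h$ (see next paragraph). Finally, if $h(y)=bw$ is a trace, set $g(y)=g(b)\,w$; this is well-defined because Lemma~\ref{lem:localunraveling} together with Lemma~\ref{lem:simfund} ensures that $b$ and $g(b)$ satisfy the same concept names, so the subtree below $g(b)$ in $\Umc_{\Bmc'_i,\Omc}$ matches the one below $b$ in $\Umc_{\Bmc_i,\Omc}$.

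The main obstacle is justifying the uniqueness assumed in the second case. The failure mode would be two distinct variables $y'_1\neq y'_2$ with $r(y'_1,y),r(y'_2,y)\in q_T$ such that $h(y'_1),h(y'_2)$ are both unraveled while $h(y)$ is non-unraveled: the two edges would demand incompatible copies for $g(y)$. To exclude this, I would invoke Lemma~\ref{lem:strongsymfree} and assume without loss of generality that $q_T$ is strongly symmetry-free. Since $h(y)$ is non-unraveled, $y$ is not an answer variable, so strong symmetry-freeness forces one of $r(y'_1,y),r(y'_2,y)$ to lie on a cycle of $q_T$. Chordality of $q_T$ then reduces this to a cycle of length at most three through $y$, whose $h$-image is a cycle of length at most three in $\Bmc_i$; by Lemma~\ref{lem:prodcycles} all its vertices---including $h(y)$---belong to $\mn{ind}(\Amc_1)\times \mn{ind}(\Amc_2)$ and are therefore unraveled, contradicting the assumption on $h(y)$. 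Hence the copy routing is forced and $g$ is well-defined; verifying that it is a homomorphism then reduces to a case analysis on edges $r(y_1,y_2)\in q_T$ according to whether $h(y_1),h(y_2)$ are unraveled, non-unraveled, or traces, again using Lemma~\ref{lem:localunraveling} to handle the trace cases.
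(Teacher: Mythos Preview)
Your overall induction scheme and the base case are fine and match the paper. The genuine gap is in the inductive step, specifically in your uniqueness argument for the non-unraveled case.

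You claim that when $h(y)=(d_1,d_2)$ is non-unraveled, there is at most one atom $r(y',y)\in q_T$ with $h(y')$ unraveled, arguing that two such atoms would (via strong symmetry-freeness and chordality) put $y$ on a $3$-cycle whose $h$-image is a short cycle in $\Bmc_i$ containing $h(y)$, and then Lemma~\ref{lem:prodcycles} would force $h(y)\in\mn{ind}(\Amc_1)\times\mn{ind}(\Amc_2)$. This step fails: the $h$-image of a $3$-cycle $r(y'_1,y),\,R_1(y,z),\,R_2(z,y'_1)$ in $q_T$ need not contain $h(y)$ on a cycle at all. If $h(y'_1)=h(z)$ and $R_1=r^-$, the two edges $r(h(y'_1),h(y))$ and $R_1(h(y),h(z))$ collapse to a single assertion, and the only cycle in the image is the self-loop $R_2(h(y'_1),h(y'_1))$. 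Lemma~\ref{lem:prodcycles} then yields $h(y'_1)\in\mn{ind}(\Amc_1)\times\mn{ind}(\Amc_2)$, but says nothing about $h(y)$. This case is not vacuous: it is precisely the situation the paper isolates in its Claim~2, which shows that $V_x$ (your set of unraveled predecessor atoms) is \emph{not} a singleton in general. Instead, the paper proves that in the non-singleton cases one is necessarily at step $i=1$, and the structure of $\Cmc^3_{\Amc_1,\Omc}\times\Cmc^3_{\Amc_2,\Omc}$ forces all predecessors to share a first component $b_1$, a second component $b_2$, or both (Cases~(ii)--(iv)). The definition of $g$ then exploits this compatibility to pick a single copy that serves all predecessors simultaneously. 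Your routing ``to the copy along the unique unraveled predecessor edge'' is therefore underspecified.

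Two smaller issues compound this. First, even when $V_x$ \emph{is} a singleton $\{r(y',y)\}$, your rule ``set $g(y)=(d_1,d_2)$ if it survives, else the copy'' is wrong: the Unravel step always deletes the edge $r(h(y'),(d_1,d_2))$ and replaces it by $r(h(y'),(d'_1,d'_2))$, so keeping $g(y)=(d_1,d_2)$ would break the atom $r(y',y)$ regardless of whether $(d_1,d_2)$ survives. Second, your trace rule $g(y)=g(b)\,w$ presupposes that $g(b)$ is already defined for the root $b$ of the trace $h(y)=bw$; but $b$ need not be $h(y')$ for any $y'\in\mn{var}(q_T)$, and even when it is, two trace variables in the same subtree must receive consistent roots. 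The paper handles this with a separate Claim~3 and the Stage~3/Stage~4 case split.
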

\begin{proof}\
  We prove the lemma by induction on $i$. The induction start is
  immediate since $\Umc_{\Amc_1,\Omc} \times \Umc_{\Amc_2,\Omc}
  \models q_T(\bar a_1 \otimes \bar a_2)$ and there is a homomorphism
  from $\Umc_{\Amc_1,\Omc} \times \Umc_{\Amc_2,\Omc}$ to
  $\Cmc^3_{\Amc_1,\Omc} \times \Cmc^3_{\Amc_2,\Omc}$
  that is the identity on $\bar a_1 \otimes \bar a_2$. Thus,
  $\Cmc^3_{\Amc_1,\Omc} \times \Cmc^3_{\Amc_2,\Omc},\Omc\models
  q_T(\bar a_1\otimes \bar a_2)$.
  It remains to note that 
  $\Bmc_1=\mn{minimize}(\Cmc^3_{\Amc_1,\Omc} \times
  \Cmc^3_{\Amc_2,\Omc})$ and that the Minimize step preserves
  $\Bmc_1,\Omc \models q_T(\bar a_1\otimes \bar a_2)$.
  
  For the induction step, consider $\Bmc_{i+1}$ with $i \geq 1$. By
  induction hypothesis, there is a homomorphism $h$ from $q_T$ to
  $\Umc_{\Bmc_{i},\Omc}$ with $h(\bar x)=\bar a_1 \otimes \bar a_2$.
  By Lemma~\ref{lem:strongsymfree}, we can assume that $q_T$ is
  strongly symmetry-free. Let \Bmc be the result of applying the
  unraveling step to $\Bmc_{i}$, and let $U$ be the set of all
  individuals $(d_1,d_2) \in \mn{Ind}(\Bmc_i)$ such that some assertion
  $r((c_1,c_2),(d_1,d_2))$ was removed in that step. Note that
  if $(d_1,d_2) \in U$, then $(d_1,d_2) \notin \mn{Ind}(\Amc_1) \times
  \mn{Ind}(\Amc_2)$. In what follows,
  we construct a homomorphism $g$ from $q_T$ to $\Umc_{\Bmc,\Omc}$
  with $g(\bar x)=\bar a_1 \otimes \bar a_2$.  Thus, $\Bmc,\Omc
  \models q_T(\bar a_1 \otimes \bar a_2)$. By definition of the
  Minimize step, this implies $\Bmc_{i+1},\Omc \models q_T(\bar a_1
  \otimes \bar a_2)$ as desired.

  We first observe the following, which can be proved by a
  straightforward induction on $j$. 

  \smallskip \noindent\textit{Claim 1.} For all $j \geq 0$,
  if $R((c_1,c_2),(d_1,d_2)) \in \Bmc_j$ with $(c_1,c_2)$ unraveled
  and $(d_1,d_2)$ not unraveled, then $R$ is a role name, but not
  an inverse role.

  \smallskip\noindent For a variable $x$ in $q_T$, let us denote with
  $V_{x}$ the set of all atoms $R(x,y)\in q_T$ such that
  $h(y) \in \mn{Ind}(\Bmc_i)$ is unraveled.  We observe the following.

  \smallskip \noindent\textit{Claim 2.} Let $x\in\mn{var}(q_T)$ such
  that $h(x)=(d_1,d_2)\in U$. 
  Then  there is a role name $r$ such
  that all atoms in $V_x$ are of shape $r(y,x)$ and
  one of the following
  is the case:
  \begin{enumerate}[label=(\roman*)]

    \item $V_x$ is a singleton;

    \item $d_1$ has the form $c_{b,0,r,C}$ and for every
      $r(y,x)\in V_x$, $\Amc_2$ contains an assertion $r(b',d_2)$
      with $h(y)=(b,b')$;

      % there are $b_1\in \mn{ind}(\Amc_1)$ and $\exists r . C$ such
      % that $\Amc_1,\Omc\models \exists r.C(b_1)$, $d_1=c_{b_1,0,r,C}$,
      % and for every $r(y,x)\in V_x$, $\Amc_2$ contains an assertion
      % $r(b_2,d_2)$ with $h(y)=(b_1,b_2)$;

      %  there are $b\in \mn{ind}(\Amc_1)$ and 
      % $r(b_1,d_2),\ldots,r(b_\ell,d_2)\in \Amc_2$ such that, for every
      % $r(y,x)\in V_x$, $h(y)$ is of shape
      % $(b,b_k)$, and $d_2=c_{b,0,r,C}$ for some $C$ with
      % $\Amc_1,\Omc\models \exists r.C(b)$, or

    \item $d_2$ has the form $c_{b,0,r,C}$ and for every
      $r(y,x)\in V_x$, $\Amc_1$ contains an assertion $r(b',d_1)$
      with $h(y)=(b',b)$;

    \item $d_1$ has the form $c_{b_1,0,r,C_1}$, $d_2$ has the form
      $c_{b_2,0,r,C_2}$, and $h(y)=(b_1,b_2)$ for every $r(y,x)\in V_x$.

      % there are $b\in \mn{ind}(\Amc_2)$ and 
      % $r(d_1,b_1),\ldots,r(d_1,b_\ell)\in \Amc_1$ such that, for every
      % $r(y,x)\in V_x$,
      % $h(y)$ is of shape $(b_k,b)$, and $d_2=c_{b,0,r,C}$
      % for some $C$ with $\Amc_2,\Omc\models \exists r.C(b)$.

  \end{enumerate}

  \smallskip \noindent\textit{Proof of Claim~2.} 
  To show the first part, let
  $R(y_1,x),S(y_2,x)\in V_x$.  Since $h(x)=(d_1,d_2)$ is not
  unraveled, but $h(y_1)$ and $h(y_2)$ are unraveled,
  $R$ and $S$ are role names by Claim~1. Moreover, $(d_1,d_2)$
  not being unraveled means that at least one
  one of the $d_j$ takes the shape $c_{a,k,r,C}$ for some
  role name $r$. By definition of $\Cmc^3_{\Amc_j,\Omc}$, for every
  $s(d,c_{a,k,r,C})\in \Cmc^3_{\Amc_j,\Omc}$, we have $s=r$. Hence,
  for every $s(d,(d_1,d_2))\in \Bmc_i$ we have $s=r$ as well. Thus,
  $R=S=r$ and all assertions in $V_x$ are based on the same role name
  $r$.

  Now for the second part.  Assume that Case~(i) does not apply. Then
  we find $r(y_1,x),r(y_2,x)\in V_x$ with $y_1\neq y_2$. Since $q_T$
  is strongly symmetry-free and $x$ is not an answer variable (which
  follows from $h(x)\in U$) one of the atoms, say $r(y_1,x)$, occurs
  on a cycle $p$ in $q_T$. Since $q_T$ is chordal, we can assume that
  $p$ has length at most three. Since $h$ is a homomorphism from $q_T$
  to $\Umc_{\Bmc_i,\Omc}$, the `$h$-image of $p$' contains a cycle
  $p'$ of length at most three in $\Bmc_i$.  By
  Lemma~\ref{lem:prodcycles}, $p'$ consists only of elements from
  $\mn{ind}(\Amc_1)\times\mn{ind}(\Amc_2)$. Thus $h(x)$ cannot be
  involved in $p'$, since $h(x)=(d_1,d_2)$ is not
  unraveled. Consequently, the cycle $p$ has to be of the
  shape \[r(y_1,x), r^-(x,z),s(y_1,z)\] and
  $h(y_1)=h(z)=(b_1,b_2)\in \mn{ind}(\Amc_1)\times\mn{ind}(\Amc_2)$.
  It follows that we must have $i=1$ since for $i > 1$, all successors
  of elements of $ \mn{ind}(\Amc_1)\times\mn{ind}(\Amc_2)$ are
  unraveled. We distinguish the following cases:
  \begin{itemize}

  \item $d_1\in\mn{ind}(\Amc_1)$ and $d_2\in\mn{ind}(\Amc_2)$.

    Impossible because $(d_1,d_2)$ is not unraveled.

  \item $d_1$ has shape $c_{b_1,0,r,C_1}$ and $d_2$ has shape
    $c_{b_2,0,r,C_2}$.  By definition of the models
    $\Cmc_{\Amc_i,\Omc}^3$ and since in $\Bmc_1\subseteq \Cmc^3_{\Amc_1,\Omc}
    \times \Cmc^3_{\Amc_2,\Omc}$, $(b_1,b_2)$ is the unique unraveled
    $r$-predecessor of $(d_1,d_2)$ in $\Bmc_i=\Bmc_1$.  Then we are in
    Case~(iv).
      % $C_j$ with $\Amc_j,\Omc\models \exists r.C_j(d_j)$, for
      % $j\in\{1,2\}$

% By the first part of Claim~2, it follows that $h(y)=(b_1,b_2)$ for all $r(y,x)\in V_x$, and
% % we are in Case~(ii) (and also in Case~(iii)).

    \item $d_1$ has shape $c_{b_1,0,r,C}$ %  for some $C$ with
      % $\Amc_1,\Omc\models \exists r.C(b_1)$
      and $d_2\in
      \mn{ind}(\Amc_2)$.

      Then $b_1$ is the unique $r$-predecessor of $d_1$ in
      $\Cmc^3_{\Amc_1,\Omc}$ that can appear in the first component of
      an unraveled element.
      Let $r(y,x)\in V_x$. Because $h(y)$ is
      unraveled and $i=1$,
      $h(y) \in \mn{ind}(\Amc_1) \times \mn{ind}(\Amc_2)$. Since
      $\Bmc_1\subseteq \Cmc^3_{\Amc_1,\Omc} \times \Cmc^3_{\Amc_2,\Omc}$, $r(y,x)\in V_x$ thus implies that there
      is an assertion $r(b',d_2)\in \Amc_2$ such that
      $h(y)=(b_1,b')$. Thus, we are in Case~(ii).

    \item $d_2$ has shape $c_{b_2,0,r,C}$ % with
      % $\Amc_2,\Omc\models \exists r.C(b_2)$ for some $C$
      and
      $d_1\in \mn{ind}(\Amc_1)$. 

      We argue as in the previous case, but end up in
      Case~(iii).

  \end{itemize}
  This finishes the proof of Claim~2. For the next claim, we associate
  with every variable $x\in \mn{var}(q_T)$ with
  $h(x) \in \mn{ind}(\Bmc_i)$ the set $Z_x$ that consists of all
  variables $y \in \mn{var}(q_T)$ such that $q_T$ contains a path
  $R_0(z_0,z_1),\ldots,R_{m-1}(z_{m-1},z_m)$ from $x$ to $y$ where
  $h(z_1),\dots,h(z_m)$ are all located in the subtree of
  $\Umc_{\Bmc_i,\Omc}$ rooted at $h(x)$, but are different from
  $h(x)$.
%       and
% 
%     \item $h(y)$ is in the subtree of $\Umc_{\Bmc_i,\Omc}$ rooted at
%       $h(x)$, but different from $h(x)$.
%
%  \end{itemize}

      \medskip
  \noindent\textit{Claim 3.} For all $y \in \mn{var}(q_T)$
  with $h(y)\notin\mn{ind}(\Bmc_i)$, there is at most one 
  $x\in\mn{var}(q_T)$ with $y\in Z_x$ and $h(x)\in U$.

  \medskip \noindent\textit{Proof of Claim~3.} Suppose that 
$y \in \mn{var}(q_T)$ with $h(y) \notin\mn{ind}(\Bmc_i)$ and
that there are
  distinct variables $x_1,x_2\in \mn{var}(q_T)$ with $y\in Z_{x_j}$
  and $h(x_j)\in U$ for $j\in\{1,2\}$. Let 
  \begin{gather*}
  p_1=R_0(z_0,z_1),\ldots,R_n(z_{n-1},z_n)\quad\text{and}\quad \\
  p_2=S_0(z_0',z_1'),\ldots,S_m(z_{m-1}',z_m')
  \end{gather*}
  be paths in $q_T$ from $x_1$ to
  $y$ and from $x_2$ to $y$, respectively, such that $h(z_j)\neq h(x_1)$ for all
  $j\in\{1,\ldots,n\}$ and $h(z_j')\neq h(x_2)$ for all
  $j\in\{1,\ldots,m\}$. Note that $h$ is a
  homomorphism from $p_j$ to the subtree of
  $\Umc_{\Bmc_i,\Omc}$ rooted at $h(x_j)$, for $j\in\{1,2\}$. Since
  $h(y)$ is both in the subtree below $h(x_1)$ and below $h(x_2)$, we
  have $h(x_1)=h(x_2)$. 

  We analyze the structure of the paths $p_1$ and $p_2$. Let us first
  verify that all $R_j$ and all $S_j$ can be assumed to be role
  names. We do this explicitly only for the $R_j$. Let $\Sbf$ denote
  the subtree of $\Umc_{\Bmc_i,\Omc}$ rooted at $h(x_1)$, that is,
  the restriction of $\Umc_{\Bmc_i,\Omc}$ to all traces that start
  with $h(x_1)$, including $h(x_1)$ itself. By construction of
  $\Umc_{\Bmc_i,\Omc}$, \Sbf is a ditree. Then $R_0$
must be a role name since $R_0(h(x_1),h(z_1)) \in
  \Sbf$, $h(x_1)$ is the root of \Sbf, and $h(z_1)$ in \Sbf. Now, let
  $\ell$ be minimal such that $R_\ell$ is an inverse role $r^-$ and consider the
  atoms $R_{\ell-1}(z_{\ell-1},z_\ell), r^-(z_\ell,z_{\ell+1})$ in $q_T$. Since $h$
  is a homomorphism and $\Sbf$ is a ditree, we know that $R_{\ell-1}=r$,
  and thus there are atoms $r(z_{\ell-1},z_\ell), r(z_{\ell+1},z_{\ell})$ in
  $q_T$. 

  Now, if $z_{\ell-1}=z_{\ell+1}$, we can drop these two atoms from
  the path. Otherwise, since $q_T$ is strongly symmetry-free and $z_\ell$ is
  not an answer variable (as $h(z_\ell)$ is in \Sbf but different from
  its root), one of these atoms occurs on a cycle $p$ in $q_T$.  Let
  us assume that this is atom $r(z_{\ell-1}, z_\ell)$, the case of
  atom $r(z_{\ell+1},z_{\ell})$ is analogous.  Since $q_T$ is chordal,
  we can assume that $p$ has length at most three. Since $h$ is a
  homomorphism from $q_T$ to $\Umc_{\Bmc_i,\Omc}$, the image of $p$
  contains a cycle $p'$ of length at most three in
  $\Umc_{\Bmc_i,\Omc}$. Even if $h$ is not injective, the cycle $p'$
  must contain $h(z_{\ell})$ or $h(z_{\ell-1})$.  However, both
  possibilities lead to a contradiction. If $p'$ contains
  $h(z_{\ell})$, then
  $h(z_\ell) \in \mn{ind}(\Amc_1)\times\mn{ind}(\Amc_2)$ by
  Lemma~\ref{lem:prodcycles} but this is not the case since
  $h(z_\ell)$ is in \Sbf and different from $h(x_1)$. If $p'$ contains
  $h(z_{\ell-1})$, then $h(z_{\ell-1})$ must be $h(x_1)$, and $p'$
  witnesses that $h(x_1) \in \mn{ind}(\Amc_1)\times\mn{ind}(\Amc_2)$,
  in contradiction to $h(x_1)\in U$.
 
  At this point, we have established that all $R_j$ and $S_j$ are 
  role names $r_j,s_j$. Since \Sbf is a ditree, it follows that $m=n$ and
  $r_j=s_j$ for all $j$. Since $z_0\neq z_0'$ and $z_n=z_m'$, there
  is some $\ell>0$ such that $z_\ell=z'_\ell$, $z_{\ell-1}\neq
  z'_{\ell-1}$. But then $q_T$ contains atoms
  $r_\ell(z_{\ell-1},z_\ell),r_\ell(z_{\ell-1}',z_\ell)$. This leads
  to a contradiction in the same way as above. This finishes the proof
  of Claim~3.

  \medskip We now define the required homomorphism $g$ in four stages, as follows.
 
  \begin{enumerate}

    \item Define $g(x)=h(x)$ for all $x\in\mn{var}(q_T)$ such that
      $h(x)\in \mn{ind}(\Bmc_i)\setminus U$ or $h(x)$ is in the
      subtree below some element $d\notin U$.

%       \textcolor{blue}{WAS: is not in the subtree of
% 	$\Umc_{\Bmc_i,\Omc}$ rooted at some $(d_1,d_2)\in U$.}

    \item For every $x\in\mn{var}(q_T)$ with $h(x)=(d_1,d_2)\in U$, we distinguish cases according to Claim~2:

      \begin{enumerate}

      \item If $V_x=\emptyset$, then define $g(x)=h(x)$.  We argue
	that this is well-defined, that is, $h(x)\in \mn{ind}(\Bmc)$.
	Suppose to the contrary that $h(x)\notin \mn{ind}(\Bmc)$. By
	definition of the unraveling operation, this can only be the
	case if $\Bmc_i$ contains only a single assertion that mentions $h(x)$ and this
	assertion is of shape $r((c_1,c_2),h(x))$ with $(c_1,c_2)$
	unraveled. Since $x$ has to occur in some atom in
	$q_T$ and $h$ is a homomorphism, $x$ occurs in an atom
	$r(z,x)\in q_T$ such that $h(z)=(c_1,c_2)$. Hence, $r(z,x)\in
	V_x\neq \emptyset$, contradiction.

% 	\textcolor{blue}{WAS: $g(x)$ to be some
%         replacement of $h(x)$, chosen arbitrarily.}

	\item If Case~(i) applies and 
	  $V_x=\{r(y,x)\}$, % is a singleton set, 
	  define $g(x)$ to be the copy $(d_1',d_2')$ of
	  $(d_1,d_2)$ introduced when unraveling 
	  $r(h(y),h(x))\in \Bmc_i$.

	\item If $V_x\neq\emptyset$ and Case~(ii) applies (but
          Case~(i) does not), then define $g(x)$ to be the copy
          $(brC,d_2)$ of $(d_1,d_2)$, where $b,C$ are as in Case~(ii)
          of Claim~2.

% 	  that is, there exist $b\in \mn{ind}(\Amc_1)$ and
% 	  $b_1,\ldots,b_k\in\mn{ind}(\Amc_2)$ such that
% 	  $h(y_i)=(b,b_i)$ and $r(b_i,d_2)\in\Amc_2$, for all $i$, and
% 	  $d_1=c_{b,0,r,C}$ for some $C$, then set $g(x)=(brC,d_2)$.

	\item If $V_x\neq\emptyset$ and Case~(iii) applies (but
	  Case~(i) does not), analogously define $g(x)$ to be the
          copy $(d_1,brC)$.

	\item If $V_x\neq\emptyset$ and Case~(iv) applies (but
	  Case~(i) does not), define $g(x)$ to be the
          copy $(b_1rC_1,b_2rC_2)$ where $b_1,b_2,C_1,C_2$ are
	  as in Case~(iv).

      \end{enumerate}

    \item For every $x$ with $h(x) \in U$ and every $y \in Z_x$,
      $h(y)$ is a trace that starts with $h(x)$, c.f.\ the definition
      of $\Umc_{\Bmc_i,\Omc}$. Define $g(y)$ to be the same trace, but
      with the first element $h(x)$ replaced by $g(x)$. It can be
      verified that $g(y)$ is indeed an element in $\Umc_{\Bmc,\Omc}$
      using the fact that, by Lemma~\ref{lem:localunraveling}, the 
      subtrees below $g(x)$ and $h(x)$ in $\Umc_{\Bmc,\Omc}$ and
      $\Umc_{\Bmc_i,\Omc}$, respectively, are identical. 
%       yields a homomorphism from $\Bmc_i$ to \Bmc that maps $h(x)$
%       to $g(x)$ , {\color{blue}Lemma  ????}, and
%       the construction of $\Umc_{\Bmc_i,\Omc}$.

    \item For every $y$ with $h(y)$ in the subtree below some
      $(d_1,d_2) \in U$ but different from $(d_1,d_2)$, and such that
      $y \notin Z_x$ for all $x$ with $h(x) \in U$, choose some
      copy $(d'_1,d'_2)$ of $(d_1,d_2)$ and define $g(x)$ to be
      the trace $h(x)$ with the first element $(d_1,d_2)$ replaced by
      $(d'_1,d'_2)$.

      % define $g(y)=h(y)$.
% %  

%     \item For every $x$ with $h(x) \in U$ and every $y \in V^x_2$,
%       $h(y)$ is a trace that starts with $h(x)$, c.f.\ the definition
%       of $\Umc_{\Bmc_i,\Omc}$. Define $g(y)$ to be the same trace, but
%       with the first element $h(x)$ replaced by $g(x)$. It can be verified
%       that $g(x)$ is indeed an element in $\Umc_{\Bmc,\Omc}$
%       using the fact that $\Bmc_i,h(x)\preceq \Bmc,g(x)$
%       {\color{blue}this is a bit sloppy?}.
% 
%     \item It remains to deal with variables $x \in \mn{var}(q_T)$ that
%       are in some maximal connected component of $q_T$ such that for
%       some $(d_1,d_2) \in U$ and all variables $y$ in the component,
%       $h(y)$ is in the subtree of $\Umc_{\Bmc_i,\Omc}$ rooted at
%       $(d_1,d_2)$, but different from $(d_1,d_2)$. Define
%       $g(x)=h(x)$ for all such variables $x$.
% %       Choose some
% %       replacement $(d'_1,d'_2)$ of $(d_1,d_2)$ and define $g(x)$
% %       to be the trace $h(x)$ with the first element $(d_1,d_2)$
% %       replaced by $(d'_1,d'_2)$.      
% 
  \end{enumerate}
  It is easy to see that the four stages above define $g(x)$ for all
  $x \in \mn{var}(q_T)$.

  \medskip
  \noindent\textit{Claim~4.} $g$ is a homomorphism from $q_T$ to
  $\Umc_{\Bmc,\Omc}$ with $g(\bar x)=\bar a_1\otimes\bar a_2$.

  \smallskip\noindent\textit{Proof of Claim~4.} For
  $g(\bar x)=\bar a_1\otimes\bar a_2$, observe that
  $h(x)\in\mn{ind}(\Amc_1)\times\mn{ind}(\Amc_2)$ for every $x\in\bar
  x$ while $U \cap (\mn{ind}(\Amc_1)\times\mn{ind}(\Amc_2))=\emptyset$. Thus, Stage~1 of the definition of $g$ implies 
  $g(\bar x)=h(\bar x)$.

  Now, let $A(x)\in q_T$ and thus $A(h(x))\in \Umc_{\Bmc_i,\Omc}$. We
  distinguish the following cases:
  \begin{itemize}

  \item If $g(x)$ was defined in Stage~1, then $g(x)=h(x)$. First
    assume that $g(x) \in \mn{Ind}(\Bmc_i)$. By
    Lemma~\ref{lem:localunraveling}, we have
    $\Bmc_i,h(x)\preceq \Bmc,g(x)$ and thus, by
    Lemma~\ref{lem:universal-homomorph} Point~2
    $\Umc_{\Bmc_i,\Omc},h(x)\preceq \Umc_{\Bmc,\Omc},g(x)$. Hence,
    also
%     yields a homomorphism from $\Bmc_i$ to $\Bmc$ that is the
%     identity on $h(x)$
% and thus    
    $A(g(x))\in \Umc_{\Bmc,\Omc}$ by Lemma~\ref{lem:simfund}.  Now
    assume that $g(x) \notin \mn{Ind}(\Bmc_i)$. Then $h(x)=g(x)$ is a
    trace and traces in $\Umc_{\Bmc_i,\Omc}$ and $\Umc_{\Bmc,\Omc}$
    that end with the same concept $C$ must satisfy the same concept
    names.
    
  \item If $g(x)$ was defined in Stage~2, then $g(x)=(d_1',d_2')$ is a
    copy of $h(x)=(d_1,d_2)$ or $g(x)=h(x)$. By
    Lemma~\ref{lem:localunraveling}, we have
    $\Bmc_i,h(x)\preceq \Bmc,g(x)$, thus
    $A(g(x))\in \Umc_{\Bmc,\Omc}$ by Lemmas~\ref{lem:simfund} and~\ref{lem:universal-homomorph}.

  \item If $g(x)$ was defined in Stage~3 or~4, then $h(x)$ and $g(x)$
    are both traces that end with the same concept $C$ and, by
    construction of universal models, thus make true the same concept
    names. Consequently, $A(h(x))\in \Umc_{\Bmc_i,\Omc}$ implies
    $A(g(x))\in \Umc_{\Bmc,\Omc}$.

  \end{itemize}
  Finally, let $r(x,y)\in q_T$ and thus $r(h(x),h(y))\in
  \Umc_{\Bmc_i,\Omc}$. We distinguish the following cases:
  \begin{itemize}

    \item It cannot be that both $h(x)$ and $h(y)$ are elements of
      $U$, by definition of the unraveling step.  

    \item If both $h(x)$ and $h(y)$ are not elements of $U$, then
      both $g(x)$ and $g(y)$ were defined in the same stage,
      one of Stage~1,~3, and~4. We can then argue very similar to
      the case of concept atoms that $r(g(x),g(y))\in \Umc_{\Bmc,\Omc}$.

    \item If $h(x)=(d_1,d_2)\in U$ and $h(y)\notin U$, then we
      distinguish cases:   
      \begin{itemize}

	\item If $h(y)\notin \mn{ind}(\Bmc_i)$, then it is an
	  $r$-successor of $(d_1,d_2)$ in the tree below $(d_1,d_2)$
	  in $\Umc_{\Bmc_i,\Omc}$. Thus $g(y)$ was defined in Stage~3.
	  If $h(y)$ is trace $(d_1,d_2)rC$, then $g(y)$ is trace
	  $(d'_1,d'_2)rC$ for $g(x)=(d'_1,d'_2)$. It follows that
	  $r(g(x),g(y))\in \Umc_{\Bmc,\Omc}$.

	\item If $h(y)\in \mn{ind}(\Bmc_i)$ is not unraveled, then by
	  definition of the unraveling step, we have
	  $r((d_1',d_2'),h(y))\in \Bmc$ for all copies $(d'_1,d_2')$
	  of $(d_1,d_2)$, and $r((d_1,d_2),h(y))\in \Bmc$. We know
	  that $g(x)$ was defined in Stage~2 and is either $h(x)$ or
	  some copy thereof, and $h(y)$ was defined in Stage~1, thus
	  $g(y)=h(y)$. Consequently,
	  $r(g(x),g(y))\in\Bmc\subseteq\Umc_{\Bmc,\Omc}$.

	\item It cannot be the case that $h(y)\in \mn{ind}(\Bmc_i)$
	  is unraveled: By Claim~2, $S$ is a role name for
	  every atom $S(z,x)\in q_T$ such that $h(z)$ is unraveled.
	  However, this is not the case for the atom $r^-(y,x)\in q_T$
	  we started with.

      \end{itemize}

    \item If $h(x)\notin U$ and $h(y)=(d_1,d_2)\in U$, then $h(x)\in
      \mn{ind}(\Bmc_i)$ since $r(h(x),h(y)) \in \Umc_{\Bmc_i,\Omc}$
      and by definition of universal models. We distinguish cases
      according to Claim~2:
      \begin{itemize}

	\item If $V_y=\emptyset$, then $g(y)=h(y)$,
	    by Stage~2(a). Moreover, as $h(x)\in
	    \mn{ind}(\Bmc_i)\setminus U$, we have $g(x)=h(x)$, by
	    Stage~1.  Hence, $r(g(x),g(y))\in \Bmc\subseteq
	    \Umc_{\Bmc,\Omc}$.

	\item If Case~(i) applies and $V_y=\{r(x,y)\}$ with $h(x)$
	  unraveled, then $g(y)$ was defined in  Stage~2(b) and
	  $r(g(x),g(y))\in \Umc_{\Bmc,\Omc}$.

	\item If Case~(ii) applies to $V_y$, then $d_1$ has the form
	  $c_{b,0,r,C}$ and for every $r(z,y)\in V_y$, $\Amc_2$
	  contains an assertion $r(b',d_2)$ with $h(z)=(b,b')$.
	  Moreover, $g(y)$ was defined in Stage~2(c) and
	  $g(y)=(brC,d_2)$.
	  \begin{itemize}

	    \item If $h(x)$ is unraveled, then $h(x)=g(x)=(b,b')$.  By
	      definition of the unraveling, $r((b,b'),(brC,d_2)\in
	      \Bmc$. Hence, $r(g(x),g(y))\in \Bmc\subseteq
	      \Umc_{\Bmc,\Omc}$.

	    \item If $h(x)$ is not unraveled, then it was defined in 
	      Stage~1 and $h(x)=g(x)$. By definition of the
	      unraveling, $r(h(x),(brC,d_2))\in\Bmc$.

	  \end{itemize}

% 	  then 
% 	  by Stage~2(c), $g(y)=(brC,d_2)$ where $b,C$ are as described
% 	  in Case~(iii). If $h(x)$ is unraveled,
% 	  then $g(x)=h(x)$ by Stage~1. Moreover, by Case~(iii)
% 	  $h(x)$ is of shape $(b,b_i)$ for some 
% 
% 	   In this case the
% 	  unraveling results in atoms $r((b,b_i),(brC,d_2))\in \Bmc$, for
% 	  all $j\in\{1,\ldots,\ell\}$. By Stage~1, $g(z)=h(z)$ for all
% 	  $r(z,y)\in V_y$. 	
	\item If Case~(iii) applies to $V_y$, the argument is symmetric.

	\item If Case~(iv) applies to $V_y$, then $d_1$ has the form
          $c_{b_1,0,r,C_1}$, $d_2$ has the form 
          $c_{b_2,0,r,C_2}$, $h(x)=(b_1,b_2)$, and $g(y)$ was defined in
	  Stage~2(e) and $g(y)=(b_1rC_1,b_2rC_2)$. Since $h(x)$ is
	  unraveled, we have $g(x)=h(x)$ by Stage~1, and the
	  definition of the unraveling yields $r(g(x),g(y))\in \Bmc$.

      \end{itemize}

  \end{itemize}
  This finishes the proof of Claim~4 and thus of the lemma.
\end{proof}

\begin{lemma}\label{lem:minimization-works}
  For all $i \geq 1$, 
  \begin{enumerate}

    \item $\Bmc_i$ is \Omc-saturated;

    \item If $h$ is a homomorphism from $q_T$ to $\Umc_{\Bmc_i,
      \Omc}$ with $h(\bar y) = \bar a_1 \otimes \bar a_2$, then 
      $\mn{ind}(\Bmc_i) \subseteq \mn{img}(h^*)$;

    \item $\Bmc_{i + 1}, \bar a_1 \otimes \bar a_2 \rightarrow \Bmc_i, \bar a_1 \otimes \bar a_2$.

  \end{enumerate}
\end{lemma}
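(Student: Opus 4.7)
My plan is to prove all three points by simultaneous induction on $i$, following the template of Lemmas~\ref{lem:expansioncorrecttwo} and~\ref{lem:splitcorrecttwo}, but adapted to the Unravel step and to the version of Minimize used here (which only removes individuals, not single edges).

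For Point~1, the base case observes that each $\Cmc^3_{\Amc_j, \Omc}$ is a model of $\Omc$ and hence $\Omc$-saturated, and that their product is $\Omc$-saturated too by Lemma~\ref{lem:prodfund}(3); concept assertions on the surviving individuals persist through Minimize. In the inductive step, suppose $\Bmc_{i+1}, \Omc \models A(\widehat{a})$ and let $\Bmc_i'$ denote $\Bmc_i$ after Unravel but before Minimize. Monotonicity gives $\Bmc_i', \Omc \models A(\widehat{a})$, Lemma~\ref{lem:localunraveling} provides $\Bmc_i', \widehat{a} \preceq \Bmc_i, a$ with $a$ the original of $\widehat{a}$, and Lemma~\ref{lem:simlem} then yields $\Bmc_i, \Omc \models A(a)$. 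By induction $A(a) \in \Bmc_i$, so the Unravel rule ``add $A(d_1',d_2')$ whenever $A(d_1,d_2) \in \Bmc_i$'' places $A(\widehat{a})$ into $\Bmc_i'$, and Minimize cannot remove it since it keeps $\widehat{a}$.

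For Point~2, I will use the standard minimality argument by contradiction: if some $a \in \mn{ind}(\Bmc_i) \setminus \mn{img}(h^*)$ existed, then $a \notin \bar a_1 \otimes \bar a_2$ (because $h(\bar y) = \bar a_1 \otimes \bar a_2$ forces the answer tuple into $\mn{img}(h^*)$), so $a$ would be eligible for removal by Minimize. Letting $\Bmc'$ be $\Bmc_i$ with all assertions involving $a$ deleted, I would verify $\Bmc', \Omc \models q_T(\bar a_1 \otimes \bar a_2)$ via the same $h$, contradicting the fact that Minimize was already applied exhaustively. The substantive content is showing that the subtrees of $\Umc_{\Bmc_i, \Omc}$ and $\Umc_{\Bmc', \Omc}$ rooted at any $b \neq a$ coincide: Point~1 forces all derivable concept names at $b$ to be explicit and hence preserved under removal of assertions involving $a$, and role assertions between individuals distinct from $a$ are trivially preserved, so the trace-generation at $b$ produces the same elements in both universal models.

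For Point~3, I will define $g\colon \mn{ind}(\Bmc_{i+1}) \to \mn{ind}(\Bmc_i)$ by $g(a)=a$ on originals and $g(a')=a$ on copies $a'$ introduced by the Unravel step. Since Unravel creates no copies of individuals in $\bar a_1 \otimes \bar a_2$ and Minimize leaves those individuals in place, $g$ fixes $\bar a_1 \otimes \bar a_2$. Verifying that $g$ is a homomorphism will be a routine case analysis using $\Bmc_{i+1} \subseteq \Bmc_i'$: concept assertions on originals are inherited from $\Bmc_i$, concept assertions on copies are placed by the Unravel rule whose witness lies in $\Bmc_i$, and new role assertions of the form $r((c_1,c_2),(d_1',d_2'))$ or $r((d_1',d_2'),(e_1,e_2))$ each correspond under $g$ to an assertion present in $\Bmc_i$ by construction of Unravel. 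The subtree-coincidence step in Point~2 will be the main obstacle: it is the only place where the structure of the universal model under removal of $a$ must be re-examined, and it is precisely here that $\Omc$-saturation is used essentially.
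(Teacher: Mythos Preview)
Your proposal is correct and follows essentially the same approach as the paper's own proof: induction on $i$ for Point~1 using Lemma~\ref{lem:localunraveling} to transfer saturation across Unravel, the minimality-by-contradiction argument for Point~2 leaning on Point~1 to get subtree coincidence in the universal models, and the natural ``fold copies back'' map for Point~3. The only cosmetic differences are that you invoke Lemma~\ref{lem:simlem} directly in the induction step of Point~1 (the paper routes through Lemma~\ref{lem:universal-homomorph} and Lemma~\ref{lem:univers} instead), and you omit the $q^\bot$ base case, which is fine since the second version of \mn{refine} drops the initial call.
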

\begin{proof} We prove Point~1 by induction on $i$. For $i=1$, observe that
    either
  $\Bmc_1  = \Cmc^3_{\Amc_1,\Omc} \times \Cmc^3_{\Amc_2,\Omc}$,
  or $\Bmc_1 = \Amc_{q^{\bot}}$.
  In the first case
  both $\Cmc^3_{\Amc_i,\Omc}$ are \Omc-saturated and thus also their product by
  Lemma~\ref{lem:prodfund} Point~3.  In the second case,
  $\Amc_{q^{\bot}}$ is \Omc-saturated since it
  contains $A(x_0)$ for all concept names $A \in \Sigma$.
  For the induction step, suppose
  $\Bmc_{i+1},\Omc \models A(a)$ for some concept name $A$ and some
  $a \in \mn{ind}(\Bmc_{i+1})$. By monotonicity, $\Bmc'_{i},\Omc \models A(a)$
  where $\Bmc'_i$ is the result of applying the unraveling step to
  $\Bmc_i$.  By Point~2 of Lemma~\ref{lem:univers},
  $A(a) \in \Umc_{\Bmc'_i,\Omc}$. We distinguish cases:
  \begin{itemize}

    \item If $a \in \mn{ind}(\Bmc_i)$, then $a$ was not affected by
      the unraveling and $\Bmc_{i}',a \preceq \Bmc_i,a$ by
      Lemma~\ref{lem:localunraveling}, 
      thus $\Umc_{\Bmc_{i}',\Omc},a \preceq \Umc_{\Bmc_i,\Omc},a$ by
      Lemma~\ref{lem:universal-homomorph}. Hence $A(a)\in
      \Umc_{\Bmc_i,\Omc}$ and $\Bmc_i,\Omc\models A(a)$ by
      Point~2 of Lemma~\ref{lem:univers}. Induction yields $A(a)\in
      \Bmc_i$ and thus $A(a)\in \Bmc_{i+1}$. 

  \item If $a \notin \mn{ind}(\Bmc_i)$, then $a$ is the copy of some
    element $b \in
    \mn{ind}(\Bmc_i)$.
       By 
      Lemma~\ref{lem:localunraveling},
      $\Bmc_{i}',a \preceq \Bmc_i, b$ and 
     thus
      $\Umc_{\Bmc_{i}',\Omc},a \preceq \Umc_{\Bmc_i,\Omc}, b$ by
      Lemma~\ref{lem:universal-homomorph}. Hence $A(b) \in
      \Umc_{\Bmc_i,\Omc}$ and $\Bmc_i,\Omc\models A(b)$ by
      Point~2 of Lemma~\ref{lem:univers}. Induction yields $A(b)\in
      \Bmc_i$ and the definition of unraveling implies
      $A(a)\in \Bmc_{i+1}$.
  \end{itemize}

  \smallskip For Point~2, let $h$ be a 
  homomorphism from $q_T$ to $\Umc_{\Bmc_i,\Omc}$ with
  $h(\bar x) = \bar a_1 \otimes \bar a_2$ for some $i \geq 2$.
  Assume to
  the contrary of what is to be shown that there is an $a \in
  \mn{ind}(\Bmc_i)$ that is not in $\mn{img}(h^*)$.
  Let $\Bmc'$ be the result of removing from $\Bmc_i$ all assertions
  that involve $a$.
  We
  show that
  \begin{itemize}
  \item[($*$)]
  $h$ is a homomorphism from $q_T$ to
  $\Umc_{\Bmc',\Omc}$
  \end{itemize}
  and thus witnesses that
  $\Bmc',\Omc \models q_T(\bar a_1 \otimes \bar a_2)$. Hence,
  $a$ is dropped during the Minimize step, in contradiction to $a \in
  \mn{ind}(\Bmc_i)$.

  To see that $(*)$ holds, first note that for all $b,b' \in
  \mn{ind}(\Bmc_i) \setminus \{ a \}$, the following holds by
  Point~1 and construction of universal models:
  \begin{enumerate}

  \item $A(b) \in \Umc_{\Bmc_i,\Omc}$ iff $A(b) \in \Umc_{\Bmc',\Omc}$;

  \item $r(b,b') \in \Umc_{\Bmc_i,\Omc}$ iff $r(b, b') \in \Umc_{\Bmc',\Omc}$.

  \end{enumerate}
  From Point~1, in turn, it follows that the subtree in
  $\Umc_{\Bmc_i,\Omc}$ below each
  $b \in \mn{ind}(\Bmc_i) \setminus \{ a \}$ is identical to the
  subtree in $\Umc_{\Bmc',\Omc}$ below $b$. In summary,
  ($*$) follows.
  
  \smallskip
  Now for Point~3, we define a mapping $h$ from $\Bmc_{i + 1}$ to $\Bmc_i$ with
  $h(\bar x) = \bar x$.  Recall that $\Bmc_{i + 1}$ is constructed from
  $\Bmc_i$ by applying the Unravel and Minimize steps.  Let $h(s) = a$ for all
  $a \in \mn{ind}(\Bmc_i) \cap \mn{ind}(\Bmc_{i + 1})$ and $h(a') = a$
  for all copies $a'$ of $a$ with $a' \in \mn{ind}(\Bmc_{i + 1})$.  It
  follows from the definition of the unravelling step, that $h$ is a
  homomorphism as required.
\end{proof}
It is proved as part of Lemma~\ref{lem:termtwo} below that the
Unravel/Minimize phase terminates after polynomially many steps, let
$\Bmc_n$ be the result.  We next construct a sequence
$\Bmc_{n},\Bmc_{n+1},\dots$ using the modified version of the original
\mn{refine} subroutine, that is, using the Expand and Minimize steps.
With $\Bmc_i'$, $i \geq n$, we denote the result of only applying the
Expand step to $\Bmc_i$, but not the Minimize step.
\begin{lemma} \label{lem:expansioncorrectthree}
  For all $i \geq n$, 
  \begin{enumerate}

    \item $\Bmc_i,\Omc\models q_T(\bar a_1 \otimes \bar a_2)$;

    \item $\Bmc_i$ is \Omc-saturated;

    \item if $h$ is a homomorphism from $q_T$ to $\Umc_{\Bmc_i,
      \Omc}$ with $h(\bar y) = \bar a_1 \otimes \bar a_2$, then $\mn{ind}(\Bmc_i)\subseteq
      \mn{img}(h^*)$;

    \item $\Bmc_{i+1}, \bar a_1 \otimes \bar a_2\to \Bmc_i, \bar a_1 \otimes \bar a_2$.

   \item $|\mn{ind}(\Bmc_{i + 1})| > |\mn{ind}(\Bmc_{i})|$.

  \end{enumerate}
\end{lemma}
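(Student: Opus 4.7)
The proof is by induction on $i \geq n$ and closely mirrors the proofs of Lemmas~\ref{lem:expansioncorrect} and~\ref{lem:expansioncorrecttwo}, adjusted to the two distinctive features of this second phase: the answer tuple $\bar a_1 \otimes \bar a_2$ is never modified, and neither Expansion nor Minimization touches assertions involving individuals from $I$. The base case $i = n$ collects Point~1 from Lemma~\ref{lem:refinement-invariant} and Points~2 and~3 from Lemma~\ref{lem:minimization-works}; Points~4 and~5 relate $\Bmc_i$ to $\Bmc_{i+1}$ and are treated exclusively in the induction step.

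For the induction step from $\Bmc_i$ to $\Bmc_{i+1} = \mn{minimize}(\Bmc_i')$, where $\Bmc_i'$ arises from expanding a chordless cycle, I would establish Point~1 by adapting the cycle-doubling construction from the proof of Lemma~\ref{lem:expansioncorrect}~Point~2: partition $\mn{var}(q_T)$ into $M_0, M_1, M_2$ relative to the homomorphism $h\colon q_T \to \Umc_{\Bmc_i,\Omc}$ provided by the induction hypothesis, build the auxiliary chordal query $q_T'$ of treewidth~$1$, and traverse its components to define $g$. The essential simplification is that the expanded cycle lies outside $I$ while $\bar a_1 \otimes \bar a_2 \subseteq I$; hence no answer variable falls into $M_0$, so $g(\bar x) = h(\bar x) = \bar a_1 \otimes \bar a_2$ is automatic and no brute-force search over $\tau_i$ is needed. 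Point~2 is then the inductive saturation argument of Lemma~\ref{lem:expansioncorrecttwo}~Point~1, relying on Lemmas~\ref{lem:auxunfolding} and~\ref{lem:simlem} together with the fact that Minimize only removes assertions.

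Points~4 and~5 are obtained exactly as in Lemma~\ref{lem:expansioncorrecttwo}~Points~3 and~4: define $g\colon \mn{ind}(\Bmc_{i+1}) \to \mn{ind}(\Bmc_i)$ as the identity on shared individuals and send each fresh copy to its original, verify homomorphism from the construction of Expand, and derive non-injectivity from the chordless-cycle argument, which is unaffected by the $I$-restriction since every expanded cycle lies in $\mn{ind}(\Bmc_i) \setminus I$. The main obstacle is Point~3: the adapted Minimize does not drop assertions involving $I$-individuals, so the ``remove $a$ and derive a contradiction'' argument of Lemma~\ref{lem:expansioncorrecttwo}~Point~2 only handles $a \in \mn{ind}(\Bmc_i) \setminus I$. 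For $a \in I$ I would argue by pull-back: iterating Point~4 yields a homomorphism $\Bmc_i \to \Bmc_n$ that is the identity on $I$, because at each step the map fixes $I$ pointwise; extending it via Lemma~\ref{lem:universal-homomorph}~Point~1 to a homomorphism $\Umc_{\Bmc_i,\Omc} \to \Umc_{\Bmc_n,\Omc}$ and composing with $h$ gives $h'\colon q_T \to \Umc_{\Bmc_n,\Omc}$ with $h'(\bar x) = \bar a_1 \otimes \bar a_2$. Then Lemma~\ref{lem:minimization-works}~Point~2 forces $I \subseteq \mn{ind}(\Bmc_n) \subseteq \mn{img}((h')^*)$, and the identity on $I$ transfers this to $I \subseteq \mn{img}(h^*)$, completing Point~3 and the induction.
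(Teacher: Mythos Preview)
Your proposal is correct and follows the same route as the paper's proof sketch: Points~1, 2, 4, 5 are handled by adapting Lemmas~\ref{lem:expansioncorrect} and~\ref{lem:expansioncorrecttwo}, and Point~3 is split into $a\notin I$ (handled by the Minimize argument) versus $a\in I$ (handled by pulling back along the iterated homomorphism and invoking Lemma~\ref{lem:minimization-works}~Point~2). Your treatment of the pull-back for $a\in I$ is in fact cleaner than the paper's sketch: you pull back only to $\Bmc_n$, use the \emph{given} homomorphism $h$ rather than a fresh one from Point~1, and make explicit that the composed map $\Bmc_i\to\Bmc_n$ is the identity on $I$ (with $g^{-1}(a)=\{a\}$ for $a\in I$), which is exactly what is needed to transfer $I\subseteq\mn{img}((h')^*)$ back to $I\subseteq\mn{img}(h^*)$.
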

\begin{proof}[Proof sketch.]
  Point~1 is a direct consequence of Lemma~\ref{lem:expansioncorrect}.
  Points~2 to~4 can be proved in the same way as Points~1 to~4 of
  Lemma~\ref{lem:expansioncorrecttwo}. While the proofs of Points~2
  and~5 go through without modification, a slight extension is
  required for the proof of Point~3 in the case that $i>1$. There, we
  start with a homomorphism $h$ from $q_T$ to $\Umc_{\Bmc_i, \Omc}$
  with $h(\bar x) = \bar b_i$, and suppose that there is an
  $a \in \mn{ind}(\Bmc_i)$ that is not in $\mn{img}(h^*)$. If $a$ is
  not reachable from some individual in
  $\mn{ind}(\Amc_1) \times \mn{ind}(\Amc_2)$ in $\Bmc_i$ viewed as a
  directed graph, then we can argue as in the proof of Point~2 of
  Lemma~\ref{lem:expansioncorrecttwo}, that is, obtain a contradiction
  against exhaustive application of Minimize to $\Bmc_{i-1}$. If $a$
  is reachable, however, this does not work as we do not apply the
  Minimize step to such individuals in the modified version of the
  \mn{refine} subroutine.

  However, by Point~4, there is a homomorphism $h$ from $\Bmc_i$ to
  $\Bmc_1$ with
  $h(\bar a_1 \otimes \bar a_2)=\bar a_1 \otimes \bar a_2$.  By
  Point~1 there is a homomorphism $h_1$ from $q_T$ to
  $\Umc_{\Bmc_{i},\Omc}$ with $h_1(\bar x)=\bar a_1 \otimes \bar a_2$. Let
  $h_2$ be the extension of $h$ to a homomorphism from
  $\Umc_{\Bmc_{i},\Omc}$ to $\Umc_{\Bmc_1,\Omc}$ as in
  Lemma~\ref{lem:universal-homomorph} Point~1. Then
  $\mn{img}(h_2^*) = \mn{img}(h)$.  Composing $h_1$ and $h_2$ yields a
  homomorphism $h_3$ from $q_T$ to $\Umc_{\Bmc_i, \Omc}$ with
  $h_3(\bar x) = \bar a_1 \otimes \bar a_2$, but with
  $\mn{ind}(\Bmc_i) \not\subseteq \mn{img}(h_3^*)$, in contradiction
  to Point~2 of Lemma~\ref{lem:minimization-works}.
\end{proof}

% \begin{lemma}
%     \label{lem:refinement-limit}
%     For $n = |\mn{var}(q_T)|$,  $\Bmc_{n + 2} = \Bmc_{n + 3}$.
% \end{lemma}
% %
% \begin{proof}\ % The \emph{depth} of an individual $b$ in $\Bmc_{i}$,
%   % $i \geq 1$, is the length of the shortest path from some individual
%   % in $\mn{ind}(\Amc_1) \times \mn{ind}(\Amc_2)$ to $b$.  Using the
%   % definition of the unraveling step, it is easy to see that all
%   % individuals of depth at most $i - 1$ in $\Bmc_i$ are unraveled, for
%   % all $i \geq 1$.
% %
%     By Point~3 of Lemma~\ref{lem:minimization-works},
%     $\mn{ind}(\Bmc_{2}) \leq |\mn{var}(q_T)|$. In the construction of
%     $\Bmc_1,\Bmc_2,\dots$, the unraveling step removes at least
%     one non-unraveled individual in each step, but never introduces
%     additional non-unraveled individuals. The lemma follows.
% \end{proof}

% The following is easy to prove by induction on $i$.  % \begin{lemma}
% \label{lem:unravprogress} Let $i \geq 1$. Then every individual in
% $\Bmc_i$ that is reachable in the undirected graph $G_{\Bmc_{i}} =
% (\mn{ind}(\Bmc_{i}),\{ \{ a,b \} \mid t(a,b) \in \Bmc_{i} \})$ from
% some individual in $\bar a_1 \otimes \bar a_2$ on a path of length
% at most $i-1$ is unraveled.  \end{lemma}
%
\begin{lemma} \label{lem:termtwo}
  $\mn{refine}(q(\bar x))$ can be computed in time polynomial in
  $||q_T|| + ||q||$ using membership queries.
\end{lemma}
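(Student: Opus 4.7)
The proof mirrors the strategy of Lemma~\ref{lem:refine-termination}, but must treat the two phases of the modified \mn{refine} separately: the Unravel/Minimize phase that produces $\Bmc_n$, and the subsequent Expand/Minimize phase (with Split omitted and Expand restricted to cycles that do not meet $I$).

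For the Unravel/Minimize phase, I would first use Lemma~\ref{lem:minimization-works} Point~2 to derive the uniform bound $|\mn{ind}(\Bmc_i)| \leq |\mn{var}(q_T)|$. To bound the number of iterations, I would introduce a lexicographic progress measure $(N_i, -U_i)$, where $N_i$ is the number of non-unraveled individuals in $\Bmc_i$ and $U_i$ is the number of unraveled ones. Each Unravel step introduces at least one unraveled copy $(d'_1,d'_2)$ of a non-unraveled $(d_1,d_2)$; the following Minimize then either (i) retains the copy, in which case $U_i$ strictly grows while $N_i$ cannot grow (copies are unraveled), or (ii) discards the copy, which by the simulations of Lemma~\ref{lem:localunraveling} implies that $(d_1,d_2)$ itself is equally redundant and must also be dropped, strictly decreasing $N_i$. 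Since $N_i + U_i \leq |\mn{var}(q_T)|$, this bounds the number of iterations by $O(|\mn{var}(q_T)|^2)$.

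For the subsequent Expand/Minimize phase, I would appeal to Lemma~\ref{lem:expansioncorrectthree}, whose Point~5 provides the strict growth $|\mn{ind}(\Bmc_{i+1})| > |\mn{ind}(\Bmc_i)|$ that drives the original termination argument, and whose Point~3 bounds the size by $|\mn{var}(q_T)|$. Thus this phase ends within $|\mn{var}(q_T)|$ iterations. Because Expand is forbidden from touching $I$ and therefore from modifying $\bar a_1 \otimes \bar a_2$, the answer tuple is fixed throughout (as announced before Lemma~\ref{lem:prodcycles}, $\bar b_n = \bar b_{n+1} = \cdots$), so the brute-force search for $\bar b_{i+1}\in\tau_i$ used in Lemma~\ref{lem:refine-termination} disappears and no $2^{\mn{ar}}$ factor arises; this is essential for avoiding exponential dependence on the arity.

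Each iteration of either phase runs in polynomial time. No step introduces fresh symbols, so $\mn{sig}(\Bmc_i) \subseteq \mn{sig}(q)$ throughout; the Unravel step is deterministic and polynomial in $|\Bmc_i|$; chordless cycles disjoint from $I$ can be located in polynomial time; and each Minimize step poses at most $O(|\mn{sig}(q)|\cdot |\mn{ind}(\Bmc_i)|^2)$ membership queries exactly as in Lemma~\ref{lem:refine-termination}. Multiplying the per-iteration cost by the polynomial iteration count yields the claimed polynomial bound in $||q_T||+||q||$. The principal technical obstacle is establishing the progress invariant for the Unravel/Minimize phase, since Unravel can simultaneously add copies and leave the non-unraveled originals in place while Minimize may remove individuals of either kind; here the simulations in Lemma~\ref{lem:localunraveling} together with Lemma~\ref{lem:minimization-works} Point~2 are the key tools for transferring redundancy from copies back to their originals.
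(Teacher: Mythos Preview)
Your treatment of the Expand/Minimize phase and of the per-step cost matches the paper's proof: Lemma~\ref{lem:expansioncorrectthree} Points~3 and~5 bound that phase by $|\mn{var}(q_T)|$, and the fixed answer tuple eliminates the $2^{\mn{ar}}$ factor.

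The gap is in your termination argument for the Unravel/Minimize phase. Your lexicographic measure $(N_i,-U_i)$ does not decrease in either of your two cases. In case~(i), retaining one copy does not force $U_i$ to grow: Minimize may freely delete other unraveled individuals, so $U_{i+1}<U_i$ is possible while $N_{i+1}=N_i$. In case~(ii), your inference that discarding the copy $(d_1',d_2')$ makes the original $(d_1,d_2)$ redundant is not justified by Lemma~\ref{lem:localunraveling}. That lemma gives \emph{simulations} between $\Bmc_i$ and $\Bmc_i'$ at corresponding points, which controls \EL-concepts, not CQ-homomorphisms. More concretely, after Unravel the copy and the original have different connectivity in $\Bmc_i'$: the copy inherits the incoming edge from the unraveled parent and the outgoing edges of the original, whereas the original has lost that incoming edge but may still have incoming edges from other (non-unraveled) individuals. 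A homomorphism from $q_T$ may therefore use the original via those other edges while the copy is useless, so Minimize can drop the copy and keep the original, leaving $N_i$ unchanged.

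The paper avoids this by a structural invariant rather than a counting measure: one shows by induction on $i$ that every individual in $\Bmc_i$ reachable in the directed graph $G_{\Bmc_i}$ from some individual in $\mn{ind}(\Amc_1)\times\mn{ind}(\Amc_2)$ on a path of length at most $i-1$ is unraveled. Since $|\mn{ind}(\Bmc_i)|\le|\mn{var}(q_T)|$ by Lemma~\ref{lem:minimization-works} Point~2, every reachable individual lies within distance $|\mn{var}(q_T)|$, so after at most $|\mn{var}(q_T)|+1$ iterations all reachable individuals are unraveled and Unravel is no longer applicable. This is the argument you should replace your progress measure with.
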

\begin{proof} We first note that the length of the sequence $\Bmc_1,
  \Bmc_2, \ldots$ computed in the Unravel/Minimize phase is bounded by
  $|\mn{var}(q_T)| + 1$. Indeed, the following is easy to prove by induction
  on $i$.

  \smallskip 
  \noindent 
  \textit{Claim}.  Let $i \geq 1$. Then every individual in $\Bmc_i$
  that is reachable in the directed graph
  $G_{\Bmc_{i}} = (\mn{ind}(\Bmc_{i}),\{ (a,b) \mid r(a,b) \in
  \Bmc_{i} \})$ from some individual in
  $\mn{ind}(\Amc_1) \times \mn{ind}(\Amc_2)$ on a path of length at
  most $i-1$ is unraveled.

  \smallskip 
  \noindent 
  Since every individual in $\Bmc_i$ that is reachable from some
  individual in $\mn{ind}(\Amc_1) \times \mn{ind}(\Amc_2)$ is
  reachable on a path of length at most $|\mn{ind}(\Bmc_i)|$ and
  $|\mn{ind}(\Bmc_i)| \leq |\mn{var}(q_T)|$ by Point~2 of
  Lemma~\ref{lem:minimization-works}, it follows that the
  Unravel step is thus no longer applicable to $\Bmc_{m + 2}$ for
  $m = |\mn{var}(q_T)|$.
  
  Next observe that the length of the sequence $\Bmc_n,\Bmc_{n+1}$
  computed in the Expand/Minimize phase is also bounded by
  $|\mn{var}(q_T)|$ as we have for all $i\geq n$, $|\mn{ind}(\Bmc_i)|
  \leq |\mn{var}(q_T)|$, by Lemma~\ref{lem:expansioncorrectthree}
  Point~3, and $|\mn{ind}(\Bmc_i)|<|\mn{ind}(\Bmc_{i+1})|$, by
  Lemma~\ref{lem:expansioncorrectthree} Point~5.

  It remains to show that every step runs in polynomial time. First
  note that, for all $i\geq 1$, we have $|\mn{ind}(\Bmc_i)| \leq
  |\mn{var}(q_T)|$ by Lemma~\ref{lem:minimization-works} Point~2 and
  Lemma~\ref{lem:expansioncorrectthree} Point~3. Moreover, by
  definition of the Unravel/Minimize steps, we have
  $\mn{sig}(\Bmc_i)\subseteq \Omega$, for all $i$, where
  $\Omega=\mn{sig}(q)$ and thus $|\Omega|\leq ||q||$.  Applying the
  Unravel step to $\Bmc_i$ thus takes time polynomial in
  $|\mn{ind}(\Bmc_i)|$ and $|\Omega|$.  The resulting ABox $\Bmc'$ is
  such that $|\mn{ind}(\Bmc')| \leq
  |\mn{ind}(\Bmc_i)|^2\cdot|\Omega|$. The number of membership queries
  needed in the minimization step is thus bounded by
  $|\mn{var}(q_T)|^2\cdot ||q||$.

  For Expand, note that chordless cycles of length $n>n_{\text{max}}$
  can be identified in time polynomial in~$|\mn{ind}(\Bmc_i)|\leq
  |\mn{var}(q_T)|$ and constructing the ABox $\Bmc_{i+1}$ from
  $\Bmc_i$ is clearly also possible in polynomial time.
\end{proof}

\begin{lemma} \label{lem:unravel-refine-progress}
  Let \Omc be an \ELdr-ontology. 
  Let $\Amc_1$ and $\Amc_2$ be ABoxes and $\bar a_i$, $i\in\{1,2\}$, be
  tuples of individuals from $\Amc_i$ of the same length. Moreover, let
  $q(\bar z)$ be $\Cmc^3_{\Amc_1,\Omc}\times \Cmc^3_{\Amc_2,\Omc}$
  viewed as CQ with answer variables $\bar z=\bar a_1\otimes \bar a_2$ and 
  let $p(\bar x)$ be the result of $\mn{refine}(q(\bar z))$ with
  respect to some target query $q_T(\bar y)$.
  Then there is a homomorphism $h_i$ from $p(\bar x)$ to 
  $\Umc_{\Amc_i,\Omc}$ with $h_i(\bar x)=(\bar a_i)$, for $i\in\{1,2\}$.
\end{lemma}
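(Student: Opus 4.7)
The plan is to mimic the proof of Lemma~\ref{lem:progress} by first pulling $p$ back into a product of compact models via a chain of homomorphisms and then lifting to $\Umc_{\Amc_i,\Omc}$. The new difficulty is that the second version of \mn{refine} does not guarantee $p \in \text{CQ}^\text{csf}$, so Lemma~\ref{lem:compactunivers} cannot be invoked; the lifting step requires a direct construction from the structure of the final $\Bmc_m$.

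First I would iterate Lemma~\ref{lem:minimization-works} Point~3 along the Unravel/Minimize sequence $\Bmc_1,\ldots,\Bmc_n$ and Lemma~\ref{lem:expansioncorrectthree} Point~4 along the Expand/Minimize sequence $\Bmc_n,\ldots,\Bmc_m$, composing the resulting homomorphisms to get $f \colon \Bmc_m \to \Bmc_1$ with $f(\bar a_1\otimes\bar a_2) = \bar a_1\otimes\bar a_2$. Since $\Bmc_1 \subseteq \Cmc^3_{\Amc_1,\Omc}\times\Cmc^3_{\Amc_2,\Omc}$, composing $f$ with the projection from Lemma~\ref{lem:prodfund} Point~1 yields homomorphisms $g_i \colon p \to \Cmc^3_{\Amc_i,\Omc}$ with $g_i(\bar x) = \bar a_i$.

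For the lifting, I would split $\mn{ind}(\Bmc_m)$ into the set $I$ of individuals reachable from $\bar a_1\otimes\bar a_2$ in the directed graph of $\Bmc_m$ and its complement $J$. Every element of $I$ is an unraveled pair $(c_1,c_2)$ with $c_i \in \mn{ind}(\Umc_{\Amc_i,\Omc})$; by induction on the Unravel steps---using that each step replaces an edge to a non-unraveled target by an edge to a copy whose $i$-th component is the corresponding extension of $c_i$ in $\Umc_{\Amc_i,\Omc}$---the second projection $(c_1,c_2) \mapsto c_i$ restricted to $I$ is a homomorphism into $\Umc_{\Amc_i,\Omc}$, and it fixes the required behaviour on $\bar x$. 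For the $J$-part, I would argue that every connected component $B$ is a tree: $B$ avoids $\mn{ind}(\Amc_1)\times\mn{ind}(\Amc_2)$, so an extension of Lemma~\ref{lem:prodcycles} that can be verified to hold through the Expand/Minimize phase forbids cycles of length at most three in $B$; exhaustive application of Expand leaves no chordless cycle of length greater than three; and chordal graphs without triangles are acyclic. On each such tree I would root at an arbitrary vertex $v$, use Lemma~\ref{lem:cthreesimlem} to pick a trace $t$ with $\Cmc^3_{\Amc_i,\Omc}, g_i(v) \preceq \Umc_{\Amc_i,\Omc}, t$, and propagate $h_i$ along tree edges: successor-edges are handled by the simulation clause, and predecessor-edges map to the unique $r$-predecessor of the current trace in $\Umc_{\Amc_i,\Omc}$. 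Gluing the $I$- and $J$-contributions gives the required $h_i$.

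The main obstacle will be the lifting on the $J$-part: without $p \in \text{CQ}^\text{csf}$, neither Lemma~\ref{lem:compactunivers} nor the direct unraveling argument used for $I$ applies, and one must establish the tree-shape of Boolean components via the interplay of Lemma~\ref{lem:prodcycles} with chordality and then verify that the simulation-based propagation along each tree respects both edge orientations without creating conflicts.
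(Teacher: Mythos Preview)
Your proposal is correct and follows essentially the same strategy as the paper: split $\Bmc_m$ into the part $I$ reachable from $\mn{ind}(\Amc_1)\times\mn{ind}(\Amc_2)$ and its complement, handle $I$ by projecting unraveled pairs directly into $\Umc_{\Amc_i,\Omc}$, and handle the complement by establishing tree shape and lifting a homomorphism into $\Cmc^3_{\Amc_i,\Omc}$ to one into $\Umc_{\Amc_i,\Omc}$. The only notable differences are cosmetic: the paper obtains the map into $\Cmc^3_{\Amc_1,\Omc}\times\Cmc^3_{\Amc_2,\Omc}$ by arguing that the non-reachable individuals are not unraveled (so the identity works), whereas you thread the chain of homomorphisms from Lemmas~\ref{lem:minimization-works} and~\ref{lem:expansioncorrectthree}; and the paper derives tree shape from chordality together with Lemma~\ref{lem:noanocycles} applied to the image in $\Cmc^3$, whereas you extend Lemma~\ref{lem:prodcycles} through the Expand/Minimize phase---both routes yield the same conclusion.
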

\begin{proof}
  Let $\Bmc_m$ be the result of $\mn{refine}(q(\bar z))$ before it is
  turned into the CQ $p(\bar x)$. Further, let $\Bmc'$ denote the
  restriction of $\Bmc_m$ to all individuals that are reachable from
  an individual in $\mn{ind}(\Amc_1) \times \mn{ind}(\Amc_2)$ in $\Bmc_m$, and
  let $\Bmc''$ be the restriction of $\Bmc_m$ to all individuals that
  are not reachable. Thus $\Bmc_m  = \Bmc' \uplus \Bmc''$.

  It suffices to show that, for $i\in\{1,2\}$, there is a homomorphism
  $h'_i$ from $\Bmc'$ to $\Umc_{\Amc_i,\Omc}$ with
  $h'_i(\bar x)=(\bar a_i)$ and a homomorphism $h''_i$ from $\Bmc''$ to
  $\Umc_{\Amc_i,\Omc}$.

  For the former, note that all individuals in $\Bmc'$ are unraveled.
  Thus
  $\Bmc' \subseteq \Umc_{\Amc_1, \Omc} \times \Umc_{\Amc_2, \Omc}$ and
  the identity is a homomorphism $h'$ from $\Bmc$ to
  $\Umc_{\Amc_1, \Omc} \times \Umc_{\Amc_2, \Omc}$ with
  $h'(\bar b) = \bar a_1 \times \bar a_2$.  Projection to the left and
  right components yields the homomorphisms $h'_i$ as required.

  For the latter, note that none of the individuals in $\Bmc''$ is
  unraveled. In fact, this follows from two obvious properties of
  the Unravel step and 3-compact canonical models:
  \begin{itemize}

  \item if the Unravel step is not applicable to an ABox $\Bmc_i$ and
    $r(a,b) \in \Bmc_i$ with $a$ unraveled, then $b$ is unraveled too;

  \item if Unraveling (and Minimization) is repeatedly applied to
    an ABox $\Cmc^3_{\Amc_1,\Omc}\times \Cmc^3_{\Amc_2,\Omc}$,
    then it never produces any fact $r(a,b)$ with $b$ unraveled, but
    $a$ not unraveled (because there are no $r$-edges from individuals
    of the form $c_{a,i,s,C}$ to individuals from $\mn{ind}(\Amc_i)$
    in $\Cmc^3_{\Amc_i,\Omc}$.

  \end{itemize}
  Thus, the identity is a homomorphism from $\Bmc''$ viewed as a
  Boolean CQ to $\Cmc^3_{\Amc_1,\Omc}\times \Cmc^3_{\Amc_2,\Omc}$.
  Projection to the left and right components yields a homomorphism
  $g''_i$ from $\Bmc''$ to $\Cmc^3_{\Amc_i,\Omc}$ for $i \in \{1,2\}$.
  By definition of the Expansion and Minimize step and its use in
  \mn{refine}, it is clear that $\Bmc''$ is chordal.  From
  Lemma~\ref{lem:noanocycles}, it thus follows that $\Bmc''$ viewed as
  a CQ is an ELIQ. Using the construction of $\Cmc^3_{\Amc_i,\Omc}$
  and $\Umc_{\Amc_i,\Omc}$, it is now straightforward to convert the
  homomorphism $g''_i$ from ELIQ $\Bmc''$ to $\Cmc^3_{\Amc_i,\Omc}$
  into the desired homomorphism $h''_i$ from $\Bmc''$ to
  $\Umc_{\Amc_i,\Omc}$.
\end{proof}

\section{Proof of Theorem~\ref{thm:mainthree}}

Let $q_0(\bar x_0), q_1(\bar x_1), \ldots$ be the sequence of hypotheses generated by the
algorithm. 

\begin{lemma}
  \label{lem:extraction}
    For all $i \geq 0$:     %
    \begin{enumerate}
        \item $q_i \subseteq_\Omc q_T$;
        \item $q_i \subseteq_\Omc q_{i + 1}$;
        \item $q_{i + 1} \not \subseteq_\Omc q_i$.
    \end{enumerate}
\end{lemma}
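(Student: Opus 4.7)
The plan is to prove the three points together by induction on $i$, relying on the contract of the $\mn{refine}$ subroutine (``given $q'_H\subseteq_\Omc q_T$, produce $q_H\in\Qmc$ with $q'_H\subseteq_\Omc q_H\subseteq_\Omc q_T$'') together with Lemma~\ref{lem:progress} for the first version of $\mn{refine}$, or Lemma~\ref{lem:unravel-refine-progress} for the second version used in Theorem~\ref{thm:mainthree}(3).

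For Point~1, the base case is that $q_0=\mn{refine}(q^\bot(\bar x_0))$ and $q^\bot\subseteq_\Omc q$ for every CQ $q$ of arity $\mn{ar}$ using symbols from $\Sigma$ (as recorded after the definition of $q^\bot$); in particular $q^\bot\subseteq_\Omc q_T$, so the $\mn{refine}$ contract gives $q_0\subseteq_\Omc q_T$. For the inductive step, let $(\Amc,\bar a)$ be the counterexample returned when probing $q_i$, so $\Amc,\Omc\models q_T(\bar a)$, and by the induction hypothesis $\Amc_{q_i},\Omc\models q_T(\bar x_i)$. Since $q_T\in\text{CQ}^{\text{csf}}$ for all classes $\Qmc$ considered, Lemma~\ref{lem:compactunivers} yields homomorphisms from $q_T$ to $\Cmc^3_{\Amc_{q_i},\Omc}$ and to $\Cmc^3_{\Amc,\Omc}$ mapping the answer tuple of $q_T$ to $\bar x_i$ and to $\bar a$ respectively. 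Lemma~\ref{lem:prodfund}(2) combines them into a homomorphism from $q_T$ to the product $\Amc_{q'_H}=\Cmc^3_{\Amc_{q_i},\Omc}\times\Cmc^3_{\Amc,\Omc}$ with image $\bar x'=\bar x_i\otimes\bar a$, so $q'_H\subseteq_\Omc q_T$ by Lemma~\ref{lem:homlem}, and another application of the $\mn{refine}$ contract gives $q_{i+1}\subseteq_\Omc q_T$.

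For Point~2, the transition from $q_i$ to $q_{i+1}$ always goes through $\mn{refine}$ applied to $\Cmc^3_{\Amc_{q_i},\Omc}\times\Cmc^3_{\Amc,\Omc}$ with answer tuple $\bar x_i\otimes\bar a$. Instantiating Lemma~\ref{lem:progress} (resp.\ Lemma~\ref{lem:unravel-refine-progress}) with $\Amc_1=\Amc_{q_i}$, $\Amc_2=\Amc$, $\bar a_1=\bar x_i$, $\bar a_2=\bar a$ supplies a homomorphism from $q_{i+1}$ to $\Umc_{\Amc_{q_i},\Omc}$ sending $\bar x_{i+1}$ to $\bar x_i$, which by Lemma~\ref{lem:homlem} is precisely $q_i\subseteq_\Omc q_{i+1}$. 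For Point~3, the same lemma also yields a homomorphism from $q_{i+1}$ to $\Umc_{\Amc,\Omc}$ sending $\bar x_{i+1}$ to $\bar a$, so $\Amc,\Omc\models q_{i+1}(\bar a)$ by Lemma~\ref{lem:univers}(2). Because $(\Amc,\bar a)$ is a (positive) counterexample produced by the oracle when asking whether $q_i\equiv_\Omc q_T$, we have $\Amc,\Omc\not\models q_i(\bar a)$, and this very pair witnesses $q_{i+1}\not\subseteq_\Omc q_i$.

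The one delicate spot is the induction step of Point~1. Directly combining the homomorphisms into $\Umc_{\Amc_{q_i},\Omc}$ and $\Umc_{\Amc,\Omc}$ via Lemma~\ref{lem:prodfund}(2) would land in the infinite product, whereas $\mn{refine}$ needs a finite ABox as input; the legitimacy of replacing those universal models by the finite 3-compact models rests on $q_T$ being chordal and symmetry-free, which is exactly the content of Lemma~\ref{lem:compactunivers} and is where the assumption $q_T\in\text{CQ}^{\text{csf}}$ is used.
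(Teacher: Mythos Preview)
Your proof is correct and follows essentially the same strategy as the paper. The paper's proof of Point~1 simply cites Lemma~\ref{lem:expansioncorrect} Point~1 (and Lemma~\ref{lem:expansioncorrectthree} Point~1 for the second \mn{refine}), leaving implicit the verification that the product fed to \mn{refine} satisfies the precondition $q'_H\subseteq_\Omc q_T$; you make this step explicit via Lemma~\ref{lem:compactunivers} and Lemma~\ref{lem:prodfund}(2), which is a welcome clarification. For Point~3 the paper argues by contradiction (assume $q_{i+1}\subseteq_\Omc q_i$, compose homomorphisms through $\Umc_{\Amc_{q_{i+1}},\Omc}$ via Lemma~\ref{lem:universal-homomorph}, and contradict $\Amc,\Omc\not\models q_i(\bar a)$), whereas you directly exhibit $(\Amc,\bar a)$ as a witness of non-containment; both are correct and yours is the more economical route.
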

\begin{proof} 
  Point~1 is a consequence of
  Lemma~\ref{lem:expansioncorrect}~Point~1, for the first
  \mn{refine}-operation, and
  Lemma~\ref{lem:expansioncorrectthree} Point~1, for the second \mn{refine}-operation

%   For Point~1, recall that $q_0(\bar
%   x)$ is the result of
%   $\mn{refine}(q_\Sigma^\circlearrowright(\bar x_0))$
%   where $\bar x_0$ is the tuple that contains only variable $x_0$
%   repeated, repeated sufficiently often to match
%   the arity of $q_T$. Clearly, $q^\circlearrowright_\Sigma\subseteq_\Omc
%   q$ for all CQs $q$ of the same arity that use only symbols from
%   $\Sigma$, hence $q^\circlearrowright_\Sigma\subseteq_\Omc
%   q_T$.
%   Lemma~\ref{lem:expansioncorrect}~Point~1
%   and \textcolor{orange}{Lemma~\ref{lem:refinement-invariant}} yield
%   yield $q_0 \subseteq_\Omc q_T$.
%   For $i > 0$, $q_{i + 1} =
%   \mn{refine}(q_H(\bar x))$ where $q_H(\bar x)$ is
%   $\Cmc^3_{\Amc_{q_i},\Omc}
%   \times\Cmc^3_{\Amc,\Omc}$, for some
%   positive counterexample $\Amc,\bar a$, viewed as a CQ with answer
%   variables $\bar x\otimes \bar a$. Thus $\Dmc,\Omc \models
%   q_T(\bar a)$. From the induction hypothesis, we get $q_i
%   \subseteq_\Omc q_T$.  Lemma~\ref{lem:refinement-invariant} yields
%   $\Bmc,\Omc \models q_T(\bar b)$ and Lemma~\ref{lem:expansioncorrect}
%   gives $q_{i+1} \subseteq_\Omc q_T$. {\color{blue}
%     M: Lemma~\ref{lem:expansioncorrect} might need to be modified in order to
% make the connection to query containment explicit}

  For Point~2, recall that $q_{i + 1} = \mn{refine}(q_H'(\bar x))$
  where $q_H'(\bar x)$ is $\Cmc^3_{\Amc_{q_i},\Omc}\times
  \Cmc^3_{\Amc,\Omc}$, for some positive counterexample $\Amc,\bar a$,
  viewed as CQ with answer variables $\bar x=\bar x_i\otimes \bar a$.
  In case of the first $\mn{refine}$-operation there is a homomorphism $h$ from $q_{i+1}$
  to $\Umc_{\Amc_{q_i},\Omc}$ with $h(\bar x_{i+1})=x_i$, by Lemma~\ref{lem:progress}.
  Lemmas~\ref{lem:refinement-invariant} and~\ref{lem:expansioncorrectthree} Point~4 give us this
  homomorphism for the second \mn{refine}-operation.
  By Lemma~\ref{lem:homlem}, we obtain $q_{i}\subseteq_\Omc q_{i+1}$.

%   By Lemma~\ref{lem:expansioncorrecttwo} Point~3
%   , there is a homomorphism $h_1$ from
%   $q_{i+1}$ to $\Bmc$
%   with $h_1(\bar x_{i+1})=\bar x_i \otimes \bar a$.
%   Moreover, recall that $\Amc_{q_i}, \bar x \mathop{\times^\downarrow} \Dmc, \bar a
%   \subseteq \Umc_{\Amc_{q_i}, \Omc} \times \Umc_{\Dmc, \Omc}$. Thus the
%   projection to the first component is a
%     homomorphism $h_2$ from $\Amc_{q_i},\bar x
%     \mathop{\times^\downarrow}\Dmc, \bar a$ to
%     $\Umc_{\Amc_{q_i},\Omc}$ with $h_2(\bar x_i \otimes \bar a)=\bar
%   x_i$. Composition of $h_1$ and $h_2$ yields the desired homomorphism $h$.

  For Point~3, assume to the contrary that $q_{i+1} \subseteq_\Omc
  q_i$. Then there is a homomorphism $h_1$ from $q_i$ to
  $\Umc_{\Amc_{q_{i+1}},\Omc}$ with $h_1(\bar x_i)=\bar
  x_{i+1}$. Recall once more
that $q_{i + 1} = \mn{refine}(q_H(\bar x))$
  where $q_H(\bar x)$ is $\Cmc^3_{\Amc_{q_i},\Omc}\times
  \Cmc^3_{\Amc,\Omc}$, for some positive counterexample $\Amc,\bar a$,
  viewed as CQ with answer variables $\bar x=\bar x_i\otimes \bar a$.
  In case of the first $\mn{refine}$-operation there is a homomorphism $h_2$ from $q_{i+1}$
  to $\Umc_{\Amc,\Omc}$ with $h_2(\bar x_{i+1})=\bar a$, by Lemma~\ref{lem:progress}.
  Again, Lemmas~\ref{lem:refinement-invariant} and~\ref{lem:expansioncorrectthree} show
  that $h_2$ also exists for the second\mn{refine}-operation.
  By Lemma~\ref{lem:universal-homomorph} Point~1 $h_2$ can be extended to a homomorphism $h_2'$ from 
  $\Umc_{\Amc_{q_{i + 1}},\Omc}$ to $\Umc_{\Amc, \Omc}$ with $h_2'(\bar x_{i+1})=\bar a$.
  Composing $h_1$ and $h_2$ yields a homomorphism $h$ from $q_i$ to
  $\Umc_{\Amc, \Omc}$ with $h(\bar x) = \bar a$. Thus $\Amc, \Omc \models
  q_i(\bar a)$, in contradiction to $\Amc, \bar a$ being a positive counterexample.
\end{proof}

We next observe that the sizes of $q_0, q_1, \ldots$ are
non-decreasing.  
% Let us further denote with
% $\widehat q$ the restriction of CQ $q$ to all variables that are
% connected to some answer variable.
%
\begin{lemma}\label{lem:hyposizes}
  For all $i \geq 0$:
  \begin{enumerate}

    \item $\mn{var}(q_i) \subseteq \mn{img}(h^*)$ for every
      homomorphism $h$ from $q_{i + 1}$ to $\Umc_{\Amc_{q_{i}}, \Omc}$ with
      $h(\bar x_{i+1})=\bar x_i$;

 \item $|\mn{var}(q_i)| \leq |\mn{var}(q_{i + 1})|$;

%   \item $|\mn{var}(\widehat q_i)| \leq |\mn{var}(\widehat q_{i + 1})|$.

\end{enumerate}
\end{lemma}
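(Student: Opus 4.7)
The plan is to establish Point~1 by a minimality-of-$q_i$ argument and derive Point~2 as an immediate corollary. For Point~1, I would argue by contradiction: suppose there exists $y \in \mn{var}(q_i) \setminus \mn{img}(h^*)$. Since $h(\bar x_{i+1}) = \bar x_i$, each component of $\bar x_i$ lies in $\mn{img}(h^*)$, so $y$ must be a quantified variable of $q_i$ (the case $q_i = q^\bot$ in the second version of the algorithm is handled separately). Let $\Bmc$ be obtained from $\Amc_{q_i}$ by deleting all assertions involving $y$. Because $q_i$ is the output of \mn{refine}, $\Amc_{q_i}$ is \Omc-saturated by Lemma~\ref{lem:expansioncorrecttwo} Point~1 (first version) or Lemma~\ref{lem:minimization-works} Point~1 together with Lemma~\ref{lem:expansioncorrectthree} Point~2 (second version). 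As in the proof of Lemma~\ref{lem:expansioncorrecttwo} Point~2, saturation implies that for every $a \in \mn{ind}(\Bmc)$ the subtree below $a$ in $\Umc_{\Bmc,\Omc}$ coincides with that in $\Umc_{\Amc_{q_i},\Omc}$. Since $h$ avoids $y$ and all traces starting with $y$, it can be reinterpreted as a homomorphism from $q_{i+1}$ to $\Umc_{\Bmc,\Omc}$ with $h(\bar x_{i+1}) = \bar x_i$.

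Applying Lemma~\ref{lem:homlem}, we get $\Bmc,\Omc \models q_{i+1}(\bar x_i)$, and by Lemma~\ref{lem:extraction} Point~1, $q_{i+1} \subseteq_\Omc q_T$, so $\Bmc,\Omc \models q_T(\bar x_i)$. Lemma~\ref{lem:univers} then yields a homomorphism $g: q_T \to \Umc_{\Bmc,\Omc}$ with $g(\bar x) = \bar x_i$. Composing with the extension provided by Lemma~\ref{lem:universal-homomorph} Point~1 of the inclusion $\Bmc \hookrightarrow \Amc_{q_i}$ gives a homomorphism $g': q_T \to \Umc_{\Amc_{q_i},\Omc}$ with $g'(\bar x) = \bar x_i$ and $y \notin \mn{img}((g')^*)$. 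This contradicts Lemma~\ref{lem:expansioncorrecttwo} Point~2 (first version) or Lemma~\ref{lem:minimization-works} Point~2 (second version, combined with Lemma~\ref{lem:expansioncorrectthree} Point~3 for the Expand/Minimize phase). The exceptional case $q_i = q^\bot$ is trivial: $\mn{var}(q^\bot) = \{x_0\}$ and $\bar x_i = (x_0,\ldots,x_0) \subseteq \mn{img}(h^*)$.

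Point~2 follows immediately: by Point~1, $h^*$ is a surjection from $\mn{var}(q_{i+1})$ onto $\mn{var}(q_i)$, so $|\mn{var}(q_{i+1})| \geq |\mn{var}(q_i)|$. The main delicate point is verifying that, after deleting $y$ from $\Amc_{q_i}$, the map $h$ really remains a homomorphism into $\Umc_{\Bmc,\Omc}$; this hinges on saturation, which ensures that the concept-name labels of surviving individuals do not decrease and the existentially generated subtrees below them are preserved. A secondary bookkeeping obstacle is the uniform treatment of both versions of \mn{refine} (and the initial $q^\bot$ in the second version), but in each case a single invocation of the relevant refine-minimality lemma supplies the required contradiction.
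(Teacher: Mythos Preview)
Your argument is correct and rests on the same two ingredients as the paper: the containment $q_{i+1}\subseteq_\Omc q_T$ from Lemma~\ref{lem:extraction} and the minimality of $q_i$ supplied by the refine lemmas. The paper, however, avoids your deletion detour entirely. Rather than removing $y$ and verifying that $h$ survives as a homomorphism into $\Umc_{\Bmc,\Omc}$, the paper simply extends $h$ to a homomorphism $\Umc_{\Amc_{q_{i+1}},\Omc}\to\Umc_{\Amc_{q_i},\Omc}$ via Lemma~\ref{lem:universal-homomorph}, composes it with a homomorphism $h'\colon q_T\to\Umc_{\Amc_{q_{i+1}},\Omc}$ (which exists since $q_{i+1}\subseteq_\Omc q_T$), and observes that for the composite $g$ one has $\mn{img}(g^*)\subseteq\mn{img}(h^*)$ because the extension of $h$ cannot add new ABox individuals to the image. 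The minimality lemma then gives $\mn{var}(q_i)\subseteq\mn{img}(g^*)\subseteq\mn{img}(h^*)$ in one stroke. Your route works, but what you flag as ``the main delicate point''—the saturation-based check that deletion of $y$ preserves the homomorphism—is exactly the argument already packaged inside the proof of Lemma~\ref{lem:expansioncorrecttwo} Point~2, so you are in effect re-deriving part of that lemma instead of citing its conclusion. For Point~2 the two proofs coincide.
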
 \begin{proof} For Point~1, let $h$ be a homomorphism from
  $q_{i + 1}$ to $\Umc_{\Amc_{q_{i}}, \Omc}$ with $h(\bar x_{i+1})=\bar x_i$.
  By Lemma~\ref{lem:universal-homomorph} Point~1, we can extend $h$ to a
  homomorphism from $\Umc_{\Amc_{q_{i+1}},\Omc}$ to
  $\Umc_{\Amc_{q_i},\Omc}$ without
  adding individuals from $\mn{var}(q_i)$ to $\mn{img}(h^*)$.
  By Point~1 of
  Lemma~\ref{lem:extraction}, there is a homomorphism $h'$ from $q_T$
  to $\Umc_{\Amc_{q_{i+1}},\Omc}$ with $h(\bar x)=\bar x_{i+1}$. We
  can compose $h'$ and $h$ into a homomorphism $g$ from $q_T$ to
  $\Umc_{\Amc_{q_i},\Omc}$ with $g(\bar x)=\bar x_i$.  We then obtain
  $\mn{var}(q_i) \subseteq \mn{img}(g^*)$ by
  Lemma~\ref{lem:expansioncorrecttwo} Point~2 for the first
  \mn{refine}-operation or Lemma~\ref{lem:expansioncorrectthree} for the second
  version.
  Since $\mn{img}(g^*) \subseteq
  \mn{img}(h^*)$, it follows that $\mn{var}(q_i)
  \subseteq \mn{img}(h^*)$.

  \smallskip 
  Point~2 is a consequence of Point~1. In fact,
  $q_i\subseteq_\Omc q_{i+1}$ implies via
  Lemma~\ref{lem:homlem} that there is a
  homomorphism $h$ from $q_{i+1}$ to
  $\Umc_{\Amc_{q_i},\Omc}$ with $h(\bar x_{i+1})=\bar x_i$.
  Point~1 yields $\mn{var}(q_i) \subseteq \mn{img}(h^*)$ and thus
  $|\mn{var}(q_i)| \leq |\mn{var}(q_{i+1})|$.
  %
%
 % \smallskip The argument for Point~3 is analogous. 
\end{proof}

\begin{lemma}\label{lem:iterationbound}
  $q_i \equiv_\Omc q_T$ for some 
%   $i \leq |\mn{var}(q_T)|^3 \cdot |\mn{sig}(\Omc)|$.
  $i \leq p(|\mn{var}(q_T)|+|\Sigma|)$ for some polynomial
  $p$.
\end{lemma}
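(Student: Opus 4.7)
The plan is to define a potential $\Phi(q)$ on hypothesis CQs that strictly increases along the sequence $q_0,q_1,\ldots$ and attains only polynomially many values, ultimately forcing the algorithm to reach $q_T$ (up to $\Omc$-equivalence) after polynomially many iterations. I take
\[
  \Phi(q) \;=\; \bigl(\,|\mn{var}(q)|,\, -N(q)\,\bigr)
\]
in lexicographic order, where $N(q)$ counts the concept and role atoms of the $\Omc$-saturation $q^\Omc$ of $q$ (i.e.\ $q$ augmented with every concept atom $A(x)$ such that $\Amc_q,\Omc\models A(x)$). The first coordinate lies in $\{0,\ldots,|\mn{var}(q_T)|\}$ by Point~3 of the invariants in Section~\ref{sect:algoview}; the second is bounded by $|\mn{var}(q_T)|\cdot|\Sigma|+|\mn{var}(q_T)|^2\cdot|\Sigma|$, so $\Phi$ ranges over $O((|\mn{var}(q_T)|+|\Sigma|)^3)$ values.

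By Lemma~\ref{lem:hyposizes} Point~2 the first coordinate is non-decreasing, so $\Phi$ increases whenever $|\mn{var}|$ strictly grows. For a ``plateau'' $|\mn{var}(q_i)|=|\mn{var}(q_{i+1})|=n$, Lemma~\ref{lem:homlem} together with $q_i\subseteq_\Omc q_{i+1}$ (Lemma~\ref{lem:extraction} Point~2) yields a homomorphism $h:q_{i+1}\to\Umc_{\Amc_{q_i},\Omc}$ with $h(\bar x_{i+1})=\bar x_i$, and Lemma~\ref{lem:hyposizes} Point~1 forces $h^*:\mn{var}(q_{i+1})\to\mn{var}(q_i)$ to be a bijection under the plateau assumption. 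Identifying variables via $h^*$, the structure of universal models (traces have no self-loops, and their role edges stay within their own subtree) forces both endpoints of every role atom of $q_{i+1}$ into $\mn{var}(q_i)$, so these atoms translate into role atoms of $\Amc_{q_i}\subseteq q_i^\Omc$; lifting $h$ to $\tilde h:\Umc_{\Amc_{q_{i+1}},\Omc}\to\Umc_{\Amc_{q_i},\Omc}$ via Lemma~\ref{lem:universal-homomorph} Point~1 then transports concept atoms of $q_{i+1}^\Omc$ to concept atoms of $\Umc_{\Amc_{q_i},\Omc}$ at $h^*$'s images, i.e.\ to atoms of $q_i^\Omc$. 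If the resulting inclusion of atom sets were an equality, then $q_i^\Omc$ and $q_{i+1}^\Omc$ would be isomorphic CQs with matching answer tuples and hence $q_i\equiv_\Omc q_{i+1}$, contradicting Lemma~\ref{lem:extraction} Point~3; so the inclusion is strict, $N(q_{i+1})<N(q_i)$, and $\Phi$ increases on plateaus too.

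The hard part is the caveat in the concept-atom step: $h$ may send a non-answer variable $x$ of $q_{i+1}$ to a ``deep'' trace element $h^*(x)\,r\,C_1\cdots$ rather than to $h^*(x)\in\mn{var}(q_i)$, in which case a concept atom $A(x)\in q_{i+1}^\Omc$ may witness $A$ only at that deep position. A structural inspection confines this situation to variables $x$ carrying no role atoms at all (any role atom, self-loop included, forces both endpoints into $\mn{var}(q_i)$ via the argument above), and in that isolated configuration one must argue, using strict refinement $q_{i+1}\not\equiv_\Omc q_i$, that $h$ can be modified so as to satisfy $h(x)\in\mn{var}(q_i)$ throughout; this is the delicate technical step. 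Once it is discharged, the polynomial bound follows by combining at most $|\mn{var}(q_T)|$ increments of the first coordinate of $\Phi$ with plateaus of length at most $O(|\mn{var}(q_T)|^2|\Sigma|)$, for a total of $p(|\mn{var}(q_T)|+|\Sigma|)$ iterations with $p$ cubic in its argument.
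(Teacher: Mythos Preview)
Your potential-function approach is close in spirit to the paper's, and your observation that on a $|\mn{var}|$-plateau the homomorphism $h$ must send every variable occurring in a role atom into $\mn{var}(q_i)$ is correct and matches the paper's Claim~2. The gap is precisely where you flag it: for an isolated quantified variable $x$ (no role atoms), you propose to modify $h$ so that $h(x)\in\mn{var}(q_i)$, but this is impossible in general. Take $\Omc=\{B\sqsubseteq\exists r.A,\ A\sqsubseteq C\}$, let $q_i$ consist of the single isolated atom $B(x')$, and let $q_{i+1}$ consist of $A(x)\wedge C(x)$. Then $q_i\subseteq_\Omc q_{i+1}$ (map $x$ to the trace $x'rA$), $q_{i+1}\not\subseteq_\Omc q_i$, and $|\mn{var}(q_i)|=|\mn{var}(q_{i+1})|=1$; yet no element of $\mn{var}(q_i)=\{x'\}$ satisfies $A$, so $h(x)$ cannot be moved into $\mn{var}(q_i)$. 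Worse, here $N(q_{i+1}^\Omc)=2>1=N(q_i^\Omc)$, so your potential $\Phi$ strictly \emph{decreases}, and the monotonicity claim fails outright.

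The paper does not try to relocate $h(x)$. Instead it partitions $\mn{var}(q_i)$ into $U_i$ (variables in some role atom) and $V_i$ (isolated variables), introduces a second nested plateau on $|U_i|$, and treats the $V$-part by a separate counting argument (Claim~4): when $h(x)$ is a proper trace below $h^*(x)$, the normal form of $\Omc$ guarantees a \emph{single} concept name $A$ at $h^*(x)$ such that $q_{i+1}^x$ already maps into $\Umc_{\{A(a)\},\Omc}$, and this $A$ can never recur in a later $q_j^x$ without collapsing $q_j^x\equiv_\Omc q_{i+1}^x$. That bounds the ``deep'' steps per isolated variable by $|\Sigma|$, interleaved with at most $|\Sigma|$ ``shallow'' steps, giving $|\Sigma|^2$ per variable. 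Your atom-count potential cannot capture this, because the relevant progress measure for isolated variables is not the number of concept atoms but the set of concept names that have been ``used up'' as witnesses.
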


\begin{proof}
By Lemma~\ref{lem:expansioncorrecttwo} Point~2 in case of the first
\mn{refine}-operation or by Lemma~\ref{lem:expansioncorrectthree} Point~3 in case of
the second,
  we have $|\mn{var}(q_i)|\leq |\mn{var}(q_T)|$ for all $i\geq 0$. 

  Let $q_\ell,\dots,q_{u}$, $\ell\leq u$, be a subsequence of
  $q_1,q_2,\ldots$ such that $|\mn{var}(q_\ell)|= \cdots
  =|\mn{var}(q_{u})|$. By Point~2 of Lemma~\ref{lem:hyposizes}, it
  suffices to show that the length $u-\ell$ of any such sequence is
  bounded by a polynomial in $|\mn{var}(q_T)|$ and $|\Sigma|$.  Let
  $h_i$, for $i\in \{\ell+1,\ldots,u\}$, be the homomorphisms from $q_{i}$
  to $\Umc_{\Amc_{q_{i-1}},\Omc}$ that exist due to
  Lemma~\ref{lem:extraction} Point~2. 
  
  Note that $h^*_i$ is a bijection between $\mn{var}(q_i)$ and
  $\mn{var}(q_{i-1})$.  Denote with $V_i$ the set of all quantified
  variables $x\in \mn{var}(q_{i})$ which do not occur in a role
  atom, and define $U_i=\mn{var}(q_i)\setminus V_i$. Let us further
  denote with $q^x$ the restriction $q|_{\{x\}}$ of a query $q$ to a single
  variable $x\in \mn{var}(q)$. Clearly, $q_i$
  can be written as 
  \[q_i(\bar x_i)\leftarrow q_{i}|_{U_i}\wedge\bigwedge_{x\in V_i} q_i^x\]
  Notice that, by definition, each $q_i^x$, $x\in V_i$ is a
  query without answer variables. 
 
%   \smallskip\noindent\textit{Claim~1.} For every $i\in
%   \{\ell+1,\ldots,u\}$, $q_{i}$ can be
%   written as 
%   %
%   \[q_{i}|_{U_i}\wedge\bigwedge_{x\in V_i} q_i|_{\{x\}}\]
%   %
%   where additionally the $q_i|_{\{x\}}$ do not contain any role atoms.
%   where $\widehat q_{i}$ is the restriction of $q_{i}$ to $\{x\in
%     \mn{var}(q_{i})\mid h_{i-1}(x)\in \mn{var}(q_{i-1})\}$, and every $p\in
%   P_i$ is a Boolean CQ with a single variable that does not occur
%   anywhere else. 

%   \smallskip\noindent\textit{Proof of Claim~1.} Since
%   $|\mn{var}(q_i)|=|\mn{var}(q_{i-1})|$ and
%   $\mn{var}(q_{i-1})\subseteq \mn{img}(h^*_{i-1})$, $h_{i-1}^*$ is a
%   bijection between $\mn{var}(q_i)$ and $\mn{var}(q_{i-1})$. Let $x\in
%   \mn{var}(q_i)\setminus V_i$. Since $h_{i-1}^*$ is a bijection, $x$
%   is the unique variable such that $h_{i-1}(x)$ is in the subtree
%   rooted at $h_{i-1}^*(x)$. Thus, $x$ does not co-occur with another
%   variable in any atom in $q_i$. Moreover, since $h_{i-1}(x)$ is a
%   node in the anonymous part (which does not have a self-loop by
%   definition of the universal model) and $h_{i-1}$ is a homomorphism,
%   there is no atom $r(x,x)$ in $q_i|_{\{x\}}$.
  
  \smallskip\noindent\textit{Claim~1.} $x\in U_i$ implies
  $h_{i}^*(x)\in U_{i-1}$.

  \smallskip\noindent\textit{Proof of Claim~1.} Let $x\in U_{i}$.
  If $x$ is an answer variable then $h_{i}(x)=h^*_{i}(x)$ is an
  answer variable and thus in $U_{i-1}$. Suppose now that there is a role
  atom $R(z,x)$ in $q_i$ and consider the assertion
  $R(h_{i}(z),h_{i}(x))\in \Umc_{\Amc_{q_{i-1}},\Omc}$ which
  exists since $h_{i}$ is a homomorphism.
  Since $h^*_{i}$ is a bijection, $z=x$ or
  $h^*_{i}(z)\neq h^*_{i}(x)$. 
  \begin{itemize}

    \item In the first case, we obtain $R(y,y)\in
      \Umc_{\Amc_{q_{i-1}},\Omc}$ for $y=h_{i}(z)=h_{i}(x)$. The
      definition of the universal models yields
      $y\in\mn{var}(q_{i-1})$, $R(y,y)$ occurs in $q_{i-1}$, and
      $h_{i}^*(z)=h_{i}(z)\in U_{i-1}$.

    \item In the second case, we obtain $h^*_{i}(z)=h_{i}(z)\in
      \mn{var}(q_{i-1})$, $h^*_{i}(x)=h_{i}(x)\in \mn{var}(q_i)$,
      and $R(h_{i}(z),h_{i}(x))$ occurs in $q_{i-1}$. Hence, $x\in
      U_{i-1}$.

  \end{itemize}
  This finishes the proof of Claim~1.

  \medskip Claim~1 implies $|U_i|\leq |U_{i-1}|$ for every
  $i\in \{\ell+1,\ldots,u\}$. We consider now subsequences
  $q_{\ell'},\ldots,q_{u'}$ of $q_\ell,\ldots,q_u$ with
  $|U_{\ell'}|=\dots=|U_{u'}|$ and thus $|V_{\ell'}|=\dots=|V_{u'}|$.
  Since $|U_i|\leq |\mn{var}(q_T)|$, for all $i$, it suffices to show
  that the length of such a sequence is bounded by a polynomial in
  $|\mn{var}(q_T)|$ and $|\Sigma|$.

  \smallskip\noindent\textit{Claim~2.} For every $i\in
  \{\ell'+1,\ldots,u'\}$,
  \begin{enumerate}

    \item $h_{i}^*$ is a bijection between $V_i$ and $V_{i-1}$, and

    \item $h_{i}$ is a bijection between $U_i$ and $U_{i-1}$.

  \end{enumerate}

  \smallskip\noindent\textit{Proof of Claim~2.} The first point is a
  consequence of Claim~1 and the facts that $|V_{i-1}|=|V_{i}|$
  and $h^*_{i}$ is a bijection between $\mn{var}(q_i)$ and
  $\mn{var}(q_{i-1})$. For
  the second point, suppose that some $x\in U_{i-1}$ is not
  $h_{i}(y)$ for some $y\in U_i$. Thus, there is some $y$ such that
  $h^*_{i}(y)=x$ and $h_{i}(y)$ is strictly in the subtree rooted
  at $x$ in $\Umc_{\Amc_{q_{i-1}},\Omc}$. Since $h_{i}^*$ is a
  bijection, $y$ is the unique variable such that $h_{i}(y)$ is in
  the subtree rooted at $x$. We claim that $y\in V_i$. This implies
  $x\in V_{i-1}$, by the first point, and contradicts the assumption
  $x\in U_{i-1}$. To see $y\in V_i$, suppose the contrary, that is,
  $y\in U_i$. If $y$ is an answer variable, then $h_{i}(y)$ is an
  answer variable and thus in $U_i$, contradiction. Otherwise, there
  is some $R(y,z)$ in $q_i$.  Since $h_{i}(y)$ is \emph{strictly}
  below $x$, this leads to a contradiction as follows. If $z\neq y$,
  then $h^*_{i}(z)=h^*_{i}(y)$ contradicts the fact that
  $h_{i}^*$ is a bijection. If, on the other hand, $z=y$, then
  $h_{i}$ is not a homomorphism since there is no self-loop
  $R(h_{i}(y),h_{i}(y))$ in $\Umc_{\Amc_{q_{i-1}},\Omc}$, by
  definition of the universal model. This finishes the proof of
  Claim~2.

  \medskip Sanctioned by the first point in Claim~2, in what follows
  we assume for the sake of readability that $h_{i}^*(x)=x$ for all
  $x \in V_i$ and $i\in \{\ell'+1,\ldots,u'\}$. Hence,
  $V_{\ell'}=\dots=V_{u'}$.  Now, observe that, for all
  $i\in \{\ell'+1,\ldots,u'\}$, one of the following is the case:
  \begin{enumerate}

    \item the inverse of $h_{i}$ is not a homomorphism from 
      $q_{i-1}|_{U_{i-1}}$ to $q_{i}|_{U_{i}}$;

    \item there is some $x\in V_i$ such that
      $q_i^x\not\subseteq_\Omc q_{i-1}^x$.

  \end{enumerate}
  Indeed, if neither Point~1 nor Point~2 is satisfied then
  $q_i\equiv_\Omc q_{i-1}$, in contradiction to Point~3 of
  Lemma~\ref{lem:extraction}. It thus remains to bound the number of
  times each of these points can be satisfied along
  $q_{\ell'},\ldots,q_{u'}$. We start with Point~1.

  \smallskip\noindent\textit{Claim~3.} The number of
  $i\in\{\ell'+1,\ldots,u'\}$ such that the inverse of $h_{i}$ is not
  a homomorphism from $q_{i-1}|_{U_{i-1}}$ to $q_{i}|_{U_{i}}$ is at
  most $(|\mn{var}(q_T)|^2+|\mn{var}(q_T)|) \cdot |\Sigma|$.

  \smallskip\noindent\textit{Proof of Claim~3}. Let $i$ be as in the
  claim. By Point~2 of Claim~2, $h_{i}$ is a bijective
  homomorphism from $q_{i}|_{U_i}$ to $q_{i-1}|_{U_{i-1}}$.  Hence,
  the number $n_i$ of atoms in $q_{i}|_{U_i}$ is at most the number
  $n_{i-1}$ of atoms in $q_{i-1}|_{U_{i-1}}$.  As the inverse of
  $h_{i}$ is not a homomorphism, we have $n_i<n_{i-1}$. Since the
  maximal number of atoms in $q_i$ is bounded by $|\Sigma|\cdot
  |\mn{var}(q_T)|^2 + |\Sigma|\cdot |\mn{var}(q_T)|$, the claim
  follows. 

  \smallskip Now for Point~2.

  \smallskip\noindent\textit{Claim~4.} Let $x\in V_{\ell'}$. The
  number of $i\in\{\ell'+1,\ldots,u'\}$ such that
  $q_i^x\not\subseteq_\Omc q_{i-1}^x$ is at most $|\Sigma|^2$.

  \smallskip\noindent\textit{Proof of Claim~4.} Let $I$ be the set of
  all $i$ as in the claim. Let $i\in I$.  We distinguish two cases.
  \begin{enumerate}[label=(\Alph*)]

    \item $h_{i}(x)=x$.

      Since $h_{i}$ is a homomorphism, the number $n_i$ of atoms in
      $q_i^x$ is at most the number $n_{i-1}$ of atoms in
      $q_{i-1}^x$. From
      $q_{i-1}^x\not\subseteq_\Omc q_i^x$, it follows that
      $n_i<n_{i-1}$.

    \item $h_{i}(x)\neq x$, that is, $h_{i}(x)$ is strictly in the
      subtree below $x$.

      By definition of the universal model and since $\Omc$ is in
      normal form, there is an atom $A(x)$ in $q_{i-1}^x$ such that
      there is a homomorphism from $q_{i}^x$ to
      $\Umc_{\{A(a)\},\Omc}$. We claim that $A(x)$ is not an atom in
      any query $q_j^x$ with $j>i$ and $j\in I$. Indeed, if $A(x)$
      occurs in $q_j^x$ for such $j$, then $q_{j}^x\subseteq_\Omc
      q_i^x$. The homomorphisms $h_{i+1},\ldots,h_{j}$ witness that $q_i^x
      \subseteq_\Omc q_{i+1}^x \subseteq_\Omc \cdots \subseteq_\Omc
      q_j^x$, and thus all these queries are actually equivalent, in
      contradiction to the definition of $I$.

  \end{enumerate}
  Next observe that, by what was said in Point~(A), Point~(A) can
  happen only $|\Sigma|$ times without Point~(B) happening in
  between. Moreover, Point~(B) can happen only $|\Sigma|$ times
  overall. We thus have that the size of $I$ is bounded by
  $|\Sigma|^2$. This finishes the proof of Claim~4.

  \medskip Since $|V_{\ell'}|\leq |\mn{var}(q_T)|$, we obtain that the
  length of the sequence $q_{\ell'},\ldots,q_{u'}$ is bounded by 
  \begin{align*}
    \big((|\mn{var}(q_T)|^2+|\mn{var}(q_T)|) \cdot |\Sigma|\big) +
    |\mn{var}(q_T)|\cdot |\Sigma|^2.
  \end{align*}
  where the first summand accomodates the number of $i$ where Point~1
  is satisfied and the second summand accomodates the number of $i$
  where Point~2 is satisfied. 
\end{proof}

\section{Proofs for Section~\ref{sect:lower}}

\thmellower*
 \begin{proof}\ Assume to the contrary of what is to be shown that $\EL$-concepts
  are polynomial query learnable under \ELI-ontologies when
  unrestricted CQs
  can be used in equivalence queries. Then there exists a
  learning algorithm and a polynomial $p$ such that at any time, the
  sum of the sizes of the inputs to membership and equivalence queries
  made so far is bounded by $p(n_1,n_2,n_3)$, where $n_1$ is the size
  of $C_T$, $n_2$ is the size of \Omc, and $n_3$ is the size of the
  largest counterexample seen so far.

  We choose $n$ such that $2^n>p(q_1(n),q_2(n),q_3(n))+1$ where $q_1, q_2,q_3$ are
  polynomials such that for every $n\geq 1$, $q_1(n)\geq ||H||$ for all
  $H \in \Hmc_n$, $q_2(n)\geq
  ||\Omc_n||$, and $q_3(n)$ bounds from above the size of all
  counterexamples returned by the oracle that we craft below.

  Consider now $\Omc_n$ and $\Hmc_n$ as defined in
  Section~\ref{sect:lower}.  We let the oracle maintain a set of
  hypotheses \Hmc, starting with $\Hmc=\Hmc_n$ and then proceeding to
  subsets thereof, in such a way that at any point in time the learner
  cannot distinguish between any of the candidate targets in \Hmc.

  More precisely, consider a membership query with ABox \Amc and
  individual~$a_0$. % (which
  % can also use symbols outside of $\Sigma$). 
  The oracle responds as
  follows:
  \begin{enumerate}

  \item if $\Amc,\Omc_n \models L_0(a_0)$, then answer yes;

  \item if $\Amc,\Omc_n\models K_0(a_0)$ and there are
    $\sigma_1,\dots,\sigma_n \in \{r,s\}$ with $\Amc,\Omc_n \models
    W^{\sigma_i}_i(a_0)$ for $1 \leq i \leq n$, then answer yes and remove
    $\exists \sigma_1 \cdots \exists \sigma_n . \exists r^n . A$ from $\Hmc$;
    
  \item otherwise, answer no and remove all $H$ with $\Amc,\Omc_n\models H(a_0)$ from
    $\Hmc$.

  \end{enumerate}
  Higher up rules have higher priority, e.g., Case~2 is applied only
  if Case~1 does not apply. It is not hard to verify that the answers
  are correct regarding the hypothesis set $\Hmc$ that remains after
  the answer is given.

  Now consider an equivalence query with CQ $q_H(x_0)$. The oracle
  responds as follows:
\begin{enumerate}

% \item if $\Omc_n \models C_H \sqsubseteq D$, then return
%   $\{ L_0(a_0) \}$ as positive counterexample;

\item if $\{L_0(a_0)\},\Omc_n \not\models q_H(a_0)$, then return
  $\{ L_0(a_0) \}$ as positive counterexample;

\item if $\{ \top (a_0), L_0(a_1)\} \models q_H(a_0)$, then return
  $\{ \top (a_0), L_0(a_1)\}$ as negative counterexample;
  
\item if there are $\sigma_1,\dots,\sigma_n \in \{r,s\}$
  % $H = \exists \sigma_1 \cdots \exists \sigma_n . \exists r^n . A$
  such that
  $$
  \{  K_0(a_0),W^{\sigma_1}_1(a_0),\dots,W^{\sigma_n}_n(a_0) \}, \Omc_n \not\models q_H(a_0),$$
  then choose such $\sigma_1,\dots,\sigma_n$, return
  $$\{
  K_0(a_0),W^{\sigma_1}_1(a_0),\dots,W^{\sigma_n}_n(a_0) \}
  $$
  as positive
  counterexample, and remove $\exists \sigma_1 \cdots \exists \sigma_n
  . \exists r^n . A$ from $\Hmc$ (if present).

% \item if $\Omc_n \models C_H \sqsubseteq E$ for some \ELI-concept $E$
%   that contains a concept name $L_i$, then choose a concept
%   $H = \exists \sigma_1 \cdots \exists \sigma_n . \exists r^n . A \in
%   \Hmc$, return $\{
%   K_0(a_0),W^{\sigma_1}_1(a_0),\dots,W^{\sigma_n}_n(a_0) \}$ as positive
%   counterexample, and remove $H$ from $\Hmc$;
  
% \item if $\Omc_n \models C_H \sqsubseteq \exists \sigma_1 \cdots
%   \exists \sigma_m . A$, $\sigma_1,\dots,\sigma_m \in \{r,s\}$ and $2n
%   \neq m \geq 0$, then return $\{ L_0(a_0) \}$ as positive counterexample;

% \item if   $\Omc_n \models C_H \sqsubseteq \exists r^- . \top$ or
%    $\Omc_n \models C_H \sqsubseteq \exists s^- . \top$, then 
%    return $\{ L_0(a_0) \}$ as positive counterexample; 
  
% \item if
%   $\Omc_n \models C_H \sqsubseteq \exists \sigma_1 \cdots \exists
%   \sigma_m . \exists \sigma^- . \top$,
%   $\sigma_1,\dots,\sigma_m \in \{r,s\}$, $1 \leq m \leq n$, and
%   $\sigma_m \neq \sigma$, then    return $\{ L_0(a_0) \}$ as positive counterexample; 
  
% \item % if
%   % $\Amc_{q_H},\Omc_n \not\models \exists \sigma_1 \cdots \exists
%   % \sigma_n . \exists r^n . A(x_0)$ for all $\sigma_1,\dots,\sigma_n \in \{r,s\}$, then
%   return $\{ \top(a_0) \}$ as negative counterexample.

% \item otherwise, choose a $H = \exists \sigma_1 \cdots \exists
%   \sigma_n . \exists r^n . A$ with $\Amc_{q_H},\Omc_n \models H(x_0)$, return $\{
%   K_0(a_0),W^{\sigma_1}_1(a_0),\dots,W^{\sigma_n}_n(a_0) \}$ as positive
%   counterexample, and remove $H$ from $\Hmc$ (unless it has already
%   been removed).
  
\end{enumerate}
%
% {\color{blue} We cannot return $C_H$ in Case~3 as this allows the
% learner to pump up.}
Again, higher up rules have higher priority and the answers are always
correct with respect to the updated set~$\Hmc$.  For Case~3, we 
remark that the counterexample
$\Amc=\{ K_0(a_0),W^{\sigma_1}_1(a_0),\dots,W^{\sigma_n}_n(a_0) \}$ is such
that
$\Amc,\Omc_n \sqsubseteq \exists \sigma'_1 \cdots \exists \sigma'_n
. \exists r^n . A(a_0)$ for all $\sigma'_1 \cdots \sigma'_n \in \{r,s\}^n$
except $\sigma'_1 \cdots \sigma'_n=\sigma_1 \cdots \sigma_n$.

We argue that the
cases are exhaustive. Assume that Cases~1 and~2 do not apply.
Let 
$q'_H$ be the restriction of $q_H$ to variables that are reachable 
from $x_0$ in the undirected graph
$$G_{q_H} = (\mn{var}(q_H),\{ \{ x,y \} \mid t(x,y) \in q_H \})$$
and let $q''_H$ be the restriction of $q_H$ to 
the variables that are not reachable.

Non-applicability of Case~1 implies that there is a homomorphism $h$
from $q_H$ to $\Umc_{\{L_0(a_0)\},\Omc_n}$ with
$h(x_0)=a_0$. Consequently, $q_H$ can only contain the symbols $r$,
$s$, $A$ as well as the concept names $U_i^\sigma$. Moreover, if there
is an atom $A(x)$ in $q'_H$, then $x$ is reachable from $x_0$ in
$G_{q_H}$ on an $r/s$-path of length exactly~$2n$ whose last $n$
components are all $r$.

We next show that $q'_H$ contains an atom $A(x)$. If it does not, in
fact, then $h$ is also a homomorphism from $q'_H$ to
$\Umc_{\{\top(a_0)\},\Omc_n}$ due to the first CI in $\Omc_n$, and  we
find a homomorphism $h'$ from $q''_H$ to
$\Umc_{\{L_0(a_1)\},\Omc_n}$. Combining $h$ and $h'$ yields a
homomorphism $g$ from $q_H$ to $\Umc_{\{\top(a_0),L_0(a_1)\},\Omc_n}$
  with $g(x_0)=a_0$, in contradiction to Case~2 not being applicable.

  So $q'_H$ contains an atom $A(x)$.  As argued above, $x$ is
  then reachable from $x_0$ along an $r/s$-path of length exactly
  $2n$
  whose last $n$ components are all $r$. Let the first $n$ components
  be $\sigma_1,\dots,\sigma_n$. Then
  $$
  \{  K_0(a_0),W^{\sigma_1}_1(a_0),\dots,W^{\sigma_n}_n(a_0) \},
  \Omc_n \not\models q_H(a_0)$$
  and thus Case~3 is applicable.

\smallskip

We next observe the following.

\smallskip \noindent\textit{Claim.} 
If \Amc and $a_0$ were given as a membership query and Cases~1 and~2
of membership queries do not apply, then $||\Amc|| \geq |\{ H \in
\Hmc_n \mid \Amc,\Omc_n \models H(a_0) \}|$.

\smallskip \noindent\textit{Proof of the claim.}
We may assume w.l.o.g.\ that \Amc is connected,
that is, the undirected graph
$G_\Amc = (\mn{ind}(\Amc),\{ \{ a,b \} \mid t(a,b) \in \Amc \})$ is
connected. %  Let $\Hmc'$ be the set of all $H \in \Hmc_n$ with
% $\Amc,\Omc_n \models H(a_0)$. For every
% $H=\exists \sigma_1 \cdots \sigma_n . A \in \Hmc'$,
% $\Umc_{\Amc,\Omc_n}$ contains a path from $a_0$ to some element
% $d \in A^{\Umc_{\Amc,\Omc_n}}$ that is labeled
% $\sigma_1 \cdots \sigma_n$.
Since Cases~1 and~2 of membership queries
do not apply,
\begin{itemize}
\item[(a)]
  $\Amc,\Omc_n \not\models L_0(a_0)$ and

\item[(b)] there are no
$\sigma_1,\dots,\sigma_n \in \{r,s\}$ such that $\Amc,\Omc_n \models
K_0(a_0)$ and $\Amc,\Omc_n \models W^{\sigma_i}_i(a_0)$ for $1 \leq i
\leq n$.

\end{itemize}
By the construction of $\Omc_n$, these properties imply the 
following for $0 \leq i \leq 2n$:
\begin{itemize}

% \item \Amc contains a path labeled $\sigma_1 \cdots \sigma_n$ from
%   $a_0$ to some $a$ as well as the assertion $A(a)$.

\item[(c)] \Amc contains no $r/s$-path of length $i$ from $a_0$ to some $a$
  with $\Amc,\Omc_n \models L_i(a)$.

  In fact, the existence of such a path implies
  $\Amc,\Omc_n \models L_0(a_0)$.
  
\item[(d)] \Amc contains no $r/s$-path of length $i$ from $a_0$ to
  some $a$ with $\Amc,\Omc_n \models K_i(a)$ and assertions
  $W^{\sigma_1}_1(a_1),\dots,W^{\sigma_n}_n (a_n)$ where
  $\sigma_1,\dots,\sigma_n \in \{r,s\}$.

  In fact, the existence of such a path and such assertions implies
  $\Amc,\Omc_n \models K_0(a_0)$ and
  $\Amc,\Omc_n \models W^{\sigma_i}_i(a_0)$ for $1 \leq i \leq n$.

\item[(e)] \Amc contains no $r/s$-paths $p_1,p_2$ of length $i$ that
  end at the same individual and such that $p_1$ starts with an
  $r$-edge while $p_2$ starts with an $s$-edge.

  In fact, the existence of such paths and the connectedness of \Amc
  implies $\Amc,\Omc_n \models D(a_0)$, thus
  $\Amc,\Omc_n \models L_0(a_0)$.

\end{itemize}
We now sketch the construction of a model \Imc of \Amc and $\Omc_n$.
Let $\Wmc \subseteq \{0,\dots,n\}$ contain $i$ iff \Amc contains an
assertion $W^\sigma_i(a)$ for some $\sigma \in \{r,s\}$ and
$a \in \mn{ind}(\Amc)$. The following interpretations are used as
building blocks for \Imc:
\begin{itemize}

\item an \emph{$L_i$-path}, $n \leq i \leq 2n$, is an $r$-path of
  length $2n-i$ that makes makes $L_{i+j}$ true at the node at
  distance $j \in \{0,\dots,n-i\}$ from the start of the path and that
  makes true $A$ at the end of the path;

\item a \emph{$K_i$-path}, $n \leq i \leq 2n$, is defined as follows;
  let $\ell$ be maximal such that $\{i-n,\dots,\ell \} \subseteq \Wmc$;
  then a $K_i$-path is an $r$-path of length $\ell-(i-n)$ that makes
  $K_{i+j}$ true at the node at distance $j \in \{0,\dots,\ell\}$ from
  the start of the path and that makes true $A$ at the node at
  distance $2n-i$ (if it exists); in addition, the start of the path
  might make true any of the concept names $V^\sigma_j$,
  $\sigma \in \{r,s\}$ and $1 \leq j \leq n$, which
  are then all also made true by all other nodes on the path;
  
\item an \emph{$L_i$-tree}, $1 \leq i < n$, is a binary $r/s$-tree
  of depth $n-i$ that makes $L_{i+j}$ true at every node on level
  $j \in \{0,\dots,n-i\}$; in addition, every node on level $n-i$ is
  the start of an $L_n$-path;

\item a \emph{$K_i$-tree}, $1 \leq i < n$, is %  defined as follows;
  % let $\ell$ be maximal such that $\{i,\dots,\ell\} \subseteq \Wmc$; then a
  % $K_i$-tree is
  a binary $r/s$-tree of depth~$n-i$ that makes true
  $K_{i+j}$ at every node on level $j \in \{0,\dots,n-1\}$ and
  $V^\sigma_{i+j}$ at every node on level at least $j$ that is a
  $\sigma$-successor of its parent; in addition, every node on level
  $n-i$ is the start of a $K_0$-path; moreover, the root
  might make true any of the concept names $V^\sigma_j$,
  $\sigma \in \{r,s\}$ and $1 \leq j \leq i$, which
  are then all also made true by all other nodes in the tree.
  
\end{itemize}
In all of the above, any node that has an incoming $r/s$-path of
length~$i \in \{1,\dots,2n\}$ that starts with $\sigma \in \{r,s\}$ is
additionally labeled with concept name~$U^\sigma_i$. Moreover, the
beginning of the path/root of the tree may be labeled with concept
names of the form $U^\sigma_i$. Then any node on depth $i+j$, with
$i+j \leq 2n$, is labeled with $U^\sigma_{i+j}$.

\medskip

Now, the announced model \Imc is constructed  by starting with \Amc
and doing the following:
%
% such that if there is an $r/s$-path in \Imc of length $n$ from $a_0$
% to some element $d \in A^\Imc$, then $d \in
% \mn{ind}(\Amc)$. Consequently, the same is true for
% $\Umc_{\Amc,\Omc_n}$:
%
\begin{enumerate}
 
\item exhaustively apply all concept inclusions in $\Omc_n$
  that have a concept name on the right-hand side; 
  
\item at every  $a \in \mn{ind}(\Amc)$, attach an infinite
  tree in which every node has two successors, one for each role
  name $r,s$, and in which no concept names are made true;

\item if $a \in L_i^\Imc$, $0 \leq i < n$, then attach at $a$ an
  $L_{i}$-tree;
  
\item if $a \in K_i^\Imc$, $0 \leq i < n$, then attach at $a$ a
  $K_{i}$-tree;

\item if $a \in L_i^\Imc$, $n \leq i \leq 2n$, then attach at $a$ an
  $L_i$-path;

\item if $a \in K_i^\Imc$, $n \leq i \leq 2n$, then attach at $a$ a
  $K_i$-path;

\item if $W^\sigma_i(a) \in \Amc$ for some $a$, then make 
  $W^\sigma_i$ true everywhere in \Imc.

% \item Exhaustively apply concept inclusions~6 and~7 in
%   $\Omc_n$, adding instances of the concept names $V^\sigma_i$ and of
%   $A$; note that instances of $A$ are only added at elements that
%   satisfy $K_n$.
  
% \item Apply concept inclusions~-4 and~-5 in $\Omc_n$,
%   adding instances of the concept names $U^\sigma_i$.

\end{enumerate}
By going over the concept inclusions in $\Omc_n$ and using
Properties~(a) and~(b), it can be verified that \Imc is indeed a model
of $\Omc_n$. In particular, the inclusions $W^r_i \sqcap W^s_i
\sqsubseteq L_0$ are satisfied since there is no $d \in \Delta^\Imc$
with $d \in (W^r_i \sqcap W^s_i)$; if there was such a $d$, then by
construction of \Imc there are assertions $W^r_i(a)$ and $W^s_i(b)$ in
\Amc, in contradiction to the connectedness of \Amc and $\Amc,\Omc_n
\not\models L_0(a_0)$. For the CIs $U^r_i \sqcap U^s_i\sqsubseteq
L_0$, we argue that there is no $d \in \Delta^\Imc$ with $d \in (U^r_i
\sqcap U^s_i)^\Imc$. To see this, note that there are no
$U^r_i(a),U^s_i(a) \in \Amc$ for any $a$ as otherwise $\Amc,\Omc_n
\models L_0(a_0)$. Now consider Step~1 of the construction of \Imc and
assume that it adds some $a \in \mn{ind}(\Amc)$ to $(U^r_i \sqcap
U^s_i)^\Imc$.  But this means that $\Amc,\Omc_n \models U^r_i(a)$ and
$\Amc,\Omc_n \models U^s_i(a)$, in contradiction to $\Amc,\Omc_n
\not\models L_0(a_0)$, due to connectedness of \Amc.  Given that there
is no $d \in (U^r_i \sqcap U^s_i)^\Imc$ for any $i$ after Step~1, it
is readily checked that the elements $d$ added in Steps~2-6 also
satisfy $d \notin (U^r_i \sqcap U^s_i)^\Imc$.

\medskip

We now use \Imc to prove the claim. Let $\Hmc'$ be the set of all
$H \in \Hmc_n$ with $\Amc,\Omc_n \models H(a_0)$. With each
$H \in \Hmc'$, we associate an $a_H \in \mn{ind}(\Amc)$ as follows.
Let $H=\exists \sigma_1 \cdots \sigma_{2n} . A$. Then $\Imc$
contains a path from $a_0$ to some element $d_H \in A^{\Imc}$ that is
labeled $\sigma_1 \cdots \sigma_{2n}$. If $d_H \in \mn{ind}(\Amc)$, then
$a_H=d_H$. Otherwise, $d_H$ is in a path or tree attached to some
$a \in \mn{ind}(\Amc)$. Set $a_H=a$. To show that
$||\Amc|| \geq |\{ H \in \Hmc_n \mid \Amc,\Omc_n \models H(a_0) \}|$
as required, it suffices to prove that $a_H \neq a_{H'}$ whenever
$H \neq H'$.  Thus let $H,H' \in \Hmc'$ with $H \neq H'$,
$H=\exists \sigma_1 \cdots \sigma_{2n} . A$, and
$H'=\exists \sigma'_1 \cdots \sigma'_{2n} . A$.  Assume to the contrary
of what is to be shown that $a_H=a_{H'}$. We distinguish four
cases:
\begin{itemize}

\item $d_H=a_H$, $d_{H'}=a_{H'}$.

  Then there is a path from $a_0$ to $a_H$ in \Imc labeled
  $\sigma_1 \cdots \sigma_{2n}$ and a path from $a_0$ to $a'_H$ labeled
  $\sigma'_1 \cdots \sigma'_{2n}$. By construction of \Imc, these paths
  must already exist in \Amc.   From $H \neq H'$, we thus obtain a
  contradiction to (e).

\item $d_H=a_H$, $d_{H'} \neq a_{H'}$.

  By construction of \Imc, $d_{H'} \neq a_{H'}$ implies that
  $\Amc,\Omc_n \models L_i(a_H)$ or   $\Amc,\Omc_n \models K_i(a_H)$
  for some $i$ with $0 \leq i < 2n$. Moreover, $d_H=a_H$ implies
  $\Amc,\Omc_n \models A(a_H)$. Thus $\Amc,\Omc_n \models
  D(a_H)$. By the connectedness of \Amc, we obtain $\Amc,\Omc_n \models
  D(a_0)$, thus $\Amc,\Omc_n \models
  L_0(a_0)$ in contradiction to (c).

\item $d_H \neq a_H$, $d_{H'} = a_{H'}$.

  Symmetric to previous case.

\item $d_H \neq a_H$, $d_{H'} \neq a_{H'}$.
  
  We first show that $d_H$ and $d_{H'}$ are not in an $L_i$-tree,
  $0 \leq i < n$.  Assume to the contrary that $d_H$ is (the case of
  $d_{H'}$ is symmetric). Then it occurs on level $2n-i$ in the tree,
  since $d_H \in A^\Imc$. Since an $L_i$-tree was attached to $a_H$,
  we must have $\Amc,\Omc_n \models L_i(a_H)$.  Moreover, there is an
  $r/s$-path in \Imc from $a_0$ to $a_H$ of length $i$, the prefix of
  $\sigma_1 \cdots \sigma_{2n}$ of this length. By construction of
  \Imc, this path must already be in \Amc. This is in contradiction to (c).

  We next show that $d_H$ and $d_{H'}$ are not in a $K_i$-tree,
  $0 \leq i < n$.  Assume to the contrary that $d_H$ is (the case of
  $d_{H'}$ is symmetric). Then it occurs on level $2n-i$ in the tree,
  since $d_H \in A^\Imc$.  By definition of such trees (and the
  attached paths), this implies that $\Wmc = \{ 1,\dots, n \}$ and
  thus \Amc contains assertions
  $W^{\sigma''_1}_1(a_1),\dots, W^{\sigma''_n}_n(a_n)$.  We must
  further have $\Amc,\Omc_n \models K_i(a_H)$ and there is an
  $r/s$-path in \Imc, thus in \Amc from $a_0$ to $a_H$ of length $i$.
  This is in contradiction to (d).

  Thus, $d_H$ and $d_H'$ are both in an $L_i$-path or in a $K_i$-path,
  $n \leq i \leq 2n$. If they are in different such paths, then
  $\Amc,\Omc_n \models D(a_H)$, which is in contradiction to (c) as
  \Amc is connected. Thus, they must be in the same $L_i$-path or in
  the same $K_i$-path. Since each such path contains a single element
  $d$ with $d \in A^\Imc$, we obtain $d_H=d'_H$. From $H \neq H'$, it
  thus follows that there are two different paths of length $i$ in \Imc
  from $a_0$ to $a_H$, the prefixes of this length of
  $\sigma_1 \cdots \sigma_{2n}$ and of
  $\sigma'_1 \cdots \sigma'_{2n}$.  This is in contradiction to (e).
    
\end{itemize}
This finishes the proof of the claim.

\medskip

We can use the claim to show the following invariant:
\begin{itemize}

\item[($*$)] at any point in time, the sum $m$ of the sizes of the
  inputs to membership and equivalence queries made so far is not
  smaller than the number of candidates that were removed from \Hmc.

\end{itemize}
Note in fact that only Cases~2 and~3 of membership queries and Cases~3
and~6 of equivalence queries may remove candidates from $\Hmc$ and
that they all remove only one candidate for each query posed with the
exception of Case~3 of membership queries which can remove multiple
candidate. However, the claim implies that the number of removed
candidates in Case~3 of membership queries is bounded from above by
the size of the query posed.

It is clear that there is a polynomial $q_3$ such that the size of all
counterexamples returned by the oracle is at most $q_3(n)$.  The
overall sum of the sizes of posed membership and equivalence queries
is bounded by $p(q_1(n),q_2(n),q_3(n))$. It thus follows from ($*$)
that at most $p(q_1(n),q_2(n),q_3(n))$ candidate concepts have been
removed from $\Hmc_n$.  By the choice of $n$, at least two candidate
concepts remain in $\Hmc$ after the algorithm finished. Thus, the
learner cannot distinguish between them, and we have derived a
contradiction.
\end{proof}

\end{document}